\def\UrlSpecials{\do\~{\kern -.15em\lower .7ex\hbox{~}\kern .04em}} \catcode`~=13 
\newcommand{\iidsim}{\widesim{iid}}
\newcommand{\vecz}{\mathbf{0}}
\newcommand{\norm}[1]{\left\Vert#1\right\Vert}
\newcommand{\normt}[1]{\Vert#1\Vert}
\newcommand{\abs}[1]{\left\lvert#1\right\rvert}
\newcommand{\lrpar}[1]{\left(#1\right)}
\newcommand{\lrang}[1]{\left\langle#1\right\rangle}
\newcommand{\nn}{\nonumber}
\newcommand{\defeq}{\triangleq}
\newcommand{\eqcst}{\stackrel{{\rm c}}{=}}
\newcommand\numberthis{\addtocounter{equation}{1}\tag{\theequation}} 
\newcommand{\prox}{\mathbf{prox}}
\newcommand{\tilell}{\widetilde{\ell}}
\newcommand{\widesim}[2][1.5]{\mathrel{\overset{#2}{\scalebox{#1}[1]{$\sim$}}}}
\newcommand{\convu}{\xrightarrow{{\rm u}}}
\newcommand{\scHN}{\mathscr{H\hspace{-.15cm}N\hspace{-0cm}}}
\newcommand{\barbA}{\overline{\bf A}}
\newcommand{\barbB}{\overline{\bf B}}
\newcommand{\barbW}{\overline{\bf W}}
\newcommand{\barkappa}{\overline{\kappa}}
\newcommand{\calA}{\mathcal{A}}
\newcommand{\calB}{\mathcal{B}}
\newcommand{\calC}{\mathcal{C}}
\newcommand{\calF}{\mathcal{F}}
\newcommand{\calG}{\mathcal{G}}
\newcommand{\calH}{\mathcal{H}}
\newcommand{\calI}{\mathcal{I}}
\newcommand{\calK}{\mathcal{K}}
\newcommand{\calL}{\mathcal{L}}
\newcommand{\calN}{\mathcal{N}}
\newcommand{\calP}{\mathcal{P}}
\newcommand{\calQ}{\mathcal{Q}}
\newcommand{\calR}{\mathcal{R}}
\newcommand{\calS}{\mathcal{S}}
\newcommand{\calT}{\mathcal{T}}
\newcommand{\calU}{\mathcal{U}}
\newcommand{\calV}{\mathcal{V}}
\newcommand{\calX}{\mathcal{X}}
\newcommand{\calZ}{\mathcal{Z}}
\newcommand{\tilcalF}{\widetilde{\calF}}
\newcommand{\tilcalL}{\widetilde{\calL}}
\newcommand{\tilcalR}{\widetilde{\calR}} 
\newcommand{\tilcalS}{\widetilde{\calS}}
\newcommand{\tilcalV}{\widetilde{\calV}}
\newcommand{\bA}{\mathbf{A}}
\newcommand{\bb}{\mathbf{b}}
\newcommand{\bB}{\mathbf{B}}
\newcommand{\bD}{\mathbf{D}}
\newcommand{\bG}{\mathbf{G}}
\newcommand{\bh}{\mathbf{h}}
\newcommand{\bH}{\mathbf{H}}
\newcommand{\bI}{\mathbf{I}}
\newcommand{\bL}{\mathbf{L}}
\newcommand{\bM}{\mathbf{M}}
\newcommand{\bN}{\mathbf{N}}
\newcommand{\bq}{\mathbf{q}}
\newcommand{\bQ}{\mathbf{Q}}
\newcommand{\br}{\mathbf{r}}
\newcommand{\bR}{\mathbf{R}}
\newcommand{\bS}{\mathbf{S}}
\newcommand{\bu}{\mathbf{u}}
\newcommand{\bU}{\mathbf{U}}
\newcommand{\bv}{\mathbf{v}}
\newcommand{\bV}{\mathbf{V}}
\newcommand{\bW}{\mathbf{W}}
\newcommand{\bx}{\mathbf{x}}
\newcommand{\bX}{\mathbf{X}}
\newcommand{\by}{\mathbf{y}}
\newcommand{\bY}{\mathbf{Y}}
\newcommand{\bz}{\mathbf{z}}
\newcommand{\bbE}{\mathbb{E}}
\newcommand{\bbN}{\mathbb{N}}
\newcommand{\bbP}{\mathbb{P}}
\newcommand{\bbR}{\mathbb{R}}
\newcommand{\scU}{\mathscr{U}}
\DeclareMathAlphabet{\mathbsf}{OT1}{cmss}{bx}{n}
\DeclareMathAlphabet{\mathssf}{OT1}{cmss}{m}{sl}% slanted sans serif
\DeclareSymbolFont{bsfletters}{OT1}{cmss}{bx}{n}  
\DeclareSymbolFont{ssfletters}{OT1}{cmss}{m}{n}
\DeclareMathSymbol{\bsfGamma}{0}{bsfletters}{'000}
\DeclareMathSymbol{\ssfGamma}{0}{ssfletters}{'000}
\DeclareMathSymbol{\bsfDelta}{0}{bsfletters}{'001}
\DeclareMathSymbol{\ssfDelta}{0}{ssfletters}{'001}
\DeclareMathSymbol{\bsfTheta}{0}{bsfletters}{'002}
\DeclareMathSymbol{\ssfTheta}{0}{ssfletters}{'002}
\DeclareMathSymbol{\bsfLambda}{0}{bsfletters}{'003}
\DeclareMathSymbol{\ssfLambda}{0}{ssfletters}{'003}
\DeclareMathSymbol{\bsfXi}{0}{bsfletters}{'004}
\DeclareMathSymbol{\ssfXi}{0}{ssfletters}{'004}
\DeclareMathSymbol{\bsfPi}{0}{bsfletters}{'005}
\DeclareMathSymbol{\ssfPi}{0}{ssfletters}{'005}
\DeclareMathSymbol{\bsfSigma}{0}{bsfletters}{'006}
\DeclareMathSymbol{\ssfSigma}{0}{ssfletters}{'006}
\DeclareMathSymbol{\bsfUpsilon}{0}{bsfletters}{'007}
\DeclareMathSymbol{\ssfUpsilon}{0}{ssfletters}{'007}
\DeclareMathSymbol{\bsfPhi}{0}{bsfletters}{'010}
\DeclareMathSymbol{\ssfPhi}{0}{ssfletters}{'010}
\DeclareMathSymbol{\bsfPsi}{0}{bsfletters}{'011}
\DeclareMathSymbol{\ssfPsi}{0}{ssfletters}{'011}
\DeclareMathSymbol{\bsfOmega}{0}{bsfletters}{'012}
\DeclareMathSymbol{\ssfOmega}{0}{ssfletters}{'012}
\newcommand{\tild}{\widetilde{d}}
\newcommand{\hatf}{\widehat{f}}
\newcommand{\tilf}{\widetilde{f}}
\newcommand{\tilg}{\widetilde{g}}
\newcommand{\hatbH}{\widehat{\bH}}
\newcommand{\tilL}{\widetilde{L}}
\newcommand{\hatbR}{\widehat{\bR}}
\newcommand{\hatbW}{\widehat{\bW}}
\newcommand{\barf}{\overline{f}}
\newcommand{\barg}{\overline{g}}
\newcommand{\barh}{\overline{h}}
\newcommand{\balpha}{\bm{\alpha}}
\newcommand{\bbeta}{\bm{\beta}}
\newcommand{\bPsi}{\bm{\Psi}}
\newcommand{\tmeta}{\widetilde{\eta}}
\newcommand{\tkappa}{\widetilde{\kappa}}
\newcommand{\trho}{\widetilde{\rho}}
\newcommand{\tnu}{\widetilde{\nu}}
\newcommand{\iid}{i.i.d.\ }
\newcommand{\convas}{\xrightarrow{\mathrm{a.s.}}}
\newcommand{\floor}[1]{\lfloor{#1}\rfloor}
\newcommand{\lrangle}[2]{\left\langle{#1},{#2}\right\rangle}
\DeclareMathOperator*{\argmax}{arg\,max}
\DeclareMathOperator*{\argmin}{arg\,min}
\DeclareMathOperator{\st}{s.t.\;}
\DeclareMathOperator{\sgn}{sgn}
\DeclareMathOperator{\tr}{tr}
\newcommand{\bone}{\mathbf{1}}
\newtheorem{theorem}{Theorem} 
\newtheorem*{theorem*}{Theorem}
\newtheorem{lemma}{Lemma}
\newtheorem*{assump}{Assumptions}
\theoremstyle{definition}
\newtheorem{definition}{Definition} 
\theoremstyle{remark}
\newtheorem{remark}{Remark}
\newenvironment{psketch}{\noindent{\em Proof Sketch.}\hspace*{1em}}{\qed\bigskip\\}
\newcommand{\qednew}{\nobreak \ifvmode \relax \else
      \ifdim\lastskip<1.5em \hskip-\lastskip
      \hskip1.5em plus0em minus0.5em \fi \nobreak
      \vrule height0.75em width0.5em depth0.25em\fi}
\newcommand{\rmth}{\mathrm{th}}
\begin{document}

\title{Online Nonnegative Matrix Factorization with Outliers}
%\title{Online NMF with Outliers}

\author{Renbo~Zhao,~\IEEEmembership{Member,~IEEE,}
        and~Vincent~Y.~F.~Tan,~\IEEEmembership{Senior~Member,~IEEE}
        
\thanks{A preliminary work has been published in ICASSP 2016 \cite{Zhao_16}.} 

\thanks{The authors  are with the Department of Electrical and Computer Engineering and the Department of Mathematics, National University of Singapore. They are supported in part by the NUS Young Investigator Award (grant number R-263-000-B37-133).}
}
%\thanks{M. Shell was with the Department
%of Electrical and Computer Engineering, Georgia Institute of Technology, Atlanta,
%GA, 30332 USA e-mail: (see http://www.michaelshell.org/contact.html).}% <-this % stops a space
%\thanks{J. Doe and J. Doe are with Anonymous University.}% <-this % stops a space
%\thanks{Manuscript received April 19, 2005; revised August 26, 2015.}}

%\markboth{IEEE Transactions on Signal Processing,~Vol.~, No.~, 2016}%
%{Shell \MakeLowercase{\textit{Zhao et al.}}: Online Nonnegative Matrix Factorization with Outliers}

\maketitle

\begin{abstract}
We propose a unified and systematic framework for performing online nonnegative matrix factorization in the presence of outliers. Our framework is particularly suited to large-scale data.  We propose two solvers  based on   projected gradient descent and  the  alternating direction method of multipliers. We prove that the sequence of objective values converges almost surely by appealing to the quasi-martingale convergence theorem. We also show the sequence of learned dictionaries converges to the set of stationary points of the expected loss function almost surely.  
In addition, we extend our basic problem formulation to various settings with different constraints and regularizers. We also adapt the solvers and analyses to each setting. 
We perform extensive experiments on both synthetic and real datasets. These experiments demonstrate the computational efficiency and efficacy of our algorithms  on tasks such as (parts-based) basis learning, image denoising, shadow removal and foreground-background separation. 

%in terms of the peak signal-to-noise ratio, the interpretability of the basis matrix, image denoising and shadow removal capabilities, and finally, the computational speed.
\end{abstract}

\begin{IEEEkeywords}
 Nonnegative matrix factorization, Online learning, Robust learning,  Projected gradient descent, Alternating direction method of multipliers 
 \end{IEEEkeywords}

\section{Introduction}\label{sec:intro}
In recent years, Nonnegative Matrix Factorization (NMF) has become a popular dimensionality reduction \cite{Tsuge_01} technique, due to its parts-based, non-subtractive  interpretation of the learned basis \cite{Lee_99}. Given a nonnegative data matrix $\bV$, it seeks to approximately decompose $\bV$ into two nonnegative matrices, $\bW$ and $\bH$, such that $\bV\approx\bW\bH$. In the literature, the fidelity of such a approximation is most commonly measured by $\norm{\bV-\bW\bH}_F^2$ \cite{Lee_00,Kim_08a,Lin_07a,Kim_08b,Xu_12}. 
To obtain this approximation, many algorithms have been proposed, including multiplicative updates \cite{Lee_00}, block principal pivoting \cite{Kim_08a}, projected gradient descent \cite{Lin_07a}, active set method\cite{Kim_08b}, and the alternating direction method of multipliers \cite{Xu_12}. These algorithms have promising performances in numerous applications, including document clustering \cite{Ding_05}, hyperspectral unmixing \cite{Yuan_15} and audio source separation \cite{Durrieu_11}. However, there are also many studies \cite{Cao_07,Zhang_11} showing that their performances deteriorate under two common and practical scenarios. The first scenario is when the data matrix $\bV$ has a large number of columns (data samples). This situation arises in today's data-rich environment. {\em Batch} data processing methods used in  the aforementioned algorithms become  highly inefficient  in terms of  the computational time and storage space.  
The second scenario is the existence of outliers in some of the data samples. For example (e.g.), there are  glares and shadows in images   due to bad illumination conditions. Another example is the presence of  impulse noises in time series,  including speech recordings in natural language  processing or temperature recordings in weather forecasting. The outliers, if not handled properly, can significantly corrupt the learned basis matrix, thus the underlying low-dimensional data structure cannot be learned reliably. Moreover, outlier detection and pruning become much more difficult in large-scale datasets~\cite{RAD_15}. As such, it is imperative to design   algorithms that can learn interpretable parts-based basis representations from large-scale datasets whilst being robust to possible outliers.

\subsection{Previous Works} \label{sec:prev_work}
Many efforts have been devoted to address each challenge separately. To handle large datasets,  researchers have pursued solutions in three main directions. The first class of algorithms proposed is know as {\em online NMF} algorithms \cite{Cao_07,Bucak_08,Guan_12,Lef_11,WangFei_11,WuShen_14}. These algorithms aim to refine the basis matrix each time a new data sample is acquired without storing the past data samples. The second class of algorithms is known as {\em distributed NMF}\cite{Liu_10,Gemulla_11,Du_14,Chen_15}. The basic idea behind these algorithms is to distribute the data samples over a network of agents so that several small-scale optimization problems can be performed concurrently. 
The final class of algorithms is called the {\em compressed NMF} algorithms \cite{Halko_11,Tepper_16}. These algorithms perform structured random compression to %to reduce the size of the dataset. In \cite{Tepper_16}, the authors performed  random projections,  %exploit the data structure and 
project the data onto the lower-dimensional manifolds. As such, the size of the dataset can be reduced. 
These three approaches have successfully reduced the computation and storage complexities---either provably or through numerical experiments.  
For the existence of outliers, a class of algorithms called the (batch) {\em robust NMF} \cite{Gao_15, Huang_14, Yang_13, Du_12, Ding_12, Ding_11, Cichoc_11, Kasi_12,Fev_14,Shen_14,Zhang_11} has been proposed to reliably learn the basis matrix by minimizing the effects of the outliers. The robustness against the outliers is achieved via different approaches. These are detailed in Section~\ref{sec:RNMF}. However, to the best of our knowledge, so far there are no NMF-based algorithms that are able to systematically handle  outliers in large-scale datasets.

\subsection{Main Contributions}
In this paper, we propose an algorithm called the {\em online NMF with outliers} that fills this void. Specifically, our algorithm aims to learn the basis matrix $\bW$ in an online manner whilst being robust to outliers.   %This makes the algorithm particularly suitable on the large datasets corrupted by the outliers.
The development of the proposed algorithm involves much more than the straightforward combination or adaptation of online NMF and robust NMF algorithms. Indeed, since there are many ways to ``robustify'' the NMF algorithms, it is crucial to find an appropriate way to incorporate such robustness guarantees into the online algorithms. Our algorithm proceeds as follows. At each time instant, we solve two optimization problems. The first enables us to learn the coefficient and outlier vectors while the second enables us to update the basis matrix. %Compared canonical online NMF algorithms, these two optimization problems are more involved. 
We propose two solvers  based on projected gradient descent (PGD) and alternating direction method of multipliers (ADMM) to solve both optimization problems. 
 Moreover, the presence of outliers also results in more difficulty when we analyze the convergence properties of  our  algorithms. See Section~\ref{sec:conv_dis} for a detailed discussion. We remark that in recent years, some algorithms of similar flavors have been proposed, e.g., online robust PCA \cite{Feng_13,Shen_14b} and online robust dictionary learning \cite{Wang_13}. However, due to different problem formulations, our algorithm  has many distinctive features, which then calls for different techniques to develop the solvers and analyze the  convergence properties. Furthermore, in Section~\ref{sec:numericals}, we also observe its superior performance on real-world applications, including (parts-based) basis learning, image denoising, shadow removal and foreground-background separation, over the similar algorithms.  In sum, our contributions are threefold:
\begin{enumerate}
\item We develop two different  solvers  based on PGD and ADMM  to solve the optimization problems  at each time instant.  These two solvers can be easily extended to two novel solvers for the batch robust NMF problem. The theoretical and empirical performances of both solvers are compared and contrasted. 
\item Assuming the data are independently drawn from some common distribution $\bbP$, we prove the almost sure convergence of the sequence of objective values  as well as the almost sure convergence of %each subsequential limit of the sequence of the basis matrices to a local minimizer of the objective function. 
the sequence of learned basis matrices to the set of stationary points of the expected loss function.
The proof techniques involve the use of tools from convex analysis \cite{Rock_70} and empirical process theory\cite{Vaart_00}, as well as the quasi-martingale convergence theorem \cite[Theorem 9.4 \& Proposition 9.5]{Met_82}.
\item   We extend the basic problem setting to various other general settings, by altering the constraint sets and adding regularizers. We also indicate how to adapt our solvers and analyses to each case. By doing so, the applicability of our algorithms is greatly generalized. 
\end{enumerate}
%in terms of both the theoretical convergence analysis and 

\subsection{Paper Organization}\label{sec:org}
This paper is organized as follows. We first provide a  more detailed literature survey in Section~\ref{sec:priorart}. Next we state a formal formulation of our problem in Section~\ref{sec:prob}. The algorithms are derived in Section~\ref{sec:algo} and their convergence properties are analyzed in Section~\ref{sec:conv_analysis}. In Section~\ref{sec:extension}, we extend our basic problem formulations to a wide variety of settings, and indicate how the solvers and analyses can be adapted to each setting. Finally in Section~\ref{sec:numericals}, we provide extensive experiment results on both synthetic and real data. The results are compared to  those of their batch counterparts and other online matrix factorization algorithms.   We conclude the paper in Section \ref{sec:con} stating some promising avenues for further investigations.

In this paper, all the lemmas and sections with indices beginning with `S' will appear in the supplemental material. 

%\subsection{Contributions}
%In this paper, we have developed an algorithm that can reliably learn the basis matrix in an online fashion in the presence of the outliers. As shown in Section~\ref{sec:conv_analysis} and \ref{sec:numericals}, our algorithms enjoy both strong theoretical convergence properties and superior practical performances. Our contributions are three-fold. First, we develop two solvers, based on PGD and ADMM respectively, to solve the two optimization problems \eqref{eq:min_hr} and \eqref{eq:min_W} at each time instant. These two solvers can be easily extended to two novel solvers for the batch robust NMF problem. The theoretical and empirical performances of both solvers are compared and contrasted. Second, we prove the almost sure convergence of the objective function under our algorithm, as well as the strong convergence of each subsequential limit of the sequence of the basis matrices. Third, we extend our basic problem setting to various cases, by altering the constraint sets and adding regularizers. We also indicate how to adapt our algorithms and analysis to each case. By doing so, the applicability of our algorithms is greatly generalized. 

\subsection{Notations}
In the following, we use capital boldface letters to denote matrices. For example, the (updated) dictionary/basis matrix at time $t$ is denoted by $\bW_t$.  %In particular, we use $\bV$, $\bW$, $\bH$ and $\bR$ to denote data matrix, dictionary/basis matrix, coefficient matrix and outlier matrix respectively, such that $\bV \approx \bW\bH+\bR$.  $\bV$, $\bW$, $\bH$ and $\bR$ are all nonnegative  with dimensions $F\times N$, $F\times K$, $K\times N$ and $F\times N$ respectively. 
We use $F$ and $K$ to denote the ambient dimension and the (known) latent dimension of data respectively. %Both $F$ and $K$ are assumed to be constants.\footnote{In this work, we do not simultaneously consider the data with high ambient dimensions. An attempt on this problem in the context of dictionary learning with the squared-$\ell_2$ loss has been made in \cite{Mensch_16}.} 
%If the streaming data has finite size, we denote the size by $N$. %and the number of data samples respectively. 
We use lower-case boldface letters to denote vectors. Specifically, at time instant $t$, we denote the acquired sample vector, learned coefficient vector and outlier vector as $\bv_t$, $\bh_t$ and $\br_t$ respectively. For a vector $\bx$, its $i$-th entry is denoted by $x_i$. 
%Specifically, we use $\bv$, $\bw$, $\bh$ and $\br$ to denote the columns of $\bV$, $\bW$, $\bH$ and $\bR$, respectively.  
Given a matrix $\bX$, we denote its $i$-th row as $\bX_{i:}$, $j$-th column as $\bX_{:j}$ and $(i,j)$-th entry by $x_{i,j}$. Moreover, we denote its Frobenius norm by $\norm{\bX}_F$, spectral norm by $\norm{\bX}_2$, $\ell_{1,1}$ norm by $\norm{\bX}_{1,1} \defeq \sum_{i,j} \abs{x_{i,j}}$ %, max norm by $\norm{\bX}_{\max}\defeq\max_{i,j} \abs{x_{i,j}}$ 
and trace by $\tr(\bX)$. 
Inequality $\bx\geq 0$ or $\bX \geq 0$ denotes entry-wise nonnegativity. We use $\lrangle{\cdot}{\cdot}$ to denote the Frobenius inner product between two matrices and $\bone$ the vector with all entries equal to one. For a closed convex nonempty set $\calA$, % in a Euclidean space $\calE$, 
we denote $\calP_\calA$ as the Euclidean projector onto $\calA$. In particular, $\calP_+$ denotes the Euclidean projector onto the nonnegative orthant.  Also, the $\infty$-indicator function of $\calA$, $I_\calA$ is defined as
\begin{equation}
I_\calA(x) \defeq \begin{cases}
0, &x\in\calA\\
\infty, & x\not\in\calA
\end{cases}.
\end{equation}
In particular, $I_+$ denotes the $\infty$-indicator function of the nonnegative orthant. 
For $n\in\bbN$, $[n]:=\{1,2,\ldots,n\}$. Also, $\bbR_+$ denotes the set of nonnegative real numbers. 
%For a matrix $\bA$, we denote its $i$-th row as $\bA_{i:}$ and its $j$-th column as $\bA_{:j}$.
%We use $\tr(\cdot)$ to denote the trace operator. 

%In this paper, we defer all the technical lemmas used in Section~\ref{sec:algo} and \ref{sec:analysis} to the supplemental material. The indices of such lemmas are prepended with `S' to distinguish them from the lemmas in the main text. 
\section{Related Works}\label{sec:priorart}
\subsection{Robust NMF}\label{sec:RNMF}
The canonical NMF problem can be stated as the following minimization problem
\begin{equation}
\min_{\bW\in\calC, \{\bh_i\}_{i=1}^N\ge 0} \quad\frac{1}{N}\sum_{i=1}^N \frac{1}{2}\norm{\bv_i-\bW\bh_i}_2^2, \label{eq:canon_NMF}  
\end{equation}
where $\calC\subseteq\bbR_+^{F\times K}$ denotes the constraint set for $\bW$ and $N$ denotes the number of data samples. In many works~\cite{Paa_94,Lee_00,Hoyer_04}, $\calC$ is set to $\bbR_+^{F\times K}$. For simplicity, we omit   regularizers on $\bW$ and $\{\bh_i\}_{i=1}^N$ at this point. %Compared to the canonical version, robust NMF aims to learn 
Since algorithms for the canonical NMF perform unsatisfactorily when the data contain outliers, robust NMF algorithms have been proposed.  Previous algorithms for   robust NMF fall into two categories. The first category~\cite{Gao_15, Huang_14, Yang_13, Du_12, Ding_12, Ding_11, Cichoc_11, Kasi_12} replaces the (half) squared $\ell_2$ loss $1/2\norm{\bv_i-\bW\bh_i}_2^2$ in \eqref{eq:canon_NMF} with some other robust loss measure $\psi(\cdot\Vert\cdot)$
\begin{equation}
\min_{\bW\in\calC, \{\bh_i\}_{i=1}^N\ge 0} \quad \frac{1}{N}\sum_{i=1}^N  \psi(\bv_i\Vert\bW\bh_i). \label{eq:robust1}
\end{equation}
For example, $\psi(\bv_i\Vert\bW\bh_i)$ can be the $\ell_2$ norm $\norm{\bv_i-\bW\bh_i}_2$ \cite{Ding_11} or the $\ell_1$ norm $\norm{\bv_i-\bW\bh_i}_1$ \cite{Kasi_12}.   %$\norm{\bV-\bW\bH}_{2,1}$ \cite{Ding_11} or $\norm{\bV-\bW\bH}_{1,1}$ \cite{Kasi_12}. 
The second category \cite{Fev_14,Shen_14,Zhang_11} retains the squared $\ell_2$ loss but explicitly models the outlier vectors $\{\br_i\}_{i=1}^N$. Specifically, \eqref{eq:canon_NMF} is reformulated as  % the data generation model is $\bV = \bW\bH+\bR+\bE$, where $\bE$ contains \iid Gaussian noise entries. 
\begin{align}
&\min \quad \frac{1}{N}\sum_{i=1}^N \frac{1}{2}\norm{\bv_i-\bW\bh_i-\br_i}_2^2 + \lambda\, \phi(\bR) \nn\\
&\st \quad \bW\in\calC, \{\bh_i\}_{i=1}^N\ge 0, \bR\in\calQ\label{eq:robust2}
\end{align}
where $\lambda\ge 0$ is the regularization parameter, $\bR = [\br_1,\ldots,\br_N]$ is the outlier matrix, $\phi(\bR)$ is the regularizer on $\bR$ and %on $\{\br_i\}_{i=1}^n$ and 
$\calQ$ is the feasible set of $\bR$. Depending on the assumed sparsity structure of $\bR$, $\phi(\bR)$ can be the $\ell_{2,1}$ norm \cite{Fev_14}, $\ell_{1,2}$ norm\cite{Shen_14} or $\ell_{1,1}$ norm \cite{Zhang_11} of $\bR$. %A unified optimization problem formulation for both types can be stated as
Robust NMF algorithms typically do not admit strong recovery guarantees of the original data matrix %(given by $\bW\bH$) %$\bL\defeq\bW\bH$ 
since neither \eqref{eq:robust1} nor \eqref{eq:robust2} are convex programs. However, as shown empirically, the estimated basis matrix $\hatbW$ represents meaningful parts of the data and the residues of the outliers in the reconstructed matrix $\hatbW\hatbH$ are very small \cite{Shen_14,Fev_14,Zhao_16}. 
%the recovered data matrix $\hatbW\hatbH$ has very little outlier residues . 
%in many real world applications, the estimated basis matrix $\hatbW$ or the data matrix $\hatbW\hatbH$ are empirically good \cite{Shen_14,Fev_14}.
%In contrast, another family of algorithms called {\em robust PCA} \cite{Candes_11} can recover $\bL$ exactly, provided $\bL$ satisfies some technical conditions. %({\em restricted isometry property} for example).
%However, compared to RNMF, it also has several limitations: 1) The technical conditions on $\bL$ are not satisfied for most real data, making exact recovery guarantee essentially void; 2) The convex program leading to the recovery of $\bL$ is inefficient to solve; 3) Parts-based representation can not be directly obtained. Therefore, in this work our algorithm leverages on the family of RNMF algorithms, or more specially, the second type since it has explicit representation of outliers and thus is more amenable to analysis. %, while it has been shown recently that solving 

%\subsection{Robust PCA}
% Compared 

\subsection{Online Matrix Factorization}
Existing algorithms on online matrix factorization belong to two distinct categories.  The first category of algorithms \cite{Guan_12,Mairal_10,Feng_13,Shen_14b,WangFei_11} assumes %the underlying low-dimensional data %structure (subspace or polyhedral cone)
%subspace (or polyhedral cone for the nonnegative coding problems) 
%is static. In most cases, it assumes 
the data samples $\{\bv_t\}_{t\ge 1}$ are generated independently from a time-invariant distribution $\bbP$. Under this assumption, it is possible to provide theoretical guarantees on the convergence of the online stochastic algorithms by leveraging the empirical process theory, % as well as the corresponding limit point, 
as was done  in \cite{Mairal_10,Feng_13,Guan_12,Shen_14b}. These methods have extensive applications, including document clustering \cite{WangFei_11}, image inpainting~\cite{Mairal_10}, face recognition~\cite{Guan_12}, and image annotation \cite{Guan_12}. The second category of algorithms  \cite{WuShen_14,Cao_07,Bucak_08,Lef_11,WangDong_13,Wang_13,Xing_13,Zhang_15,Zhang_15b}, with major applications in visual tracking, assumes %the underlying data  is dynamic. In other words, these works assume the data samples are correlated or the distribution of the data samples, 
that the data generation distribution $\bbP$ is time-varying. Although these assumptions are weaker than those in the first class of algorithms, it is very difficult to provide theoretical guarantees on the convergence of the online algorithms. % in a stochastic setting. %An initial attempt has been made in \cite{Zhan_16,Lois_15}, assuming an auto-regression model for data generation. However, their algorithms and analyses are limited to online robust PCA problems and their assumptions are relatively strong. 
%Indeed, almost all the existing algorithms of the second type lack formal performance guarantees. 
%These methods are mainly applied in visual tracking problems. 
%In this work, we focus on the first type problems and derive an algorithm that enjoy both strong theoretical performance guarantees and superior practical applicabilities. 

\subsection{Online Low-rank and Sparse Modeling}\label{sec:OL_lowrank}
Another related line of works \cite{Zhan_16,Lois_15,Guo_15,Spre_12,Spre_12b} aims to recover the low-rank data matrix $\bL$ and the sparse outlier matrix $\bS$ from their additive mixture $\bM$ in an online fashion. Among these works, \cite{Zhan_16,Lois_15,Guo_15} assume that the sequence of mixture vectors (columns of $\bM$) arrives in a streaming fashion. The authors derive the recovery algorithms based on their proposed models of the sequence of ground-truth data vectors (columns of $\bL$) and outlier vectors (columns of $\bS$). 
The authors of \cite{Spre_12,Spre_12b} adopt a different approach. They allow the number of mixture vectors to be large but finite and use an alternating minimization approach to solve a variant of \eqref{eq:robust2} (with additional Tikhonov regularizers on $\bW$ and $\{\bh_i\}_{i=1}^N$). In learning the coefficient vectors $\{\bh_i\}_{i=1}^N$ (with fixed $\bW$), the authors employ the stochastic gradient descent method, thus ensuring that their algorithms are scalable to large-scale data.

\section{Problem Formulation}\label{sec:prob}
Following Section~\ref{sec:RNMF}, in this work we explicitly model the outlier vectors as $\{\br_t\}_{t\ge 1}$. %instead of using a robust fitting measure $\psi$)
 %leverages on the second type of RNMF algorithms (described in Section~\ref{sec:RNMF}) 
Also, we assume the data generation distribution $\bbP$ is time-invariant, for ease of the convergence analysis. 
First, for a fixed data sample $\bv$ and a fixed basis matrix $\bW$, define the loss function with respect to (w.r.t.) $\bv$ and $\bW$, $\ell(\bv,\bW)$ as
\begin{align}
\ell(\bv,\bW) \defeq \min_{\bh\ge 0,\br\in\calR} \tilell(\bv,\bW, \bh, \br),\label{eq:ith_loss}
\end{align}
where 
\begin{equation}
\tilell(\bv,\bW, \bh, \br)\defeq\frac{1}{2}\norm{\bv-\bW\bh-\br}_2^2 +\lambda\norm{\br}_1, \label{eq:ith_loss_tilde}
\end{equation}
and $\calR \defeq \{\br\in\bbR^F\,|\,\norm{\br}_\infty\le  M\}$ is the constraint set of the outlier vector $\br$. Here we use the $\ell_1$ regularization to promote entrywise sparsity on $\br$. 

Next, given a finite set of data samples $\{\bv_i\}_{i\in[t]}\iidsim\bbP$,  we define the {\em empirical loss} associated with $\{\bv_i\}_{i\in[t]}$, $f_t(\bW)$ as %\eqref{eq:robust2} amounts to the following optimization problem %minimizing an empirical loss function  over $\bW\in\calC$, where  %solving a constrained empirical loss minimization problem
%\begin{align}
%\min_{\bW\in\calC} \quad \left[f_t(\bW) \defeq  \frac{1}{t}\sum_{i=1}^t\ell(\bv_i,\bW)\right] \label{eq:emp_loss}
%\end{align}
\begin{align}
f_t(\bW) \defeq  \frac{1}{t}\sum_{i=1}^t\ell(\bv_i,\bW) \label{eq:emp_loss}
\end{align}
where $\calC \defeq \left\{\bW\in\bbR_+^{F\times K}\,|\,\norm{\bW_{:i}}_2\le 1, \forall\,i\in[K]\right\}$. % and $f_t(\bW)$ is the {\em empirical loss}. %The loss function w.r.t.\ $\bv_i$, $\ell(\bv_i,\bW)$ is defined as

Following the convention of the online learning literature \cite{Tsypkin_71,Bottou_98,Bottou_08}, instead of minimizing the empirical loss $f_t(\bW)$ in \eqref{eq:emp_loss}, we aim to minimize the {\em expected loss} $f(\bW)$, i.e., % over $\bW\in\calC$, where
%as our objective  
\begin{equation}
\min_{\bW\in\calC} \left[f(\bW) \defeq \bbE_{\bv\sim\bbP}[\ell(\bv,\bW)]\right].  \label{eq:exp_loss}
\end{equation}
In other words, we aim to solve a (non-convex) stochastic program \cite{Shap_07}. Note that by the strong law of large numbers, given any  $\bW\in\calC$, we have
$f_t(\bW) \convas f(\bW)$ as $t\to\infty$.

\begin{remark}
We make three remarks here. First we explain the reasonings behind the choice of the constraint sets $\calC$ and $\calR$.  The set $\calC$ constrains the columns of $\bW$ in the unit (nonnegative) $\ell_2$ ball. This is to prevent the entries of $\bW$ from being unbounded, following the conventions in \cite{Lin_07a,Lin_07b}. The set $\calR$ uniformly bounds the entries of $\br$. This is because in practice, both the underlying data (without outliers) and the observed data are uniformly bounded entrywise. Since we do not require $\br$ to be exactly recovered, this prior information can often improve the estimation of $\br$. For real data, the bound $M>0$ can often be easily chosen. For example, for gray-scale images, $M$ can be chosen as $2^m-1$, where $m$ is the number of bits per pixel. In the case of matrices containing ratings from users with a maximum rating of $\upsilon$, $M$ can be chosen as $\upsilon$. In some scenarios where $M$ is difficult to estimate, we simply set $M=\infty$. As will be shown in Sections~\ref{sec:algo} and \ref{sec:conv_analysis}, the algorithms and analyses developed for finite $M$ can be easily adapted to infinite $M$. 
Second, for the sake of brevity, we omit regularizing $\bW$ and $\bh$. Such regularizations, together with other possible constraint sets of $\bW$ and $\br$ will be discussed in Section~\ref{sec:extension}.
Third, we assume the ambient data dimension $F$, the latent data dimension $K$ and the penalty parameter $\lambda$ in \eqref{eq:ith_loss} are time-invariant parameters.\footnote{In this work, we do not simultaneously consider the data with high ambient dimensions. An attempt on this problem in the context of dictionary learning with the squared-$\ell_2$ loss has been made in \cite{Mensch_16}.}  %i.e., they are not functions of $t$. 
\end{remark}

\section{Algorithms}\label{sec:algo}
To tackle the problem proposed in Section~\ref{sec:prob}, we leverage the  {\em stochastic majorization-minimization (MM)} framework\cite{Mairal_13,Raza_16}, which has been widely used in previous works on online matrix factorization \cite{Mairal_10,Feng_13,Shen_14b,WuShen_14,Cao_07,Lef_11,Bucak_08,WangDong_13,Wang_13}. In essence, such framework decomposes the optimization problem in \eqref{eq:exp_loss} into two steps, namely {\em nonnegative encoding} and {\em dictionary update}. Concretely, at a time instant $t$ $(t\ge 1)$, we first learn the coefficient vector $\bh_t$ and the outlier vector $\br_t$ based on the newly acquired data sample $\bv_t$ and the previous dictionary matrix $\bW_{t-1}$.  Specifically, we solve the following convex optimization problem
\begin{equation}
%(\bh_t,\br_t) = \argmin_{\bh\geq 0,\br\in\calR}\;\; \frac{1}{2}\norm{\bv_t-\bW_{t-1}\bh-\br}_2^2 + \lambda\norm{\br}_1. \label{eq:min_hr}
(\bh_t,\br_t) = \argmin_{\bh\geq 0,\br\in\calR}\;\; \tilell(\bv_t,\bW_{t-1}, \bh, \br).\label{eq:min_hr}
\end{equation}
Here the initial basis matrix $\bW_0$ is randomly chosen in $\calC$. 
%where
%\begin{equation}
%\calR \defeq \{\br\in\bbR^F\,|\,\norm{\br}_\infty\le  M\}.
%\end{equation}
Next, based on the past statistics $\{\bv_i,\bh_i,\br_i\}_{i\in[t]}$, the basis matrix is updated to %$\bW_t$,  %, $\{\bh_i\}_{i\in[t]}$ and $\{\br_i\}_{i\in[t]}$
\begin{align}
\bW_t = \argmin_{\bW\in\calC} \tilf_t(\bW),\label{eq:min_W}
\end{align}
where 
\begin{equation}
\tilf_t(\bW)\defeq\frac{1}{t}\sum_{i=1}^t \frac{1}{2}\norm{\bv_i-\bW\bh_i-\br_i}_2^2+\lambda\norm{\br_i}_1. \label{eq:tilde_f_t}
\end{equation}
We note that \eqref{eq:min_W} can be rewritten as
\begin{align}
\bW_t = \argmin_{\bW\in\calC} \frac{1}{2}\tr\left(\bW^T\bW\bA_t\right) - \tr\left(\bW^T\bB_t\right), \label{eq:min_W_tr}
\end{align}
where $\bA_t\defeq1/t\sum_{i=1}^t \bh_i\bh_i^T$ and $\bB_t \defeq 1/t\sum_{i=1}^t (\bv_i-\br_i)\bh_i^T$ are the sufficient statistics. From \eqref{eq:min_W_tr}, we observe that our algorithm has a storage complexity independent of $t$ since only $\bA_t$, $\bB_t$ and $\bW_t$ need to be stored and updated.
% and ii) $\bW_t$ is a function of only $\bA_t$ and $\bB_t$ if we assume \eqref{eq:min_W_tr} has a unique minimizer. 
%As shown below, the dictionary update step in \eqref{eq:min_W} only relies on two sufficient statistics %defined as 
%Thus, we can achieve constant storage complexity throughout the online algorithm. 

%We have two sub-problems.\\
%First, for fixed $\bv$ and $\bW$, we need to solve
%\begin{align*}
%%&\min\;\; \frac{1}{2}\norm{\bv_t-\bW_{t-1}\bh-\br}_2^2 + \lambda\norm{\br}_1\\
%&\min\;\; \frac{1}{2}\norm{\bv-\bW\bh-\br}_2^2 + \lambda\norm{\br}_1\\
%&\st\;\; \bh\geq 0,\norm{\br}_\infty\le  M \numberthis\label{eq:min_hr} 
%\end{align*}
%%We restrain the outlier $\br$ to be bounded because %also provide corresponding algorithms. 
%Second, for fixed $\{\bh_i\}_{i\in[t]}$ and $\{\br_i\}_{i\in[t]}$, we need to solve
%\begin{equation}
%\min_{\bW\in\calC}\frac{1}{t}\sum_{i=1}^t\frac{1}{2}\norm{\bv_i-\bW\bh_i-\br_i}_2^2.\label{eq:min_W}
%\end{equation}
%where $\calC \defeq \left\{\bW\in\bbR_+^{F\times K}\,|\,\norm{\bW_i}_2\le 1, \forall\,i\in[K]\right\}$ is a closed convex set. This normalization constraint is a convention in the literature of NMF to prevent entries of $\bW$ to be arbitrarily large. 

%For outliers with or without nonnegativity constraints, 
To solve \eqref{eq:min_hr} and \eqref{eq:min_W_tr} ({\em at a fixed time instant $t$}), we propose two solvers based on PGD and ADMM respectively. For ease of reference, we refer to the former algorithm as OPGD and the latter as OADMM.
%The reasons that we propose two algorithms are illustrated as follows. 
We now explain the motivations behind proposing these two solvers. Since both \eqref{eq:min_hr} and \eqref{eq:min_W_tr} are constrained optimization problems, the most straightforward solver would be based on PGD. Although such a solver has a linear computational complexity per iteration, it typically needs a large number of iterations to converge. Moreover, it is also easily trapped in bad local minima \cite{Sun_14}. Thus, it would be meaningful to contrast its performance with a solver with very different properties. This leads us to propose another solver based on ADMM. Such a solver has a higher computational complexity per iteration but typically needs fewer number of iterations to converge \cite{Xu_12}. It is also less susceptible to bad local minima since it solves optimization problems in the dual space. In Section~\ref{sec:numericals}, we will show that the practical performances of these two solvers are comparable despite the different properties they possess. Thus either solver can be used for most practical purposes. 
%In fact, the solvers based on PGD \cite{Lin_07a} and ADMM \cite{Xu_12,Sun_14} have been applied to the canonical NMF problem. 
%However, these two classes of algorithms have different properties. 
%can be regarded as a special case of a more general class of algorithms, namely the majorization-minimization (MM) algorithms. MM has been widely applied in solving NMF problems. 
%Specifically, PGD-based algorithms have linear computational complexity per iteration %However, as pointed in \cite{Sun_14}, MM-based algorithms also 
%but typically need a large number of iterations to converge. %suffers from slow convergence (in terms of number in) 
%Moreover, they are easily trapped in bad local minima \cite{Sun_14}. %only asymptotic convergence to zeros. 
%Recently ADMM-based algorithms have been shown to overcome the drawbacks of those based on MM and obtain promising performance in solving the NMF problems \cite{Sun_14,Xu_12}. However, 
%In contrast, ADMM-based algorithms have higher computational complexity per iteration but typically need fewer number of iterations to converge. They are also less susceptible to bad local minima since they solve optimization problems in the dual space. 
%and are more complicated to implement. 
%Since these two algorithms are complementary, we propose both in order for practitioners to choose based on the real applications. The experimental performances of the two algorithms
% together with that of other similar matrix factorization algorithms, 
%are compared in Section~\ref{sec:numericals}. 

In the sequel, we omit the time subscript $t$ to keep notations uncluttered. %We use $k\in\bbN$ to index the iterations. 
In the iterations, the updated value of a variable is denoted with the superscript `$+$'. 
%Also, the (unique) optimal solution of an optimization problem is denoted with the superscript $*$.
Pseudo-codes of the entire algorithm (for $N$ data samples) are provided in Algorithm~\ref{algo:generic}. %Note that $N$ can be both finite or infinite. 

%\subsection{Real Outliers}
\subsection{Online Algorithm Based on PGD (OPGD)}\label{sec:OPGD}
%We can adapt the batch algorithm derived in \cite{Zhang_11} to this case. The update rules are
%We first derive an algorithm for \eqref{eq:min_hr}.  
\subsubsection{PGD solver for \eqref{eq:min_hr}} \label{sec:algoMM_minhr}

For a fixed $\bW$, we solve \eqref{eq:min_hr} by alternating between the following two steps
\begin{align}
&\bh^+ := \argmin_{\bh'\ge 0} Q_\eta(\bh'\vert \bh),\label{eq:min_h}\\
&\br^+ := \argmin_{\br'\in\calR} \; \frac{1}{2}\norm{\bv-\bW\bh^+-\br'}_2^2 + \lambda\norm{\br'}_1,\label{eq:min_r}
\end{align}
where\footnote{At time $t$, the value of $L$ is computed based on $\bW_{t-1}$, i.e., $L_t \defeq \norm{\bW_{t-1}}_2^2$.}  
\begin{equation}
Q_\eta(\bh'\vert \bh) \defeq q(\bh) + \lrangle{\nabla q(\bh)}{\bh'-\bh} + \frac{1}{2\eta} \norm{\bh'-\bh}_2^2,\nn
\end{equation}
$q(\bh) \defeq \frac{1}{2}\norm{\bv-\bW\bh-\br}_2^2$ and  $\eta\in\left(0,1/L\right]$ with $L\defeq \norm{\bW}_2^2$. The steps \eqref{eq:min_h} and \eqref{eq:min_r} can be interpreted based on the framework of {\em block MM} \cite{Raza_13,Hong_16}. Specifically, it is easy to verify that the steps \eqref{eq:min_h} and \eqref{eq:min_r} amount to finding the (unique) minimizers of the majorant functions\footnote{For a function $g$ with domain $\calG$, its majorant at $\kappa\in\calG$, $\tilg$ is the function that satisfies i) $\tilg\ge g$ on $\calG$ and ii) $\tilg(\kappa) = g(\kappa)$.} of $\bh'\mapsto\tilell(\bv,\bW,\bh',\br)$ at $\bh$ and $\br'\mapsto\tilell(\bv,\bW,\bh^+,\br')$ at $\br$ respectively. Therefore, the convergence analysis in \cite{Raza_13} guarantees that such alternating minimization procedure converges to a global optimum of \eqref{eq:min_hr}. 

In addition, we notice that the minimizations in both \eqref{eq:min_h} and \eqref{eq:min_r} have closed-form solutions. For \eqref{eq:min_h}, the solution is given by the PGD update step (with constant step size)
\begin{equation}
\bh^+ := \calP_+(\bh - \eta\nabla q(\bh)).
\end{equation}
For ease of parameter tuning, we rewrite $\eta = \barkappa/L$ ($0<\barkappa\le 1$). Furthermore, we fix $\eta$ (or $\barkappa$) throughout all iterations.

% Unlike   \eqref{eq:min_h}, we can derive a closed-form solution for \eqref{eq:min_r}. We recognize~\eqref{eq:min_r} as a proximal minimization\footnote{For a function $g$ with domain $\calG\subseteq \bbR^n$ and $x\in\bbR^n$, $\prox_g(x) = \argmin_{y\in\calG} 1/2\norm{x-y}_2^2 + g(y)$.} problem
%\begin{equation}
%\br^* := \prox_{\tilq} (\bv-\bW\bh). \label{eq:prox_r}
%\end{equation}
%In other words, we need to find the proximal operator of the $\ell_1$ norm with box constraints. 
For \eqref{eq:min_r}, if $M=\infty$, the solution is precisely given by  
\begin{equation}
\br^+ := \calS_{\lambda} (\bv-\bW\bh^+),
\end{equation}
where $\calS_{\lambda}$ is the (elementwise) soft-thresholding operator threshold $\lambda$. Otherwise, when $M$ is finite, using \cite[Lemma~5]{ZhangBox_15} (see Lemma~\ref{lem:L1_box}), we have %see \eqref{eq:prox_r} can be implemented via the operator
%obtain an operator $\widetilde{\calS}_{\lambda,M}$ with similar form to $\calS_\lambda$ %this operator is given in a similar form as the well-known soft-thresholding operator, which acts as the proximal operator for the unconstrained $\ell_1$ norm
\begin{equation}
\br^+ := \widetilde{\calS}_{\lambda,M} (\bv-\bW\bh^+),
\end{equation}
where for any $\bx\in\bbR^F$ and $i\in[F]$,
\begin{equation*}
\left(\widetilde{\calS}_{\lambda,M}(\bx)\right)_i :=\begin{cases}
0, &|x_i|<\lambda\\
x_i - \sgn(x_i)\lambda, & \lambda \le |x_i| \le \lambda+M\\
\sgn(x_i)M, &|x_i| > \lambda+M
\end{cases}.
\end{equation*}
%where $\bx\in\bbR^F$ and $i\in[F]$. Therefore, we have
%\begin{equation}
%\br^* := \widetilde{\calS}_{\lambda,M} (\bv-\bW\bh).
%\end{equation}
%where $\widetilde{\calS}_{\lambda,M}$ is applied elementwise to $\bv-\bW\bh$ and for a scalar $x$, 

%In the case where it is hard to find a suitable finite $M$, we can simply set $M=\infty$. In such case, we have
%where $\calS_{\lambda}$ is the well-known soft-thresholding operator with threshold $\lambda$ \cite{Parikh_14} applied elementwise to $\br$.
%For a scalar $x$, $\widetilde{\calS}(x) = \Pi_{[-M,M]}\left(\calS_\lambda(x)\right)$, where $\Pi_{[-M,M]}$ is the projection operator onto interval $[-M,M]$ and $\calS_\lambda(x)$ is the soft-thresholding operator with threshold $\lambda$.

\subsubsection{PGD solver for \eqref{eq:min_W_tr}} \label{sec:algoMM_minW}
Similar to the procedure for solving \eqref{eq:min_h}, we first rewrite \eqref{eq:min_W_tr} as 
\begin{align}
&\min_\bW p_t(\bW) + I_{\calC}(\bW), \mbox{where}\nn\\
& \qquad p_t(\bW) = \frac{1}{2}\tr\left(\bW^T\bW\bA_t\right) - \tr\left(\bW^T\bB_t\right).
\end{align}
%where $I_{\calC}$ is the indicator function of set $\calC$ and $\bV_t \defeq [\bv_1 \ldots \bv_t]$ (similar definitions apply to $\bH_t$ and $\bR_t$). For convenience we denote $h_t(\bW) \defeq 1/2\norm{\bV_t - \bW\bH_t - \bR_t}_F^2$. Since $h_t(\bW)$ is convex and strongly smooth %(\wrt $\bW$) 
%and the second term $I_{\calC}(\bW)$ is proper convex, we make use of the proximal gradient method \cite{Parikh_14} (a subclass of the MM algorithms) to solve \eqref{eq:min_W2}. 
First, it is easy to see $p_t$ is convex and differentiable, $\nabla p_t$ is Lipschitz with constant $\tilL_t\defeq\norm{\bA_t}_F$. Thus, we can construct a majorant function $P_t(\bW'|\bW)$ for $p_t(\bW')$ at $\bW\in\calC$
\begin{align}
&P_t(\bW'|\bW) = p_t(\bW) + \lrangle{\nabla p_t(\bW)}{\bW'-\bW}\nn \\
&\hspace{4cm}   + \frac{1}{2\tmeta_t}\norm{\bW'-\bW}_F^2,
\end{align}
where $\nabla p_t(\bW) = \bW\bA_t-\bB_t$ and $\tmeta_t\in(0,1/\tilL_t]$. 
%Fix a point $\bW\in\calC$, we construct an upperbound, $H_t(\bW')$ for the objective function in \eqref{eq:min_W2} as
%\begin{equation}
%H_t(\bW') \defeq h_t(\bW) + \lrangle{\nabla h_t(\bW)}{\bW'-\bW} + \frac{1}{2\eta_t}\norm{\bW'-\bW}_F^2 + I_\calC(\bW'),
%\end{equation}
%where $\eta_t\in(0,1/L_t]$. 
Minimizing $P_t(\bW'|\bW)+I_\calC(\bW')$ over $\bW'\in\calC$, we have
\begin{align}
\bW^+ &:= \argmin_{\bW'\in\calC} \norm{\bW'-(\bW-\tmeta_t\nabla p_t(\bW))}_F^2\\
&:= \calP_{\calC} (\bW-\tmeta_t\nabla p_t(\bW)). \label{eq:proj_W}
\end{align}
By Lemma~\ref{lem:proj_nonnegL2}, the projection step in \eqref{eq:proj_W} is given by  
\begin{equation}
\bW^+_{:j} := \frac{\calP_+(\bW-\tmeta_t\nabla p_t(\bW))_{:j}}{\max\{1,\norm{\calP_+(\bW-\tmeta_t\nabla p_t(\bW))_{:j}}_2\}}, \;\forall\,j\in [K].
\end{equation}
Again, for each iteration given in \eqref{eq:proj_W}, we use the same step size $\tmeta_t = \tkappa_t/L$ where $0<\tkappa_t\le 1$.
%$\eta_t$ is fixed though all iterations due to Lipschitz property of $\nabla p_t$.
%where $\bW_j$ is the $j$-th column of $\bW$.
%Since the online setting prohibits storing $\bV_t$, $\bH_t$ and $\bR_t$, we express $\nabla h_t(\bW)$ and $L_t$ in terms of sufficient statistics $\bA_t\defeq\sum_{i=1}^t \bh_i\bh_i^T$ and $\bB_t = \sum_{i=1}^t (\bv_i-\br_i)\bh_i^T$
%\begin{align*}
%\nabla h_t(\bW) = \bW\bA_t - \bB_t \qquad \mathrm{and}\qquad L_t = \norm{\bA_t}_F.
%\end{align*}

\begin{remark}\label{rmk:acc_grad}
In the literature\cite{Tseng_08,Nest_13}, the accelerated proximal gradient descent (APGD) method  has been proposed to accelerate the canonical PGD method. With an additional extrapolation step in each iteration, the convergence rate can be improved from $O(1/k)$ to $O(1/k^2)$, where $k$ denotes the number of iterations. However, since in our implementations, both \eqref{eq:min_hr} and \eqref{eq:min_W_tr} were only solved to a prescribed accuracy (see Section~\ref{sec:stop_crit}), we observed  no significant reduction in running times on the real tasks. Thus for simplicity, we only employ the canonical PGD method. 
\end{remark}

\subsection{Online Algorithm Based on ADMM (OADMM)}\label{sec:ADMM}
Below we only present the update rules derived via ADMM. The detailed derivation steps are shown in Section~\ref{sec:deriv_ADMM}. %the supplemental material. %Section~\ref{sec:deriv_ADMM}.
\subsubsection{ADMM solver for \eqref{eq:min_hr}} \label{sec:algoADMM_minhr}
First we reformulate \eqref{eq:min_hr} as
\begin{align}
&\min_{\bh,\br}\;\; \frac{1}{2}\norm{\bv-\bW\bh-\br}_2^2 + \lambda\norm{\br}_1 + I_+(\bu) + I_\calR(\bq)\nn\\
&\st \;\; \bh = \bu, \br = \bq %, \bu\ge 0, \norm{\bq}_\infty\le M
\end{align}
Thus the augmented Lagrangian is 
\begin{align}
&\calL(\bh,\br,\bu,\bq,\balpha,\bbeta) = \frac{1}{2}\norm{\bv-\bW\bh-\br}_2^2 + \lambda\norm{\br}_1  \nn\\
&+ I_+(\bu) + I_\calR(\bq) +\balpha^T(\bh-\bu) + \bbeta^T(\br-\bq) \nn\\
&+\frac{\rho_1}{2}\norm{\bh-\bu}^2_2 + \frac{\rho_2}{2}\norm{\br-\bq}_2^2, 
\end{align}
where $\balpha$, $\bbeta$ are dual variables and $\rho_1$, $\rho_2$ are positive penalty parameters. 
Then we sequentially update each of $\bh$, $\br$, $\bu$, $\bq$, $\balpha$ and $\bbeta$ while keeping other variables fixed 
\begin{align}
\bh^+ &:= (\bW^T\bW+\rho_1\bI)^{-1}\left(\bW^T(\bv-\br) + \rho_1\bu-\balpha\right)\label{eq:ADMMupd_h}\\
\br^+ &:= \calS_\lambda(\rho_2\bq+\bv-\bbeta-\bW\bh)/(1+\rho_2)\\
\bu^+ &:= \calP_+\left(\bh^+ +\balpha/\rho_1\right)\\
\bq^+ &:= \calP_\calR\left(\br^+ +\bbeta/\rho_2\right)\\
\balpha^+ &:= \balpha + \rho_1(\bh^+-\bu^+)\\
\bbeta^+ &:= \bbeta + \rho_2(\br^+-\bq^+).
\end{align}
%where for a vector $\bx$, $\bx^+$ denotes its updated value. %and $\calQ = \{\bq\in\bbR^F\,|\,\norm{\bq}_\infty\le M\}$. %, $\calS(\bx)$ denotes the result of applying the soft-thresholding operator $\calS$ elementwise to $\bx$ and the $i$-th  element of $(\bx)_+$ equals $\max\{\bx_i,0\}$.

\subsubsection{ADMM solver for \eqref{eq:min_W_tr}} \label{sec:algoADMM_minW}
%Similarly we derive an algorithm for \eqref{eq:min_W_tr}. 
Again, we rewrite \eqref{eq:min_W_tr} as
\begin{align}
&\min \;\; \frac{1}{2}\tr\left(\bW^T\bW\bA_t\right) - \tr\left(\bW^T\bB_t\right) + I_\calC(\bQ)\nn\\
&\st \;\; \bW=\bQ %, \bQ\in \calC
\end{align}
%where $\bV_t \defeq [\bv_1 \ldots \bv_t]$ and similar definitions apply to $\bH_t$ and $\bR_t$. \\
The augmented Lagrangian in this case is
\begin{align}
&\calL(\bW,\bQ,\bD) = \frac{1}{2}\tr\left(\bW^T\bW\bA_t\right) - \tr\left(\bW^T\bB_t\right) + I_\calC(\bQ)\nn\\
&\hspace{2cm}+ \lrangle{\bD}{\bW-\bQ} + \frac{\rho_3}{2}\norm{\bW-\bQ}_F^2,
\end{align}
where $\bD$ is the dual variable and $\rho_3$ is a positive penalty parameter. 
Minimizing $\bW$, $\bQ$ and $\bD$ sequentially yields %Therefore the update rule is
\begin{align}
\bW^+ %&:=& \left((\bV_t-\bR_t)\bH_t^T-\bD+\rho_3\bQ\right)\left(\bH_t\bH_t^T+\rho_3\bI\right)^{-1} \nn\\
&:= \left(\bB_t-\bD+\rho_3\bQ\right)\left(\bA_t+\rho_3\bI\right)^{-1}\label{eq:ADMMupd_W}\\
\bQ^+ &:=  \calP_\calC\left(\bW^++\bD/\rho_3\right)\label{eq:ADMMupd_Q}\\
\bD^+ &:=  \bD + \rho_3\left(\bW^+-\bQ^+\right).
\end{align}
\begin{remark}\label{rmk:algo}
A few comments are now in order: First, although  algorithms based on both PGD and ADMM exist in the literature for the canonical NMF problem  (see for example, \cite{Lin_07a,Sun_14,Xu_12}), our problem setting is different from those in the previous works. Specifically, our problem explicitly models the outlier vector $\br$ and the constraint set on $\bW$ is more complicated than the non-negative orthant $\bbR_+^{F\times K}$. % and $\br$, both solvers are different from that. 
Moreover, for the algorithm based on PGD, we use a fixed step size though all the iterations. In contrast, the usual projected gradient method applied to NMF \cite{Lin_07a} involves  computationally intensive Armijo-type line searches for the step sizes.
Second, although updates \eqref{eq:ADMMupd_h} and \eqref{eq:ADMMupd_W} involve matrix inversions, the operations do not incur high computational complexity since both matrices to be inverted have sizes $K\times K$ where $K \ll F$. % which is typically computationally expensive (). However, , so the inversion operation only has complexity $O(K^3)$. 
Third, the proposed two solvers can be easily extended to solve the batch NMF problem with outliers. Thus, we have also effectively proposed {\em two new solvers} for the (batch) robust NMF problem. These extensions are detailed in Section~\ref{sec:BRNMF_algo}. % in the supplemental material. 
In the sequel we term these two batch algorithms as BPGD and BADMM respectively. 
Finally, the stopping criteria of both OPGD and OADMM for solving \eqref{eq:min_hr} and \eqref{eq:min_W_tr} will be described in Section~\ref{sec:stop_crit}. 
\end{remark}

\begin{algorithm}[t]
\caption{Online NMF with outliers (ONMFO)}\label{algo:generic}
\begin{algorithmic} 
\State {\bf Input}: Data samples $\{\bv_i\}_{i\in[N]}$, penalty parameter $\lambda$, initial dictionary matrix $\bW_0$
\State {\bf Initialize} sufficient statistics: $\bA_0 := \vecz$, $\bB_0 := \vecz$
\State {\bf for} $t$ = 1 to $N$ {\bf do}
\State \quad 1) Acquire a data sample $\bv_t$.
\State \quad 2) Learn the coefficient vector $\bh_t$ and the outlier vector $\br_t$ based on $\bW_{t-1}$, % and $\bv_t$, 
using the solvers based on PGD or ADMM  (detailed  in Sections~\ref{sec:algoMM_minhr} and~\ref{sec:algoADMM_minhr})
%iteratively, using \eqref{eq:update_h} and \eqref{eq:update_r} %using Algorithm~\ref{algo:learn_hr}
\begin{align}
&\hspace{-.3cm}\left(\bh_t, \br_t\right) := \argmin_{\bh\geq 0,\br\in\calR} \tilell(\bv_t,\bW_{t-1}, \bh, \br).\label{eq:learn_hr}
\end{align} 
\State \quad 3) Update the sufficient statistics 
\begin{align*}
\bA_t &:= {1}/{t}\left\{(t-1)\bA_{t-1} + \bh_t\bh_t^T\right\}, \\
\bB_t &:= 1/t\left\{(t-1)\bB_{t-1} + (\bv_t-\br_t)\bh_t^T\right\}.
%\bC_t &:= \bC_{t-1} + \br_t\bh_t^\top.
\end{align*}
\State \quad 4) Learn the dictionary matrix $\bW_t$ based on $\bA_t$ and $\bB_t$, using the solvers based on PGD or ADMM  (detailed in Section~\ref{sec:algoMM_minW} and \ref{sec:algoADMM_minW})
%using Algorithm~\ref{algo:update_W}
\begin{equation}
\bW_t := \argmin_{\bW\in\calC}\frac{1}{2}\tr\left(\bW^T\bW\bA_t\right) - \tr\left(\bW^T\bB_t\right) \label{eq:dict_upd}
%\bW_t := \argmin_{\bW\in\calC}\frac{1}{t}\sum_{i=1}^t\frac{1}{2}\norm{\bv_i-\bW\bh_i-\br_i}_2^2+\lambda\norm{\br_i}_1
%\bW_t = \argmin_{\bW\geq 0} \frac{1}{2}\norm{\bV_t-\bW\bH_t-\bR_t}_F^2+\lambda\norm{\bR_t}_{1,1}.
\end{equation}
\State {\bf end for}
\State {\bf Output}: Final dictionary matrix $\bW_N$% (\textcolor{red}{what is $N$ in the subscript here?})
\end{algorithmic}
\end{algorithm}

\section{Convergence Analyses}\label{sec:conv_analysis}
In this section, we analyze the convergence of both the sequence of objective values $\{f(\bW_t)\}_{t\ge 1}$ (Theorem~\ref{thm:as_conv}) and the sequence of dictionary matrices $\{\bW_t\}_{t\ge 1}$ (Theorem~\ref{thm:local_min}) produced by Algorithm~\ref{algo:generic}. It is worth noticing that the analyses are independent of the specific solvers used in steps 2) and 4) in Algorithm~\ref{algo:generic}. 
%Different solvers may affect on the convergence rates towards the global optima in the convex programs \eqref{eq:min_hr} and \eqref{eq:min_W_tr}. 
Indeed, in our analyses, we only leverage the fact that both \eqref{eq:min_hr} and \eqref{eq:min_W_tr} can be exactly solved. 
%This is because our problem is nonconvex in nature, so we are unable to show any results on the convergence rates of the sequences generated from PGD and ADMM. 

\subsection{Preliminaries}\label{sec:prelim}
First, given any $\bh'\ge 0$ and $\br'\in\calR$, we observe that  $(\bv,\bW)\mapsto\tilell(\bv,\bW, \bh', \br')$ and $\tilf_t$ serve as upper-bound functions for $\ell$ (on $\bbR_+^F\times\calC$) and $f_t$ (on $\calC$) respectively. %Here $\{\bh_i\}_{i\in[t]}$ and $\{\br_i\}_{i\in[t]}$ are past statistics learned in Algorithm~\ref{algo:generic}.   
%This is because given $\bW$, for all $\bh\ge 0$ and $\br\in\calR$, we have $\tilell(\bv,\bW, \bh, \br) \ge \ell(\bv,\bW)$ and $\tilell(\bv,\bW, \bh^*, \br^*) = \ell(\bv,\bW)$, %In particular, $\tilell(\bv,\bW, \bh, \br)$ where 
Denote $(\bh^*,\br^*)$ as an optimal solution of \eqref{eq:ith_loss}. (Note that $(\bh^*,\br^*)$ is a function of $\bv$ and $\bW$ and always exists since $\tilell(\bv',\bW', \bh, \br)$ is closed and coercive on $\bbR_+^K\times \calR$ for any $\bv'\in\bbR_+^F$ and $\bW'\in\calC$.) Clearly we have $\tilell(\bv,\bW, \bh^*, \br^*) = \ell(\bv,\bW)$, for any $(\bv,\bW)\in\bbR_+^F\times\calC$. %Moreover, the dictionary update step \eqref{eq:dict_upd} can equivalently written as $\bW_t := \min_{\bW\in\calC} \tilf_t(\bW)$.

Next, we notice that $\ell$ can be equivalently defined as 
\begin{equation}
\ell(\bv,\bW) = \min_{(\bh,\br)\in\calH\times\calR} %\min_{\bh\in\calH,\br\in\calR} 
\tilell(\bv,\bW, \bh, \br), \label{eq:new_ithloss}
\end{equation}
where $\calH$ is a compact and convex set in $\bbR^K_+$. This is because $\bh^*$ in \eqref{eq:ith_loss} is bounded, due to the boundedness of $\bv_i$ and $\bW$. Thus it suffices to consider minimizing $\bh$ over a compact set in $\bbR^K_+$. We can further choose $\calH$ to be convex. If $M=\infty$, we can similarly show $\br^*$ is bounded thus still consider minimizing $\br$ over a compact and convex set $\calR$ in $\bbR^F$. 

We make three assumptions for our  subsequent analyses.
\begin{assump}\quad
\begin{enumerate}
\item The data generation distribution $\bbP$ has a compact support set $\calV$. \label{assum:comp_supp}
\item For all $(\bv,\bW)\in\calV\times\calC$, $\tilell(\bv,\bW, \bh, \br)$ is {\em jointly} $m_1$-strongly convex in $(\bh,\br)$ for some constant $m_1>0$.\label{assum:sc_hr}
\item For all $t\ge 1$, $\tilf_t(\bW)$ is $m_2$-strongly convex on $\calC$ for some constant $m_2>0$. \label{assum:sc_W}
%Each $\bW_t$ has full column rank, $\forall\,t\ge 0$. \label{assum:full_rank}
\end{enumerate}
\end{assump}
\begin{remark}\label{rmk:assump}
All of the assumptions above are made to simplify the convergence analyses. 
%Among them, Assumption~\ref{assum:sc_W} can be removed if we choose the majorant $$
%These assumptions are reasonable in the following sense. 
Among them, Assumption~\ref{assum:comp_supp} naturally holds for real data, which are always bounded. %Assumptions~\ref{assum:sc_hr} and \ref{assum:sc_W} are mainly for the convenience of analysis. 
Assumptions~\ref{assum:sc_hr} and \ref{assum:sc_W} play roles in proving Lemma~\ref{lem:regularity} and \ref{lem:rate_W} respectively. Apart from this, they have no effects in proving our main theorems (i.e., Theorem \ref{thm:as_conv} and \ref{thm:local_min}). These two assumptions hold by simply adding strongly convex regularizers to $\tilell(\bv,\bW, \bh, \br)$ or $\tilf_t(\bW)$. For example, we can add Tikhonov regularizer $(\nu_1\norm{\bh}_2^2 + \nu_2\norm{\br}_2^2)/2$ (for some positive $\nu_1,\nu_2$) to $\tilell(\bv,\bW, \bh, \br)$, then in such case $m_1 = \min(\nu_1,\nu_2)$. Adding such regularizers can be regarded as a way to promote smoothness and avoid over-fitting on $\bW$, $\bh$ and $\br$ (see Section~\ref{sec:extension}). Moreover, including the regularizers in $\tilell(\bv,\bW, \bh, \br)$ or $\tilf_t(\bW)$ will not alter any steps in our analyses. 
In terms of our algorithms, for small regularization weights (e.g., $\nu_1,\nu_2\ll 1$), such regularizers will only slightly alter steps \eqref{eq:learn_hr} and \eqref{eq:dict_upd} in Algorithm~\ref{algo:generic}. For example, in \eqref{eq:learn_hr}, $\tilell(\bv_t,\bW_{t-1}, \bh, \br)$ will become
\begin{align}
&\tilell'(\bv_t,\bW_{t-1}, \bh, \br) \defeq \frac{1}{2}\norm{\bv-\bW\bh-\br}_2^2 +\frac{\nu_1}{2}\norm{\bh}_2^2 \nn\\
&\hspace{4.5cm}+\lambda\norm{\br}_1 + \frac{\nu_2}{2}\norm{\br}_2^2.
\end{align}
Thus for simplicity we omit such regularizers in the algorithm. However, both of our solvers (OPGD and OADMM) can be straightforwardly adapted to the regularized case. 
%Moreover, besides adding the Tikhonov regularizer,  Assumption~\ref{assum:sc_W} also holds if we choose the upper-bound function $\tilf_t$ in a slightly different way. See Section~\ref{sec:rmv_scW} for details. 
\end{remark}

Finally, we define the notion of {\em stationary point} of a constrained optimization problem. %, an important concept used in the statement of our main results.
\begin{definition}\label{def:stat_pt}
Given a real Hilbert space $\calX$, a differentiable function $g:\calX\to\bbR$  and a set $\calK\subseteq\calX$, $x_0\in\calK$ is a stationary point of $\min_{x\in\calK}g(x)$ if $\lrangle{\nabla g(x_0)}{x-x_0}\ge 0$, for all $x\in\calK$.
\end{definition}

\subsection{Main Results and Key Lemmas}
In this section we present our main  results (Theorem~\ref{thm:as_conv} and \ref{thm:local_min}) 
and key lemmas to prove these results (Lemma~\ref{lem:regularity} and \ref{lem:rate_W}). We only show proof sketches here. Detailed proofs are deferred to %the supplemental material. 
Section~\ref{sec:regularity} to \ref{sec:Lips_b} in the supplemental material. 

%Our main results consist of two theorems. Theorem~\ref{thm:as_conv} states that the sequence of objective values $\{f(\bW_t)\}_{t\ge 1}$ converges almost surely (a.s.). 
% convergence of under Algorithm~\ref{algo:generic}. 
%However, it does not indicate any convergence properties of $\{\bW_t\}_{t\ge 1}$. These properties are stated in Theorem~\ref{thm:local_min}. Specifically, we show the sequence of dictionaries $\{\bW_t\}_{t\ge 1}$ converges to the set of  stationary points of \eqref{eq:exp_loss} almost surely (a.s.). 

\begin{theorem}[Almost sure convergence of the sequence of objective values]\label{thm:as_conv}
In Algorithm~\ref{algo:generic}, the nonnegative stochastic process $\{\tilf_t(\bW_t)\}_{t\ge 1}$ converges a.s.. Furthermore, $\{f(\bW_t)\}_{t\ge 1}$ converges to the same almost sure limit as $\{\tilf_t(\bW_t)\}_{t\ge 1}$. 
\end{theorem}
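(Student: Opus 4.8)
The plan is to replicate, with the extra outlier block handled, the quasi-martingale argument used for online matrix factorization in \cite{Mairal_10}. Let $\calF_t$ be the $\sigma$-algebra generated by $\bv_1,\dots,\bv_t$, so that $\bW_t$, $\bA_t$, $\bB_t$ and hence $\tilf_t$ are $\calF_t$-measurable. I would first record two structural facts: $\bW_{t+1}$ minimizes $\tilf_{t+1}$ over $\calC$, and $(\bh_{t+1},\br_{t+1})$ minimizes $\tilell(\bv_{t+1},\bW_t,\cdot,\cdot)$, so that $\tilell(\bv_{t+1},\bW_t,\bh_{t+1},\br_{t+1})=\ell(\bv_{t+1},\bW_t)$. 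Combined with the recursion $\tilf_{t+1}(\bW)=\tfrac{t}{t+1}\tilf_t(\bW)+\tfrac{1}{t+1}\tilell(\bv_{t+1},\bW,\bh_{t+1},\br_{t+1})$, these give
\begin{equation*}
\tilf_{t+1}(\bW_{t+1})-\tilf_t(\bW_t)\;\le\;\tilf_{t+1}(\bW_t)-\tilf_t(\bW_t)\;=\;\frac{\ell(\bv_{t+1},\bW_t)-\tilf_t(\bW_t)}{t+1}.
\end{equation*}

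Next I would insert $f(\bW_t)$ and $f_t(\bW_t)$ in the numerator and take $\bbE[\,\cdot\mid\calF_t]$. Because $\bv_{t+1}$ is independent of $\calF_t$ and $f(\bW)=\bbE_{\bv}[\ell(\bv,\bW)]$, the term $\ell(\bv_{t+1},\bW_t)-f(\bW_t)$ contributes zero; because $f_t\le\tilf_t$ on $\calC$ and both are $\calF_t$-measurable, the term $f_t(\bW_t)-\tilf_t(\bW_t)$ contributes a nonpositive quantity. Hence
\begin{equation*}
\bbE\!\left[\tilf_{t+1}(\bW_{t+1})-\tilf_t(\bW_t)\mid\calF_t\right]\;\le\;\frac{f(\bW_t)-f_t(\bW_t)}{t+1}\;\le\;\frac{\sup_{\bW\in\calC}\abs{f(\bW)-f_t(\bW)}}{t+1}.
\end{equation*}
I would then invoke the regularity of $\bW\mapsto\ell(\bv,\bW)$ from Lemma~\ref{lem:regularity} together with a covering-number/Donsker argument over the compact set $\calC$ (empirical process theory, \cite{Vaart_00}) to obtain $\bbE[\sup_{\bW\in\calC}\abs{f(\bW)-f_t(\bW)}]=O(t^{-1/2})$. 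Consequently the positive part of the conditional increment has summable expectation, and since $\{\tilf_t(\bW_t)\}$ is nonnegative, the quasi-martingale convergence theorem \cite[Theorem 9.4 \& Proposition 9.5]{Met_82} yields both the a.s.\ convergence of $\{\tilf_t(\bW_t)\}$ and the a.s.\ summability $\sum_t\abs{\bbE[\tilf_{t+1}(\bW_{t+1})-\tilf_t(\bW_t)\mid\calF_t]}<\infty$.

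To identify the limit of $\{f(\bW_t)\}$, I would rearrange the last display into the bound $\tfrac{1}{t+1}\bigl(\tilf_t(\bW_t)-f_t(\bW_t)\bigr)\le\tfrac{1}{t+1}\sup_{\bW\in\calC}\abs{f(\bW)-f_t(\bW)}-\bbE[\tilf_{t+1}(\bW_{t+1})-\tilf_t(\bW_t)\mid\calF_t]$, whose right-hand side is a.s.\ summable by the $O(t^{-1/2})$ estimate and the quasi-martingale conclusion; since the left-hand side is nonnegative this gives $\sum_t\tfrac{1}{t+1}\bigl(\tilf_t(\bW_t)-f_t(\bW_t)\bigr)<\infty$ a.s. I would then prove the slow-variation estimate $\abs{g_{t+1}-g_t}=O(1/t)$ for $g_t:=\tilf_t(\bW_t)-f_t(\bW_t)\ge0$, using the uniform-in-$t$ Lipschitz continuity of $\tilf_t$ and $f_t$ on $\calC$, the step bound $\norm{\bW_{t+1}-\bW_t}_F=O(1/t)$ from Lemma~\ref{lem:rate_W} (a consequence of the strong convexity in Assumption~3), and the fact that passing from index $t$ to $t+1$ perturbs these functions at a fixed point by $O(1/t)$ because $\bv_i,\bh_i,\br_i$ are uniformly bounded. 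A standard deterministic lemma (if $a_t\ge0$, $\sum_t a_t/t<\infty$ and $\abs{a_{t+1}-a_t}=O(1/t)$, then $a_t\to0$) then forces $\tilf_t(\bW_t)-f_t(\bW_t)\to0$ a.s. Finally $\abs{f(\bW_t)-f_t(\bW_t)}\le\sup_{\bW\in\calC}\abs{f(\bW)-f_t(\bW)}\to0$ a.s.\ by the uniform strong law of large numbers over $\calC$ (again Lemma~\ref{lem:regularity} with \cite{Vaart_00}), so $f(\bW_t)$ and $\tilf_t(\bW_t)$ share the same a.s.\ limit.

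I expect the principal difficulty to lie in the two uniform-over-$\calC$ statements — the rate bound $\bbE[\sup_{\bW\in\calC}\abs{f(\bW)-f_t(\bW)}]=O(t^{-1/2})$ and the accompanying uniform SLLN — which is precisely where compactness of $\calV$ and $\calC$ and the Lipschitz regularity of $\ell$ (Lemma~\ref{lem:regularity}) do the work, and in the slow-variation estimate, which rests on the $O(1/t)$ stability of the dictionary iterates (Lemma~\ref{lem:rate_W}). Once these are in hand, the quasi-martingale bookkeeping and the deterministic convergence lemma are routine.
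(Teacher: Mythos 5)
Your proposal is correct and follows essentially the same route as the paper's proof: the same telescoping decomposition of $\tilf_{t+1}(\bW_{t+1})-\tilf_t(\bW_t)$, the quasi-martingale convergence theorem fed by the $\bbP$-Donsker bound $\bbE[\sup_{\bW\in\calC}\abst{f(\bW)-f_t(\bW)}]=O(t^{-1/2})$, the a.s.\ summability of $(\tilf_t(\bW_t)-f_t(\bW_t))/(t+1)$ combined with the $O(1/t)$ slow-variation estimate from Lemma~\ref{lem:rate_W} and the deterministic lemma of Mairal et al., and finally the Glivenko--Cantelli step to transfer the limit to $f(\bW_t)$. No substantive differences to report.
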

\begin{psketch}
The proof proceeds in two major steps. First, making use of the quasi-martingale convergence theorem \cite[Theorem 9.4 \& Proposition 9.5]{Met_82} (see Lemma~\ref{lem:quasi_conv}), we prove $\{\tilf_t(\bW_t)\}_{t\ge 1}$ converges a.s.\ by showing the series of the expected positive variations of this process converges. A key step in proving the convergence of this series  is to bound each summand of the series by Donsker's theorem (see Lemma~\ref{lem:Donsker}). To invoke this theorem, we show the class of measurable functions $\{\ell(\cdot,\bW):\bW\in\calC\}$ is $\bbP$-Donsker \cite{Vaart_00} using the Lipschitz continuity of $\ell(\bv,\cdot)$ on $\calC$ (see Lemma~\ref{lem:regularity}) and \cite[Example 19.7]{Vaart_00} (see Lemma~\ref{lem:suff_Donsker}). %This result enables us to invoke Donsker's theorem (see Lemma~\ref{lem:Donsker}) to  
Second, we show 
\begin{equation}
f_t(\bW_t)-\tilf_t(\bW_t)\convas 0.\label{eq:as_conv_diff}
\end{equation}
by showing $\sum_{t=1}^\infty \frac{\tilf_t(\bW_t)-f_t(\bW_t)}{t+1}$ converges a.s.. Define $b_t \defeq \tilf_t(\bW_t)-f_t(\bW_t)$. Using the Lipschitz continuity of both $\tilf_t$ and $f_t$ on $\calC$ and Lemma~\ref{lem:rate_W}, we can show $\abs{b_{t+1} - b_t} = O(1/t)$ a.s.. Now we invoke \cite[Lemma 8]{Mairal_10} (see Lemma~\ref{lem:conv_nonneg}) to conclude $f_t(\bW_t)-\tilf_t(\bW_t)\convas 0$. 
%the sequence of empirical losses $\{f_t(\bW_t)\}_{t\ge 1}$ converges to the same limit as $\{\tilf_t(\bW_t)\}_{t\ge 1}$ a.s.. 
By Lemma~\ref{lem:regularity} and \cite[Example 19.7]{Vaart_00} (see Lemma~\ref{lem:suff_Donsker}), we can show the class of measurable functions $\{\ell(\cdot,\bW):\bW\in\calC\}$ is $\bbP$-Glivenko-Cantelli \cite{Vaart_00}. By the Glivenko-Cantelli theorem (see Lemma~\ref{lem:GlivenkoCantelli}) we have 
\begin{equation}
\sup_{\bW\in\calC} \abs{f_t(\bW) - f(\bW)} \convas 0. \label{eq:Gliv_Cantelli}
\end{equation}
%that $f_t$ converges uniformly to $f$ a.s.. 
In particular, 
\begin{equation}
f_t(\bW_t) - f(\bW_t)\convas 0. \label{eq:as_conv_diff2}
\end{equation}
Finally, \eqref{eq:as_conv_diff} and \eqref{eq:as_conv_diff2} together imply that  $\{f(\bW_t)\}_{t\ge 1}$ converges to the same almost sure limit as $\{\tilf_t(\bW_t)\}_{t\ge 1}$.
%This essentially implies $\{f(\bW_t)\}_{t\ge 1}$ and $\{\tilf_t(\bW_t)\}_{t\ge 1}$ converge to the same almost sure limit  because  by %See Section~\ref{sec:as_conv} for a detailed proof. 
\end{psketch}

\begin{theorem}[Almost sure convergence of the sequence of dictionaries]\label{thm:local_min}
The stochastic  process $\{\bW_t\}_{t\ge 1}$ converges to the set of stationary points of \eqref{eq:exp_loss} a.s..\footnote{Given a metric space $(\calX,d)$, a sequence $(x_n)$ in $\calX$ is said to converge to a set $\calA\subseteq\calX$ if $\lim_{n\to\infty}\inf_{a\in\calA}d(x_n,a) = 0$.}
\end{theorem}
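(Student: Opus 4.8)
The plan is to follow the now-standard stochastic majorization-minimization argument (as in \cite[Proposition 3]{Mairal_10}), adapted to the presence of the outlier variable $\br$. The starting point is Theorem~\ref{thm:as_conv}, which gives that $\tilf_t(\bW_t)$ and $f(\bW_t)$ converge a.s.\ to a common limit; I would also invoke the uniform convergence \eqref{eq:Gliv_Cantelli} and the rate estimate $\norm{\bW_{t+1}-\bW_t}_F = O(1/t)$ from Lemma~\ref{lem:rate_W}. First I would establish that the surrogate $\tilf_t$ is, asymptotically, a ``good'' majorant of $f$: define $g_t \defeq \tilf_t - f_t$, which is nonnegative (since $\tilell$ upper-bounds $\ell$) and, by the argument already used in the proof sketch of Theorem~\ref{thm:as_conv}, satisfies $g_t(\bW_t)\to 0$ a.s. Combining this with \eqref{eq:Gliv_Cantelli} yields $\tilf_t(\bW_t) - f(\bW_t)\to 0$ a.s. The key structural fact is that $\nabla \tilf_t$ is Lipschitz on $\calC$ with a constant uniform in $t$ (because $\bA_t$, $\bB_t$ are bounded, which follows from Assumption~\ref{assum:comp_supp} and the boundedness of $\calH$, $\calR$), and that $\tilf_t$ is $m_2$-strongly convex by Assumption~\ref{assum:sc_W}.

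Next I would pass to a convergent subsequence. Since $\calC$ is compact, any subsequence of $\{\bW_t\}$ has a further subsequence $\bW_{t_j}\to\bW_\infty$ for some $\bW_\infty\in\calC$; it suffices to show every such limit point $\bW_\infty$ is a stationary point of \eqref{eq:exp_loss}, i.e., $\lrangle{\nabla f(\bW_\infty)}{\bW-\bW_\infty}\ge 0$ for all $\bW\in\calC$. Along the subsequence, the surrogates $\tilf_{t_j}$ are convex, uniformly Lipschitz-gradient, and defined by the bounded sufficient statistics $(\bA_{t_j},\bB_{t_j})$; passing to a further subsequence I may assume $\bA_{t_j}\to\bA_\infty$ and $\bB_{t_j}\to\bB_\infty$, so $\tilf_{t_j}\to \tilf_\infty$ with $\tilf_\infty(\bW) = \tfrac12\tr(\bW^T\bW\bA_\infty) - \tr(\bW^T\bB_\infty)$ uniformly on $\calC$, and $\nabla\tilf_{t_j}\to\nabla\tilf_\infty$ uniformly as well. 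Because $\bW_{t_j}$ minimizes $\tilf_{t_j}$ over $\calC$, it satisfies the first-order optimality condition $\lrangle{\nabla\tilf_{t_j}(\bW_{t_j})}{\bW-\bW_{t_j}}\ge 0$ for all $\bW\in\calC$; taking $j\to\infty$ gives $\lrangle{\nabla\tilf_\infty(\bW_\infty)}{\bW-\bW_\infty}\ge 0$, i.e., $\bW_\infty$ is stationary for $\min_{\bW\in\calC}\tilf_\infty(\bW)$.

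It then remains to transfer stationarity of $\tilf_\infty$ to stationarity of $f$. From $\tilf_t(\bW_t)-f(\bW_t)\to 0$, uniform convergence $\tilf_{t_j}\to\tilf_\infty$, and $f(\bW_{t_j})\to f(\bW_\infty)$ (continuity of $f$ on $\calC$, which follows from Lemma~\ref{lem:regularity}), I get $\tilf_\infty(\bW_\infty) = f(\bW_\infty)$; and since $\tilf_t\ge f_t$ with $f_t\to f$ uniformly, $\tilf_\infty\ge f$ on $\calC$. Thus $\tilf_\infty - f$ is a nonnegative function on $\calC$ vanishing at $\bW_\infty$, hence $\bW_\infty$ is a global minimizer of $\tilf_\infty - f$ over $\calC$. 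This does not immediately give $\nabla(\tilf_\infty - f)(\bW_\infty)=0$ since $\bW_\infty$ may lie on the boundary of $\calC$; instead I would argue, exactly as in \cite[Lemma 1 and Proposition 3]{Mairal_10}, that the first-order optimality of $\bW_\infty$ for the nonnegative function $\tilf_\infty-f$ on $\calC$ forces $\lrangle{\nabla\tilf_\infty(\bW_\infty) - \nabla f(\bW_\infty)}{\bW-\bW_\infty}\ge 0$ for all $\bW\in\calC$, and combining this with the stationarity of $\tilf_\infty$ just proved yields $\lrangle{\nabla f(\bW_\infty)}{\bW - \bW_\infty}\ge 0$ for all $\bW\in\calC$. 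A technical point here is the differentiability of $f$: this is where I would cite the regularity established in Lemma~\ref{lem:regularity} together with the strong convexity Assumption~\ref{assum:sc_hr}, which makes $(\bh^*,\br^*)$ the unique minimizer in \eqref{eq:new_ithloss} and, via Danskin-type reasoning, gives $\nabla_\bW \ell(\bv,\bW) = (\bW\bh^* - (\bv - \br^*))(\bh^*)^T$ continuous in $\bW$, so that $\nabla f(\bW) = \bbE_{\bv\sim\bbP}[\nabla_\bW\ell(\bv,\bW)]$ exists and is continuous. Finally, since every limit point of $\{\bW_t\}$ is stationary and $\calC$ is compact, $\dist(\bW_t, \calW^*)\to 0$ a.s., where $\calW^*$ is the (closed) set of stationary points of \eqref{eq:exp_loss}.

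The main obstacle I anticipate is the last transfer step: turning ``$\tilf_\infty - f \ge 0$ with equality at $\bW_\infty$'' into the variational inequality for $\nabla f$ at a possibly-boundary point of $\calC$, and rigorously justifying the differentiability of $f$ and the formula for $\nabla f$ under Assumptions~\ref{assum:comp_supp}--\ref{assum:sc_hr} (uniform boundedness of $(\bh^*,\br^*)$, uniqueness from strong convexity, and continuity of the parametric minimizer). The quasi-martingale / Donsker machinery needed upstream is already packaged in Theorem~\ref{thm:as_conv} and Lemmas~\ref{lem:regularity}--\ref{lem:rate_W}, so the remaining work is this deterministic ``limiting surrogate'' argument.
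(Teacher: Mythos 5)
Your overall architecture matches the paper's: restrict to a realization with $\tilf_t(\bW_t)-f_t(\bW_t)\to 0$, extract a subsequence along which $\bW_t$, $\bA_t$, $\bB_t$ converge so that the surrogates $\tilf_{t_k}$ converge uniformly (with gradients) to a limiting quadratic $\barf$, establish the variational inequality $\lrangle{\nabla\barf(\barbW)}{\bW-\barbW}\ge 0$, and then transfer stationarity from $\barf$ to $f$. The first three stages are fine (your route to the variational inequality for $\barf$ via passing the optimality conditions of $\bW_{t_k}$ to the limit is an acceptable alternative to the paper's argument that $\barbW$ globally minimizes $\barf$). The gap is in the transfer step, and it is exactly the step you flagged as the "main obstacle": you propose to derive only the one-sided inequality $\lrangle{\nabla\barf(\barbW)-\nabla f(\barbW)}{\bW-\barbW}\ge 0$ from the fact that $\barbW$ minimizes the nonnegative function $\barg=\barf-f$ over $\calC$, and then to "combine" it with $\lrangle{\nabla\barf(\barbW)}{\bW-\barbW}\ge 0$. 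This combination does not work: writing $\lrangle{\nabla f(\barbW)}{\bW-\barbW}=\lrangle{\nabla\barf(\barbW)}{\bW-\barbW}-\lrangle{\nabla\barf(\barbW)-\nabla f(\barbW)}{\bW-\barbW}$ exhibits the target as a \emph{difference of two nonnegative quantities}, which can be negative. (A one-dimensional counterexample: $\calC=[0,1]$, $\barbW=0$, $\nabla\barf(0)=1$, $\nabla\barg(0)=2$; both inequalities hold but $\nabla f(0)=-1$ violates stationarity.)

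The correct resolution — which is what the paper does, and what Mairal et al.\ actually do in the reference you cite — is to prove the strictly stronger statement $\nabla\barg(\barbW)=\vecz$, i.e.\ the \emph{full} gradient of the difference vanishes, so that $\nabla f(\barbW)=\nabla\barf(\barbW)$ and the variational inequality transfers verbatim. Your worry about $\barbW$ lying on the boundary of $\calC$ is a red herring: the pointwise domination $\tilell\ge\ell$ holds for every $\bW$, not just $\bW\in\calC$, so $g_t=\tilf_t-f_t$ is nonnegative on an open neighborhood of $\calC$ (the paper formalizes this by extending $g_t$ to a nonnegative function $\tilg_t$ on all of $\bbR^{F\times K}$ with an $L$-Lipschitz gradient, $L$ independent of $t$). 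For such a function the descent lemma gives $\frac{1}{2L}\normt{\nabla g_t(\bW_t)}_F^2\le \tilg_t(\bW_t)-\inf\tilg_t\le g_t(\bW_t)\to 0$, hence $\nabla g_t(\bW_t)\to\vecz$, and comparing first-order Taylor expansions of $g_t$ at $\bW_t$ with that of $\barg$ at $\barbW$ yields $\nabla\barg(\barbW)=\vecz$. With this replacement your proof closes; without it, the final implication is unsupported.
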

\begin{psketch}
%By \eqref{eq:as_conv_diff}, it suffices to show that for every realization of $\{\bv_t\}_{t\ge 1}$ such that $\tilf_t(\bW_t) - f_t(\bW_t)\to 0$, each cluster point of $\{\bW_t\}_{t\ge 1}$ is a local minimizer of $f$. 
Fix a realization $\{\bv_t\}_{t\ge 1}$ such that $\tilf_t(\bW_t) - f_t(\bW_t)\to 0$,\footnote{The set of all such realizations have probability one by \eqref{eq:as_conv_diff}.} and generate $\{\bW_t\}_{t\ge 1}$ according to Algorithm~\ref{algo:generic}. Then it suffices to show every subsequential limit of $\{\bW_t\}_{t\ge 1}$ is a stationary point of \eqref{eq:exp_loss}. 
The compactness of $\calC$ enables us to find a convergent subsequence $\{\bW_{t_m}\}_{m\ge 1}$. Furthermore, it is possible to find a further subsequence of $\{t_m\}_{m\ge 1}$, $\{t_k\}_{k\ge 1}$ such that all $\{\bA_{t_k}\}_{k\ge 1}$, $\{\bB_{t_k}\}_{k\ge 1}$ and $\{\tilf_{t_k}(\vecz)\}_{k\ge 1}$ converge.  We focus on $\{\bW_{t_k}\}_{k\ge 1}$ hereafter and denote its limit by $\barbW$. Our goal is to show  for any $\bW\in\calC$, the directional derivative $\lrangle{\nabla f(\barbW)}{\bW - \barbW}\ge 0$. First, we show the sequence of differentiable functions $\{\tilf_{t_k}\}_{k\ge 1}$ converges uniformly to a differentiable function $\barf$. By \eqref{eq:Gliv_Cantelli} we also have that $\{f_{t_k}\}_{k\ge 1}$ converges uniformly to $f$. Denote $g_t \defeq \tilf_t - f_t$, we have $\{g_{t_k}\}_{k\ge 1}$ converges uniformly to $\barg = \barf - f$. By Lemma~\ref{lem:regularity}, $f$ is differentiable so $\barg$ is differentiable on $\calC$. Next, we show for any $\bW\in\calC$, $\lrangle{\nabla \barf(\barbW)}{\bW - \barbW}\ge 0$ by showing $\barbW$ is a global minimizer of $\barf$. Then we show $\nabla \barg(\barbW) = \vecz$ using the first-order Taylor expansion. These results suffice to imply $\lrangle{\nabla f(\barbW)}{\bW - \barbW}\ge 0$ for any $\bW\in\calC$. 
\end{psketch}

\begin{remark}\label{rmk:nonconvexity}
Due to the nonconvexity of the canonical NMF problem, finding global optima of %the canonical NMF problem
 \eqref{eq:canon_NMF} is in general out-of-reach. Indeed, \cite{Vavasis_09} shows \eqref{eq:canon_NMF} is NP-hard. 
 Therefore in the literature \cite{Lin_07b,Haji_16}, convergence to the set of stationary points (see Definition~\ref{def:stat_pt}) of \eqref{eq:canon_NMF}  has been studied instead. In the online setting, it is even harder to  show that the sequence of dictionaries $\{\bW_t\}_{t\ge 1}$ converges to the global optima of \eqref{eq:exp_loss} (in some probabilistic sense). Thus Theorem~\ref{thm:local_min} only states the convergence result with respect to the stationary point of \eqref{eq:exp_loss}, which has been the state-of-the-art in the literature of online matrix factorization \cite{Mairal_10,Shen_14b}. On the other hand, we notice that under certain assumptions on the data matrix $\bV$, e.g., the separability condition proposed in \cite{Donoho_04}, exact NMF algorithms have been proposed in~\cite{Cohen_93,Arora_12,Gillis_12}. The optimization techniques therein are discrete (and combinatorial) in nature, and vastly differ from the continuous optimization techniques typically employed in solving the canonical NMF problem. In addition, some heuristics for obtaining exact NMF using continuous optimization techniques have been proposed in \cite{Vanda_16}. Nevertheless, developing exact NMF algorithms in the online setting could be an interesting future research direction.
\end{remark}

The following two lemmas are used in the proof of Theorem~\ref{thm:as_conv} and \ref{thm:local_min}. In particular, Lemma~\ref{lem:regularity} states that the loss functions $\ell$ and $f$ respectively defined in \eqref{eq:ith_loss} and \eqref{eq:exp_loss} satisfy some regularity conditions.
% w.r.t. a single sample, $\ell$ and the expected loss $f$. 
Lemma~\ref{lem:rate_W} then bounds the variations in the stochastic process $\{\bW_t\}_{t\ge 1}$. 
%provides sufficient conditions to invoke technical theorems (for e.g., Donsker and Glivenko-Cantelli) in the proof. Lemma~\ref{lem:rate_W} bounds the variations between consecutive dictionary matrices $\bW_{t+1}$ and $\bW_t$. %Although it does not suffice to prove the almost sure convergence of $\{\bW_t\}_{t\ge 1}$, 
%When coupling with the Lipschitz continuity of loss functions (for e.g., $f_t$ and $\tilf_t$), it bounds the variations in the corresponding processes (for e.g., $\{f_t(\bW_t)\}_{t\ge 1}$ and $\{\tilf_t(\bW_t)\}_{t\ge 1}$), thereby leading to the convergence results in Theorem~\ref{thm:as_conv}. %Moreover, these  Since they may be of independent interest, we also present them below. 

\begin{lemma}[Regularity properties  of $\ell$ and $f$]\label{lem:regularity}\quad
%We prove regularities of two important loss functions, $\ell(\bv,\bW)$ and $f(\bW)$
The loss functions $\ell$ and $f$ satisfy the following properties\\
1) $\ell(\bv,\bW)$ is differentiable on $\calV\times\calC$ and $\nabla_\bW\ell(\bv,\bW)$ is continuous on $\calV\times \calC$. Moreover, for all $\bv\in\calV$, $\nabla_\bW\ell(\bv,\bW)$ is Lipschitz on $\calC$ with Lipschitz constant independent of $\bv$. \\
2) The expected loss function $f(\bW)$ is differentiable on $\calC$ and $\nabla f(\bW)$ is Lipschitz on $\calC$.
\end{lemma}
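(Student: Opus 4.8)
The plan is to view $\ell(\bv,\cdot)$ as the value function of the parametric convex program \eqref{eq:new_ithloss} over the \emph{compact convex} set $\calH\times\calR$, and to combine a Danskin-type differentiability theorem with a strong-convexity stability estimate for its minimizer, along the lines of the online dictionary-learning analysis of \cite{Mairal_10}. By Assumption~\ref{assum:sc_hr} the map $(\bh,\br)\mapsto\tilell(\bv,\bW,\bh,\br)$ is $m_1$-strongly convex, so for every $(\bv,\bW)\in\calV\times\calC$ the minimizer $(\bh^*(\bv,\bW),\br^*(\bv,\bW))$ is unique; moreover $\tilell$ is jointly continuous and, for fixed $(\bh,\br)$, continuously differentiable in $(\bv,\bW)$ with $\nabla_{(\bv,\bW)}\tilell$ jointly continuous, since the nonsmooth term $\lambda\norm{\br}_1$ does not involve $(\bv,\bW)$. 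A Danskin-type theorem on differentiability of a value function with a unique minimizer then gives that $\ell$ is differentiable on $\calV\times\calC$ with
\[
\nabla_\bW\ell(\bv,\bW) = \nabla_\bW\tilell\bigl(\bv,\bW,\bh^*(\bv,\bW),\br^*(\bv,\bW)\bigr) = \bigl(\bW\bh^*(\bv,\bW)+\br^*(\bv,\bW)-\bv\bigr)\bh^*(\bv,\bW)^T .
\]
Continuity of $\nabla_\bW\ell$ on $\calV\times\calC$ then follows because Berge's maximum theorem makes the argmin correspondence upper hemicontinuous and uniqueness upgrades it to a continuous map $(\bv,\bW)\mapsto(\bh^*,\br^*)$, which I would compose with the polynomial expression above.

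The crux is the uniform Lipschitz bound. Fix $\bv\in\calV$ and $\bW_1,\bW_2\in\calC$; write $g_i:=\tilell(\bv,\bW_i,\cdot,\cdot)$ and $u_i:=(\bh^*(\bv,\bW_i),\br^*(\bv,\bW_i))$. Strong convexity of $g_i$ at its minimizer $u_i$ gives $g_i(u_j)\ge g_i(u_i)+\tfrac{m_1}{2}\norm{u_i-u_j}_2^2$ for $i\neq j$; summing the $(i,j)=(1,2)$ and $(2,1)$ inequalities yields
\[
m_1\norm{u_1-u_2}_2^2 \;\le\; [g_1-g_2](u_2) - [g_1-g_2](u_1) .
\]
Now $g_1-g_2$ is smooth because the $\lambda\norm{\br}_1$ terms cancel, and a direct computation of $\nabla_{(\bh,\br)}(g_1-g_2)$ shows its norm on $\calH\times\calR$ is bounded by $c\,\norm{\bW_1-\bW_2}_F$, where $c$ depends only on $\diam\calH$, $\diam\calR$, $\sup_{\bv\in\calV}\norm{\bv}_2$ and $\sup_{\bW\in\calC}\norm{\bW}_2$, hence not on $\bv$. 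Therefore the right-hand side is at most $c\,\norm{\bW_1-\bW_2}_F\norm{u_1-u_2}_2$, so $\norm{u_1-u_2}_2\le(c/m_1)\norm{\bW_1-\bW_2}_F$. Since $\nabla_\bW\ell(\bv,\cdot)$ is a smooth --- hence Lipschitz on the relevant compacts --- function of $(\bW,\bh^*,\br^*)$ with $\bv$ confined to the compact set $\calV$, composing with the just-established Lipschitz map $\bW\mapsto(\bh^*,\br^*)$ shows $\nabla_\bW\ell(\bv,\cdot)$ is Lipschitz on $\calC$ with a constant independent of $\bv$. This proves part~1).

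For part~2), I would differentiate under the expectation: $\bbP$ has compact support $\calV$ by Assumption~\ref{assum:comp_supp} and $\nabla_\bW\ell$ is continuous, hence bounded, on $\calV\times\calC$, so the dominated-convergence form of the Leibniz rule gives that $f(\bW)=\bbE_{\bv\sim\bbP}[\ell(\bv,\bW)]$ is differentiable with $\nabla f(\bW)=\bbE_{\bv\sim\bbP}[\nabla_\bW\ell(\bv,\bW)]$, and then
\[
\norm{\nabla f(\bW_1)-\nabla f(\bW_2)}_F \le \bbE_{\bv\sim\bbP}\bigl[\norm{\nabla_\bW\ell(\bv,\bW_1)-\nabla_\bW\ell(\bv,\bW_2)}_F\bigr] \le L\,\norm{\bW_1-\bW_2}_F ,
\]
where $L$ is the uniform Lipschitz constant from part~1). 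The main obstacle is the middle step: Lipschitz dependence of the constrained minimizer $(\bh^*,\br^*)$ on $\bW$ with a constant uniform over $\bv\in\calV$. This is exactly where Assumption~\ref{assum:sc_hr} is indispensable, and where care is needed both to dispatch the nonsmooth $\ell_1$ penalty (which fortunately cancels in $g_1-g_2$) and to keep every estimate uniform over the compact parameter sets.
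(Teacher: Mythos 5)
Your proposal is correct and follows essentially the same route as the paper's proof: Danskin's theorem plus the maximum theorem (with uniqueness from Assumption~2) for differentiability and continuity of $\nabla_\bW\ell$; the two summed strong-convexity inequalities at the minimizers combined with a Lipschitz bound on the smooth difference $g_1-g_2$ (the $\ell_1$ terms cancelling) to get the uniform Lipschitz dependence of $(\bh^*,\br^*)$ on $\bW$; and the Leibniz rule to pass to $f$. The paper packages your gradient bound on $g_1-g_2$ as Lemma~\ref{lem:Lips_b} and the final composition step as Lemma~\ref{lem:Lips_nablaf}, and carries out the stability estimate jointly in $(\bv,\bW)$ rather than for fixed $\bv$, but these are presentational differences only.
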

\begin{psketch}
The regularity properties of $\ell$ on $\calV\times\calC$ originate  from the regularity properties  of $\tilell$ on $\calV\times\calC$. Since Assumption~\ref{assum:sc_hr} ensures the minimizer for \eqref{eq:new_ithloss}, $(\bh^*(\bv,\bW),\br^*(\bv,\bW))$ is unique for any $(\bv,\bW)\in\calV\times\calC$, we can invoke Danskin's theorem (see Lemma~\ref{lem:min_diff}) to guarantee the differentiability of $\ell$. Moreover, we invoke the maximum theorem (see Lemma~\ref{lem:min_cont}) to ensure the continuity of  $(\bh^*(\bv,\bW),\br^*(\bv,\bW))$ on $\calV\times\calC$. These results together imply that $\ell$ satisfies the desired regularity properties. The regularity properties of $f$ on $\calC$ hinges upon the regularity properties  of $\ell$. Indeed, by  Leibniz integral rule (see Lemma~\ref{lem:Leib_int}), we can show $f$ is differentiable and 
\begin{equation}
\nabla f(\bW) = \bbE_\bv[\nabla_\bW \ell(\bv,\bW)]. \label{eq:grad_f}
\end{equation} 
Hence the Lipschitz continuity of $\nabla f$ follows naturally from the Lipschitz continuity of $\bW\mapsto\nabla_\bW\ell(\bv,\bW)$. %See Section~\ref{sec:regularity} for a detailed proof. 
\end{psketch}

\begin{lemma} \label{lem:rate_W}
In Algorithm~\ref{algo:generic}, $\norm{\bW_{t+1}-\bW_t}_F = O\left(1/t\right)$ a.s..
\end{lemma}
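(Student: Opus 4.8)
The plan is to derive the bound from the strong convexity of the per-time objectives $\tilf_t$ (Assumption~\ref{assum:sc_W}) together with the observation that $\tilf_{t+1}$ differs from $\tilf_t$ only by an $O(1/t)$ perturbation over $\calC$. First I would record that, on the probability-one event on which every $\bv_i\in\calV$ (Assumption~\ref{assum:comp_supp}), the learned vectors $\bh_i$ and $\br_i$ returned in step~2) of Algorithm~\ref{algo:generic} lie in the compact sets $\calH$ and $\calR$, so that $\sup_i\norm{\bh_i}_2$, $\sup_i\norm{\br_i}_2$ and $\sup_i\norm{\bv_i}_2$ are finite constants; for $M<\infty$ this is immediate, and for $M=\infty$ it follows from the a priori boundedness of the optimal $\br^*$ noted in Section~\ref{sec:prelim}. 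From the recursions $\bA_{t+1}=\tfrac{t}{t+1}\bA_t+\tfrac{1}{t+1}\bh_{t+1}\bh_{t+1}^T$ and $\bB_{t+1}=\tfrac{t}{t+1}\bB_t+\tfrac{1}{t+1}(\bv_{t+1}-\br_{t+1})\bh_{t+1}^T$ in Algorithm~\ref{algo:generic}, one then gets $\norm{\bA_{t+1}-\bA_t}_2=O(1/t)$ and $\norm{\bB_{t+1}-\bB_t}_F=O(1/t)$ almost surely, with the implied constants depending only on the fixed sets $\calV,\calH,\calR$.

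Next I would combine optimality with strong convexity. Since $\bW_t$ minimizes the $m_2$-strongly convex function $\tilf_t$ over the convex set $\calC$ and $\bW_{t+1}$ minimizes $\tilf_{t+1}$ over $\calC$, the first-order optimality conditions together with $m_2$-strong convexity give $\tilf_{t+1}(\bW_t)-\tilf_{t+1}(\bW_{t+1})\ge\tfrac{m_2}{2}\norm{\bW_{t+1}-\bW_t}_F^2$ and $\tilf_{t}(\bW_{t+1})-\tilf_{t}(\bW_{t})\ge\tfrac{m_2}{2}\norm{\bW_{t+1}-\bW_t}_F^2$. Adding these two inequalities and writing $g_t\defeq\tilf_{t+1}-\tilf_t$ yields $m_2\norm{\bW_{t+1}-\bW_t}_F^2\le g_t(\bW_t)-g_t(\bW_{t+1})$. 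Using the quadratic form of $\tilf_t$ in \eqref{eq:min_W_tr} we have $\nabla g_t(\bW)=\bW(\bA_{t+1}-\bA_t)-(\bB_{t+1}-\bB_t)$, and since every $\bW\in\calC$ obeys $\norm{\bW}_F\le\sqrt{K}$, the first step gives $\sup_{\bW\in\calC}\norm{\nabla g_t(\bW)}_F=O(1/t)$ a.s. Hence $g_t(\bW_t)-g_t(\bW_{t+1})\le O(1/t)\,\norm{\bW_{t+1}-\bW_t}_F$, and cancelling one factor of $\norm{\bW_{t+1}-\bW_t}_F$ (the claim being trivial when this vanishes) delivers $\norm{\bW_{t+1}-\bW_t}_F=O(1/(m_2 t))=O(1/t)$ almost surely.

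I do not expect a serious obstacle here; the only step needing care is the first one, namely making the bounds on $\bA_t$, $\bB_t$ and on their one-step increments genuinely uniform in $t$ and valid on a single probability-one event (which is what keeps the constant in $O(1/t)$ deterministic), and in particular treating the $M=\infty$ case through the boundedness of $\br^*$ rather than through compactness of $\calR$. The strong-convexity argument of the second step is then routine and parallels standard reasoning in the online matrix factorization literature \cite{Mairal_10}.
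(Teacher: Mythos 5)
Your proof is correct and follows essentially the same route as the paper's: a strong-convexity lower bound of order $\norm{\bW_{t+1}-\bW_t}_F^2$ is played against an $O(1/t)\cdot\norm{\bW_{t+1}-\bW_t}_F$ upper bound coming from the fact that $\tilf_{t+1}-\tilf_t$ is $O(1/t)$-Lipschitz on $\calC$, and one factor is cancelled. The only (immaterial) differences are that you symmetrize by using strong convexity at both minimizers rather than strong convexity at $\bW_t$ plus plain optimality of $\bW_{t+1}$, and you bound the Lipschitz constant via the increments of the sufficient statistics $\bA_t,\bB_t$ rather than directly via the per-sample loss differences as in the paper's Lemma~\ref{lem:Lips_b}.
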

\begin{psketch}
The upper bound for $\norm{\bW_{t+1}-\bW_t}_F$ results from the strong convexity and Lipschitz continuity of $\tilf_t$. Specifically, they together imply an order difference between the upper and lower bounds of $\tilf_t(\bW_{t+1}) - \tilf_t(\bW_t)$, i.e., 
\begin{align}
\frac{m_2}{2}\norm{\bW_{t+1}-\bW_t}_F^2& \le \tilf_t(\bW_{t+1}) - \tilf_t(\bW_t) \\
& \le c_t \norm{\bW_{t+1}-\bW_t}_F, %\label{eq:upp_lower_tilf_t}
\end{align}
where $c_t$ is only dependent on $t$. Some calculations reveal that $c_t = O(1/t)$ a.s.. %See Section~\ref{sec:rate_W} for a detailed proof. 
\end{psketch}

%The following lemma is used in the proof of Lemma~\ref{lem:regularity}. Since it may be of independent interest, we also present it below. 
%\begin{lemma} \label{lem:Lips_b}
%Let $\bz$, $\bz' \in\calZ\subseteq \bbR^m$ and $\bA$, $\bA'\in\calA\subseteq \bbR^{m\times n}$, where both $\calZ$ and $\calA$ are compact sets. Let $\calB$ be a compact set in $\bbR^n$, and define $g:\calB\to\bbR$ as $g(\bb) = \norm{\bz-\bA\bb}_2^2 - \norm{\bz'-\bA'\bb}^2_2$. Then $g$ is Lipschitz on $\calB$ with constant $c_1\norm{\bz-\bz'}_2 + c_2\norm{\bA-\bA'}_2$ where $c_1$ and $c_2$ are two positive universal constants. In particular, $\tilg(\bb) = \norm{\bz-\bA\bb}_2^2$ is Lipschitz on $\calB$ with a universal Lipschitz constant $c>0$. %By symmetry, we also have similar results for $h(\bA) = \norm{\bz-\bA\bb}_2^2 - \norm{\bz'-\bA\bb'}^2_2$, where $\bb,\bb'\in\calB$. 
%\end{lemma}
%\begin{psketch}
%This lemma can be proved by straightforward calculations. %See Section~\ref{sec:Lips_b} for a detailed proof.  
%\end{psketch}

\subsection{Discussions}\label{sec:conv_dis}
%\subsubsection{Innovations and novelty}
We highlight the distinctions between our convergence analyses and those in the previous works. %In this section, we have proved the convergence results regarding $\{f(\bW_t)\}_{t\ge 1}$ and $\{\bW_t\}_{t\ge 1}$ (shown in Theorem~\ref{thm:as_conv} and \ref{thm:local_min} respectively). %The first result (almost sure convergence of $f(\bW_t)$), Our first result concerns the almost sure convergence of $\{f(\bW_t)\}_{t\ge 1}$. 
Our first main result (Theorem~\ref{thm:as_conv}) is somewhat analogous to the results in  \cite{Mairal_10,Feng_13,Shen_14b,Guan_12}. % although the exact definition of $\{f(\bW_t)\}_{t\ge 1}$ may differ from one work to another.  
However, the stochastic optimization of \eqref{eq:exp_loss} in \cite{Guan_12} is based on   robust stochastic approximations \cite{Nemi_09}, a very different approach from ours (see Section~\ref{sec:algo}). This leads to significant differences in the analyses. For example, at time $t$, their dictionary update step does not necessarily minimize $\tilf_t(\bW)$, a fact that we heavily leverage. The rest of the works fall under a similar framework as ours. However, in \cite{Mairal_10}, a general sparse coding problem is considered. Thus, they assume a sufficient condition ensuring a unique solution in the Lasso-like sparse coding step holds. This assumption adds certain complications in proving $\nabla f$ is Lipschitz. Because our problem setting is different, we avoid these issues. Thus we are able to provide a more succinct proof, despite having some additional features, e.g., the presence of the outlier vector $\br$. The most closely related works are~\cite{Feng_13} and \cite{Shen_14b}, both of which consider the online robust PCA problems but with different  loss functions. %Although certain similarities are shared between our proof and theirs, 
However, the nonnegativity constraints on $\bW$ and $\bh$ and box constraints on $\br$ distinguish our proof from theirs. 
%(How to say this properly?) Previous works \cite{Mairal_10,Feng_13,Shen_14b} also proved $\{\bW_t\}_{t\ge 1}$ converges a.s.\ to a stationary point of $f$ via similar approaches. However, in our case, we are not able to show $\{\bW_t\}_{t\ge 1}$ converges a.s.\ or even $\{\nabla f(\bW_t)\}_{t\ge 1}$ converges a.s. using an analogous approach. As clarified by an author of \cite{Mairal_10}, only a subsequence of $\{\bW_t\}_{t\ge 1}$ converges. 
For our second main result (Theorem~\ref{thm:local_min}), we note that in   previous works %on other matrix factorization problems 
\cite{Feng_13,Shen_14b}, seemingly stronger results have been shown. %That is, the sequence $\{\bW_t\}_{t\ge 1}$ converges to a stationary  point (or the global minimizer in ) of $f$. 
Specifically, these works manage to show $\{\bW_t\}_{t\ge 1}$ converges, either to {\em one} stationary point \cite{Shen_14b} or to the global minimizer \cite{Feng_13} of $f$.
However, due to different problem settings, their proof techniques cannot be easily adapted to our problem setting. Inspired by \cite{Mairal_13}, we instead manage to characterize the subsequential limits of $\{\bW_t\}_{t\ge 1}$ generated by almost sure realizations of $\{\bv_t\}_{t\ge 1}$. This result is almost as good as the results in the abovementioned previous works since it implies that there exists a stationary point of $f$ such that its neighborhood contains infinitely many points of the sequence almost surely. 

\section{Extensions} \label{sec:extension}
In Section~\ref{sec:prob}, we considered a basic formulation of the online NMF problem with outliers. Here, %based on the literature in (online) NMF, 
we extend this formulation to a wider class of problem settings, with different constraints and regularizers on $\bW$, $\bh$ and $\br$. We also discuss how to adapt our algorithms and analyses to these settings. 
%Moreover, we also discuss some other possible methodologies to tackle the stochastic program \eqref{eq:exp_loss}.

\subsection{Different Constraints on $\bW$ and $\br$}
If the constraint set $\calC$ changes, in terms of both solvers in Section~\ref{sec:algo},   %(based on PGD and ADMM), 
only \eqref{eq:proj_W} and \eqref{eq:ADMMupd_Q}, the updates involving projections onto $\calC$, need to be changed. In terms of the analyses, using a similar argument as in Section~\ref{sec:prelim}, we can show the (unique) optimal solution $\bW^*(\bA_t,\bB_t)$ in \eqref{eq:min_W_tr} is bounded\footnote{From \eqref{eq:min_W_tr}, we observe that $\bW_t$ is a function of only $\bA_t$ and $\bB_t$ if Assumption~\ref{assum:sc_W} holds.} regardless of boundedness of $\calC$. Thus, the optimization problem \eqref{eq:min_W_tr} can always be restricted to a compact set. The rest of the analyses proceeds as usual. Therefore, in the following, we only discuss the (convex) variants of $\calC$ and the associated (Euclidean) projection methods. 
%no matter if $\calC$ is bounded. 
%Other constraint sets on $\bW$ include

1) The nonnegative orthant  $\calC_1 \defeq \bbR_+^{F\times K}$. This constraint is the most basic yet the most widely used constraint in NMF, although it does not well control the scale of $\bW$. The projection onto the nonnegative orthant involves simple entrywise thresholding. 

2) The probability simplex $\calC_2 \defeq \{\bW\in\bbR_+^{F\times K}\,|\,\norm{\bW_{:i}}_1= 1, \forall\,i\in[K]\}$ \cite{Guan_12}. Different from $\calC$, $\calC_2$ also prevents some columns of $\bW$ from having arbitrarily small norms. Efficient projection algorithms onto the probability simplex have been well studied. See \cite{Duchi_08,WangSimp_13} for details. 

3) The nonnegative elastic-net ball \cite{Witten_09,Zou_05,Sind_12}:  $\calC_3 \defeq \{\bW\in\bbR_+^{F\times K}|\gamma_1\norm{\bW_{:i}}_1 + \gamma_2/2\norm{\bW_{:i}}_2^2 \le 1, \forall\,i\in[K]\}$, where both $\gamma_1$ and $\gamma_2$ are {\em nonnegative}. $\calC_3$ is a general constraint set since it subsumes both $\calC$ (the nonnegative $\ell_2$ ball) and the nonnegative $\ell_1$ ball. Compared to $\calC$, this constraint encourages sparsity on the basis matrix $\bW$, thus leading to better interpretability on the basis vectors. Efficient algorithms for projection onto the elastic-net ball have been proposed in \cite{Duchi_09,Mairal_10}.

For the outlier vector $\br$, the nonnegativity constraint can be added to $\calR$ to model (bounded) nonnegative outliers, so the new constraint $\calR'\defeq\{\br\in\bbR_+^{F}\,|\,\norm{\br}_\infty\le M\}$. In such case $\norm{\br}_1 = \bone^T\br$ so \eqref{eq:min_r} amounts to a quadratic minimization program with box constraints. %The algorithms for solving such a problem are straightforward and the analyses remains unchanged.
Both the algorithms and the analyses can be easily adapted to such a simpler case. 

\subsection{Regularizers for $\bW$, $\bh$ and $\br$} 
We use $\lambda_i\ge 0$ to denote the penalty parameters. 
For $\bW$, the elastic-net regularization can be employed, i.e., $\lambda_1\norm{\bW}_{1,1} + \lambda_2/2\norm{\bW}_F^2$. %where $\lambda_1$ and $\lambda_2$ are nonnegative penalty parameters. 
This includes   $\ell_1$ and $\ell_2$ regularizers on $\bW$ as special cases. As pointed out in \cite{WuShen_14,Kim_12} and \cite{Witten_09}, $\ell_1$ and $\ell_2$ regularizers promote sparsity and smoothness on $\bW$ respectively. Because  $\bW\ge 0$, \eqref{eq:min_W_tr} with the elastic-net regularization is still quadratic. In terms of the analyses, Assumption~\ref{assum:sc_W} can be removed and the rest remains the same. 

For $\bh$, several regularizers can be used:

1) The %$\ell_1$ (Lasso) regularizer 
Lasso regularizer $\lambda_3 \norm{\bh}_1$. %This is a common regularizer in the literature for inducing sparsity on $\bh$. 
It induces sparsity on $\bh$, hence the  optimization problem \eqref{eq:min_h} with such regularizer is termed {\em nonnegative sparse coding}\cite{Sind_12,Wang_13}. % Thus  becomes a {\em nonnegative Lasso} problem, which 
%and is quadratic in nature. 

2) The Tikhonov regularizer: $\lambda_4/2 \norm{\bh}_2^2$. This regularizer induces smoothness and avoids over-fitting on $\bh$. Both the $\ell_1$ and $\ell_2$ regularizers preserve the quadratic nature of \eqref{eq:min_h}.
%When combined with the $\ell_1$ regularizer, it becomes the elastic-net regularizer, which has been shown to promote group sparsity on $\bh$ \cite{Zou_05}. 

3) The group Lasso regularizer $\lambda_5 \sum_{\alpha} \sqrt{\xi_\alpha}\norm{\bh_\alpha}_2$, where $\bh_\alpha$'s are (non-overlapping) subvectors of $\bh$ with lengths $\xi_\alpha$. This regularizer induces sparsity among groups of entries in $\bh$. Efficient algorithms to solve \eqref{eq:min_h} under this regularization have been proposed in \cite{Liu_11,Kim_12}. 

4) The %$\ell_{2,1} + \ell_1$ (Sparse Group Lasso) 
 sparse group Lasso regularizer $\lambda_6 (\nu\sum_{\alpha} \sqrt{\xi_\alpha}\norm{\bh_\alpha}_2$ $ + (1-\nu)\norm{\bh}_1)$, where $\nu\in[0,1]$. Compared with the group lasso regularizer, this one also encourages sparsity within each group. Efficient algorithms for solving \eqref{eq:min_h} with this regularization have been  discussed in \cite{Friedman_10,Simon_13}.
%As pointed out in \cite{Zou_05}, it can enhance the performance of the Lasso under the high-dimensional setting or highly correlated predictors. 

Since \eqref{eq:min_h} with each regularizer above is still a convex program, the analyses remain unchanged. 
Similar regularizers on $\bh$ can be applied to $\br$, so for simplicity we omit the discussions here. Compared to $\bh$, the only differences are that $\br$ may be negative and bounded. However, standard algorithms in such case are available in the literature \cite{Yuan_06,Simon_13,Friedman_10}. 

\begin{remark} \label{rmk:cons}
Some remarks are in order. First, for certain pairs of constraints and regularizers above (e.g., the elastic-net ball constraint of $\bW$ and the elastic-net regularization on $\bW$), the optimization problems are equivalent (admit the same optimal solutions) for specific value pairs of $(\gamma_i,\lambda_i)$. However, the algorithms and analyses for these equivalent problems can be much different. Second, all the alternative constraint sets and regularizers discussed above are convex, since our analyses heavily leverage the (strong) convexity of $\tilell$ and $\tilf$ (see Section~\ref{sec:prelim}). It would be a meaningful future research direction to consider   nonconvex constraints or regularizers.
%for e.g., the (nonnegative) unit $\ell_2$ sphere or the $\ell_0$ regularizer. 
\end{remark}

\section{Experiments} \label{sec:numericals}

%\subsection{Datasets}
We  present results of numerical experiments to demonstrate the computational efficiency and efficacy of our online algorithms (OPGD and OADMM). We first state the experimental setup. Next we introduce some heuristics used in our experiments and the choices of parameters. We show the efficiency of our algorithms by examining the convergence speeds of the objective functions. The efficacy of our algorithms is shown via the (parts-based) basis representations and three meaningful real-world  applications---image denoising, shadow removal and foreground-background separation.  
All the experiments were run in 64-bit $\mbox{Matlab}^\circledR$ (R2015b) on a machine with $\mbox{Intel}^\circledR$ $\mbox{Core}$ i7-4790 3.6 GHz CPU and 8 GB RAM. 
%The codes for all the experiments are implemented in $\mbox{Matlab}^\circledR$ (2015b) and run on a 3.4 GHz Intel i7-GPU with 8 GB RAM. 

\subsection{Experimental Setup}
The datasets used in the experiments include one synthetic dataset, two face datasets (including the {\tt CBCL} dataset \cite{Lee_00} and the {\tt YaleB} dataset \cite{YaleB_01}) and two video sequences in the {\tt i2r} dataset\cite{i2r_04}. %both of which are commonly used in benchmarking NMF and related algorithms. Different tasks were performed on different datasets. 
For each task, we compared the performances of our online algorithms against those of four other matrix factorization algorithms, including BPGD, BADMM, online robust PCA (ORPCA) \cite{Feng_13} and online NMF (ONMF) \cite{Guan_12}. Such comparisons allow us to show multiple advantages of the proposed algorithms. Specifically, compared with their batch counterparts (BPGD and BADMM), our online algorithms have much faster convergence speeds in terms of the objective values.\footnote{\label{fn:online_batch} The problem formulations in the existing batch robust NMF algorithms \cite{Gao_15, Huang_14, Yang_13, Du_12, Ding_12, Ding_11, Cichoc_11, Fev_14,Shen_14,Zhang_11} are different from ours here. Specifically, most of the algorithms do not use the $\ell_{1,1}$ regularization to remove the effect of outliers. Moreover, the constraints on $\bW$ and $\bR$ are simpler in their formulations. All these differences contribute to different results in the experiments. %Thus to solely demonstrate the computational efficiency of the online framework, 
Thus for the purpose of fair comparisons, we chose to compare our online algorithms with their batch counterparts.}
 Moreover, compared with ORPCA, %our online algorithms 
we are able to learn parts-based basis representations. Finally, compared with ONMF, we can recover the underlying low-dimensional data structure with minimal effects of outliers. 
%These results include convergence speed of objective values, 
%We present our experiment results in the order of datasets, which 
%As described in Section~\ref{sec:algo}, we have effectively proposed two algorithms, namely ONMFO based on ADMM and ONMFO based on PGD. For ease of reference, we term these two algorithms as OADMM and OPGD respectively. Moreover, we call their batch counterparts as BADMM and BPGD respectively as discussed in Remark~\ref{rmk:algo}. %(see Section~\ref{sec:BRNMF_algo} in the supplemental material for detailed descriptions). 
%As a class of matrix factorization algorithms, our algorithms have several advantages, including low computational (and storage) complexity, parts-based basis representations and outlier-robustness. %being robust to outliers. 
%To demonstrate each of the advantages, we will compare our online algorithms (OADMM and OPGD) with four algorithms from three classes. The first two algorithms are BADMM and BPGD, since both of them are batch NMF algorithms designed to handle outliers. We will show our online algorithms have much faster convergence speed compared to their batch counterparts

\subsection{Strategies in Practical Implementations} \label{sec:heuristics}
%In the literature of online matrix factorization \cite{Mairal_10,Bucak_08}, several heuristic methods have been employed to pre-process the datasets and speed up the convergence of the online algorithms. In our experiments, we have used some of these heuristic methods. 
%In this section we first describe our methods to choose the initial dictionary $\bW_0$ and the stopping criteria for solving the sub-optimization problems \eqref{eq:learn_hr} and \eqref{eq:dict_upd} in Algorithm~\ref{algo:generic}. Then we describe several heuristics employed in the implementations of our online algorithms. %These heuristics are common in the 
\subsubsection{Initializations} \label{sec:init}
For both online algorithms (OPGD and OADMM), the entries of initial dictionary $\bW_0$ were randomly generated independently and identically from the standard uniform distribution $\scU[0,1]$. This distribution can certainly be of other forms, e.g.,  the half-normal distribution\footnote{$\scHN\left(\sigma^2\right)$ denotes the half-normal distribution with scale parameter $\sigma^2$, i.e., $\scHN(y ; \sigma^2)=\frac{\sqrt{2}}{\sigma\sqrt{\pi}}\exp(-\frac{y^2}{2\sigma^2})$ for $y\ge 0$. If $Y\sim\calN(0,\sigma^2)$, $|Y|$ is half-normal with scale parameter $\sigma^2$.} $\scHN(0,1)$. However, we observed in all the experiments that different initialization schemes of $\bW_0$ led to similar results. At each time instant, similar initialization methods on $\bh$ and $\br$ were also used in solving \eqref{eq:learn_hr}. 
While for solving \eqref{eq:dict_upd}, we use $\bW_{t-1}$ as the initialization to exploit the similarities between dictionaries in adjacent iterations. This approach led to improved computational efficiency in practice. 
For the batch algorithms (BPGD and BADMM), the initial dictionary and coefficient matrix were similarly initialized as $\bW_0$.

\subsubsection{Stopping Criteria}\label{sec:stop_crit} 
For the optimization problem \eqref{eq:learn_hr},  we employed the same stopping criterion for both algorithms OPGD and OADMM. Specifically, at time $t$, we stopped \eqref{eq:learn_hr} at iteration $k$ if 
\begin{equation}
\frac{\abs{\tilell(\bv_t,\bW_{t-1},\bh^k,\br^k) - \tilell(\bv_t,\bW_{t-1},\bh^{k-1},\br^{k-1})}}{\tilell(\bv_t,\bW_{t-1},\bh^{k-1},\br^{k-1})}< \epsilon_{\rmth}\nn
\end{equation}
or $k>\varkappa_{\max}$, where $\epsilon_{\rmth}>0$ denotes the threshold for the relative decrease of objective values and $\varkappa_{\max}\in\bbN$  denotes the maximum number of iterations. We set $\epsilon_{\rmth}=1\times 10^{-3}$ and $\varkappa_{\max}=50$ for all the experiments. Similar stopping criterion applies to both  OPGD and OADMM in solving \eqref{eq:dict_upd}, except in this case, we set $\epsilon_{\rmth}=1\times 10^{-4}$ and $\varkappa_{\max}=200$ for better precision in updating the dictionary. 

\subsubsection{Heuristics}
We now describe some heuristics employed in the implementations of our online algorithms. We remark that all the heuristics used in this work are common in the literature of online matrix factorization \cite{Mairal_10,Bucak_08}.
First, since the sample size of the {\em benchmarking} datasets in real world may not be large enough, %are typically of small or medium scale (i.e., the number of data samples in a dataset are of order $10^3$ or $10^4$), 
we replicate it $p$ times to aggregate its size. %This is because  especially the image datasets, .\footnote{There are some large text datasets, for e.g., the UCI Bag of Words dataset \cite{UCIBoW_08}. However, in general it is very hard to learn parts-based basis representations on these datasets.} 
%Thus, this replication serves as one simple way to create  or simulate large-scale datasets.  
Each (aggregated) dataset is represented as an $F\times N$ matrix, where $F$ equals the ambient dimension of the data\footnote{Each image (or video frame) is vectorized and stacked as a column of the data matrix, so $F$ equals the number of pixels in the image (or video frame).}  and $N$ equals $p$ times the size of the original dataset.
Next, for the aggregated image dataset, we randomly shuffle the images in it to simulate an \iid data stream. We also normalize each data sample so that it has unit maximum pixel value. %At time $t$, for both OPGD and OADMM algorithms, 
The last heuristic is called the mini-batch extension, namely at a time instant, we process $\tau$ images  altogether ($\tau\ge 1$). This amounts to sampling $\tau$ \iid samples from $\bbP$ at each time. 
%Moreover, since we only have guarantee for asymptotic almost sure convergence, the online algorithms may not converge to a pre-specified accuracy 

%There are in total  parameters whose values need to be set in our online algorithms. 
\subsection{Parameter Settings} 
We   discuss how to set the values of the important parameters in our online algorithms. 
These parameters include the regularization parameter $\lambda$ in \eqref{eq:ith_loss}, the constraint parameter $M$ in the definition of $\calR$, the mini-batch size $\tau$, the latent data dimensionality $K$,  the penalty parameters $\{\rho_i\}_{i\in[3]}$ %$\rho_1$, $\rho_2$ and $\rho_3$ 
in OADMM, and the step sizes $\barkappa$ and $\{\tkappa_t\}_{t\ge 1}$ in OPGD. First, we set $\lambda = 1/\sqrt{F}$, following the convention in the literature \cite{Mairal_10,Feng_13,Shen_14b}. Since each image has unit maximum pixel value, it is straightforward to set $M=1$. %\footnote{As shown below, we also constrain the maximum entry of the (nonnegative) synthetic data matrix to be one so the same value of $M$ also works for the synthetic data.} 
For the mini-batch size $\tau$, we propose a rule-of-thumb, that is to choose $\tau \le 4\times 10^{-4}N$, e.g., $\tau = 5\times 10^{-5}N$. The choice of this parameter involves a trade-off between the stability of the basis matrix  and the convergence speed of the online algorithms. In general this parameter is data-dependent so there is no principled way of choosing it in the literature~\cite{Mairal_10,Guan_12}. %\footnote{This parameter may depend on the sample complexity of the online matrix factorization algorithms. This may be a meaningful research topic in the future.} 
Therefore, our approach presents a heuristic way to partially resolve this issue. Regarding the latent data dimensionality $K$, there are some works \cite{Tan_13a,Bishop_98}  that   describe how to choose this parameter from a Bayesian perspective. However, they involve complex Bayesian modeling and intensive computations. Thus for efficiency we set $K=49$.  Finally we discuss the choice of the parameters specific to each online algorithm. We first reduce the number of parameters in both algorithms by setting %For the three penalty parameters in OADMM, for simplicity we set 
$\rho_i =\rho$, for any $i\in[3]$ in OADMM\footnote{See the literature on ADMM (e.g., \cite{Boyd_11}) for  more sophisticated ways of choosing this parameter.}  and $\barkappa = \tkappa_t= \kappa$, for any $t\ge 1$ in OPGD. Then we set $\rho = 1$ and $\kappa = 0.7$. 
The above parameter setting will be referred as {\em the canonical parameter setting} in the sequel. 
Later we will show the convergence speeds of both our online algorithms are insensitive to the parameters that are set to fixed values (including $\tau$, $K$, $\rho$ and $\kappa$) %$\{\rho_i\}_{i\in[3]}$, $\barkappa$ and $\{\tkappa_t\}_{t\ge 1}$) 
within wide ranges on both the synthetic and {\tt CBCL} datasets. 
\subsection{Convergence Speeds}\label{sec:conv_spd}
%We first compare 
The convergence speeds of the objective values  of OPGD and OADMM  are compared to the other four benchmarking algorithms (BPGD, BADMM, ORPCA and ONMF) on both the synthetic and {\tt CBCL} datasets. For better illustration, the comparison is divided into two parts. The first   involves 
comparison between our online algorithms and their batch counterparts. The second   involves comparison between our online algorithms to other online algorithms. 
For the first part, we use the surrogate loss function $\tilf_t$ as a measure of convergence  because i) $\tilf_t(\bW_t)$ has the same a.s.\ limit as $f(\bW_t)$ (as $t\to\infty$) and ii)   with storage of the past statistics $\{\bv_i,\bh_i,\br_i\}_{i\in[t]}$, $\tilf_t$ is easier to compute than the expected loss $f$ or the empirical loss $f_t$. For the second part, since different online algorithms have different objective functions, we propose a heuristic and unified measure of convergence $\hatf_t:\calC\to\bbR_+$, defined as 
\begin{equation}
\hatf_t(\bW) \defeq \frac{1}{t} \sum_{i=1}^t \frac{1}{2}\norm{\bv^o_{i} - \bW\bh_i}_2^2,
\end{equation}
where $\bv^o_i$ denotes the clean data sample (without outliers and observation noise) at time $i$. 
%Given a basis matrix $\bW$ such that each $\bv^o_i$ lies in its column space, $\hatf_t(\bW)$ can be interpreted as the averaged regret of reconstructing the clean data up to time $t$ using $\{\bh_i\}_{i\in[t]}$. 
Loosely speaking, $\hatf_t(\bW_t)$ can be interpreted as the averaged regret of reconstructing the clean data up to time $t$.
In the following experiments, unless compared with other online algorithms, we always use $\tilf_t$ as the objective function for our online algorithms. 
%the objective functions used in the experiments. For the batch algorithms, the objective functions are well-defined. However, for the online algorithms, we cannot explicitly calculate the expected loss $f$ since we have no access to the data distribution $\bbP$. Indeed, all the online algorithms can only produce a sequence of basis matrices $\{\bW_t\}_{t\ge 1}$, thereby making the objective functions difficult to calculate.
%Here we use  since $\tilf_t(\bW_t)$ has the same almost sure limit as $f(\bW_t)$ (as $t\to\infty$) and it is easy to compute (with storage of past statistics $\{\bv_i,\bh_i,\br_i\}_{i\in[t]}$).  
%define a pseudo-objective function $\hatf_t:\calC\to\bbR$ as follows
%\begin{equation}
%\hatf_t(\bW) = \frac{1}{t}\sum_{i=1}^t \frac{1}{2}\no 
%\end{equation} 
\subsubsection{Generation of Synthetic Dataset}
The procedure to generate the synthetic dataset is described as follows. First, we generated $\bW^o\in \bbR_+^{F\times K^o}$ such that $(w^o)_{ij}\iidsim\scHN(1/\sqrt{K^o})$, for any $(i,j)\in[F]\times[K^o]$, where $K^o$ denotes the ground-truth latent dimension. 
%, where $\calH\calN$ denotes the half-normal distribution. 
Similarly, we generated $\bH^o\in\bbR_+^{K^o\times N}$ such that $(h^o)_{ij}\iidsim\scHN(1/\sqrt{K^o})$, for any $(i,j)\in[K^o]\times[N]$. The (clean) data matrix $\bV^o = \calP_{\tilcalV}\left(\bW^o\bH^o\right)$, where  $\tilcalV = [0,1]^{F\times N}$. Note that the normalization (projection) $\calP_{\tilcalV}$ preserves the boundedness      of the data. %We also normalize $\bV_0$ such that $\norm{\bV_0}_{\max}=1$. 
%Next, we describe the procedure of generating the outlier matrix $\bR^o$. 
Then we generated the outlier vectors $\{\br_i\}_{i\in[N]}$. 
First, we uniformly randomly  selected a subset of $[N]$ %the columns of $\bR_0$ 
with size $\floor{\nu N}$ and denoted it as $\calI$, where $\nu\in(0,1)$ denotes the fraction of  columns in $\bV^o$ to be contaminated by the outliers. %Denote the index set of the contaminated columns as $\calI\subseteq[N]$. 
For each $i\in\calI$, we generated the outlier vector $\br_i\in[-1,1]^F$ by first uniformly selecting a support set with cardinality $\floor{\tnu F}$, where $\tnu\in(0,1)$ denotes the outlier density in each data sample. Then the nonzero entries in $\br_i$ were \iid generated from the uniform distribution $\calU[-1,1]$. For each $i\not\in\calI$, we set $\br_i=\vecz$. Define the outlier matrix $\bR^o$ such that $(\bR^o)_{:i} = \br_i$, for any $i\in[N]$. Then the (contaminated) data matrix $\bV = \calP_{\tilcalV} (\bV^o + \bR^o+\bN)$, where $\bN\in\bbR^{F\times N}$ denotes the observation noise matrix with \iid standard normal entries. 

%\subsubsection{Generation of {\tt CBCL} Dataset with Outliers}\label{sec:CBCL_gen}
%For the {\tt CBCL} face dataset, we first replicated it with a factor $p = 50$ (so that the total number of data samples is of the order $10^5$) and randomly permuted the aggregated dataset as described in  Section~\ref{sec:heuristics}. We then contaminated it with outliers in the same manner as for the synthetic dataset. However, we avoided adding observation noise %in the final step 
%since it had been already introduced by the image acquisition process. 

\subsubsection{Comparison to Other Online and Batch NMF Algorithms}
To make a fair comparison, all the algorithms were initialized with the same $\bW_0$ for each parameter setting. Moreover, all   online algorithms had the same mini-batch size $\tau$. The {\em synthetic} data matrix $\bV$ was generated using the parameters values $F = 400$, $K^o = 49$, $N = 1\times10^5$, $\nu = 0.7$ and $\tnu = 0.1$.

The convergence speeds of the aforementioned algorithms under the canonical parameter setting\footnote{For BPGD and BADMM, the step sizes and penalty parameters had the same values as those of the online algorithms. For ORPCA and ONMF, we kept the parameter settings in the original works.} are shown in Figure~\ref{fig:canon}. From Figure~\ref{fig:canon}(a), we observe our online algorithms converge much faster than their batch counterparts. From Figure~\ref{fig:canon}(b), we observe the ORPCA algorithm converges slightly faster than our online algorithms (the time difference is less than 1s). This is because ORPCA's formulation does not incorporate nonnegativity constraints (and   magnitude constraints for the outliers) so the algorithm has fewer projection steps, leading to a lower computational complexity. However, we will show in Section~\ref{sec:basis_rep} that it fails to learn the parts-based   representations due to the lack of nonnegativity constraints. Moreover, we also observe  that the ONMF algorithm fails to converge because it is unable to handle the outliers. Thus the constant perturbation from the outliers in the data samples on the learned basis matrix keeps the algorithm from converging. Furthermore, from both subfigures, we observe the objective function of OADMM has a smoother decrease that that of OPGD. This is OADMM solves the optimization problems \eqref{eq:min_hr} and \eqref{eq:min_W_tr} in the dual space so it avoids the sharp transitions from  plateaus to valleys in the primal space. Apart from this difference, {\em the convergence speeds of OADMM and OPGD are almost the same}. 
\subsubsection{Insensitivity to Parameters}
We  now examine the influences of various parameters, including $\tau$, $K$, $\rho$ and $\kappa$, on the convergence speeds of our online algorithms on the synthetic dataset. To do so, we vary one parameter at a time while keeping the other parameters fixed as in the canonical setting. We first vary $\tau$ from 5 to 40 in a log-scale, since our proposed heuristic rule indicates $\tau\le 40$. Figure~\ref{fig:tau} shows that our rule-of-thumb works well since within its indicated range, different values of $\tau$ have very little effects on the convergence speeds of both our online algorithms. Next, we consider other values of the latent dimension $K$. In particular, we consider $K=25$ and $K=100$. Figure~\ref{fig:K} shows that the convergence speeds of both our online and batch algorithms are insensitive to this parameter. Then we vary $\rho$ (in both BADMM and OADMM) from 1 to 1000 on a log-scale. We can make two observations from Figure~\ref{fig:rho}. First, as $\rho$ increases,  both ADMM-based algorithms converge slower.  However, the change of the convergence speed of OADMM is much less drastic compared to BADMM. In fact, in less than 10s, all the OADMM algorithms (with different values of $\rho$) converge. This is because for OADMM, a large $\rho$ only has a significant effect on its convergence speed in the initial transient phase. After the algorithm reaches its steady state (i.e., the basis matrix becomes stable), the effect  of a large $\rho$ becomes minimal in terms of solving both \eqref{eq:min_hr} and \eqref{eq:min_W_tr}. Similar effects of the different values of $\kappa$ on the convergence speeds of the PGD-based algorithms are observed in Figure~\ref{fig:kappa}. 
Together, Figure~\ref{fig:rho} and \ref{fig:kappa} reveal {\em another advantage} of our online algorithms. That is, our online algorithms have a significantly larger tolerance of   suboptimal parameters than their batch counterparts. 

\subsubsection{Effects of Proportion of Outliers}
We evaluate the performance of our online (and batch) algorithms (with the canonical parameter setting) on the synthetic dataset with a larger proportion of outliers. Specifically, in generating the synthetic data matrix $\bV$, we keep $F$, $K^o$ and $N$ unchanged, but simultaneously increase $\nu$ and $\tnu$. Figures~\ref{fig:den}(a) and \ref{fig:den}(b) show the convergence results of our algorithms on the synthetic dataset with $(\nu,\tnu)$ equal to $(0.8,0.2)$ and $(0.9,0.3)$ respectively. From Figure~\ref{fig:den}, we observe that our online algorithms still converge fast and steadily (without fluctuations) to a stationary point even though both $\nu$ and $\tnu$ have increased. 

\subsubsection{Experiments on the {\tt CBCL} Dataset}\label{sec:exp_CBCL}
%Figure~\ref{fig:tnu} shows the convergence results of all the algorithms on the synthetic dataset generated with a larger outlier density on each contaminated image, i.e., $\tnu = 0.3$. All the other parameters agree with the canonical setting.  
%Similar to the generation process of the synthetic data, 
To generate the {\tt CBCL} face dataset with outliers, we first replicated it by a factor $p = 50$ (so that the total number of data samples is of the order $10^5$) and randomly permuted the aggregated dataset as described in  Section~\ref{sec:heuristics}. We then contaminated it with outliers in the same manner as for the synthetic dataset. However, we avoided adding observation noise since it had been already introduced by the image acquisition process. 

We then conducted experiments on %the (contaminated) CBCL face 
the contaminated face dataset with the same parameter settings as those on the synthetic data. 
%To generate the contaminated face data, we first vectorized each image in the face dataset and treated it as one column in the clean data matrix. 
%Next we contaminated this data matrix with outliers in the same fashion as done for the synthetic data. 
In the interest of space, we defer the convergence results of all the online and batch algorithms to Section~\ref{sec:exp_res}. %in the supplemental material. 
The results show that both our online algorithms demonstrate fast and steady convergences on this dataset. Moreover, the convergence speeds are insensitive to the key parameters ($\tau$, $K$, $\rho$ and $\kappa$). %These results, together with the convergence results on the synthetic data, show that our online algorithms are able to converge in a fast and steady manner on both synthetic and real data with outliers, without tuning of the parameters. 
Therefore, in the subsequent experiments, we will only focus on  the canonical parameter setting unless mentioned otherwise. 
%thereby demonstrating the wide applicability of our online algorithms. 

\begin{figure}[t]
\subfloat[]{\includegraphics[width=.5\columnwidth,height=.4\columnwidth]{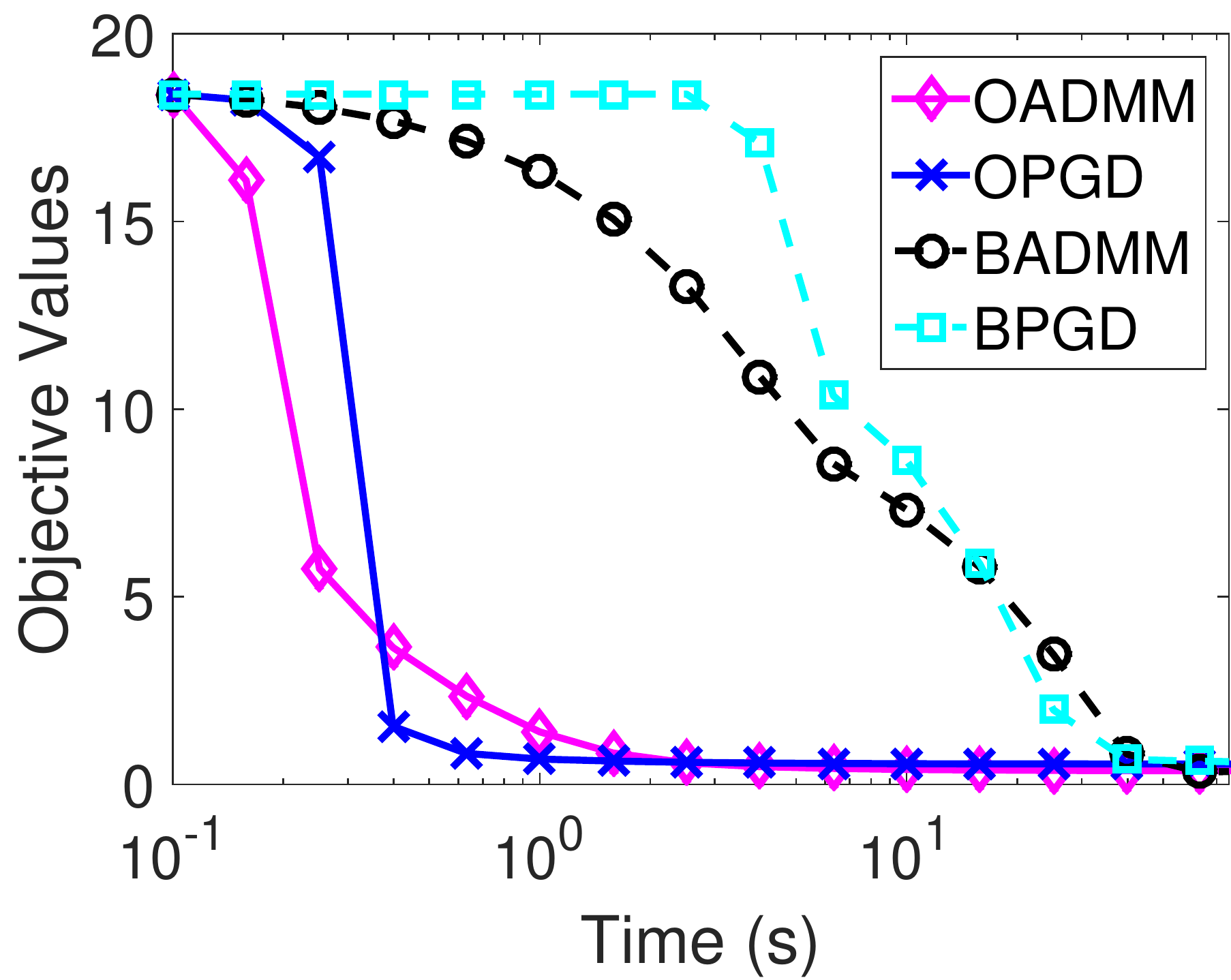}}\hfill
\subfloat[]{\includegraphics[width=.475\columnwidth,height=.4\columnwidth]{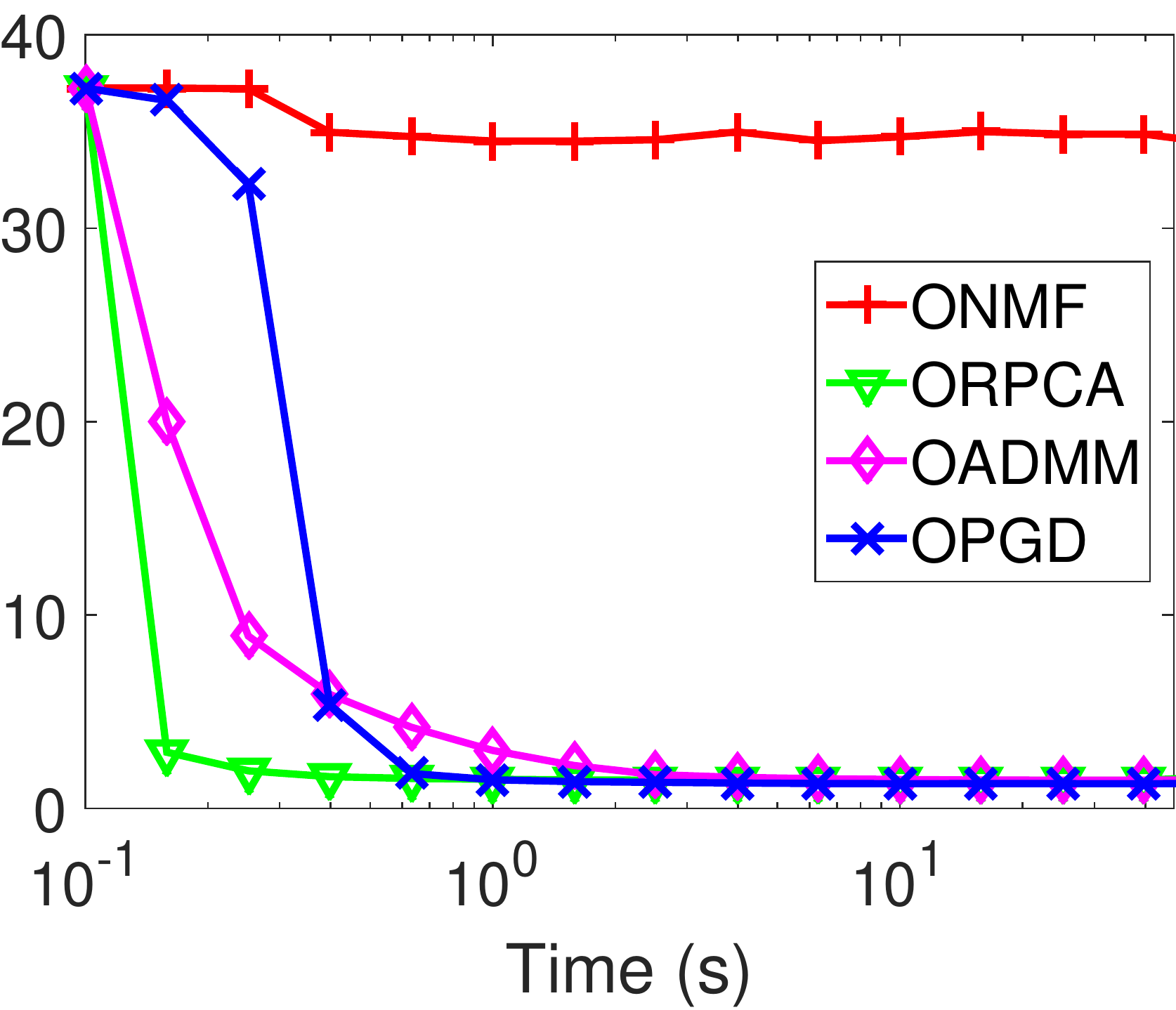}}
\caption{The objective values (as a function of time) of (a) our online algorithms and their batch counterparts (b) our online algorithms and other online algorithms. The parameters are set according to the canonical setting.}\label{fig:canon}
\end{figure}

\begin{figure}[t]
\subfloat[OPGD]{\includegraphics[width=.5\columnwidth,height=.38\columnwidth]{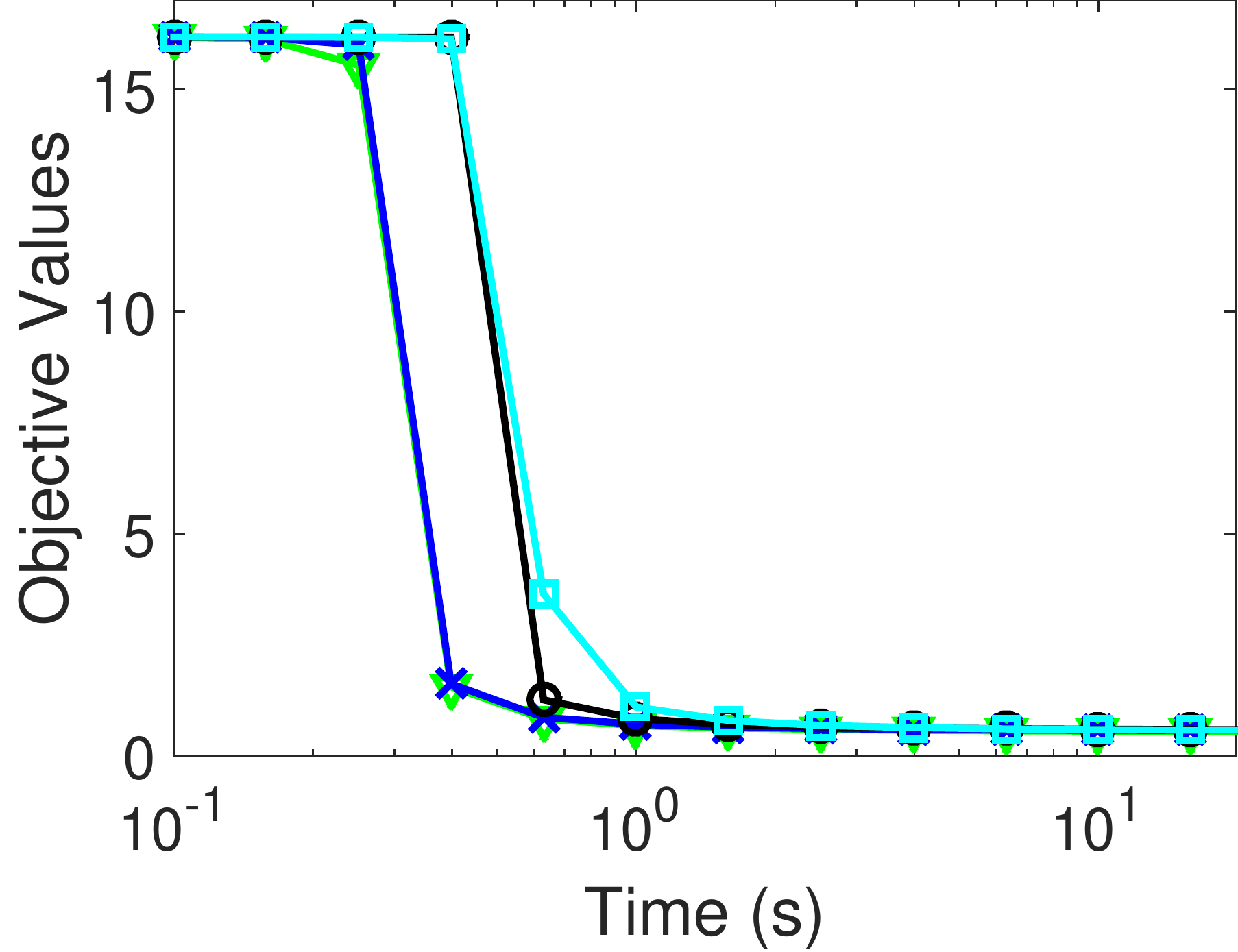}}\hfill
\subfloat[OADMM]{\includegraphics[width=.475\columnwidth,height=.38\columnwidth]{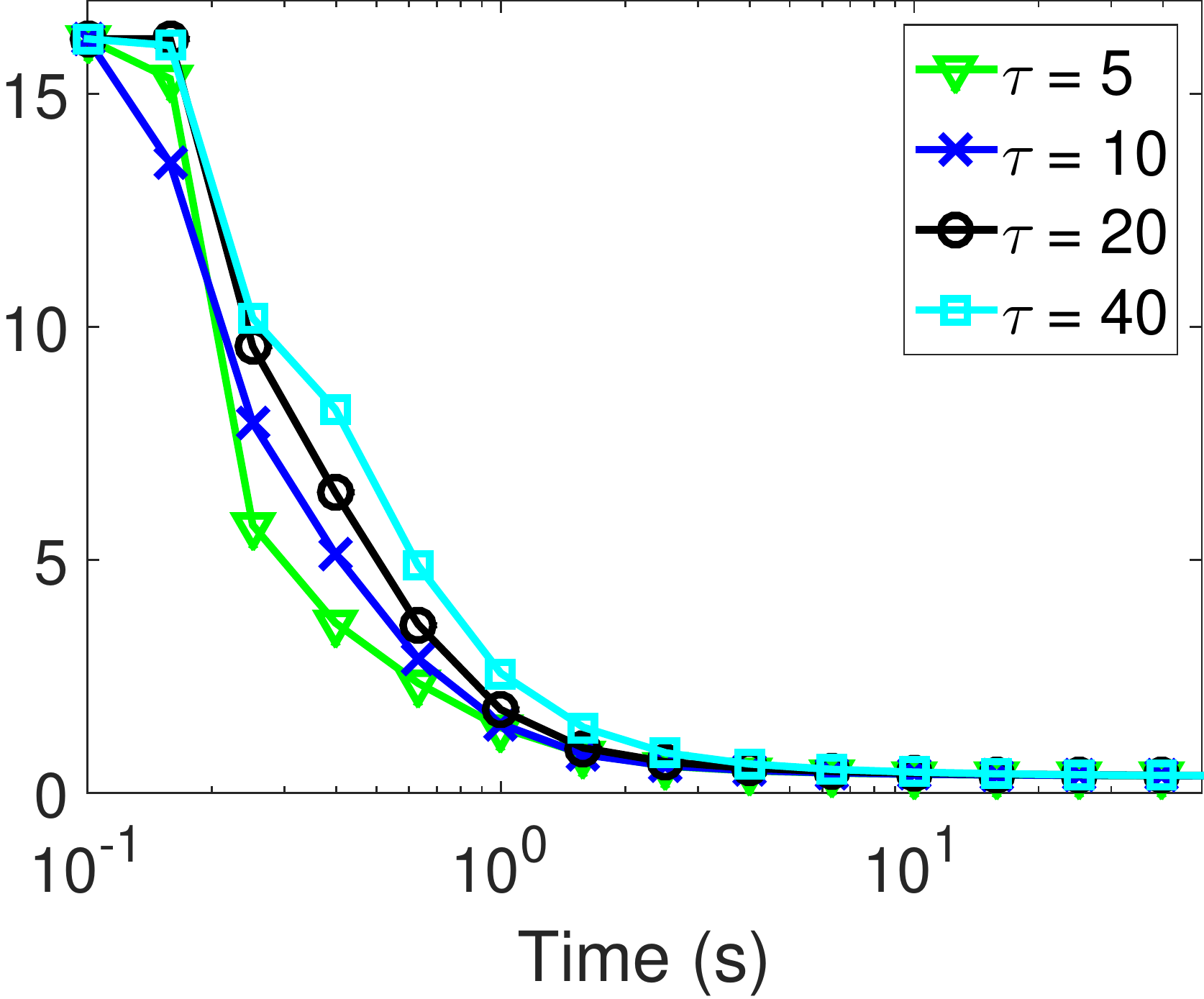}}
\caption{The objective values  of (a) OPGD and (b) OADMM for different values of $\tau$. All the other parameters are set according to the canonical setting.}\label{fig:tau}
\end{figure}

\begin{figure}[t]
\subfloat[ $K=25$ ]{\includegraphics[width=.5\columnwidth,height=.4\columnwidth]{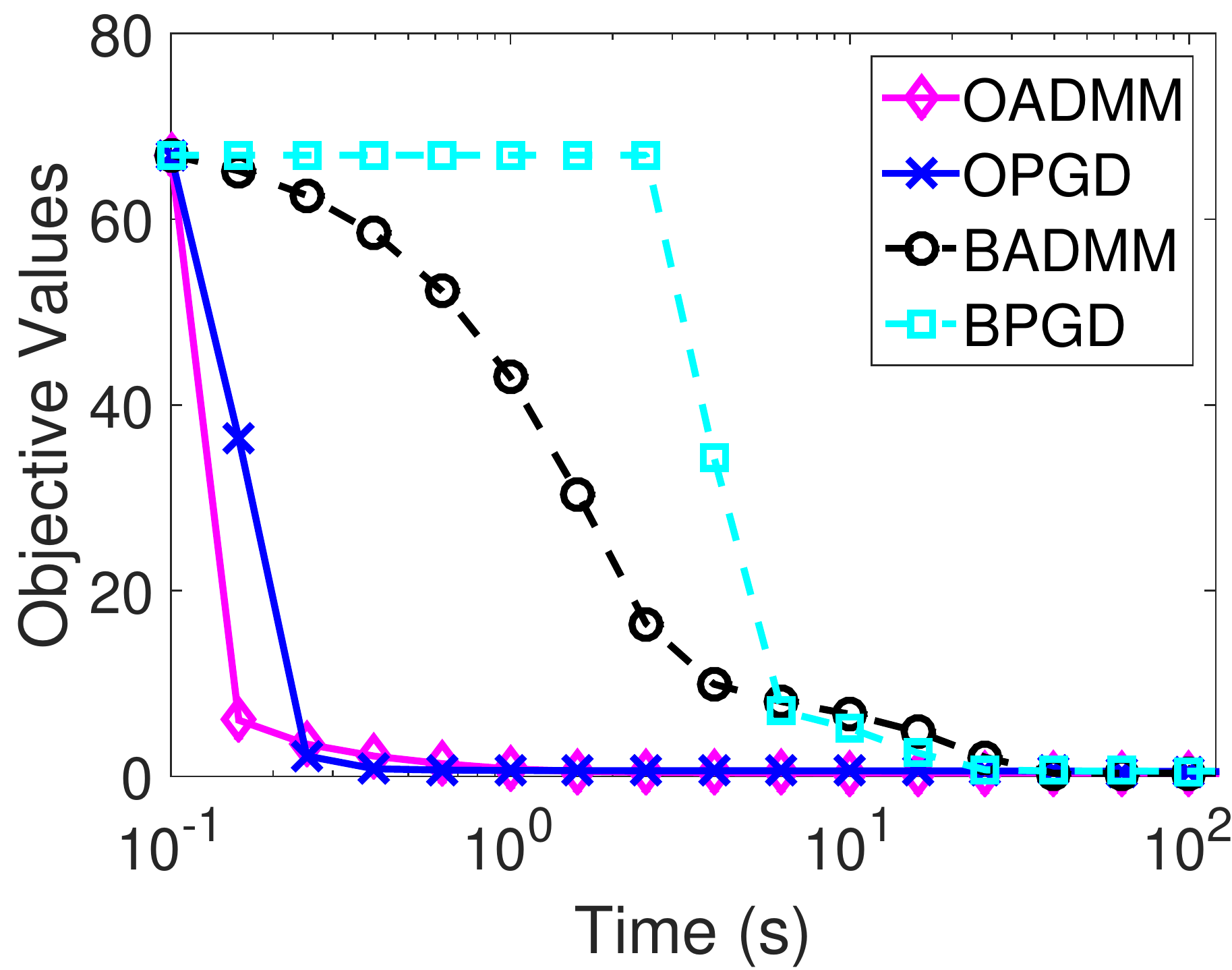}}\hfill
\subfloat[$K=25$ ]{\includegraphics[width=.475\columnwidth,height=.4\columnwidth]{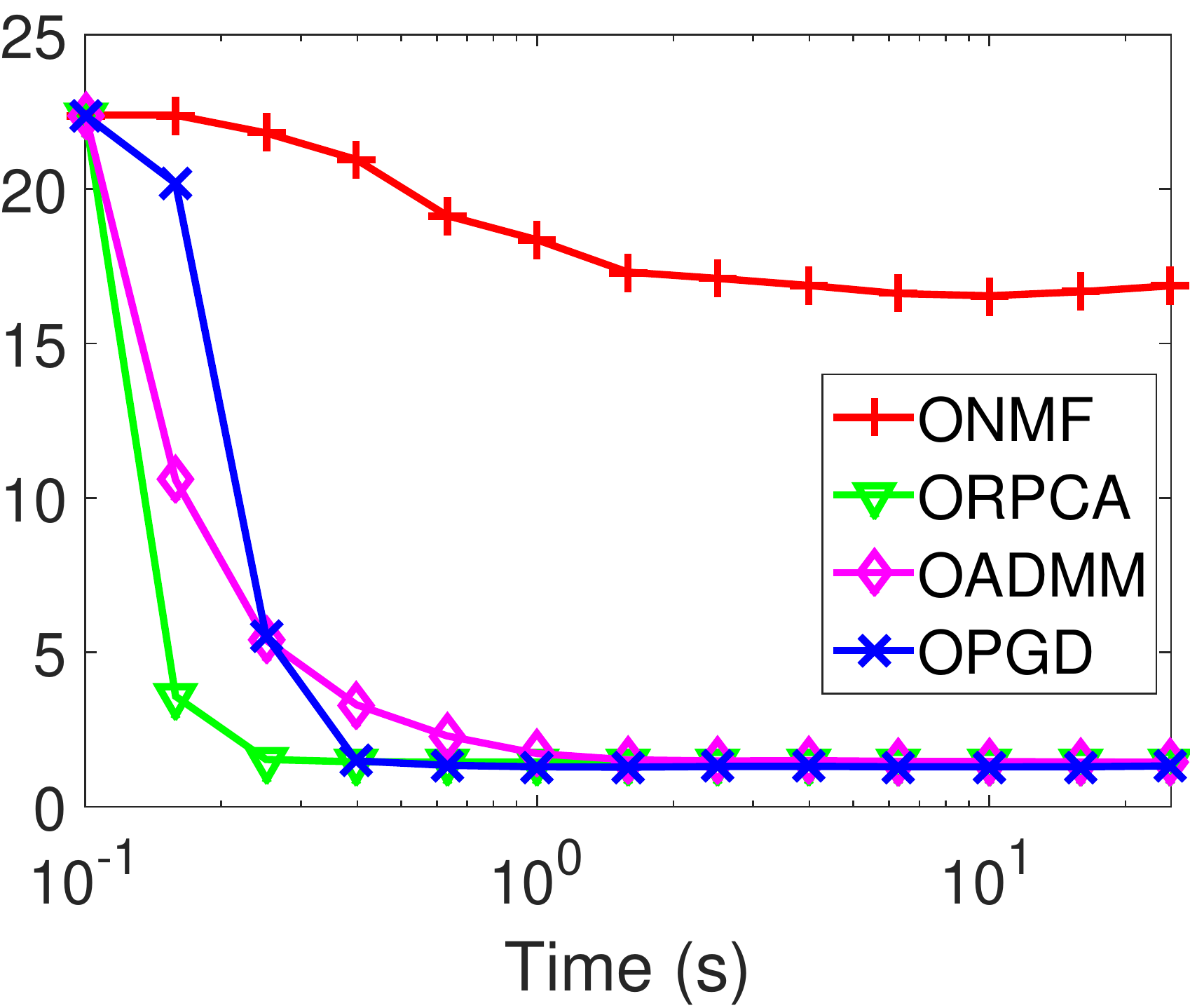}}\\
\subfloat[$K=100$ ]{\includegraphics[width=.5\columnwidth,height=.4\columnwidth]{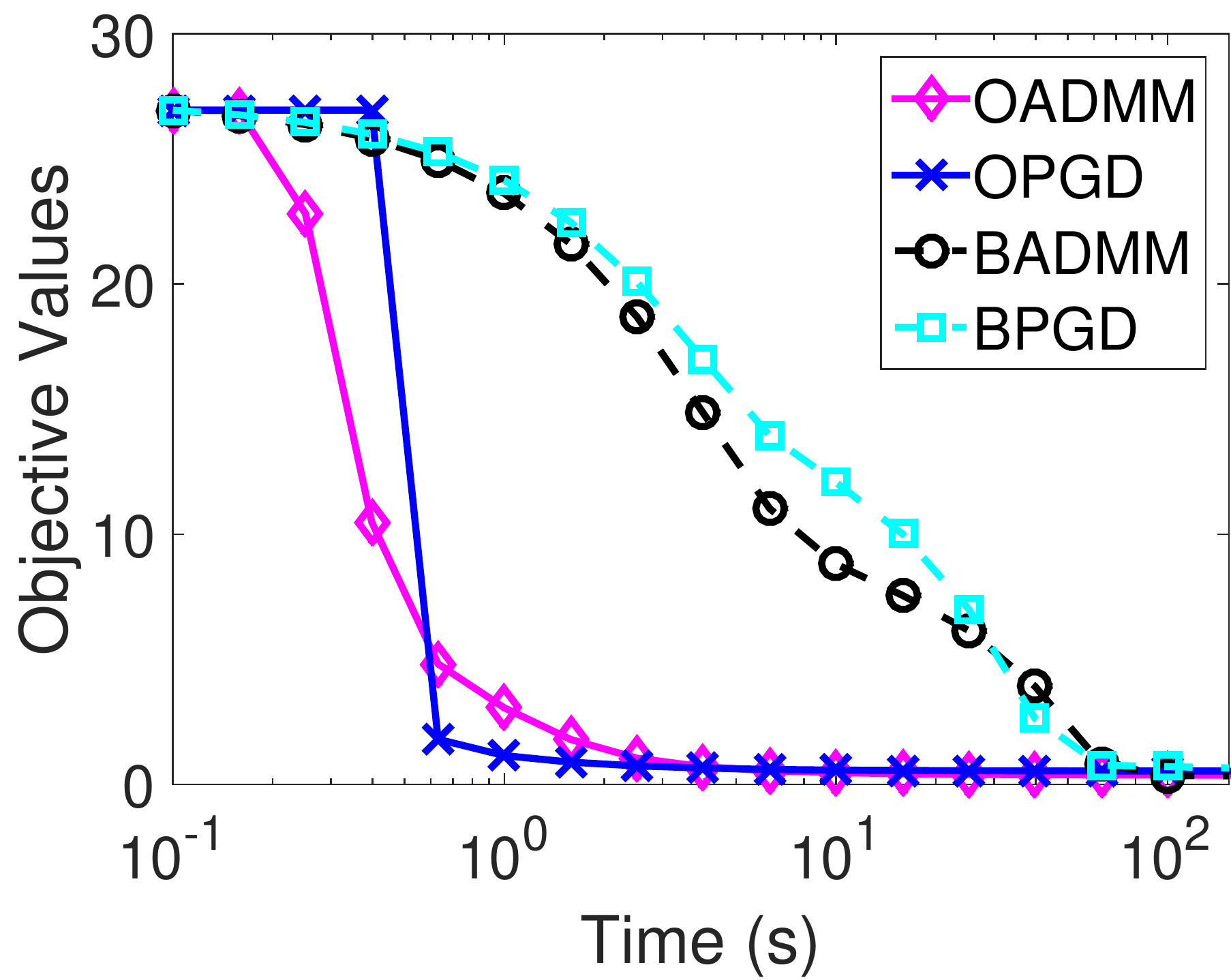}}\hfill
\subfloat[$K=100$ ]{\includegraphics[width=.475\columnwidth,height=.4\columnwidth]{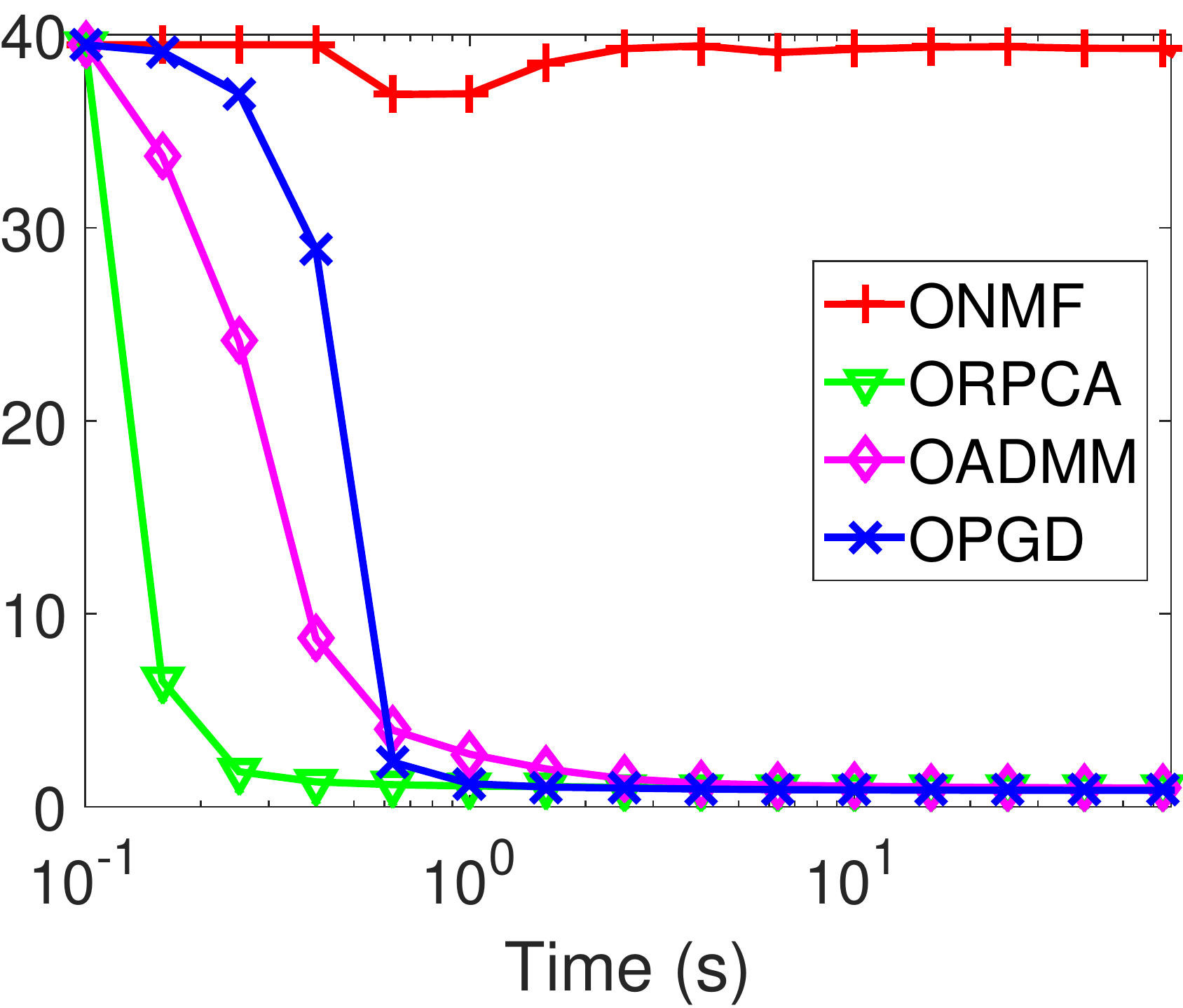}}
\caption{The objective values (as a function of time) of all the algorithms for different values of $K$. In (a) and (b), $K=25$. In (c) and (d), $K=100$. All the other parameters are set according to the canonical setting.}\label{fig:K}
\end{figure}

\begin{figure}[t]
\subfloat[BADMM]{\includegraphics[width=.5\columnwidth,height=.4\columnwidth]{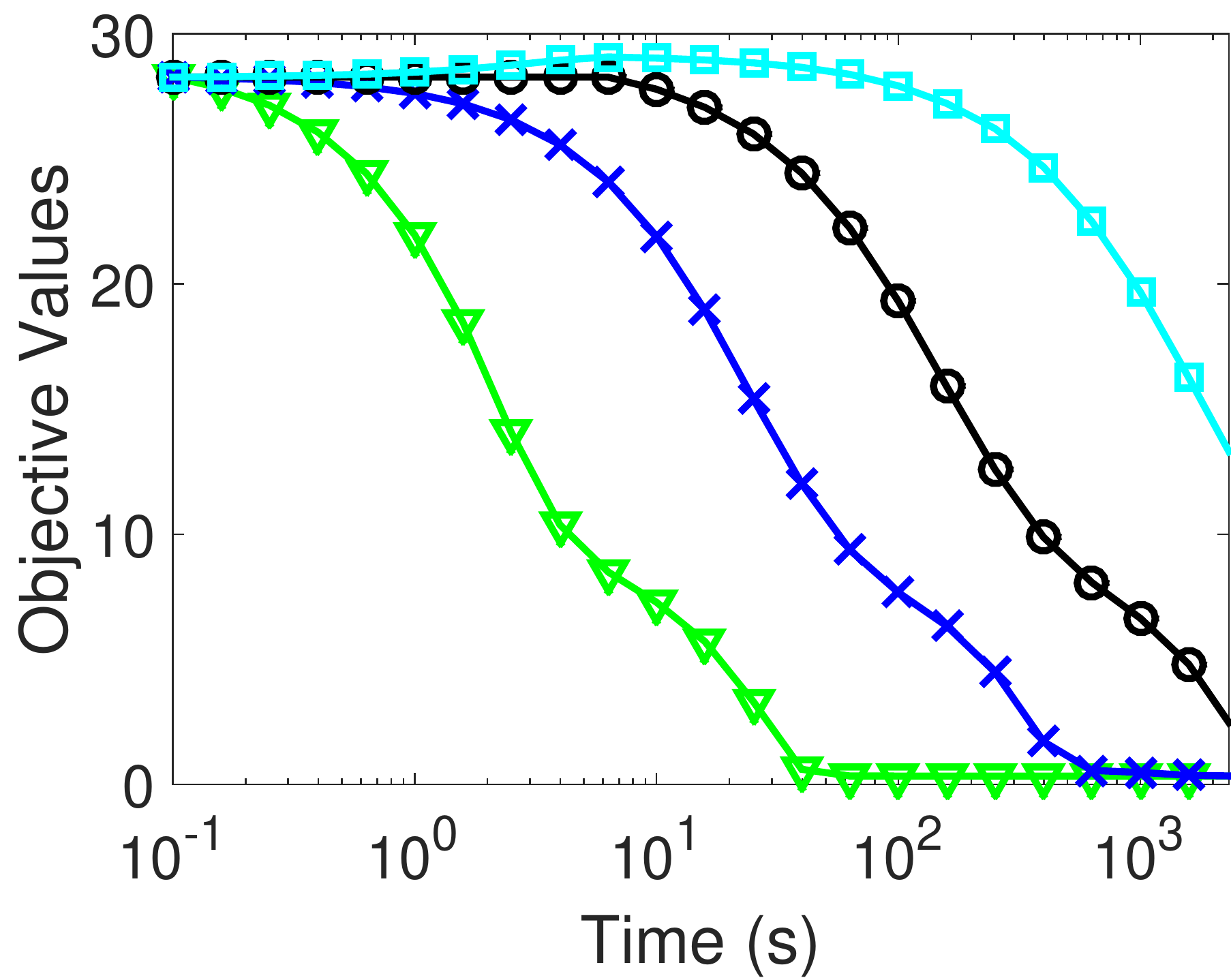}}\hfill
\subfloat[OADMM]{\includegraphics[width=.475\columnwidth,height=.39\columnwidth]{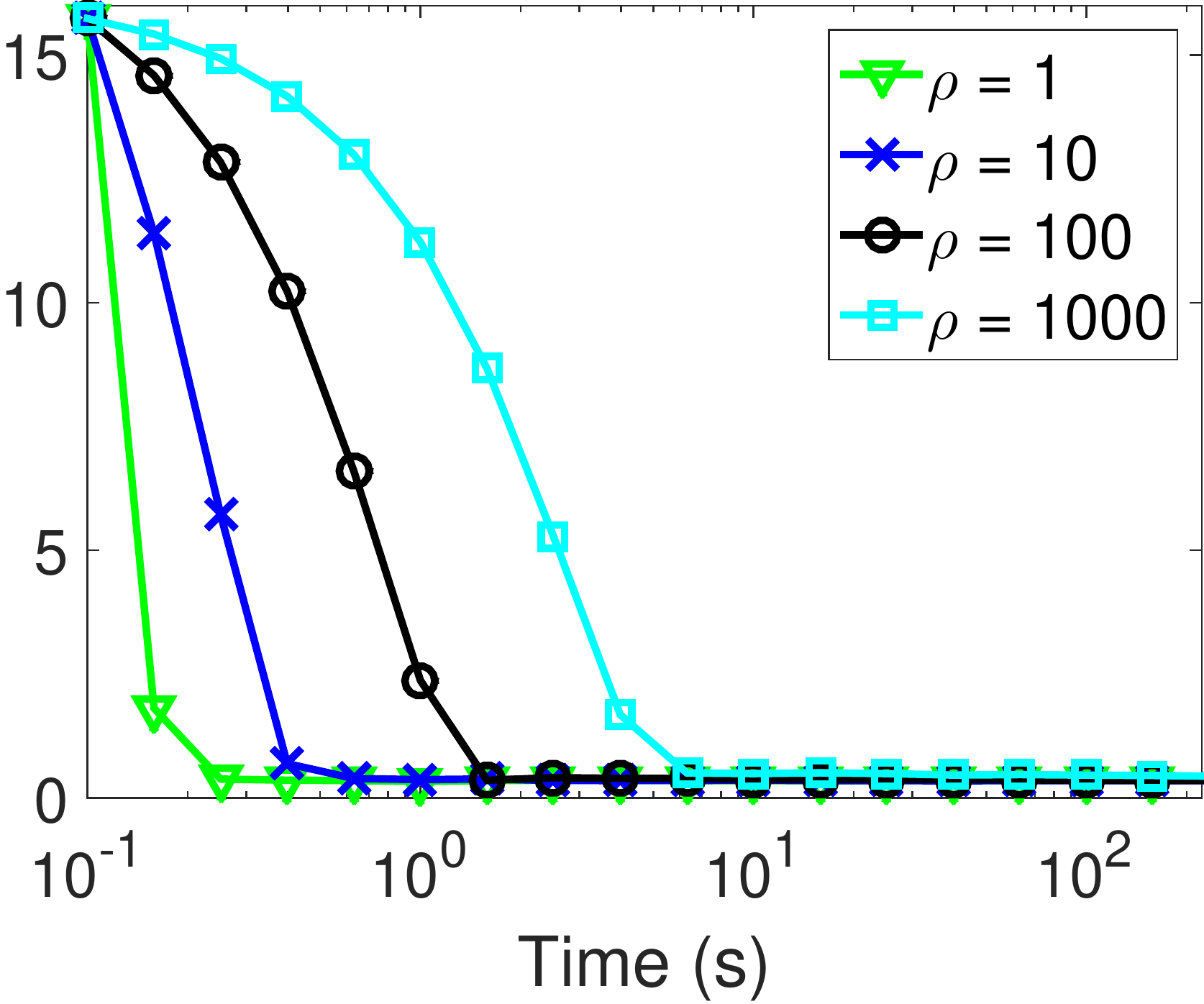}}
\caption{The objective values  of (a) BADMM and (b) OADMM for different values of $\rho$. All the other parameters are set according to the canonical setting.}\label{fig:rho}
\end{figure}
 
\begin{figure}[t]
\subfloat[BPGD]{\includegraphics[width=.5\columnwidth,height=.4\columnwidth]{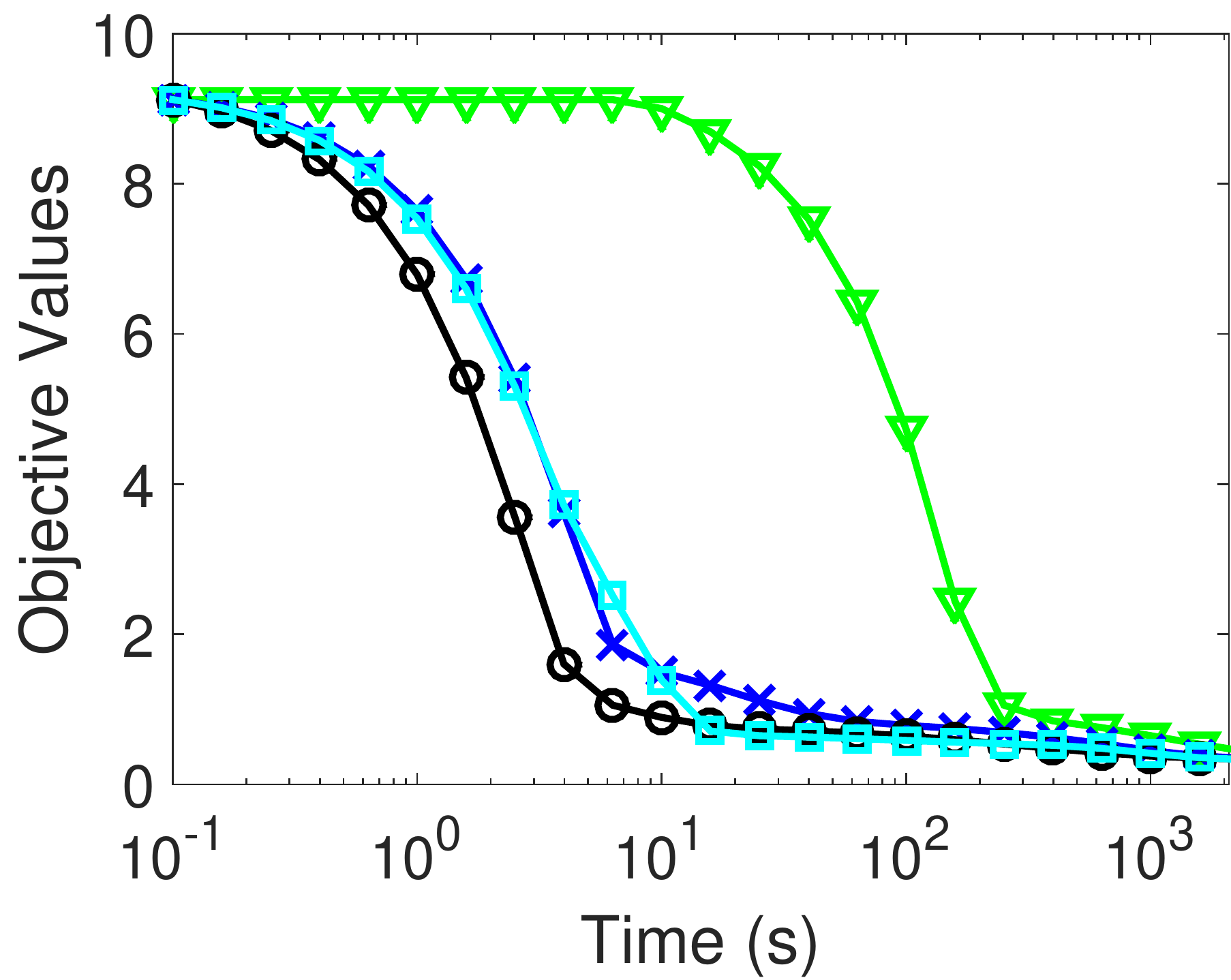}}\hfill
\subfloat[OPGD]{\includegraphics[width=.475\columnwidth,height=.4\columnwidth]{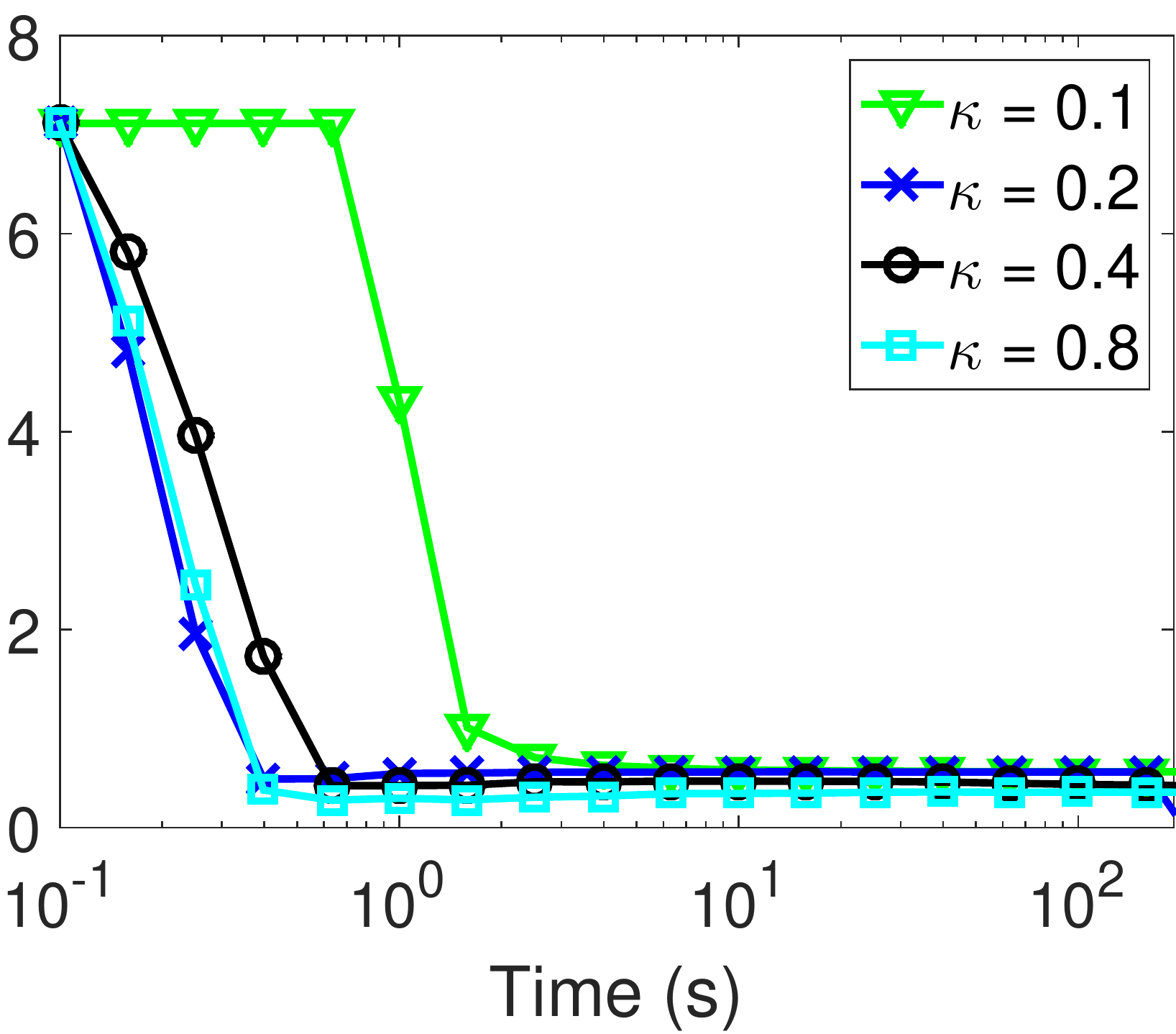}}
\caption{The objective values  of (a) BPGD and (b) OPGD for different values of $\kappa$. All the other parameters are set according to the canonical setting.}\label{fig:kappa}
\end{figure}

\begin{figure}[t]
\subfloat[$\nu = 0.8$, $\tnu = 0.2$]{\includegraphics[width=.5\columnwidth,height=.4\columnwidth]{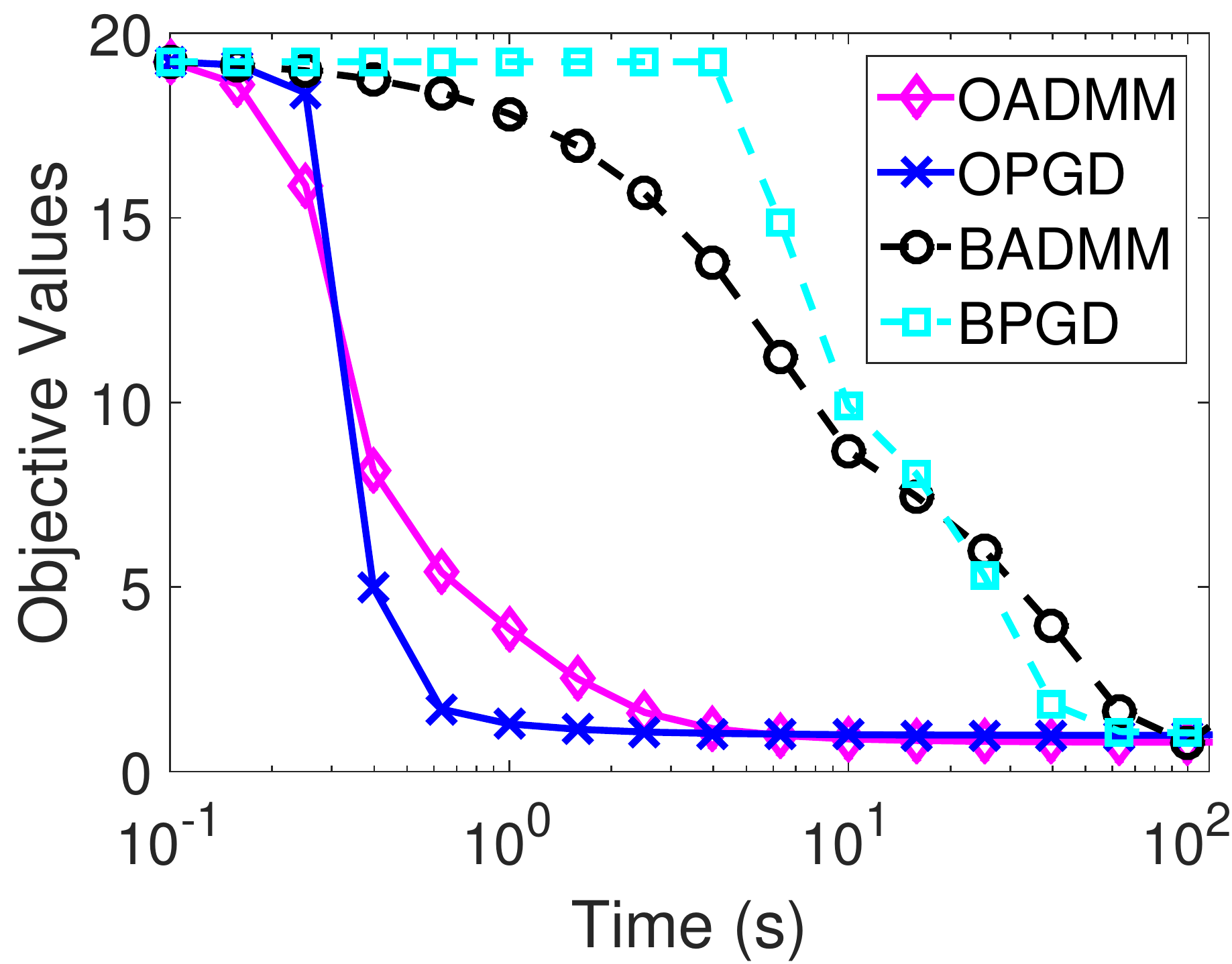}}\hfill
\subfloat[$\nu = 0.9$, $\tnu = 0.3$]{\includegraphics[width=.475\columnwidth,height=.39\columnwidth]{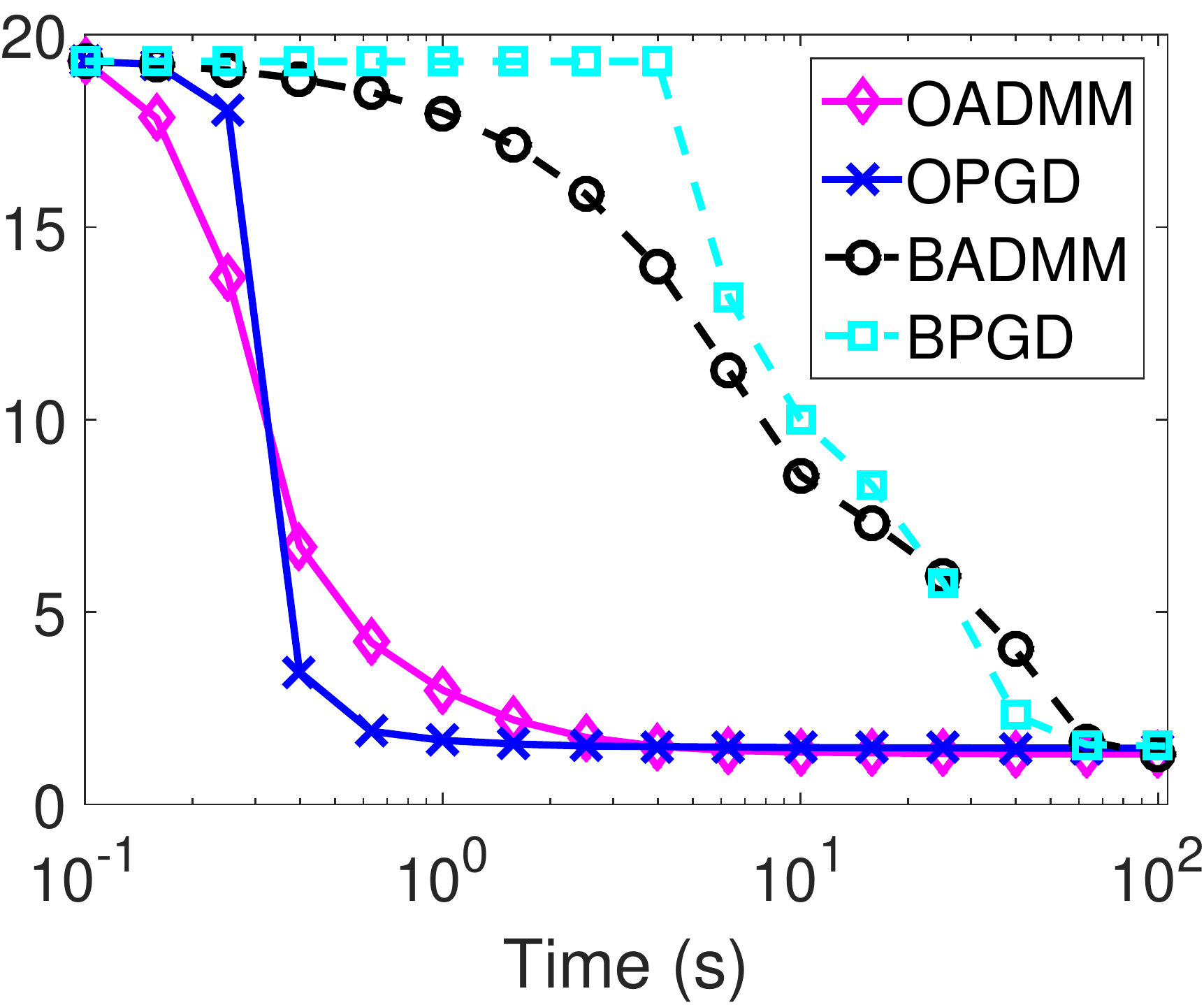}}
\caption{The objective values of our online and batch algorithms on the synthetic dataset with a larger proportion of outliers.}\label{fig:den}
%(a) $\nu = 0.8$, $\tnu = 0.2$ (b) $\nu = 0.9$, $\tnu = 0.3$.}
\end{figure}

\subsection{Basis Representations}\label{sec:basis_rep}
We now examine the basis matrices learned by all the algorithms  on the {\tt CBCL} dataset with outliers as introduced in Section~\ref{sec:exp_CBCL}. The parameters controlling the outlier density, $\nu$ and $\tnu$ are set to 0.7 and 0.1 respectively. Figure~\ref{fig:basis} shows the basis representations learned by all the algorithms. From this figure, we observe that the basis images learned by ONMF have large residues of salt and pepper noise. Also, the basis images learned by ORPCA appear to be non-local and do not yield meaningful interpretations. In contrast, the basis images learned by our online (and batch) algorithms are free of noise and much more local than those learned by ORPCA. We can easily observe facial parts from the learned basis. Also, the basis images learned by our four algorithms are of similar quality.
To further enhance the sparsity of the basis images learned, one can add sparsity constraints to each column of the basis matrix, as done for example in \cite{Hoyer_04}. However, we omit such constraints in our discussions. Another parameter affecting the sparsity of the learned basis images is the latent dimension $K$. In general, the larger $K$ is, the sparser the basis images will be. Here we set $K=49$, in accordance with the original setting in \cite{Lee_99}.

\begin{figure}
\subfloat[ONMF]{\includegraphics[width=.475\columnwidth]{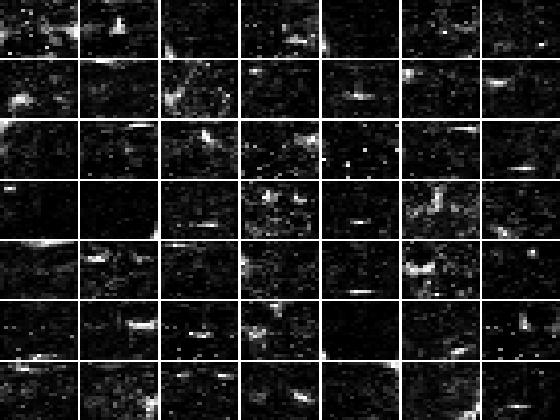}}\hfill
\subfloat[ORPCA]{\includegraphics[width=.475\columnwidth]{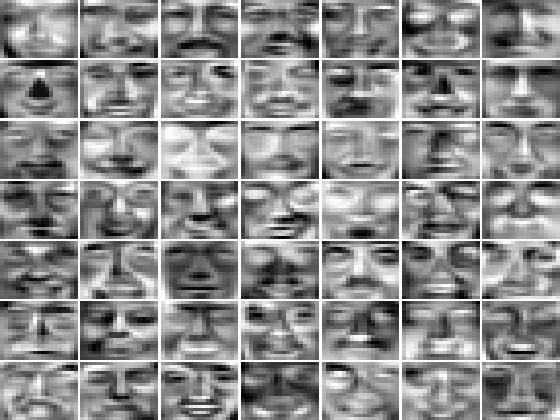}}\\
\subfloat[OADMM]{\includegraphics[width=.475\columnwidth]{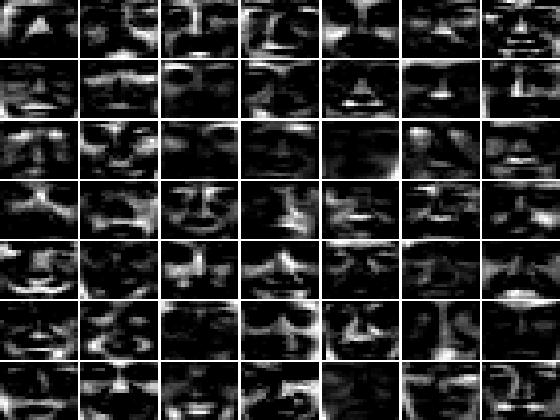}}\hfill
\subfloat[OPGD]{\includegraphics[width=.475\columnwidth]{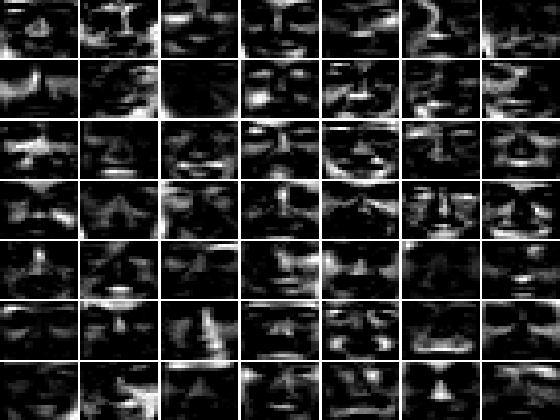}}\\
\subfloat[BADMM]{\includegraphics[width=.475\columnwidth]{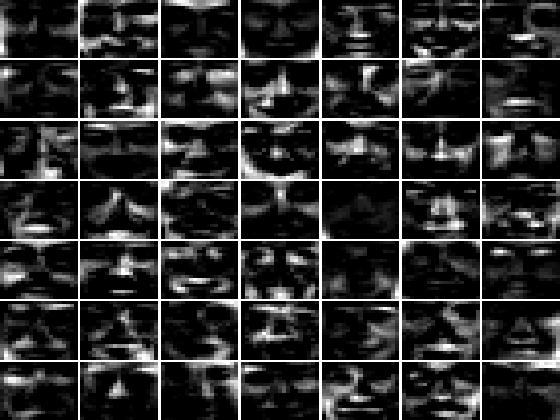}}\hfill
\subfloat[BPGD]{\includegraphics[width=.475\columnwidth]{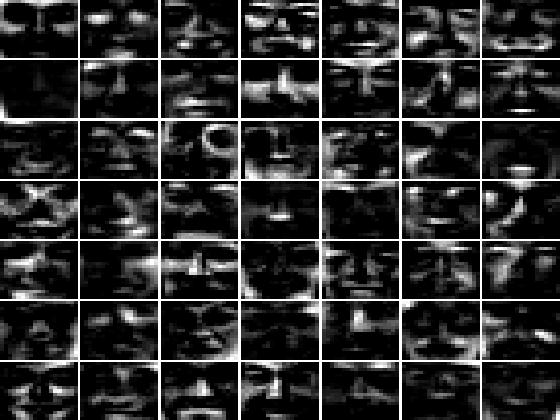}}
\caption{Basis representations learned by all the algorithms ($\nu = 0.7$, $\tnu = 0.1$). All the parameters are in the canonical setting ($K=49$).} \label{fig:basis}
\end{figure}

\subsection{Application I: Image Denoising}\label{sec:img_denois}
A natural application that arises from the experiments on the {\tt CBCL} dataset would be image denoising, in particular, removing the salt and pepper noise on images.
%We illustrate an application of our online (and batch) algorithms on removing the salt and pepper noise on images. 
%Our intention here (as well as in Section~\ref{sec:shadow_rmv}) is certainly not to compare our algorithms with the state-of-the-art image processing algorithms. Instead, we aim to demonstrate some possible applications of our online algorithms. 
The metric commonly used to measure the quality of the reconstructed images is peak signal-to-noise ratio (PSNR)\cite{PSNR_98}. A larger value of PSNR indicates better quality of image denoising. With a slight abuse of definition, we apply this metric to all the recovered images. For the batch algorithms, we define %it is   defined as 
\begin{equation}
\mbox{PSNR} \defeq -10\log_{10}\left\{\normt{\bV^o-\hatbW\hatbH}_F^2/(FN)\right\},
\end{equation}
where $\bV^o$, $\hatbW$ and $\hatbH$ denote the matrix of the clean images, estimated basis matrix and estimated coefficient matrix respectively. For the online algorithms, we instead define PSNR in terms of $\bW_N$ (the final dictionary output by Algorithm~\ref{algo:generic}) and the past statistics $\{\bh_i\}_{i\in[N]}$
\begin{align}
\mbox{PSNR} &\defeq -10\log_{10}\left\{\sum_{i=1}^N \norm{\bv_i^o-\bW_N\bh_i}_2^2/(FN)\right\}\\
&\eqcst -10\log_{10} \hatf_N(\bW_N).
\end{align}
In other words, a low averaged regret  will result in a high PSNR. 
Table~\ref{tab:PSNR_K49} shows the image denoising results of all the algorithms on the {\tt CBCL} face dataset. Here the  settings 1, 2 and 3 represent different densities of the salt and pepper noise. Specifically, these settings correspond to $(\nu,\tnu)$ equal to $(0.7,0.1)$, $(0.8,0.2)$ and $(0.9,0.3)$ respectively. %All the parameters are set to the canonical setting. 
%The result of every algorithm on each setting was 
All the results were obtained using ten random initializations  of $\bW_0$.
From Table~\ref{tab:PSNR_K49}(a), we observe that for all the three settings, in terms of PSNRs, our online algorithms only slightly underperform their batch counterparts, but slightly outperform ORPCA and greatly outperform ONMF. %This shows that compared to the other online algorithms, our online algorithms have better capabilities on removing the salt and pepper noise on images. 
From Table~\ref{tab:PSNR_K49}(b), we observe our online algorithms only have slightly longer running times than ORPCA, but are significantly faster than the rest ones. Thus in terms of the trade-off between the computational efficiency and the quality of image denoising, our online algorithms achieve comparable performances with ORPCA but are much better than the rest algorithms. Also, in terms of PSNRs and running times, no significant differences can be observed between our online algorithms.
%Moreover, increased noise density only has minor effects on the performances of our online algorithms.
Similar results were observed when $K = 25$ and $K=100$. See Section~\ref{sec:exp_res} %in the supplemental material 
for details. 

%We introduce two different ways of defining the reconstructed data matrix $\hatbV$ for batch and online algorithms. For the batch algorithms, $\hatbV \defeq \hatbW\hatbH$, where $\hatbW$ and $\hatbH$ are the estimated basis and coefficient matrices output by the algorithms. For the online algorithms, since they can only produce $\hatbW$, we define $\hatbV \defeq \hatf_N(\hatbW)$. 

\begin{table}[t!]\centering\footnotesize
\setlength\tabcolsep{2.5pt}
\caption{Average PSNRs (in dB) and running times (in seconds) with standard deviations of all the algorithms on the {\tt CBCL} face dataset with different noise density ($K=49$).}\label{tab:PSNR_K49}
\subfloat[PSNRs]{
\begin{tabular}{|c|c|c|c|c|c|c|c|c|c|}
\hline
\quad&Setting 1 & Setting 2 & Setting 3\\\hline
OADMM &  11.37 $\pm$ 0.02 &  11.35 $\pm$ 0.15 &  11.33 $\pm$ 0.11\\ \hline
OPGD &  11.48 $\pm$ 0.10 &  11.47 $\pm$ 0.05 &  11.39 $\pm$ 0.03\\ \hline
BADMM &  11.53 $\pm$ 0.19 &  11.51 $\pm$ 0.10 &  11.48 $\pm$ 0.03\\ \hline
BPGD &  11.56 $\pm$ 0.07 &  11.52 $\pm$ 0.14 &  11.48 $\pm$ 0.05\\ \hline
ONMF &   5.99 $\pm$ 0.12 &   5.97 $\pm$ 0.18 &   5.95 $\pm$ 0.17\\ \hline
ORPCA &  11.25 $\pm$ 0.04 &  11.24 $\pm$ 0.19 &  11.23 $\pm$ 0.05\\ \hline
\end{tabular}
}\\
\subfloat[Running times]{
\begin{tabular}{|c|c|c|c|c|c|c|c|c|c|}
\hline
\quad&Setting 1 & Setting 2 & Setting 3\\\hline
OADMM & 416.68 $\pm$ 3.96 & 422.35 $\pm$ 3.28 & 425.83 $\pm$ 4.25\\ \hline
OPGD & 435.32 $\pm$ 4.80 & 449.25 $\pm$ 3.18 & 458.58 $\pm$ 4.67\\ \hline
BADMM & 1000.45 $\pm$ 10.18 & 1190.55 $\pm$ 5.88 & 1245.39 $\pm$ 10.59\\ \hline
BPGD & 1134.89 $\pm$ 11.37 & 1185.84 $\pm$ 9.83 & 1275.48 $\pm$ 9.48\\ \hline
ONMF & 2385.93 $\pm$ 11.15 & 2589.38 $\pm$ 15.57 & 2695.47 $\pm$ 14.15\\ \hline
ORPCA & 368.35 $\pm$ 3.23 & 389.59 $\pm$ 3.49 & 399.85 $\pm$ 4.12\\ \hline
\end{tabular}
}
\end{table}

\subsection{Application II: Shadow Removal}\label{sec:shadow_rmv}
We evaluated the performances of all the online and batch algorithms on removing shadows in the face images in the {\tt YaleB} dataset. It is well-known from the image processing literature that the shadows in an image can be regarded as inherent outliers. Therefore, the shadow removal task serves as another meaningful application of our online (and batch) algorithms.  We first briefly describe the experiment procedure. For each subject in the {\tt YaleB} dataset, we aggregated and randomly permuted his/her images. We set the aggregation factor $p$ to $50$ in consistency with the previous experiments. We then regarded the resulting data as the input to all the algorithms. 
Finally, we reconstructed the images in a similar way as in Section~\ref{sec:img_denois}.
%The parameters are set to the canonical setting. 
The experiments were run using ten random initializations of $\bW_0$. 
Figure~\ref{fig:shadow} shows some (randomly sampled) reconstructed images on subjects No.2 and No.8 (with one initialization of $\bW_0$). From this figure, we observe that overall, our online algorithms perform almost as well as their batch counterparts, except for small  artifacts (e.g., salt noise) in the recovered images by our online algorithms. We also observe that the other two online algorithms are inferior to our online algorithms. Specifically, ORPCA has more prominent artifacts (e.g., salt noise with larger density) in the recovered images. ONMF in most cases either fails to remove shadow or causes large distortions to the original images. 
The results by other initializations of $\bW_0$ are similar to those shown in Figure~\ref{fig:shadow}.
Table~\ref{tab:shadow_time} shows the running times of all algorithms on the images of subject No.2. The average running times of these algorithms on other subjects are similar to those on subject No.2. This table, together with Figure~\ref{fig:shadow}, suggests  that our online algorithms achieve the best trade-off between the computational efficiency and the quality of shadow removal. Also, both of our online algorithms have similar performances on the shadow removal task. 

\begin{table}[t]\centering\footnotesize
\caption{Average running times (in seconds) of all the algorithms on subject No.2.}\label{tab:shadow_time}
\setlength\tabcolsep{1.5pt}
\begin{tabular}{|c|c|c|c|}
\hline
Algorithms & Time (s) & Algorithms & Time (s)\\\hline
ONMF & $436.11\pm 10.85$ & ORPCA & $16.38\pm 2.43$\\\hline
OADMM & $51.12\pm 2.69$ & OPGD & $62.34\pm 2.75$\\\hline
BADMM & $175.52\pm 7.84$ & BPGD & $212.71\pm 6.33$ \\\hline
\end{tabular}
\end{table}

\begin{figure}
\subfloat{\includegraphics[width=\columnwidth]{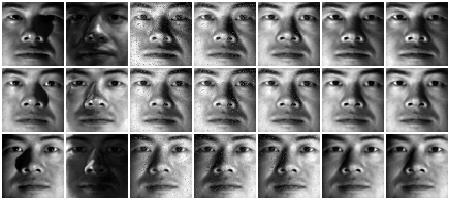}}\vspace{-0.3cm}\\
\subfloat{\includegraphics[width=\columnwidth]{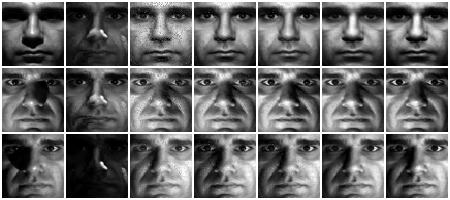}}\vspace{-0.1cm}\\
\hspace*{0.5cm}(a)\hspace*{.85cm}(b)\hspace*{.85cm}(c)\hspace*{.85cm}(d)\hspace*{.85cm}(e)\hspace*{.9cm}(f)\hspace*{.85cm}(g)
\caption{Results of shadow removal by all the online and batch algorithms on subjects No.2 (upper)  and No.8 (lower) in the {\tt YaleB} face dataset. The labels (a) to (g) denote the original images and results produced by ONMF, ORPCA, OADMM, OPGD, BADMM, BPGD respectively.}\label{fig:shadow}
\end{figure}

\subsection{Application III: Foreground-Background Separation}\label{sec:backgnd}
Finally, we also evaluated the performance of all the online and batch algorithms on the task of foreground-background separation, which is important in video surveillance \cite{i2r_04,Candes_11}. Since the foreground objects in each video frame generally only occupy a small fraction of pixels, they have been considered as outliers in the literature \cite{Candes_11,Netra_14}. Therefore, our online (and batch) algorithms can be applied to estimate the background scene, as well as learn the foreground objects. In this section, we consider two video sequences, {\tt Hall} and {\tt Escalator} from the {\tt i2r} dataset\cite{i2r_04}. Each video sequence consists of $200$ video frames and the resolutions of the frames in {\tt Hall} and {\tt Escalator} are $144\times 176$ and $130\times 160$ respectively.\footnote{For simplicity we converted the color video frames to gray-scale using the built-in Matlab function \texttt{\textbf{rgb2gray}}.} We set $p=10$ for storage space considerations of the batch algorithms. We also repeated the experiments using ten random initializations of $\bW_0$. %on both video sequences. 

The average running times over the ten initializations of $\bW_0$ (with standard deviations) on the two video sequences are shown in Table~\ref{tab:backgnd_time}. We notice that consistent messages are delivered by Table~\ref{tab:backgnd_time} as those in Table~\ref{tab:PSNR_K49}(b) and Table~\ref{tab:shadow_time}. Namely, the running times of our online algorithms are slightly longer than those of ORPCA but greatly shorter than the rest algorithms. Figure~\ref{fig:background} shows some (randomly sampled)  background  scenes and foreground objects separated by each algorithm. For each frame, the background was reconstructed using the methods introduced in Section~\ref{sec:img_denois}, and the foreground was directly recovered from the corresponding column in the (estimated) outlier matrix $\hatbR$. Since $\hatbR$ is absent in the formulation of ONMF, it is estimated using the difference between the original video frames and the recovered background scenes. From Figure~\ref{fig:background}, we observe that on both video sequences, our online algorithms are able to separate the foreground objects from the background fairly successfully,  with unnoticeable residues on both foreground and background. Compared with the other algorithms, the separation results of our online algorithms are comparable to their batch counterparts and slightly better than ORPCA on the {\tt Hall} sequence (with less residues in the recovered foreground). Due to the lack of robustness, the background scenes recovered by ONMF appear to be dark and the foreground objects cannot be separated from the background.
Thus again, in terms of the trade-off between the computational efficiency and the quality of foreground-background separation, our online algorithms achieve the best performances. Also, both of our online algorithms have similar performances on the foreground-background separation task.

\begin{table}[t]
\centering\footnotesize
\caption{Average running times (in seconds) of all the algorithms on video sequences (a) {\tt Hall} and (b) {\tt Escalator}.}\label{tab:backgnd_time}
\setlength\tabcolsep{1.5pt}
\subfloat[]{\begin{tabular}{|c|c|c|c|}\hline
Algorithms & Time (s) & Algorithms & Time (s)\\\hline
ONMF & $1525.57\pm 14.43$ & ORPCA & $166.85\pm 6.09$\\\hline
OADMM & $172.29\pm 5.64$ & OPGD & $178.46\pm 7.57$\\\hline
BADMM & $1280.06\pm 13.57$ & BPGD & $1167.95\pm 13.38$ \\\hline
\end{tabular}
}\\
\subfloat[]{\begin{tabular}{|c|c|c|c|}\hline
Algorithms & Time (s) & Algorithms & Time (s)\\\hline
ONMF & $1324.86\pm 11.45$ & ORPCA & $160.85\pm 5.03$\\\hline
OADMM & $166.83\pm 5.64$ & OPGD & $170.46\pm 5.91$\\\hline
BADMM & $898.47\pm 8.57$ & BPGD & $867.41\pm 9.35$ \\\hline
\end{tabular}
}
\end{table}

\begin{figure*}
\subfloat{\includegraphics[width=\textwidth,height=.36\textwidth]{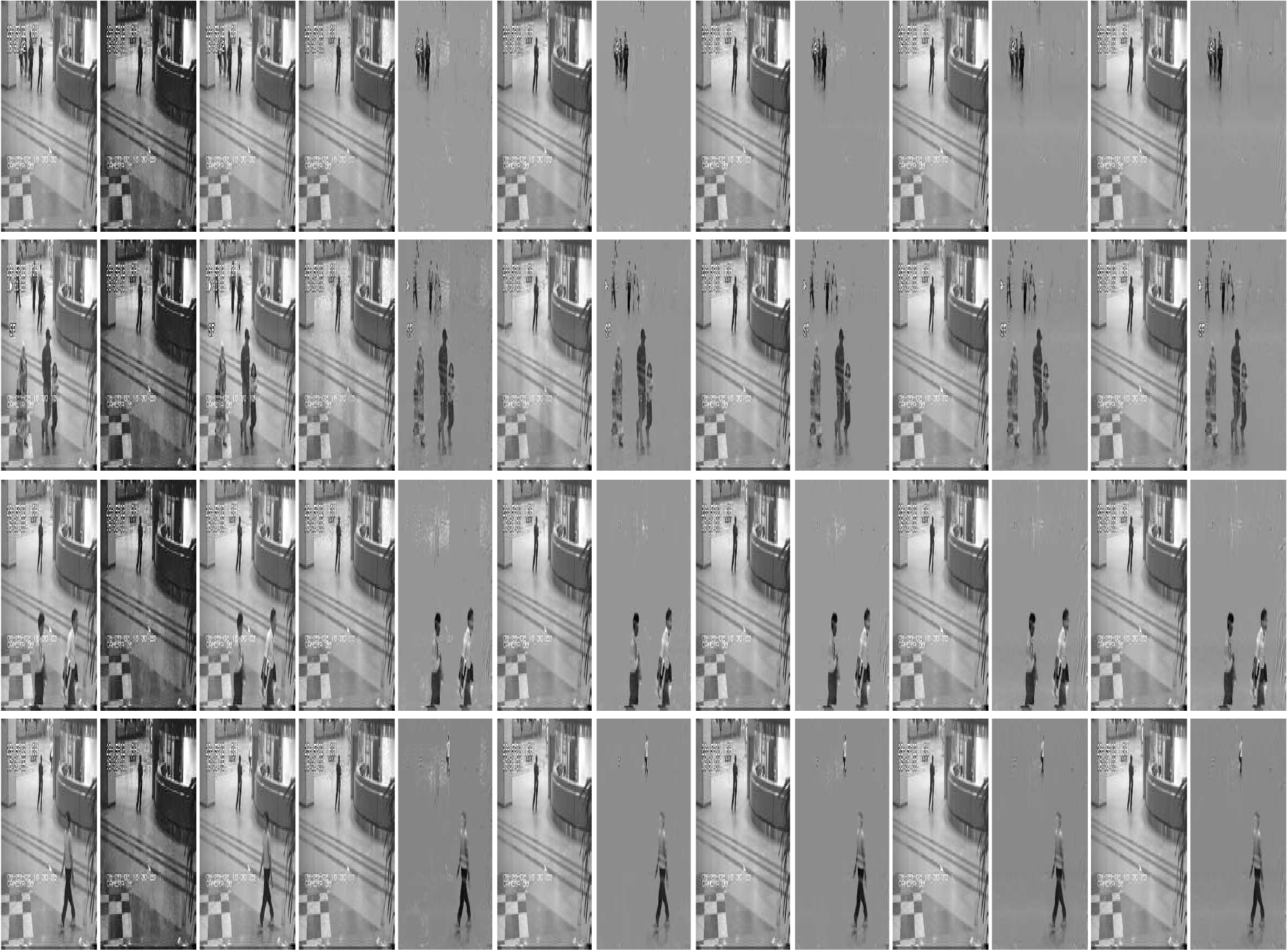}}\vspace{-.1cm}\\
\subfloat{\includegraphics[width=\textwidth,height=.36\textwidth]{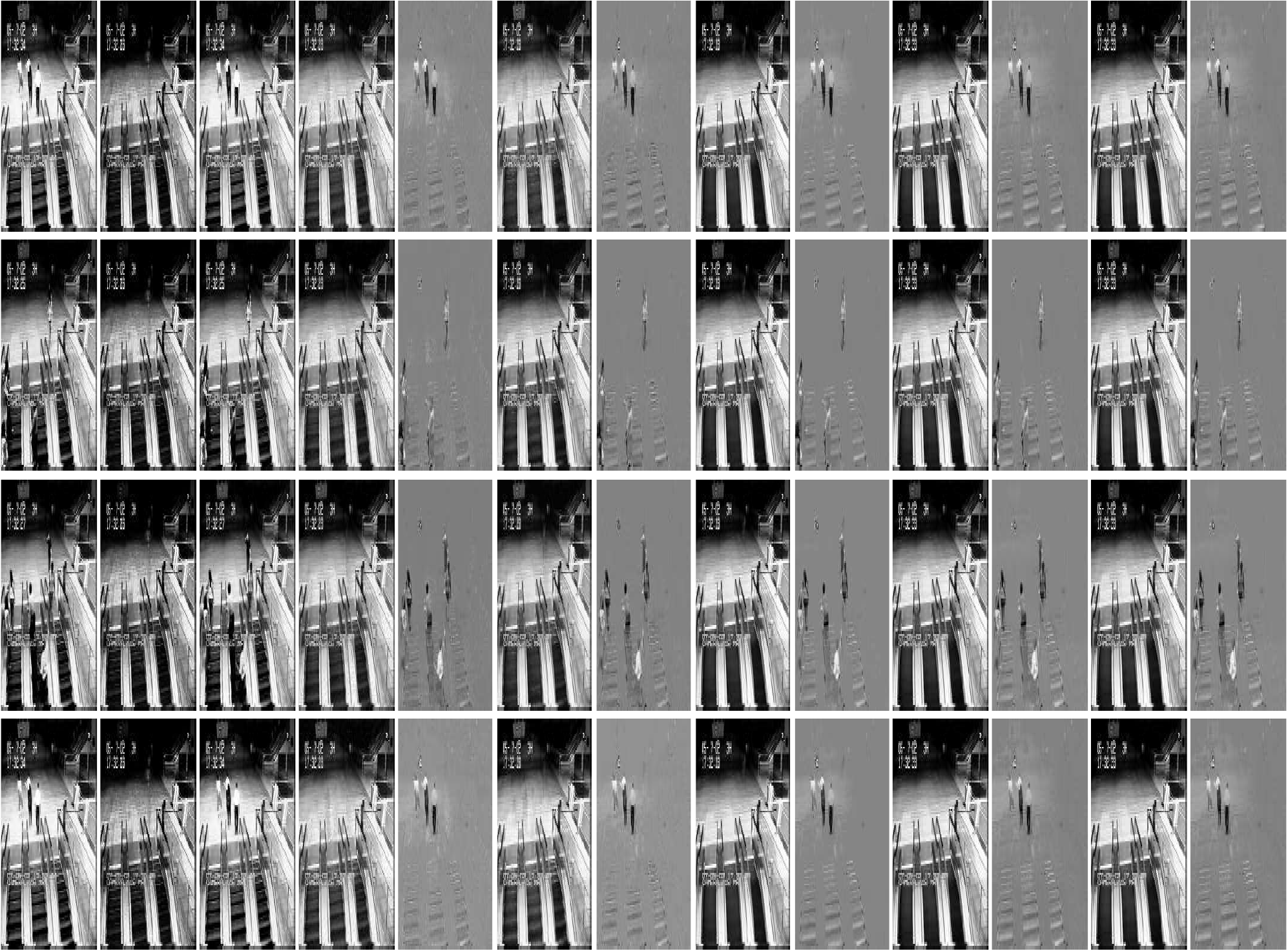}}\vspace{-.1cm}\\
$|\hspace{-.15cm}\xleftarrow{\makebox[.16cm]{}}\mbox{(a)}\xrightarrow{\makebox[.16cm]{}}\hspace{-.15cm}$
$|\hspace{-.15cm}\xleftarrow{\makebox[.87cm]{}}\mbox{(b)}\xrightarrow{\makebox[.87cm]{}}\hspace{-.15cm}|$
$\hspace{-.15cm}\xleftarrow{\makebox[.87cm]{}}\mbox{(c)}\xrightarrow{\makebox[.87cm]{}}\hspace{-.15cm}|$
$\hspace{-.15cm}\xleftarrow{\makebox[.87cm]{}}\mbox{(d)}\xrightarrow{\makebox[.87cm]{}}\hspace{-.15cm}|$
$\hspace{-.15cm}\xleftarrow{\makebox[.87cm]{}}\mbox{(e)}\xrightarrow{\makebox[.87cm]{}}\hspace{-.15cm}|$
$\hspace{-.15cm}\xleftarrow{\makebox[.87cm]{}}\mbox{(f)}\xrightarrow{\makebox[.87cm]{}}\hspace{-.15cm}|$
$\hspace{-.15cm}\xleftarrow{\makebox[.87cm]{}}\mbox{(g)}\xrightarrow{\makebox[.87cm]{}}\hspace{-.15cm}|$
\caption{Results of background-foreground separation by all the online and batch algorithms on the {\tt Hall} (upper) and {\tt Escalator} (lower) video sequences. The labels (a) to (g) denote the original video frames and results produced by ONMF, ORPCA, OADMM, OPGD, BADMM, BPGD respectively. For each algorithm, the left column denotes the background and the right column denotes the foreground (moving objects).}\label{fig:background}
\end{figure*}

\section{Conclusion and Future Work} \label{sec:con}
In this paper, we have developed online algorithms for NMF where the data samples are contaminated by outliers. We have shown that the proposed class of algorithms is robust to sparsely corrupted data samples and performs efficiently  on large datasets. %We have also demonstrated the utility of the proposed online algorithms on large datasets. 
Finally, we have proved almost sure convergence guarantees of the objective function as well as the sequence of basis matrices generated by the algorithms. %We believe this methodology with be particularly useful for understanding the latent components of large datasets. 

 We hope to pursue the following three directions for further research. First, it would be interesting to extend the convergence analyses given i.i.d.\ data sequences to   weakly dependent data sequences (e.g.,   martingale  or auto-regressive processes). This is because real data (e.g., video and audio recordings) have correlations among adjacent data samples. %Thus the weak dependence can more realistically model such data sequences.
 Second, it would be meaningful to consider the case where the fit-to-data term between $\bv$ and $\bW\bh+\br$, $d(\bv\Vert \bW\bh+\br)$ is not the squared $\ell_2$ loss,  e.g., the $\beta$-divergence \cite{Fev_09,Fev_11}. This is because in many scenarios, the observation noise is not Gaussian \cite{Fev_09} so other types of loss functions need to be used. Finally, as mentioned in Remark \ref{rmk:cons}, it would be meaningful to consider nonconvex constraint sets and/or regularizers on $\bW$, $\bh$ or $\br$. This is because the nonconvex constraints and regularizers often appear in real applications \cite{Bao_15}.
  
\subsubsection*{Acknowledgment}
The authors are grateful for the helpful clarifications and suggestions from Julien Mairal.

%\newpage
\bibliographystyle{IEEEtran}
\bibliography{RNMF_ref,ORNMF_ref,stoc_ref,math_opt,stat_ref,dataset}

\newpage
\onecolumn
\renewcommand{\thedefinition}{S-\arabic{definition}}
\renewcommand{\thelemma}{S-\arabic{lemma}}
\renewcommand{\thecorollary}{S-\arabic{corollary}}
\renewcommand{\theequation}{S-\arabic{equation}}
\renewcommand{\thesection}{S-\arabic{section}}
\setcounter{section}{0}
\setcounter{equation}{0}
\setcounter{lemma}{0}
\setcounter{corollary}{0}
\setcounter{definition}{0}

{\huge\quad\quad\hspace{4cm} Supplemental Material}\\[.1cm]

In this supplemental material, the indices of all the sections, definitions, lemmas and equations are prepended with an `S' to distinguish those in  the main text. The organization of this article is as follows. In section~\ref{sec:deriv_ADMM}, we drive the ADMM algorithms presented in Section~\ref{sec:ADMM}. In Section~\ref{sec:BRNMF_algo}, we extend our two solvers (based on PGD and ADMM) in Section~\ref{sec:algo} % devised for the online NMF problem with outliers 
to the {\em batch} NMF problems with outliers. Then, we provide detailed proofs of the theorems and lemmas in Section~\ref{sec:conv_analysis} in Section~\ref{sec:regularity} to \ref{sec:Lips_b}. The technical lemmas used in the algorithm derivation (Section~\ref{sec:algo}) and the convergence analysis (Section~\ref{sec:conv_analysis}) are shown in Section~\ref{sec:tech_lemma}. Finally, we show additional experiment results in Section~\ref{sec:exp_res} to supplement those in Section~\ref{sec:numericals}. 
The {\em finite} constants $c$, $c_1$ and $c_2$ are used repeatedly in different sections, and their meanings depend on the context. 

\section{Derivation of the ADMM Algorithms in Section~\ref{sec:ADMM}} \label{sec:deriv_ADMM}
\subsection{Algorithms for \eqref{eq:min_hr}}
Minimizing $\bh$ and $\br$ amounts to solving the following two unconstrained problems
\begin{align}
&\min_\bh \quad \frac{1}{2}\norm{\bW\bh+(\br-\bv)}_2^2 + \balpha^T(\bh-\bu) + \frac{\rho_1}{2}\norm{\bh-\bu}_2^2\nn\\
\Longleftrightarrow\quad & \min_\bh \quad \frac{1}{2}\bh^T(\bW^T\bW+\rho_1\bI)\bh + \left(\bW^T(\br-\bv)-\rho_1\bu+\balpha\right)^T\bh\label{eq:ADMM_h}
\end{align}
and
\begin{align}
&\min_\br \quad \frac{1}{2}\norm{(\bv-\bW\bh)-\br}_2^2 + \frac{\trho_2}{2}\norm{\br-\bq}^2_2 + \bbeta^T(\br-\bq) + \lambda\norm{\br}_1\nn\\
\Longleftrightarrow\quad & \min_\br \quad \frac{1+\trho_2}{2} \br^T\br + (\bbeta-\bv+\bW\bh-\trho_2\bq)^T\br + \lambda\norm{\br}_1\nn\\
\Longleftrightarrow\quad & \min_\br \quad \frac{1+\trho_2}{2} \norm{\br - \frac{\trho_2\bq+\bv-\bbeta-\bW\bh}{1+\trho_2}}_2^2+\lambda\norm{\br}_1.\label{eq:ADMM_r}
\end{align}
We notice that \eqref{eq:ADMM_h} is a standard strongly convex quadratic minimization problem and \eqref{eq:ADMM_r} is a standard proximal minimization problem with $\ell_1$ norm, thus the closed-form optimal solutions for \eqref{eq:ADMM_h} and \eqref{eq:ADMM_r} are 
\begin{align*}
\bh^* &= (\bW^T\bW+\rho_1\bI)^{-1}(\bW^T(\bv-\br)+\rho_1\bu-\balpha)\\
\br^* &= \calS_{\lambda/(1+\trho_2)}\left(\frac{\trho_2\bq+\bv-\bbeta-\bW\bh}{1+\trho_2}\right) = \frac{\calS_\lambda(\trho_2\bq+\bv-\bbeta-\bW\bh)}{1+\trho_2}.
\end{align*}

Minimizing $\bu$ and $\bq$ amounts to solving the following two constrained problems
\begin{align}
\min_{\bu\ge 0} \quad &\frac{\rho_1}{2}\norm{\bh-\bu}_2^2 - \balpha^T\bu\\
\min_{\norm{\bq}_\infty\le M} \quad &\frac{\trho_2}{2}\norm{\br-\bq}_2^2 -\bbeta^T\bq.
\end{align}
Since both constraints  $\bu\ge 0$ and $\norm{\bq}_\infty\le M$ are separable across coordinates, we can simply solve the unconstrained  quadratic minimization problems and then project the optimal solutions to the feasible sets. 

\subsection{Algorithms for \eqref{eq:min_W_tr}}
Minimizing $\bW$ amounts to solving the unconstrained quadratic minimization problem
\begin{align*}
&\min_\bW \quad  \frac{1}{2}\tr\left(\bW^T\bW\bA_t\right) - \tr\left(\bW^T\bB_t\right) + \lrangle{\bD}{\bW-\bQ} + \frac{\trho_3}{2}\norm{\bW-\bQ}_F^2\\
\Longleftrightarrow\quad &\min_\bW \quad \frac{1}{2}\tr\left(\bW\left(\bA_t+\trho_3\bI\right)\bW^T\right) - \tr\left(\left(\bB_t-\bD+\trho_3\bQ\right)^T\bW\right)
\end{align*}
so 
\begin{equation}
\bW^* = \left(\bB_t-\bD+\trho_3\bQ\right)\left(\bA_t+\trho_3\bI\right)^{-1}.
\end{equation}
Minimizing $\bQ$ amounts to solving the constrained quadratic minimization problem
\begin{align*}
&\min_{\bQ\in\calC} \quad \frac{\trho_3}{2}\norm{\bW-\bQ}_F^2 - \lrangle{\bD}{\bQ}\\
\Longleftrightarrow\quad &\min_{\bQ\in\calC} \quad \frac{\trho_3}{2}\norm{\bQ - (\bW+\bD/\trho_3)}_F^2.
\end{align*}
Then we have
\begin{equation}
\bQ^* = \calP_\calC (\bW+\bD/\trho_3).
\end{equation}

\section{Extension to the Batch NMF Problem with Outliers}\label{sec:BRNMF_algo}

\subsection{Problem Formulation}
As usual, we denote the data matrix (with outliers) as $\bV$, basis matrix as $\bW$, coefficient matrix as $\bH$ and outlier matrix as $\bR$. The number of total data samples is denoted as $N$. Then, the batch counterpart for the online NMF problem with outliers can be formulated as follows
\begin{align*}
&\min \quad \frac{1}{2}\norm{\bV-\bW\bH-\bR}_F^2 + \lambda\norm{\bR}_{1,1}\\
&\st \quad \bH\in\bbR_+^{K\times N}, \bR\in\tilcalR, \bW\in\calC, \numberthis \label{eq:brnmf}
\end{align*}
where $\tilcalR = \{\bR\in\bbR^{F\times N}\,|\, \abs{r_{i,j}} \le M, \forall\,(i,j)\in[F]\times[N]\}$.

\subsection{Notations}
In the sequel, we overload soft-thresholding operators $\tilcalS_{\lambda,M}$ and $\calS_\lambda$. When these two operators are applied to matrices, each operator denotes entrywise soft-thresholding. The updated variables are denoted with superscripts `$+$'.

\subsection{Batch algorithm based on PGD (BPGD)}
Based on the principle of block coordinate descent, we update $\bH$, $\bR$ and $\bW$ sequentially as follows.
\begin{align}
\bH^+ &:= \calP_+\left(\bH - \eta_1(\bW)\bW^T(\bW\bH+\bR-\bV)\right)\\
\bR^+ &:= \tilcalS_{\lambda,M} \left(\bV-\bW\bH^+\right)\\
\bW^+_{:j} &:= \frac{\calP_+(\bW-\eta_2(\bH^+)\bG)_{:j}}{\max\{1,\norm{\calP_+(\bW-\eta_2(\bH^+)\bG)_{:j}}_2\}}, \;\forall\,j\in [K],
\end{align}
where $\bG = (\bW\bH^++\bR^+-\bV){\bH^+}^T$, $\eta_1(\bW) \in \left(0,\norm{\bW}_2^{-2}\right]$ and $\eta_2(\bH^+)\in\left(0,\norm{\bH^+{\bH^+}^T}_F^{-1}\right]$.

\subsection{Batch algorithm based on ADMM (BADMM)}
\eqref{eq:brnmf} can be reformulated as 
\begin{align*}
&\min \quad \frac{1}{2}\norm{\bV-\bW\bH-\bR}_F^2 + \lambda\norm{\bR}_{1,1}\\
&\st \quad \bH = \bU,  \bR=\bQ, \bW = \bPsi,  \bU\in\bbR_+^{K\times N}, \bQ\in\tilcalR, \bPsi\in\calC. %\numberthis 
\end{align*}
Thus the augmented Lagrangian is 
\begin{align*}
&\tilcalL(\bH,\bR,\bW,\bU,\bQ,\bPsi,\bA,\bB,\bPsi) = \frac{1}{2}\norm{\bV-\bW\bH-\bR}_F^2 + \lambda\norm{\bR}_{1,1} + \lrangle{\bA}{\bH-\bU} + \lrangle{\bB}{\bR-\bQ} +\lrangle{\bD}{\bW-\bPsi} \\
&\hspace{8.5cm}+\frac{\trho_1}{2}\norm{\bH-\bU}_F^2 + \frac{\trho_2}{2}\norm{\bR-\bQ}_F^2 + \frac{\trho_3}{2} \norm{\bW-\bPsi}_F^2, \numberthis
\end{align*}
where $\bA$, $\bB$ and $\bD$ are dual variables and $\trho_1$, $\trho_2$ and $\trho_3$ are positive penalty parameters. 
Therefore we can derive the following update rules 
\begin{align}
\bH^+ &:= (\bW^T\bW+\trho_1\bI)^{-1}\left(\bW^T(\bV-\bR) + \trho_1\bU-\bA\right)\\
\bR^+ &:= \calS_\lambda(\trho_2\bQ+\bV-\bB-\bW\bH^+)/(1+\trho_2)\\
\bW^+ &:= \left((\bV-\bR^+){\bH^+}^T-\bD+\trho_3\bPsi\right)\left(\bH^+{\bH^+}^T+\trho_3\bI\right)^{-1}\\
\bU^+ &:= \calP_+\left(\bH^+ +\bA/\trho_1\right)\\
\bQ^+ &:= \calP_{\tilcalR} \left(\bR^+ +\bB/\trho_2\right)\\
\bPsi^+ &:= \calP_\calC\left(\bW^++\bD/\trho_3\right)\\
\bA^+ &:= \bA + \trho_1(\bH^+-\bU^+)\\
\bB^+ &:= \bB + \trho_2(\bR^+-\bQ^+)\\
\bD^+ &:= \bD + \trho_3\left(\bW^+ -\bPsi^+\right),
\end{align}

\section{Proof of Lemma~\ref{lem:regularity}} \label{sec:regularity}
Before proving Lemma~\ref{lem:regularity}, we present two lemmas which will be used in the proof. Both lemmas can be proved using straightforward calculations. See Section~\ref{sec:Lips_nablaf} and \ref{sec:Lips_b} for detailed proofs. 

\begin{lemma}\label{lem:Lips_nablaf}
If for each $\bv\in\calV$, both $\bh^*(\bv,\bW)$  and $\br^*(\bv,\bW)$ in \eqref{eq:nablaW_ell} are Lipschitz on $\calC$, with Lipschitz constants $c_1$ and $c_2$ (independent of $\bv$) respectively, then $\bW\mapsto\nabla_\bW\ell(\bv,\bW)$ is Lipschitz on $\calC$ with Lipschitz constant $c_3$ (independent of $\bv$). Consequently, $\nabla f(\bW)$ in \eqref{eq:nablaW_f} is Lipschitz on $\calC$ with Lipschitz constant $c_3$. %Here $c_1$, $c_2$ and $c_3$ are positive universal constants.
\end{lemma}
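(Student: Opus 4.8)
\textbf{Proof plan for Lemma~\ref{lem:Lips_nablaf}.}
The plan is to first obtain an explicit formula for $\nabla_\bW\ell(\bv,\bW)$ via Danskin's theorem. Since $\ell(\bv,\bW)=\min_{(\bh,\br)\in\calH\times\calR}\tilell(\bv,\bW,\bh,\br)$ and the minimizer $(\bh^*(\bv,\bW),\br^*(\bv,\bW))$ is unique under Assumption~\ref{assum:sc_hr}, Danskin's theorem (Lemma~\ref{lem:min_diff}) gives
\begin{equation}
\nabla_\bW\ell(\bv,\bW) = \nabla_\bW\tilell(\bv,\bW,\bh,\br)\big|_{(\bh,\br)=(\bh^*(\bv,\bW),\br^*(\bv,\bW))} = -\big(\bv-\bW\bh^*-\br^*\big){\bh^*}^T,\nn
\end{equation}
where I abbreviate $\bh^*=\bh^*(\bv,\bW)$, $\br^*=\br^*(\bv,\bW)$. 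This is the identity labelled \eqref{eq:nablaW_ell} referred to in the statement.

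Next I would establish the Lipschitz bound by a direct estimate. Fix $\bv\in\calV$ and $\bW_1,\bW_2\in\calC$, and write $\bh_i^*=\bh^*(\bv,\bW_i)$, $\br_i^*=\br^*(\bv,\bW_i)$, and $\be_i\defeq \bv-\bW_i\bh_i^*-\br_i^*$. Then
\begin{equation}
\nabla_\bW\ell(\bv,\bW_1)-\nabla_\bW\ell(\bv,\bW_2) = -\be_1{\bh_1^*}^T+\be_2{\bh_2^*}^T = -(\be_1-\be_2){\bh_1^*}^T - \be_2(\bh_1^*-\bh_2^*)^T.\nn
\end{equation}
The key point is that every quantity appearing here is bounded uniformly over $\calV\times\calC$: $\calV$ is compact (Assumption~\ref{assum:comp_supp}), $\calC$ is bounded, $\calR$ is bounded (when $M<\infty$; when $M=\infty$ one uses that $\br^*$ is still bounded, as noted in Section~\ref{sec:prelim}), and $\bh^*$ ranges over the compact set $\calH$; hence $\norm{\be_i}_2$, $\norm{\bh_i^*}_2$ are all bounded by an absolute constant $c$ independent of $\bv$. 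Moreover, $\be_1-\be_2 = -\bW_1\bh_1^*+\bW_2\bh_2^* - (\br_1^*-\br_2^*)$, which I bound by adding and subtracting $\bW_1\bh_2^*$:
\begin{equation}
\norm{\be_1-\be_2}_2 \le \norm{\bW_1}_2\norm{\bh_1^*-\bh_2^*}_2 + \norm{\bW_1-\bW_2}_F\norm{\bh_2^*}_2 + \norm{\br_1^*-\br_2^*}_2.\nn
\end{equation}
Now invoke the hypotheses: $\norm{\bh_1^*-\bh_2^*}_2\le c_1\norm{\bW_1-\bW_2}_F$ and $\norm{\br_1^*-\br_2^*}_2\le c_2\norm{\bW_1-\bW_2}_F$, together with $\norm{\bW_1}_2\le 1$ on $\calC$. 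Combining these with the uniform bounds via submultiplicativity of the norms yields $\norm{\nabla_\bW\ell(\bv,\bW_1)-\nabla_\bW\ell(\bv,\bW_2)}_F \le c_3\norm{\bW_1-\bW_2}_F$ for a constant $c_3$ depending only on $c,c_1,c_2$ and not on $\bv$.

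Finally, the statement about $\nabla f$ follows immediately from the representation $\nabla f(\bW)=\bbE_\bv[\nabla_\bW\ell(\bv,\bW)]$ (equation \eqref{eq:grad_f}, i.e.\ \eqref{eq:nablaW_f}, justified by the Leibniz integral rule, Lemma~\ref{lem:Leib_int}): taking expectations of the pointwise inequality and using Jensen's inequality gives $\norm{\nabla f(\bW_1)-\nabla f(\bW_2)}_F\le \bbE_\bv\norm{\nabla_\bW\ell(\bv,\bW_1)-\nabla_\bW\ell(\bv,\bW_2)}_F\le c_3\norm{\bW_1-\bW_2}_F$. I expect the only mildly delicate point to be the bookkeeping of the uniform bounds (ensuring the constant $c_3$ genuinely does not depend on $\bv$), which is exactly why Assumption~\ref{assum:comp_supp} and the compactness reductions of Section~\ref{sec:prelim} are invoked; the Lipschitz continuity of $\bh^*$ and $\br^*$ themselves is taken as a hypothesis here and will be supplied separately (it is what makes Assumption~\ref{assum:sc_hr} do its work, via the maximum theorem, Lemma~\ref{lem:min_cont}).
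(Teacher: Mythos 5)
Your argument is correct and takes essentially the same route as the paper's: a direct add-and-subtract estimate on the product form of $\nabla_\bW\ell(\bv,\bW)=(\bW\bh^*+\br^*-\bv){\bh^*}^T$, using compactness of $\calV$, $\calC$, $\calH$, $\calR$ for uniform bounds and the hypothesized Lipschitz continuity of $\bh^*$ and $\br^*$, followed by exchanging norm and expectation for $\nabla f$; the paper merely bounds the three products $\bW\bh^*{\bh^*}^T$, $\br^*{\bh^*}^T$, $\bv{\bh^*}^T$ separately rather than through the residual $\be$. One small slip: on $\calC$ one only has $\norm{\bW}_2\le\sqrt{K}$ (columns in the unit $\ell_2$ ball do not force spectral norm at most $1$), but since any uniform bound independent of $\bv$ suffices, this does not affect the conclusion.
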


\begin{lemma} \label{lem:Lips_b}
Let $\bz$, $\bz' \in\calZ\subseteq \bbR^m$ and $\bA$, $\bA'\in\calA\subseteq \bbR^{m\times n}$, where both $\calZ$ and $\calA$ are compact sets. Let $\calB$ be a compact set in $\bbR^n$, and define $g:\calB\to\bbR$ as $g(\bb) = 1/2\norm{\bz-\bA\bb}_2^2 - 1/2 \norm{\bz'-\bA'\bb}^2_2$. Then $g$ is Lipschitz on $\calB$ with Lipschitz constant $c_1\norm{\bz-\bz'}_2 + c_2\norm{\bA-\bA'}_2$, where $c_1$ and $c_2$ are two positive constants. In particular, when both $\bz'$ and $\bA'$ are zero, we have that $\tilg(\bb) = 1/2\norm{\bz-\bA\bb}_2^2$ is Lipschitz on $\calB$ with Lipschitz constant $c$ independent of $\bz$ and $\bA$. %By symmetry, we also have similar results for $h(\bA) = \norm{\bz-\bA\bb}_2^2 - \norm{\bz'-\bA\bb'}^2_2$, where $\bb,\bb'\in\calB$. 
\end{lemma}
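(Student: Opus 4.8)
The plan is to exploit the fact that $g$ is a quadratic polynomial on all of $\bbR^n$, hence globally $C^1$, and to bound its gradient uniformly on a compact set containing $\calB$. First I would compute
\[
\nabla g(\bb) = -\bA^T(\bz-\bA\bb) + \bA'^T(\bz'-\bA'\bb) = \big(\bA'^T\bz' - \bA^T\bz\big) + \big(\bA^T\bA - \bA'^T\bA'\big)\bb .
\]
The key algebraic step is to rewrite each of the two bracketed terms as a ``telescoping'' difference that isolates $\bz-\bz'$ and $\bA-\bA'$:
\[
\bA'^T\bz' - \bA^T\bz = \bA'^T(\bz'-\bz) + (\bA'-\bA)^T\bz, \qquad
\bA^T\bA - \bA'^T\bA' = \bA^T(\bA-\bA') + (\bA-\bA')^T\bA' .
\]

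Next I would bound $\normt{\nabla g(\bb)}_2$ using these two identities together with submultiplicativity of the spectral norm and the compactness hypotheses. Setting $a \defeq \sup_{\bA\in\calA}\normt{\bA}_2$, $z_0 \defeq \sup_{\bz\in\calZ}\normt{\bz}_2$, and $D \defeq \sup_{\bb\in\conv\calB}\normt{\bb}_2$ — all finite because $\calZ$, $\calA$, $\calB$ are bounded — one gets, for every $\bb\in\conv\calB$,
\[
\normt{\nabla g(\bb)}_2 \le a\,\normt{\bz-\bz'}_2 + z_0\,\normt{\bA-\bA'}_2 + 2aD\,\normt{\bA-\bA'}_2 = c_1\,\normt{\bz-\bz'}_2 + c_2\,\normt{\bA-\bA'}_2,
\]
with $c_1 \defeq a$ and $c_2 \defeq z_0 + 2aD$, which depend only on $\calZ,\calA,\calB$ and not on the particular $\bz,\bz',\bA,\bA'$.

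Finally I would pass from this gradient bound to the Lipschitz estimate: for $\bb_1,\bb_2\in\calB$ the segment $[\bb_1,\bb_2]$ lies in $\conv\calB$, so applying the fundamental theorem of calculus to $t\mapsto g\big(\bb_1 + t(\bb_2-\bb_1)\big)$ yields
\[
\abs{g(\bb_1)-g(\bb_2)} \le \Big(\sup_{\bb\in\conv\calB}\normt{\nabla g(\bb)}_2\Big)\normt{\bb_1-\bb_2}_2 \le \big(c_1\normt{\bz-\bz'}_2 + c_2\normt{\bA-\bA'}_2\big)\normt{\bb_1-\bb_2}_2 ,
\]
which is exactly the claimed bound. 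The special case $\bz'=\vecz$, $\bA'=\vecz$ follows by reading off the constant $c_1\normt{\bz}_2 + c_2\normt{\bA}_2 \le c_1 z_0 + c_2 a \defeq c$, which again depends only on $\calZ,\calA,\calB$. I do not expect any genuine difficulty; the only point needing a little care is that $\calB$ is only assumed compact, not convex, so the gradient must be controlled on $\conv\calB$ (still compact) rather than on $\calB$ itself, and the telescoping identities must be written so that every factor other than $\bz-\bz'$ or $\bA-\bA'$ is one that is uniformly bounded over the relevant compact set.
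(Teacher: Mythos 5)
Your proposal is correct and follows essentially the same route as the paper's proof: compute the gradient of the quadratic $g$, split $\bA'^T\bz'-\bA^T\bz$ and $\bA^T\bA-\bA'^T\bA'$ by the same telescoping identities, bound each factor by compactness, and pass to the Lipschitz estimate. Your explicit remark that the gradient must be controlled on $\conv\calB$ (since $\calB$ is not assumed convex) is a small point of care that the paper leaves implicit, but it does not change the argument.
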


It is easy to verify that the following conditions hold
\begin{enumerate}[1)]
\item $\tilell(\bv,\bW, \bh, \br)$ is differentiable on $\calV\times\calC$, for each $(\bh,\br)\in\calH\times\calR$,
\item $\tilell(\bv,\bW, \bh, \br)$ and $\nabla_{(\bv,\bW)}\tilell(\bv,\bW, \bh, \br)$ are continuous on $\calV\times\calC\times\calH\times\calR$,
\item \eqref{eq:new_ithloss} has unique minimizer $(\bh^*(\bv,\bW),\br^*(\bv,\bW))$ for each $(\bv,\bW)\in\calV\times\calC$, due to Assumption~\ref{assum:sc_hr}. 
\end{enumerate}
Thus, we can invoke Danskin's theorem (see Lemma~\ref{lem:min_diff}) to conclude that $\ell(\bv,\bW)$ is differentiable on $\calV\times\calC$ and 
\begin{equation}
\nabla_\bW\ell(\bv,\bW) = \left(\bW\bh^*(\bv,\bW)+\br^*(\bv,\bW)-\bv\right)\bh^*(\bv,\bW)^T. \label{eq:nablaW_ell}
\end{equation}
Furthermore, we can show $(\bh^*(\bv,\bW),\br^*(\bv,\bW))$ is continuous on $\calV\times\calC$ by the maximum theorem (see Lemma~\ref{lem:min_cont}), since the conditions in this theorem are trivially satisfied in our case. Thus, $\nabla_\bW\ell(\bv,\bW)$ is continuous on $\calV\times \calC$. %Consequently, there exists a universal constant $c>0$ such that $\forall\,\bv\in\calV$, $\norm{\nabla_\bW\ell(\bv,\bW)}_F\le c$.  

Leveraging the regularity of $\ell(\bv,\bW)$, we proceed to show the regularity of $f(\bW)$. Since for all $\bv\in\calV$, both $\ell(\bv,\bW)$ and $\nabla_\bW\ell(\bv,\bW)$  are continuous on $\calC$, by Leibniz integral rule (see Lemma~\ref{lem:Leib_int}), we conclude that $f(\bW)$ is differentiable on $\calC$ and 
\begin{equation}
\nabla f(\bW) = \bbE_\bv[\nabla_\bW \ell(\bv,\bW)]. \label{eq:nablaW_f}
\end{equation}

%To show both $\bW\mapsto\nabla_\bW\ell(\bv,\bW)$ and $\nabla f(\bW)$ are uniformly Lipschitz on $\calC$, we make use of the following lemma. This lemma can be proved by straightforward calculations, which are shown in Section~\ref{sec:Lips_nablaf}.

By Lemma~\ref{lem:Lips_nablaf}, to show both $\bW\mapsto\nabla_\bW\ell(\bv,\bW)$ and $\nabla f(\bW)$ are Lipschitz on $\calC$, it suffices to show both $\bh^*(\bv,\bW)$ and $\br^*(\bv,\bW)$ are Lipschitz on $\calC$, for all $\bv\in\calV$. Fix arbitrary $\bv_1,\bv_2\in\calV$ and $\bW_1,\bW_2\in\calC$. Define 
\begin{align*}
d(\bh,\br) &\defeq \tilell(\bv_1,\bW_1,\bh,\br) - \tilell(\bv_2,\bW_2,\bh,\br)\\
& =\frac{1}{2}\norm{\bv_1-\bY_1\bb(\bh,\br)}_2^2 - \frac{1}{2}\norm{\bv_2-\bY_2\bb(\bh,\br)}_2^2
\end{align*}
where $\bY_i = [\bW_i\;\; \bI]$, $i=1,2$ and $\bb(\bh,\br) = [\bh^T\;\;\br^T]^T$. By Lemma~\ref{lem:Lips_b}, we have for all $(\bh_1,\br_1),(\bh_2,\br_2)\in\calH\times\calR$,
\begin{align}
\abs{d(\bh_1,\br_1)-d(\bh_2,\br_2)} &\le \left(c_1\norm{\bv_1-\bv_2}_2 + c_2\norm{\bY_1-\bY_2}_2\right)\norm{\bb(\bh_1,\br_1)-\bb(\bh_2,\br_2)}_2
%& \le  \left(c_1\norm{\bv_1-\bv_2}_2 + c_2\norm{\bW_1-\bW_2}_2\right)\left(\norm{\bh_1-\bh_2}_2+\norm{\br_1-\br_2}_2\right)
\end{align}
where $c_1$ and $c_2$ are positive constants. In particular, we have
\begin{equation}
\abs{d(\bh^*_1,\br^*_1)-d(\bh^*_2,\br^*_2)} \le \left(c_1\norm{\bv_1-\bv_2}_2 + c_2\norm{\bY_1-\bY_2}_2\right)\norm{\bb(\bh^*_1,\br^*_1)-\bb(\bh^*_2,\br^*_2)}_2 \label{eq:Lips_upperbd}
\end{equation}
where $\bh^*_i=\bh^*(\bv_i,\bW_i)$ and $\br^*_i=\br^*(\bv_i,\bW_i)$, $i=1,2$. On the other hand, by Assumption~\ref{assum:sc_hr},
\begin{align*}
\abs{d(\bh^*_2,\br^*_2) - d(\bh^*_1,\br^*_1)} &=\abs{\tilell(\bv_1,\bW_1,\bh^*_2,\br^*_2) - \tilell(\bv_2,\bW_2,\bh^*_2,\br^*_2) - \tilell(\bv_1,\bW_1,\bh^*_1,\br^*_1) + \tilell(\bv_2,\bW_2,\bh^*_1,\br^*_1)}\\
&= (\tilell(\bv_1,\bW_1,\bh^*_2,\br^*_2)-\tilell(\bv_1,\bW_1,\bh^*_1,\br^*_1)) + (\tilell(\bv_2,\bW_2,\bh^*_1,\br^*_1) - \tilell(\bv_2,\bW_2,\bh^*_2,\br^*_2))\\
&\ge m_1\norm{\bb(\bh^*_1,\br^*_1)-\bb(\bh^*_2,\br^*_2)}_2^2.\numberthis \label{eq:sc_lowerbd}
\end{align*}
Combining \eqref{eq:Lips_upperbd} and \eqref{eq:sc_lowerbd}, we have
\begin{equation}
\max\{\norm{\bh^*_1-\bh^*_2}_2,\norm{\br^*_1-\br^*_2}_2\} \le \norm{\bb(\bh^*_1,\br^*_1)-\bb(\bh^*_2,\br^*_2)}_2 \le c'_1\norm{\bv_1-\bv_2}_2 + c'_2\norm{\bW_1-\bW_2}_2
\end{equation}
where $c'_i = c_i/m_1$, $i=1,2$. This indeed shows both $\bh^*(\bv,\bW)$ and $\br^*(\bv,\bW)$ are Lipschitz on $\calV\times\calC$ since
\begin{equation}
c'_1\norm{\bv_1-\bv_2}_2 + c'_2\norm{\bW_1-\bW_2}_2 \le 2\max(c'_1,c'_2)\norm{[\bv_1\;\;\bW_1]-[\bv_2 \;\;\bW_2]}_F.
\end{equation}
Hence we complete the proof. 
%$d(\bh,\br)$ is uniformly Lipschitz on $\calH\times\calR$ with constant $c_1\norm{\bv_1-\bv_2}_2 + c_2\norm{\bW_1-\bW_2}_2$ for some universal positive constants $c_1$ and $c_2$ (note that $\norm{\bY_1-\bY_2}_2 = \norm{\bW_1-\bW_2}_2$).
%\end{proof}

\section{Proof of Lemma~\ref{lem:rate_W}} \label{sec:rate_W}
%\begin{lemma} 
%In Algorithm~\ref{algo:generic},
%\begin{equation}
%\norm{\bW_{t+1}-\bW_t}_F = O\left(\frac{1}{t}\right) \;\;\as
%\end{equation}
%\end{lemma}

%\begin{proof}
By Assumption~\ref{assum:sc_W}, for all $t\ge 1$, we have 
\begin{equation}
\tilf_t(\bW_{t+1}) - \tilf_t(\bW_t) \ge \frac{m_2}{2}\norm{\bW_{t+1}-\bW_t}_F^2, \label{eq:sc_lowerbdW}
\end{equation}
since 
$\bW_t = \argmin_\bW \tilf_t(\bW)$. On the other hand, 
\begin{align*}
\tilf_t(\bW_{t+1}) - \tilf_t(\bW_t) &= \tilf_{t}(\bW_{t+1}) - \tilf_{t+1}(\bW_{t+1}) + \tilf_{t+1}(\bW_{t+1}) - \tilf_{t+1}(\bW_t) + \tilf_{t+1}(\bW_t) - \tilf_t(\bW_t)\\
&\le \tild_t(\bW_{t+1}) - \tild_t(\bW_t)
\end{align*}
where $\tild_t(\bW) \defeq \tilf_{t}(\bW) - \tilf_{t+1}(\bW)$, for all $\bW\in\calC$. We aim to show $\tild_t$ is Lipschitz on $\calC$ with Lipschitz constant only dependent on $t$. For all $\bW\in\calC$, we have
\begin{align*}
\tild_t(\bW) &= \frac{1}{t}\sum_{i=1}^t\frac{1}{2}\norm{\bv_i-\bW\bh_i-\br_i}_2^2+\lambda\norm{\br_i}_1 - \frac{1}{t+1}\sum_{i=1}^{t+1}\frac{1}{2}\norm{\bv_i-\bW\bh_i-\br_i}_2^2+\lambda\norm{\br_i}_1\\
&= \frac{1}{t(t+1)}\left((t+1)\sum_{i=1}^t\frac{1}{2}\norm{\bv_i-\bW\bh_i-\br_i}_2^2+\lambda\norm{\br_i}_1 - t\sum_{i=1}^{t+1}\frac{1}{2}\norm{\bv_i-\bW\bh_i-\br_i}_2^2+\lambda\norm{\br_i}_1\right)\\
&\eqcst \frac{1}{t(t+1)}\left(\sum_{i=1}^t\frac{1}{2}\norm{\bv_i-\bW\bh_i-\br_i}_2^2 - \frac{t}{2}\norm{\bv_{t+1}-\bW\bh_{t+1}-\br_{t+1}}_2^2\right)\\
&= \frac{1}{t(t+1)}\sum_{i=1}^t \left(\frac{1}{2}\norm{\bv_i-\bW\bh_i-\br_i}_2^2 - \frac{1}{2}\norm{\bv_{t+1}-\bW\bh_{t+1}-\br_{t+1}}_2^2\right),
\end{align*}
where $\eqcst$ denotes equality up to an additive constant (independent of $\bW$). By a reasoning similar to the one for Lemma~\ref{lem:Lips_b}, we can show %By Corollary~\ref{cor:Lips_W}, %for $\bW_{t+1}$ and $\bW_t$, %$\forall\,\bW_1,\bW_2\in\calC$, 
there exist some positive  constants $c_1$ and $c_2$ such that 
\begin{align*}
\abs{\tild_t(\bW_{t+1}) - \tild_t(\bW_t)} &\le \frac{1}{t(t+1)}\sum_{i=1}^t \left(c_1\norm{\bh_i-\bh_{t+1}}_2 + c_2\norm{\bv_i-\bv_{t+1}}_2 + c_2\norm{\br_i-\br_{t+1}}_2\right)\norm{\bW_{t+1} - \bW_t}_2\\
&\le \frac{c_3}{t+1}\norm{\bW_{t+1} - \bW_t}_F, \numberthis \label{eq:Lips_upperbdW}
\end{align*}
where $c_3>0$ is a constant since for all $i\ge 1$, $\bv_i$, $\bh_i$ and $\br_i$ are bounded a.s.. Combining \eqref{eq:sc_lowerbdW} and \eqref{eq:Lips_upperbdW}, we have with probability one, 
\begin{equation}
\norm{\bW_{t+1} - \bW_t}_F \le \frac{c'_3}{t+1},
\end{equation}
where $c'_3 = c_3/m_2$. Hence we complete the proof. 
%\end{proof}

%Using similar proof techniques as in Lemma~\ref{lem:Lips_b}, we have the following corollary.
%\begin{corollary} \label{cor:Lips_W}
%Let $\bz$, $\bz' \in\calZ\subseteq \bbR^m$ and $\bb$, $\bb'\in\calB\subseteq \bbR^{n}$, where both $\calZ$ and $\calB$ are compact sets. Let $\calA$ be a compact set in $\bbR^{m\times n}$, and define $g:\calA\to\bbR$ as $g(\bA) = \norm{\bz-\bA\bb}_2^2 - \norm{\bz'-\bA\bb'}^2_2$. Then $g$ is Lipschitz on $\calA$ with constant $c_1\norm{\bz-\bz'}_2 + c_2\norm{\bb-\bb'}_2$ where $c_1$ and $c_2$ are two positive universal constants. In particular, $\tilg(\bA) = \norm{\bz-\bA\bb}_2^2$ is Lipschitz on $\calA$ with a universal Lipschitz constant $c>0$. 
%\end{corollary}

\section{Proof of Theorem~\ref{thm:as_conv}} \label{sec:as_conv}

%\begin{theorem}\label{thm:as_conv}
%In Algorithm~\ref{algo:generic}, the nonnegative stochastic process $\{\tilf_t(\bW_t)\}_{t\ge 1}$ converges a.s.. Furthermore, $\{f(\bW_t)\}_{t\ge 1}$ converges to the same limit as $\{\tilf_t(\bW_t)\}_{t\ge 1}$ a.s.. 
%\end{theorem}
%\begin{proof}
We prove that $\{\tilf_t(\bW_t)\}_{t\ge 1}$ converges a.s. by the quasi-martingale convergence theorem (see Lemma~\ref{lem:quasi_conv}). Let us first define a filtration $\{\calF_t\}_{t\ge 1}$ where $\calF_t \defeq \sigma \left\{\bv_i,\bW_i,\bh_i,\br_i\right\}_{i\in[t]}$ is 
%\begin{equation}
%\calF_t = \sigma \left\{\bv_i,\bW_i,\bh_i,\br_i\right\}_{i\in[t]}.
%\end{equation}
%In other words, $\calF_t$ is 
the minimal $\sigma$-algebra such that $\left\{\bv_i,\bW_i,\bh_i,\br_i\right\}_{i\in[t]}$ are measurable. Also define 
\begin{equation}
u_t \defeq \tilf_t(\bW_t)\quad\mbox{and}\quad
\delta_t \defeq \begin{cases}
1, & \mbox{if} \;\; \bbE[u_{t+1}-u_t|\calF_t] >0 \\
0, & \mbox{otherwise}
\end{cases},
\end{equation}
then it is easy to see $\{u_t\}_{t\ge 1}$ is adapted to $\{\calF_t\}_{t\ge 1}$.
According to the quasi-martingale convergence theorem, it suffices to show $\sum_{t=1}^\infty \bbE[\delta_t(u_{t+1}-u_t)]<\infty$. 
To bound $\bbE[\delta_t(u_{t+1}-u_t)]$, we decompose $u_{t+1}-u_t$ as
\begin{align*}
u_{t+1} -u_t &= \tilf_{t+1}(\bW_{t+1}) - \tilf_{t+1}(\bW_{t}) + \tilf_{t+1}(\bW_{t})- \tilf_{t}(\bW_{t})\\
&= \tilf_{t+1}(\bW_{t+1}) - \tilf_{t+1}(\bW_{t}) + \frac{1}{t+1} \tilell(\bv_{t+1}, \bh_{t+1}, \br_{t+1}, \bW_t) + \frac{t}{t+1} \tilf_t(\bW_t) - \tilf_{t}(\bW_{t})\\
&=  \tilf_{t+1}(\bW_{t+1}) - \tilf_{t+1}(\bW_{t}) + \frac{\ell(\bv_{t+1},\bW_t) - \tilf_t(\bW_t)}{t+1}\numberthis\label{eq:decomp_u0}\\
&= \underbrace{\tilf_{t+1}(\bW_{t+1}) - \tilf_{t+1}(\bW_{t})}_{\lrang{1}} + \frac{\ell(\bv_{t+1},\bW_t)-f_t(\bW_t)}{t+1} + \underbrace{\frac{f_t(\bW_t)- \tilf_t(\bW_t)}{t+1}}_{\lrang{2}}\numberthis \label{eq:decomp_u}.
\end{align*}
By definition, it is easy to see both $\lrang{1},\lrang{2}\le 0$. In \eqref{eq:decomp_u}, we insert the term $f_t(\bW_t)$ in order to invoke Donsker's theorem (see Lemma~\ref{lem:Donsker}). Thus,  
\begin{align*}
\bbE[u_{t+1} -u_t\,|\,\calF_t] \le \frac{\bbE[\ell(\bv_{t+1},\bW_t)-f_t(\bW_t)\,|\,\calF_t]}{t+1} = \frac{f(\bW_t)-f_t(\bW_t)}{t+1}
\end{align*}
and using the definition of $\delta_t$, 
\begin{equation}
\bbE[\delta_t\bbE[u_{t+1} -u_t\,|\,\calF_t]] \le \frac{\bbE\left[\abs{f(\bW_t)-f_t(\bW_t)}\right]}{t+1} \le \frac{\bbE\left[\norm{f-f_t}_\calC\right]}{t+1} = \frac{\bbE\left[\norm{\sqrt{t}(f-f_t)}_\calC\right]}{\sqrt{t}(t+1)},
\end{equation}
where $\norm{\cdot}_\calC$ denotes the uniform norm on $\calC$. By Lemma~\ref{lem:regularity}, we know for all $\bv\in\calV$, $\ell(\bv,\cdot)$ is Lipschitz on $\calC$ with Lipschitz constant independent of $\bv$.  Thus, by Lemma~\ref{lem:suff_Donsker}, the measurable function class $\{\ell(\cdot,\bW):\bW\in\calC\}$ is $\bbP$-Donsker. Consequently $\bbE\left[\norm{\sqrt{t}(f-f_t)}_\calC\right]$ is bounded by a constant $c>0$. Thus, $\bbE[\delta_t\bbE[u_{t+1} -u_t\,|\,\calF_t]]\le c/t^{3/2}$. Since $\bbE[\delta_t(u_{t+1}-u_t)] = \bbE[\bbE[\delta_t(u_{t+1}-u_t)]\,|\,\calF_t]] = \bbE[\delta_t\bbE[u_{t+1}-u_t\,|\,\calF_t]]$,
we have $\sum_{t=1}^\infty \bbE[\delta_t(u_{t+1} -u_t)]<\infty$. Thus $\{\tilf_t(\bW_t)\}_{t\ge 1}$ converges a.s.. Moreover, by Lemma~\ref{lem:quasi_conv}, we also have $\sum_{t=1}^\infty \bbE[\abs{\bbE[u_{t+1}-u_t|\calF_t]}] < \infty$.

Leveraging this result, we proceed to show the almost sure convergence of $\{f(\bW_t)\}_{t\ge 1}$. By Lemma~\ref{lem:suff_Donsker}, the measurable function class $\{\ell(\cdot,\bW):\bW\in\calC\}$ is also $\bbP$-Glivenko-Cantelli. Thus by Glivenko-Cantelli theorem (see Lemma~\ref{lem:GlivenkoCantelli}), we have $\norm{f_t-f}_\calC\convas 0$.  Hence it suffices to show $\{f_t(\bW_t)\}_{t\ge 1}$ converges a.s.. We show this by proving $f_t(\bW_t)-\tilf_t(\bW_t)\convas 0$. First, from \eqref{eq:decomp_u}, we have
\begin{align*}
\frac{\tilf_t(\bW_t)-f_t(\bW_t)}{t+1}
=\;\;& \bbE\left[\frac{\tilf_t(\bW_t)-f_t(\bW_t)}{t+1}\,\bigg|\,\calF_t\right] \\
=\;\;& \bbE[\tilf_{t+1}(\bW_{t+1}) - \tilf_{t+1}(\bW_{t})\,|\,\calF_t] + \bbE\left[\frac{\ell(\bv_{t+1},\bW_t)-f_t(\bW_t)}{t+1}\,\bigg|\,\calF_t\right] - \bbE[u_{t+1} -u_t\,|\,\calF_t]\\
\le\;\;& \frac{\bbE\left[\ell(\bv_{t+1},\bW_t)\,|\,\calF_t\right]-f_t(\bW_t)}{t+1} - \bbE[u_{t+1} -u_t\,|\,\calF_t]\\
\le\;\;& \frac{f(\bW_t)-f_t(\bW_t)}{t+1} - \bbE[u_{t+1} -u_t\,|\,\calF_t]\\
\le\;\;& \frac{\norm{f-f_t}_\calC}{t+1} - \bbE[u_{t+1} -u_t\,|\,\calF_t].
\end{align*}
Since both $\sum_{t=1}^\infty \frac{\norm{f-f_t}_\calC}{t+1}$ and $\sum_{t=1}^\infty  \abs{\bbE[u_{t+1} -u_t\,|\,\calF_t]}$ converge a.s., we conclude 
$\sum_{t=1}^\infty \frac{\tilf_t(\bW_t)-f_t(\bW_t)}{t+1}$
converges a.s.. Define $b_t \defeq \tilf_t(\bW_t)-f_t(\bW_t)$, we show $\abs{b_{t+1}-b_t} = O(1/t)$ a.s. by proving both $|\tilf_{t+1}(\bW_{t+1}) - \tilf_t(\bW_t)| = O(1/t)$ and $|f_{t+1}(\bW_{t+1}) - f_t(\bW_t)| = O(1/t)$ a.s.. First, from \eqref{eq:decomp_u0} and the Lipschitz continuity of $\tilf_t$ on $\calC$,
\begin{align*}
|\tilf_{t+1}(\bW_{t+1}) - \tilf_t(\bW_t)| &\le |\tilf_{t+1}(\bW_{t+1}) - \tilf_{t+1}(\bW_{t})| + \abs{\frac{\ell(\bv_{t+1},\bW_t) - \tilf_t(\bW_t)}{t+1}}\\
&\le c\norm{\bW_{t+1} - \bW_t}_F + \frac{|\ell(\bv_{t+1},\bW_t)| + |\tilf_t(\bW_t)|}{t+1},
\end{align*}
where $c>0$ is a constant independent of $t$. Since both $|\ell(\bv_{t+1},\bW_t)|$ and $|\tilf_t(\bW_t)|$ are bounded on $\calC$ a.s.\ and $\norm{\bW_{t+1} - \bW_t}_F = O(1/t)$ a.s. (by Lemma~\ref{lem:rate_W}), we have $|\tilf_{t+1}(\bW_{t+1}) - \tilf_t(\bW_t)| = O(1/t)$ a.s.. Similarly, by the Lipschitz continuity of $f_t$ on $\calC$, we also have $|f_{t+1}(\bW_{t+1}) - f_t(\bW_t)| = O(1/t)$ a.s.. Now we invoke Lemma~\ref{lem:conv_nonneg} to conclude 
\begin{equation}
f_t(\bW_t)-\tilf_t(\bW_t)\convas 0. \label{eq:as_conv_diff_supp}
\end{equation}
Since $f_t(\bW_t)-f(\bW_t)\convas 0$, both $\{f(\bW_t)\}_{t\ge 1}$ and $\{\tilf_t(\bW_t)\}_{t\ge 1}$ converge to the same almost sure limit. 
%\end{proof}

\section{Proof of Theorem~\ref{thm:local_min}}\label{sec:local_min}
%First, fix a $\bW\in\calC$, it is easy to see $\tilell(\bv,\bW,\bh,\br)$ is Lipschitz on $\calV\times\calH\times\calR$ with some constant $c>0$ independent of $\bW$
%\begin{align*}
%&\abs{\tilell(\bv,\bW,\bh,\br) - \tilell(\bv',\bW,\bh',\br')}\\
%\le\;\; &\abs{\frac{1}{2}\norm{\bv-\bW\bh-\br}_2^2 - \frac{1}{2}\norm{\bv'-\bW\bh'-\br'}_2^2} + \lambda\abs{\norm{\br}_1-\norm{\br'}_1}\\
%\lea\;\;&c'\left(\norm{\bv-\bv'}_2 + \norm{\bh-\bh'}_2 + \norm{\br-\br'}_2\right) + \lambda\sqrt{F}\norm{\br-\br'}_2\\
%\le\;\;&c\left(\norm{\bv-\bv'}_2 + \norm{\bh-\bh'}_2 + \norm{\br-\br'}_2\right) \numberthis \label{eq:Lips_tilell}
%\end{align*}
%where $c'>0$ is a universal constant, $c=c'+\lambda\sqrt{F}$ and $(a)$ can be proved in a similar fashion as Lemma~\ref{lem:Lips_b}. 

%For ease of exposition we focus on a sample path of $\{\bv_t\}_{t\ge 1}$ such that $\tilf_t(\bW_t) - f_t(\bW_t)\to 0$. By \eqref{eq:as_conv_diff}, %such a sample path always exits and 
%the set of all such sample paths has probability measure one. 
%Since $\tilf_t(\bW_t)-f_t(\bW_t)\convas 0$ (see \eqref{eq:as_conv_diff}), 
By \eqref{eq:as_conv_diff_supp}, it suffices to show that for every realization of $\{\bv_t\}_{t\ge 1}$ such that $\tilf_t(\bW_t) - f_t(\bW_t)\to 0$, each subsequential limit of $\{\bW_t\}_{t\ge 1}$ is a stationary point of $f$. We focus on such a realization, then all the variables in the sequel become deterministic. (With a slight abuse of notations we use the same notations to denote the deterministic variables.) %For any $t\ge 1$, define $\bA_t \defeq \bA_t/t$ and $\bB_t \defeq \bB_t/t$. 
By the compactness of $\calV$, $\calH$ and $\calR$, both sequences $\{\bA_t\}_{t\ge 1}$ and $\{\bB_t\}_{t\ge 1}$ are bounded. Thus there exist compact sets $\calA$ and $\calB$ such that $\{\bA_t\}_{t\ge 1} \subseteq \calA$ and $\{\bB_t\}_{t\ge 1} \subseteq \calB$. Similar reasoning shows that the sequence $\{\tilf_t(\vecz)\}_{t\ge 1}$ resides in a compact set $\tilcalF$. By the compactness of $\calC$, there exists a convergent subsequence $\{\bW_{t_m}\}_{m\ge 1}$ in $\{\bW_t\}_{t\ge 1}$.  Also, by the compactness of $\calA$, $\calB$ and $\tilcalF$, it is possible to find convergent subsequences $\{\bA_{t_k}\}_{k\ge 1}$, $\{\bB_{t_k}\}_{k\ge 1}$ and $\{\tilf_{t_k}(\vecz)\}_{k\ge 1}$ such that $\{t_k\}_{k\ge 1}\subseteq\{t_m\}_{m\ge 1}$. Thus, we focus on the convergent sequences $\{\bW_{t_k}\}_{k\ge 1}$, $\{\bA_{t_k}\}_{k\ge 1}$, $\{\bB_{t_k}\}_{k\ge 1}$ and $\{\tilf_{t_k}(\vecz)\}_{k\ge 1}$ and drop the subscript $k$ to make notations uncluttered. We denote the limits of the sequences $\{\bW_{t}\}_{t\ge 1}$, $\{\bA_{t}\}_{t\ge 1}$ and $\{\bB_{t}\}_{t\ge 1}$ as $\barbW$, $\barbA$ and $\barbB$ respectively. 
%Again by the compactness of $\calV\times\calH\times\calR$, there exists a convergence subsequence $\{(\bv_{t_k},\bh_{t_k},\br_{t_k})\}_{k\ge 1}$ in $\{(\bv_t,\bh_t,\br_t)\}_{t\ge 1}$ such that $\{t_k\}_{k\ge 1}\subseteq\{t_m\}_{m\ge 1}$. Thus, we focus on the sequences $\{\bW_{t_k}\}_{k\ge 1}$ and $\{(\bv_{t_k},\bh_{t_k},\br_{t_k})\}_{k\ge 1}$, and drop the subscript $k$ to make notations uncluttered. Let $\barbW\in\calC$ and $(\barbv,\barbh,\barbr)\in\calV\times\calH\times\calR$ be the limits of $\{\bW_{t}\}_{t\ge 1}$ and $\{(\bv_{t},\bh_{t},\br_{t})\}_{t\ge 1}$ respectively. 

First, we show the sequence of differentiable functions $\{\tilf_t\}_{t\ge 1}$ converges uniformly to a differentiable function $\barf$. Since $\{\tilf_{t}(\vecz)\}_{t\ge 1}$ converges, it suffices to show the sequence $\{\nabla \tilf_t\}_{t\ge 1}$ converges uniformly to a function $\barh$. Since $\nabla \tilf_t(\bW) = \bW\bA_t - \bB_t$, for any $t,t'\ge 1$ and any $\bW\in\calC$, we have
\begin{align*}
\norm{\nabla \tilf_t(\bW) - \nabla \tilf_{t'}(\bW)}_F  &= \norm{\bW\left(\bA_t - \bA_{t'}\right) -  \left(\bB_t - \bB_{t'}\right)}_F \\
&\le \norm{\bW}_F \normt{\bA_t - \bA_{t'}}_F + \normt{\bB_t - \bB_{t'}}_F\\
&\le \sqrt{K}\left(\normt{\bA_t - \bA_{t'}}_F + \normt{\bB_t - \bB_{t'}}_F\right). \numberthis \label{eq:Lips_nablaf_t}
\end{align*}
From \eqref{eq:Lips_nablaf_t}, it is easy to see $\barh(\bW) = \bW\barbA - \barbB$ since
$\sup_{\bW\in\calC} \normt{\nabla \tilf_t(\bW) - \barh(\bW)}_F  \le \sqrt{K}\left(\normt{\bA_t - \barbA}_F + \normt{\bB_t - \barbB}_F\right)$. 

Next, define $g_t \defeq \tilf_t - f_t$, for all $t\ge 1$. By definition, we have $g_t(\bW)\ge 0$, for any $\bW\in \calC$. Since $\tilf_t\convu \barf$ and $f_t \convu f$ (by Glivenko-Cantelli theorem), we have $g_t \convu \barg \defeq \barf - f$. 
Since both $\barf$ and  $f$ are differentiable, $\barg$ is differentiable and $\nabla f = \nabla \barf - \nabla \barg$. To show $\barbW$ is a stationary point of $f$, it suffices to show for any $\bW\in\calC$, the directional derivative $\lrangle{\nabla f(\barbW)}{\bW - \barbW}\ge 0$. We show this by proving  $\lrangle{\nabla \barf(\barbW)}{\bW - \barbW}\ge 0$ and $\lrangle{\nabla \barg(\barbW)}{\bW - \barbW}= 0$ for any $\bW\in\calC$.

By definition, for any $\bW\in\calC$ and $t\ge 1$, we have $\tilf_t(\bW_t)  \le \tilf_t(\bW)$.  First consider 
\begin{align*}
\abs{\tilf_t(\bW_t) - \barf(\barbW)} & = \abs{\tilf_t(\bW_t) - \barf(\bW_t) + \barf(\bW_t) - \barf(\barbW)}\\
&\le \norm{\tilf_t - \barf}_\calC + \abs{\barf(\bW_t) - \barf(\barbW)}.
\end{align*}
Since $\tilf_t\convu \barf$ and $\barf$ is continuous, we have $\tilf_t(\bW_t) \to \barf(\barbW)$ as $t\to\infty$. Thus $\barf(\barbW)\le \barf(\bW)$, for any $\bW\in\calC$. This implies  $\lrangle{\nabla \barf(\barbW)}{\bW - \barbW}\ge 0$. 

Next we show $\lrangle{\nabla \barg(\barbW)}{\bW - \barbW}= 0$ for any $\bW\in\calC$. It suffices to show $\nabla \barg(\barbW) = \vecz$. Since both $\tilf_t$ and $f_t$ are differentiable, $g_t$ is differentiable and $\nabla g_t = \nabla \tilf_t - \nabla f$. First it is easy to see $\nabla g_t$ is Lipschitz  on $\calC$ with constant $L>0$ independent of $t$ since both $\nabla \tilf_t$ and $\nabla f_t$ are Lipschitz on $\calC$ with constants independent of $t$. It is possible to construct another differentiable function $\tilg_t$ with domain $\bbR^{F\times K}$ such that $\tilg_t$ is nonnegative with a $L$-Lipschitz gradient on $\bbR^{F\times K}$ and $\tilg_t (\bW) = g_t(\bW)$ for all $\bW\in\calC$. Thus
\begin{align*}
\frac{1}{2L}\normt{\nabla g_t(\bW_t)}_F^2=\frac{1}{2L}\normt{\nabla \tilg_t(\bW_t)}_F^2 \le \tilg_t(\bW_t) - \tilg_t^* \le g_t(\bW_t) = \tilf_t(\bW_t) - f_t(\bW_t),
\end{align*}
where $\tilg_t^* = \inf_{\bW\in\bbR^{F\times K}} \tilg_t(\bW)\ge 0$. Since  $\tilf_t(\bW_t) - f_t(\bW_t)\to 0$, we have $\nabla g_t(\bW_t)\to \vecz$ as $t\to\infty$. Now consider the first-order Taylor expansion of $g_t$ at $\bW_t$
\begin{equation}
g_t(\bW) = g_t(\bW_t) + \lrangle{\nabla g_t(\bW_t)}{\bW-\bW_t} + o\left(\normt{\bW-\bW_t}_F\right), \;\; \forall\,\bW\in\calC. 
\end{equation}
As $t\to\infty$, we have
\begin{equation}
\barg(\bW) = \barg(\barbW) + o\left(\norm{\bW-\barbW}_F\right), \;\; \forall\,\bW\in\calC. \label{eq:barg_Taylor1}
\end{equation}
On the other hand, we have
\begin{equation}
\barg(\bW) = \barg(\barbW) + \lrangle{\nabla \barg(\barbW)}{\bW-\barbW} + o\left(\norm{\bW-\barbW}_F\right), \;\; \forall\,\bW\in\calC.  \label{eq:barg_Taylor2}
\end{equation}
Comparing \eqref{eq:barg_Taylor1} and \eqref{eq:barg_Taylor2}, we have
\begin{equation}
\lrangle{\nabla \barg(\barbW)}{\bW-\barbW} + o\left(\norm{\bW-\barbW}_F\right) = 0, \;\; \forall\,\bW\in\calC.
\end{equation}
Therefore we conclude $\nabla \barg(\barbW) = \vecz$. 
%Denote the Lipschitz constant of $\nabla g_t$ as $L$. 
%Consider
%\begin{align*}
%\normt{\nabla g_t(\bW)}_F &\le \normt{\nabla \tilf_t(\bW)}_F + \normt{\nabla f(\bW)}_F\\
%&\le \normt{\bW\bA_t}_F + \normt{\bB_t}_F + \frac{1}{t}\sum_{i=1}^t \normt{\nabla_\bW \ell(\bv_i,\bW)}_F.
%\end{align*}
%By the boundedness of $\bW$, $\bA_t$ and $\bB_t$, and the Lipschitz continuity of $\nabla_\bW\ell(\bv_i,\bW)$ on $\calC$ (with Lipschitz constant independent of $\bv$), we conclude that $\normt{\nabla g_t(\bW)}_F$ is bounded on $\calC$. Hence $\nabla g_t$ is Lipschitz on $\calC$. 

%Define $\tilell_t (\bW) \defeq \tilell(\bv_t,\bW,\bh_t,\br_t)$ and $\barell (\bW) \defeq \tilell(\barbv,\bW,\barbh,\barbr)$. By \eqref{eq:Lips_tilell} we have $\tilell_t\convu\barell$ on $\calC$ since
%\begin{align*}
%\normt{\tilell_t-\barell}_\calC = \sup_{\bW\in\calC}\abs{\tilell_t(\bW) - \barell(\bW)} \le c\left(\norm{\bv_t-\barbv}_2 + \norm{\bh_t-\barbh}_2 + \norm{\br_t-\barbr}_2\right) \to 0.
%\end{align*}
%Consequently, we have $\tilf_t\convu\barell$ since
%\begin{align*}
%\norm{\tilf_t-\barell}_\calC = \norm{\frac{1}{t}\sum_{i=1}^t \left(\tilell_i - \barell\right)}_\calC \le \frac{1}{t}\sum_{i=1}^t\norm{\tilell_i - \barell}_\calC
%%\le \frac{1}{t}\sum_{i=1}^t\abs{\tilell(\bv_i,\bW,\bh_i,\br_i) - \tilell(\barbv,\bW,\barbh,\barbr)}\\
%%\le\;\; & \sup_{\bW\in\calC}\frac{1}{t}\sum_{i=1}^t c\left(\norm{\bv_i-\barbv}_2 + \norm{\bh_i-\barbh}_2 + \norm{\br_i-\barbr}_2\right)\\
%\end{align*} 

\section{Proof of Lemma~\ref{lem:Lips_nablaf} } \label{sec:Lips_nablaf}

%\begin{proof}
Since $\bv$, $\bW$, $\bh^*(\bv,\bW)$  and $\br^*(\bv,\bW)$ are all bounded, there exist positive constants $M_1$, $M_2$, $M_3$ and $M_4$ that upper bound $\norm{\bv}$, $\norm{\bW}$, $\norm{\bh^*(\bv,\bW)}$  and $\norm{\br^*(\bv,\bW)}$ respectively. Here the matrix norm is the one induced by the (general) vector norm. Take arbitrary $\bW_1$ and $\bW_2$ in $\calC$, we have 
\begin{align*}
\norm{\nabla f(\bW_1) - \nabla f(\bW_2)} &= \norm{\bbE_\bv[\nabla_{\bW}\ell(\bv,\bW_1) - \nabla_{\bW}\ell(\bv,\bW_2)]}\\
&\le \bbE_\bv \norm{\nabla_{\bW}\ell(\bv,\bW_1) - \nabla_{\bW}\ell(\bv,\bW_2)}.
\end{align*}
Fix an arbitrary $\bv\in\calV$ we have
\begin{align*}
\norm{\nabla_{\bW}\ell(\bv,\bW_1) - \nabla_{\bW}\ell(\bv,\bW_2)} &\le \norm{\bW_1\bh^*(\bv,\bW_1)\bh^*(\bv,\bW_1)^T - \bW_2\bh^*(\bv,\bW_2)\bh^*(\bv,\bW_2)^T} \\
&+ \norm{\br^*(\bv,\bW_1)\bh^*(\bv,\bW_1)^T - \br^*(\bv,\bW_2)\bh^*(\bv,\bW_2)^T} \numberthis \label{eq:norm_nabla_ell}\\
&+ \norm{\bv\bh^*(\bv,\bW_1)^T - \bv\bh^*(\bv,\bW_2)^T}. 
\end{align*}
We bound each term on the RHS of \eqref{eq:norm_nabla_ell} as follows
\begin{flalign*}
&\norm{\bW_1\bh^*(\bv,\bW_1)\bh^*(\bv,\bW_1)^T - \bW_2\bh^*(\bv,\bW_2)\bh^*(\bv,\bW_2)^T}& \\
\le \;&\norm{\bW_1}\norm{\bh^*(\bv,\bW_1)}\norm{\bh^*(\bv,\bW_1)-\bh^*(\bv,\bW_2)} + \norm{\bW_1\bh^*(\bv,\bW_1) - \bW_2\bh^*(\bv,\bW_2)}\norm{\bh^*(\bv,\bW_2)}&\\
\le \;&M_2M_3c_1\norm{\bW_1-\bW_2} + \left(\norm{\bW_1}\norm{\bh^*(\bv,\bW_1) - \bh^*(\bv,\bW_2)} + \norm{\bW_1-\bW_2}\norm{\bh^*(\bv,\bW_2)}\right)\norm{\bh^*(\bv,\bW_2)}&\\
\le \;&M_2M_3c_1\norm{\bW_1-\bW_2} + M_3\left(M_2c_1\norm{\bW_1-\bW_2} + M_3\norm{\bW_1-\bW_2}\right)\\
= \;&(2c_1M_2M_3 + M_3^2)\norm{\bW_1-\bW_2},&
\end{flalign*}
\begin{flalign*}
&\norm{\br^*(\bv,\bW_1)\bh^*(\bv,\bW_1)^T - \br^*(\bv,\bW_2)\bh^*(\bv,\bW_2)^T}&\\
\le\;&\norm{\br^*(\bv,\bW_1)}\norm{\bh^*(\bv,\bW_1)-\bh^*(\bv,\bW_2)} + \norm{\br^*(\bv,\bW_1)-\br^*(\bv,\bW_2)}\norm{\bh^*(\bv,\bW_2)}&\\
\le\;&c_1M_4\norm{\bW_1-\bW_2} + c_2M_3\norm{\bW_1-\bW_2}&\\
\le\;&(c_1M_4 + c_2M_3)\norm{\bW_1-\bW_2},&
\end{flalign*}
and
\begin{flalign*}
&\norm{\bv\bh^*(\bv,\bW_1)^T - \bv\bh^*(\bv,\bW_2)^T}\le c_1M_1\norm{\bW_1-\bW_2}.&
\end{flalign*}
Thus, take $c_3 = 2c_1M_2M_3 + M_3^2 + c_1M_4 + c_2M_3 +  c_1M_1$ and we finish the proof.

\section{Proof of Lemma~\ref{lem:Lips_b}} \label{sec:Lips_b}
%We first rewrite $g$ in standard quadratic form
%\begin{equation}
%g(\bb) = \bb^T(\bA^T\bA-{\bA'}^T\bA')\bb - 2(\bA^T\bz- {\bA'}^T\bz')^T\bb + \norm{\bz}^2_2 - \norm{\bz'}^2_2.
%\end{equation}
%Thus, $\forall\,\bb,\bb'\in\calB$,
%\begin{align*}
%\abs{g(\bb)-g(\bb')} &= \big|\bb^T(\bA^T\bA-{\bA'}^T\bA')\bb - \bb^T(\bA^T\bA-{\bA'}^T\bA')\bb' + \bb^T(\bA^T\bA-{\bA'}^T\bA')\bb' \\
%&\qquad\qquad\qquad\qquad\qquad - {\bb'}^T(\bA^T\bA-{\bA'}^T\bA')\bb' - 2(\bA^T\bz- {\bA'}^T\bz')^T(\bb-\bb')\big|\\
%&\le \abs{\bb^T(\bA^T\bA-{\bA'}^T\bA')(\bb-\bb')} + \abs{(\bb-\bb')^T(\bA^T\bA-{\bA'}^T\bA')\bb'} + 2\abs{(\bA^T\bz- {\bA'}^T\bz')^T(\bb-\bb')}\\
%&\le \lrpar{\underbrace{\normt{(\bA^T\bA-{\bA'}^T\bA')^T\bb}_2}_{\lrang{1}} + \underbrace{\normt{(\bA^T\bA-{\bA'}^T\bA')\bb'}_2}_{\lrang{2}} + \underbrace{2\normt{\bA^T\bz- {\bA'}^T\bz'}_2}_{\lrang{3}}}\norm{\bb-\bb'}_2
%\end{align*}
It suffices to show $\norm{\nabla g(\bb)}_2 \le c_1\norm{\bz-\bz'}_2 + c_2\norm{\bA-\bA'}_2$ for any $\bb\in\calB$ and some positive constants $c_1$ and $c_2$ (independent of $\bb$). We write  $\norm{\nabla g(\bb)}_2$ as
\begin{align*}
\norm{\nabla g(\bb)}_2 &= \normt{{\bA'}^T\left(\bA'\bb - \bz'\right) - \bA^T\left(\bA\bb-\bz\right)}_2\\
& = \normt{({\bA'}^T\bA'-\bA^T\bA)\bb - ({\bA'}^T\bz' - \bA^T\bz)}_2\\
& \le \underbrace{\normt{{\bA'}^T\bA'-\bA^T\bA}_2\norm{\bb}_2}_{\lrang{1}} + \underbrace{\normt{{\bA'}^T\bz' - \bA^T\bz}_2}_{\lrang{2}}.
\end{align*}
By the  compactness of $\calZ$, $\calA$ and $\calB$, there exist positive  constants $M_1$, $M_2$ and $M_3$ such that $\norm{\bz}_2\le M_1$, $\norm{\bA}_2\le M_2$ and $\norm{\bb}_2\le M_3$, for any $\bz\in\calZ$, $\bA\in\calA$ and $\bb\in\calB$. Thus, 
\begin{align*}
\lrang{1} &\le M_3\normt{\bA^T\bA-\bA^T\bA'+\bA^T\bA'-{\bA'}^T\bA'}_2\\
&\le  M_3\lrpar{\normt{\bA^T(\bA-\bA')}_2+\norm{(\bA-\bA')^T\bA'}_2}\\
&\le 2M_2M_3\norm{\bA-\bA'}_2.
\end{align*}
%Similarly, we also have $\lrang{2}\le 2M_2M_3\norm{\bA-\bA'}_2$. 
Similarly for $\lrang{2}$ we have
\begin{align*}
\lrang{2} &= \norm{\bA^T\bz- \bA^T\bz'+\bA^T\bz'-{\bA'}^T\bz'}_2\\
&\le \norm{\bA^T(\bz-\bz')}_2+\norm{(\bA-\bA')^T\bz'}_2\\
&\le M_2\norm{\bz-\bz'}_2 + M_1\norm{\bA-\bA'}_2.
\end{align*}
Hence $\lrang{1}+\lrang{2}\le M_2\norm{\bz-\bz'}_2 + (M_1+2M_2M_3)\norm{\bA-\bA'}_2$. We now take $c_1 = M_2$ and $c_2 = M_1+2M_2M_3$ to complete the proof. 

\section{Technical lemmas}\label{sec:tech_lemma}

\begin{lemma}[{\cite[Lemma 5]{ZhangBox_15}}] \label{lem:L1_box}
Let $I$ be a closed interval in $\bbR$. Define $g_{\tau,I}(t) = \tau\abs{t} + \delta_I(t)$, where $\delta_I$ is the indicator function for the interval $I$. Then the proximal operator for $g_{\tau,I}$ is given by
\begin{equation}
\prox_{g_{\tau,I}} (q) = \Pi_I(\calS_\tau(q)),
\end{equation}
where $q\in\bbR$, $\calS_\tau$ is the soft-thresholding operator with threshold $\tau$ and $\Pi_I$ is the Euclidean projector onto the interval $I$.
\end{lemma}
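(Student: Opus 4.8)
The plan is to unfold the definition of the proximal operator and reduce the problem to a one-dimensional constrained minimization that separates cleanly into the soft-thresholding step and the projection step. By definition,
\[
\prox_{g_{\tau,I}}(q) = \argmin_{t\in\bbR}\ \tfrac12(t-q)^2 + \tau\abs{t} + \delta_I(t) = \argmin_{t\in I}\ h(t),\qquad h(t)\defeq\tfrac12(t-q)^2 + \tau\abs{t},
\]
since the indicator term $\delta_I$ merely restricts the feasible set to $I$. The objective $h$ is strictly convex (the quadratic term is), hence has a unique global minimizer over $\bbR$; a standard subdifferential computation, i.e. a short case analysis on $\sgn t$ in the inclusion $0\in(t-q)+\tau\,\partial\abs{t}$, identifies this minimizer as $\calS_\tau(q)$, the soft-thresholding of $q$ at level $\tau$. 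I would either cite this as the well-known proximal operator of the absolute value or include the three-line sign analysis.

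Next I would exploit the one-dimensional structure. Write $I=[a,b]$ (allowing $a=-\infty$ or $b=+\infty$), so that $\Pi_I$ is the clamping map $t\mapsto\max\{a,\min\{b,t\}\}$. Because $h$ is strictly convex on $\bbR$ with unique minimizer $t^\star\defeq\calS_\tau(q)$, it is strictly decreasing on $(-\infty,t^\star]$ and strictly increasing on $[t^\star,+\infty)$. Restricting to $I$, there are three cases: if $t^\star\in I$ the constrained minimizer is $t^\star$; if $t^\star<a$ then $h$ is increasing on $I$ and the constrained minimizer is $a$; if $t^\star>b$ then $h$ is decreasing on $I$ and the constrained minimizer is $b$. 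In every case the constrained minimizer equals $\max\{a,\min\{b,t^\star\}\}=\Pi_I(t^\star)=\Pi_I(\calS_\tau(q))$, which is exactly the claim.

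The only point requiring a little care is the monotonicity assertion used above, namely that a strictly convex one-dimensional function is strictly monotone on each side of its minimizer; this follows from the monotonicity of the (one-sided) derivative of a convex function. A fully self-contained alternative that avoids it is to verify the optimality condition $0\in(t^{\mathrm c}-q)+\tau\,\partial\abs{t^{\mathrm c}}+N_I(t^{\mathrm c})$ directly at $t^{\mathrm c}\defeq\Pi_I(\calS_\tau(q))$, using $N_I(t)=\{0\}$ for $t$ in the interior of $I$ and $N_I(a)=(-\infty,0]$, $N_I(b)=[0,+\infty)$ at the endpoints; this is the step I would expect to be the most tedious, though it remains an elementary finite case check.
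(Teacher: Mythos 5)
Your proof is correct. Note that the paper itself does not prove this statement at all: Lemma~\ref{lem:L1_box} is imported verbatim as Lemma~5 of \cite{ZhangBox_15}, so there is no in-paper argument to compare against; the nearest analogue is the KKT-based proof the authors give for the projection Lemma~\ref{lem:proj_nonnegL2}. Your route --- identify the unconstrained minimizer of $h(t)=\tfrac12(t-q)^2+\tau\abs{t}$ as $\calS_\tau(q)$ via the subdifferential inclusion, then use strict convexity to get strict monotonicity of $h$ on either side of that minimizer, and conclude that the minimizer over $I$ is the clamp --- is sound, and each step checks out: strict monotonicity away from the minimizer does follow from strict convexity (for $t_1<t_2\le t^\star$, $t_2$ is a convex combination of $t_1$ and $t^\star$, so $h(t_2)<\max\{h(t_1),h(t^\star)\}=h(t_1)$), and the three-case clamping argument is exactly the composition $\Pi_I\circ\calS_\tau$ claimed (the order matters, and you have it right). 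Your alternative of verifying $0\in(t^{\mathrm c}-q)+\tau\,\partial\abs{t^{\mathrm c}}+N_I(t^{\mathrm c})$ directly would be the more standard textbook-style certificate and is closer in spirit to how the paper proves Lemma~\ref{lem:proj_nonnegL2}; either version would serve as a complete proof of the cited result, which the paper omits.
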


\begin{lemma}[Projection onto nonnegative $\ell_2$ balls] \label{lem:proj_nonnegL2}
Let $\calC' \defeq \{\bx\in\bbR_+^n\,|\,\norm{\bx}\le 1\}$. Then for all $\by\in\bbR^n$, 
\begin{equation}
\Pi_{\calC'}(\by) = \frac{\by_+}{\max\left\{1,\norm{\by_+}_2\right\}},
\end{equation}
where $(\by_+)_i = \max\{0,y_i\}$, $\forall\,i\in[n]$. 
\end{lemma}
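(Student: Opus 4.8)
The plan is to verify the claimed formula directly through the variational (obtuse‑angle) characterization of Euclidean projection. Since $\calC'$ is nonempty, closed and convex, $\Pi_{\calC'}(\by)$ exists, is unique, and is the unique point $\bz\in\calC'$ satisfying $\lrangle{\by-\bz}{\bx-\bz}\le 0$ for every $\bx\in\calC'$. So I would set $\bz^* \defeq \by_+/\max\{1,\norm{\by_+}_2\}$ and check two things: (i) $\bz^*\in\calC'$, and (ii) $\lrangle{\by-\bz^*}{\bx-\bz^*}\le 0$ for all $\bx\in\calC'$. Point (i) is immediate: $\bz^*$ is a nonnegative vector scaled by a nonnegative scalar, hence $\bz^*\ge 0$, and $\norm{\bz^*}_2 = \norm{\by_+}_2/\max\{1,\norm{\by_+}_2\}\le 1$.

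For (ii), the guiding idea is that $\calC' = \bbR_+^n\cap\calB$ with $\calB$ the closed unit $\ell_2$ ball, so that $\bz^*$ is exactly ``clip $\by$ coordinatewise onto the orthant, then clip radially onto $\calB$''. The one elementary fact I would isolate first is: if $y_i<0$ then $(\by_+)_i=0$ and hence $z^*_i=0$, whereas $x_i\ge 0$ for any $\bx\in\calC'$; consequently, since $(\by-\by_+)_i=\min(y_i,0)$, one gets $\lrangle{\by-\by_+}{\bx-\bz^*}=\sum_{i\,:\,y_i<0} y_i x_i\le 0$. With this in hand I would split on $\norm{\by_+}_2$. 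When $\norm{\by_+}_2\le 1$ we have $\bz^*=\by_+$, and $\lrangle{\by-\bz^*}{\bx-\bz^*}$ is precisely the sum just bounded. When $\norm{\by_+}_2>1$ we have $\by_+=\norm{\by_+}_2\,\bz^*$, so $\by-\bz^*=(\by-\by_+)+(\norm{\by_+}_2-1)\bz^*$, whence $\lrangle{\by-\bz^*}{\bx-\bz^*}$ equals the nonpositive ``orthant'' term $\lrangle{\by-\by_+}{\bx-\bz^*}$ plus $(\norm{\by_+}_2-1)\bigl(\lrangle{\bz^*}{\bx}-1\bigr)$, using $\norm{\bz^*}_2=1$; the latter is nonpositive because $\norm{\by_+}_2-1>0$ and Cauchy--Schwarz gives $\lrangle{\bz^*}{\bx}\le\norm{\bz^*}_2\norm{\bx}_2\le 1$. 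Combining the two cases yields $\lrangle{\by-\bz^*}{\bx-\bz^*}\le 0$ for all $\bx\in\calC'$, hence $\bz^*=\Pi_{\calC'}(\by)$.

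I do not anticipate a real obstacle: the statement is a routine projection computation. The only place requiring minor care is the interplay between the two constraints defining $\calC'$ — ensuring that the coordinatewise truncation and the radial rescaling do not conflict — and that is handled once and for all by the observation that $y_i<0$ forces $z^*_i=0$; after that the argument is just a two‑case split plus one application of Cauchy--Schwarz. (An alternative, if one prefers to cite known primitives, is to invoke that projection onto $\calB$ sends $\bp$ to $\bp/\max\{1,\norm{\bp}_2\}$ and projection onto $\bbR_+^n$ is entrywise truncation, and then argue that the two projections compose correctly here; but the direct variational‑inequality check above is self‑contained and about the same length.)
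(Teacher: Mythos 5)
Your proof is correct, and it takes a genuinely different route from the paper's. You \emph{verify} the candidate $\bz^*=\by_+/\max\{1,\norm{\by_+}_2\}$ via the variational characterization of the projection onto a closed convex set, i.e.\ $\lrangle{\by-\bz^*}{\bx-\bz^*}\le 0$ for all $\bx\in\calC'$, splitting off the nonpositive ``orthant'' term $\lrangle{\by-\by_+}{\bx-\bz^*}=\sum_{i:y_i<0}y_ix_i$ and handling the radial part with $\norm{\bz^*}_2=1$ and Cauchy--Schwarz. The paper instead \emph{derives} the formula by writing down the KKT conditions for $\min\{\norm{\by-\bx}_2^2 : \bx\ge 0,\ \norm{\bx}_2\le 1\}$ and solving them case by case on the multiplier $\lambda^*$ of the norm constraint (distinguishing $\lambda^*=0$ from $\lambda^*>0$, and $y_i\le 0$ from $y_i>0$). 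Your argument is somewhat more self-contained --- it needs only the projection theorem and Cauchy--Schwarz, requires no multiplier bookkeeping, and delivers uniqueness for free from the projection theorem --- at the cost of having to guess the answer in advance; the paper's KKT route explains where the formula comes from but involves more case analysis. Both proofs are complete and of comparable length, so either is acceptable here.
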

\begin{proof}
The KKT conditions for 
\begin{align*}
&\min \quad \norm{\by-\bx}_2^2\\
&\st \quad  \bx\ge 0, \;\norm{\bx}_2\le 1
\end{align*}
are given by
\begin{align}
&\bx^*\ge 0, \;\norm{\bx^*}_2\le 1, \; \lambda^*\ge 0 \label{eq:KKT_feas1}\\
&(\lambda^*+1)\bx^*\ge \by, \label{eq:KKT_feas2}\\
&\lambda^*(\norm{\bx^*}_2^2-1) = 0, \label{eq:KKT_comp1}\\
&(\lambda^*+1)(x^*_i)^2 =x^*_iy_i, \;\forall\,i\in[n]. \label{eq:KKT_comp2}
\end{align}
We fix an $i\in[n]$. Define $\calI \defeq \{i\in[n]\,|\,y_i>0\}$. For any $\bz\in\bbR^n$, let $\bz_\calI$ to be the subvector of $\bz$ with indices from $\calI$. If $y_i\le 0$, then by \eqref{eq:KKT_comp2} $x^*_i=0$. If $y_i>0$, then by \eqref{eq:KKT_feas2} $x^*_i>0$. Thus by \eqref{eq:KKT_comp2} we have $x^*_i = y_i/(\lambda^*+1)$. If $\lambda^*=0$, then $x^*_i = y_i$. In this case $\norm{\by_\calI}_2 = \norm{\bx^*_\calI}_2  = \norm{\bx^*}_2 \le 1$. If $\lambda^*>0$, then by \eqref{eq:KKT_comp1} $\norm{\bx^*}_2^2=1$. Then $\norm{\bx^*}_2^2 = \norm{\bx^*_\calI}_2^2 = \norm{\by_\calI}_2^2/(\lambda^*+1)^2 = 1$. This means $\lambda^*+1 = \norm{\by_\calI}_2$ so $x^*_i = y_i/\norm{\by_\calI}_2$. Also, we notice in such case $y_i>x^*_i>0$ so $\norm{\by_\calI}_2>1$. Combining both cases where $\lambda^*=0$ and $\lambda^*>0$, we have
$x_i^* = y_i/\max\{1,\norm{\by_\calI}_2\}$, for all $i\in\calI$.
%\begin{equation}
%\Pi_\calC(\by) = \argmin_{\bx\ge 0, \;\norm{\bx}_2\le 1} \norm{\by-\bx}_2^2, \label{eq:nonnegProj_l2}
%\end{equation}
%we solve the 
\end{proof}

\begin{lemma}[Danskin's Theorem; {\cite[Theorem 4.1]{Bon_98}}] \label{lem:min_diff}
Let $\calX$ be a metric space and $\calU$ be a normed vector space. Let $f:\calX\times \calU\to\bbR$ have the following properties
\begin{enumerate}
\item $f(x,\cdot)$ is differentiable on $\calU$, for any $x\in\calX$.
\item $f(x,u)$ and $\nabla_u f(x,u)$ are continuous on $\calX\times\calU$.
\end{enumerate}
Let $\Phi$ be a compact set in $\calX$. Define $v(u) = \inf_{x\in\Phi}f(x,u)$ and $S(u) = \argmin_{x\in\Phi}f(x,u)$, then $v(u)$ is directionally differentiable and its directional derivative along $d\in\calU$, $v'(u,d)$ is given by
\begin{equation}
v'(u,d) = \min_{x\in S(u)} \nabla_uf(x,u)^Td.
\end{equation}
In particular, if for some $u_0\in\calU$, $S(u_0) = \{x_0\}$, then $v$ is differentiable at $u=u_0$ and $\nabla v(u_0) = \nabla_uf(x_0,u_0)$.
\end{lemma}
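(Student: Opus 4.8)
The plan is to prove this by bracketing the difference quotient $[v(u+td)-v(u)]/t$ from above and below as $t\downarrow 0$. As preliminaries I would record that for every $u$ the value $v(u)$ is finite and $S(u)$ is a nonempty compact subset of $\Phi$, by the Weierstrass extreme value theorem applied to the continuous map $x\mapsto f(x,u)$ on the compact set $\Phi$. I would also show $v$ is continuous on $\calU$: it is upper semicontinuous as a pointwise infimum of the continuous functions $f(x,\cdot)$, and it is lower semicontinuous because, for $u_n\to u$ and $x_n\in S(u_n)$, compactness of $\Phi$ yields a subsequence with $x_n\to\bar x$, whence $v(u_n)=f(x_n,u_n)\to f(\bar x,u)\ge v(u)$.

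For the upper bound, fix $u$ and a direction $d$ and pick any $x\in S(u)$. Since $x\in\Phi$ we have $v(u+td)\le f(x,u+td)$ for small $t>0$, while $v(u)=f(x,u)$, so $[v(u+td)-v(u)]/t\le[f(x,u+td)-f(x,u)]/t\to\langle\nabla_u f(x,u),d\rangle$ by differentiability of $f(x,\cdot)$. Minimizing over $x\in S(u)$ gives $\limsup_{t\downarrow 0}[v(u+td)-v(u)]/t\le\min_{x\in S(u)}\langle\nabla_u f(x,u),d\rangle$.

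The lower bound is the main obstacle. I would take a sequence $t_k\downarrow 0$ realizing the liminf of the difference quotient and choose $x_k\in S(u+t_k d)$ for each $k$. Compactness of $\Phi$ gives a subsequence (not relabeled) with $x_k\to\bar x$; continuity of $v$ and $f$ then force $f(\bar x,u)=\lim_k f(x_k,u+t_k d)=\lim_k v(u+t_k d)=v(u)$, so $\bar x\in S(u)$. Applying the mean value theorem to the scalar function $s\mapsto f(x_k,u+sd)$ on $[0,t_k]$ produces $\theta_k\in(0,1)$ with $f(x_k,u+t_k d)-f(x_k,u)=t_k\langle\nabla_u f(x_k,u+\theta_k t_k d),d\rangle$; since $v(u)\le f(x_k,u)$ this yields $[v(u+t_k d)-v(u)]/t_k\ge\langle\nabla_u f(x_k,u+\theta_k t_k d),d\rangle$. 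Letting $k\to\infty$, using $x_k\to\bar x$, $\theta_k t_k\to 0$ and continuity of $\nabla_u f$, the right-hand side converges to $\langle\nabla_u f(\bar x,u),d\rangle\ge\min_{x\in S(u)}\langle\nabla_u f(x,u),d\rangle$. Together with the upper bound this shows $v'(u,d)$ exists and equals $\min_{x\in S(u)}\langle\nabla_u f(x,u),d\rangle$.

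For the last assertion, when $S(u_0)=\{x_0\}$ the formula gives $v'(u_0,d)=\langle\nabla_u f(x_0,u_0),d\rangle$, which is linear in $d$; thus $v$ is G\^ateaux differentiable at $u_0$ with candidate gradient $g=\nabla_u f(x_0,u_0)$. To upgrade to full differentiability I would observe that $v$ is locally Lipschitz near $u_0$: on a bounded neighbourhood the gradients $\nabla_u f(x,\cdot)$, $x\in\Phi$, are uniformly bounded by continuity on a compact set, so the functions $f(x,\cdot)$ are uniformly Lipschitz there, and hence so is their infimum $v$. A locally Lipschitz function on a finite-dimensional space all of whose directional derivatives are linear in the direction is Fr\'echet differentiable: writing $h_k\to 0$ as $h_k=t_k d_k$ with $\|d_k\|=1$, extracting $d_k\to d$, and using $|v(u_0+t_k d_k)-v(u_0+t_k d)|\le L t_k\|d_k-d\|=o(t_k)$, any hypothetical failure of differentiability contradicts $v'(u_0,d)=\langle g,d\rangle$. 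Hence $v$ is differentiable at $u_0$ with $\nabla v(u_0)=\nabla_u f(x_0,u_0)$.
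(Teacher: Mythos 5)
The paper offers no proof of this lemma at all --- it is imported verbatim as a known result from \cite{Bon_98}, so there is nothing internal to compare your argument against. On its own merits, your proof is the standard and correct one: the upper bound on the difference quotient from testing with a fixed $x\in S(u)$, and the lower bound from picking minimizers $x_k\in S(u+t_kd)$, extracting a convergent subsequence in the compact set $\Phi$, identifying the limit as an element of $S(u)$ via continuity of $v$, and passing to the limit in the mean-value identity using joint continuity of $\nabla_u f$. The preliminary facts (nonemptiness and compactness of $S(u)$, continuity of $v$) are established correctly, and the bracketing yields exactly the claimed formula for $v'(u,d)$.

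The one place where your argument is narrower than the lemma's stated hypotheses is the final upgrade from G\^ateaux to full (Fr\'echet) differentiability when $S(u_0)$ is a singleton: you use compactness of $\Phi\times\overline{B}(u_0,r)$ to get a uniform Lipschitz constant, and compactness of the unit sphere to extract $d_k\to d$, both of which require $\calU$ to be finite-dimensional, whereas the lemma is stated for a general normed vector space (where the sharp conclusion is Hadamard rather than Fr\'echet differentiability). Since the paper only ever applies the lemma with $\calU$ a finite-dimensional matrix space ($\bbR^{F\times K}$, and $\calX=\calH\times\calR$), this restriction is harmless here, but it is worth flagging explicitly if the proof is meant to match the generality of the statement.
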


\begin{lemma}[The Maximum Theorem; {\cite[Theorem 14.2.1 \& Example 2]{Syd_05}}]\label{lem:min_cont}
Let $\calP$ and $\calX$ be two metric spaces. Consider a maximization problem
\begin{equation}
\max_{x\in B(p)} f(p,x), \label{eq:max_problem}
\end{equation}
where $B:\calP \twoheadrightarrow \calX$ is a correspondence and  $f:\calP\times \calX\to\bbR$ is a function. If $B$ is compact-valued and continuous on $\calP$ and $f$ is continuous on $\calP\times \calX$, then the correspondence $S(p) = \argmax_{x\in B(p)} f(p,x)$ is compact-valued and upper hemicontinuous, for any $p\in\calP$.
%\begin{enumerate}
%\item The correspondence $S(p) = \argmax_{x\in B(p)} f(p,x)$ is compact-valued and upper hemicontinuous, for any $p\in\calP$.
%\item The function $v(p) = \max_{x\in B(p)} f(p,x)$ is continuous on $\calP$. 
%\end{enumerate} 
In particular, if for some $p_0\in\calP$, $S(p_0)=\{s(p_0)\}$, where $s:\calP\to\calX$ is a function, then $s$ is continuous at $p=p_0$. Moreover, we have the same conclusions if the maximization in \eqref{eq:max_problem} is replaced by minimization. 
\end{lemma}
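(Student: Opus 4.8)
The plan is to reproduce the classical proof of Berge's maximum theorem for the maximization formulation; the minimization statement then follows verbatim by replacing $f$ with $-f$ (which swaps $\sup$ with $\inf$ and $\argmax$ with $\argmin$ while preserving all hypotheses), and the single-valued consequence is immediate once upper hemicontinuity of $S$ is in hand. Throughout I would write $v(p)\defeq\sup_{x\in B(p)} f(p,x)$ for the value function; since $B(p)$ is compact and $f(p,\cdot)$ is continuous, the Weierstrass extreme value theorem guarantees the supremum is attained, so $S(p)\neq\emptyset$ and $f(p,x)=v(p)$ for every $x\in S(p)$. The first (easy) step is to check that $S(p)$ is compact: if $x_n\in S(p)$ with $x_n\to x$, then $x\in B(p)$ because $B(p)$ is closed, and $f(p,x)=\lim_n f(p,x_n)=v(p)$ by continuity of $f(p,\cdot)$, so $x\in S(p)$; thus $S(p)$ is a closed subset of the compact set $B(p)$, hence compact.

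The second and main step is to prove that $v$ is continuous, which I would split into lower and upper semicontinuity. For lower semicontinuity, fix $p_n\to p$, pick $x^*\in S(p)$, and use lower hemicontinuity of $B$ to select $x_n\in B(p_n)$ with $x_n\to x^*$; then $v(p_n)\ge f(p_n,x_n)\to f(p,x^*)=v(p)$, so $\liminf_n v(p_n)\ge v(p)$. For upper semicontinuity, fix $p_n\to p$, choose $x_n\in S(p_n)$ so that $v(p_n)=f(p_n,x_n)$, and pass to a subsequence along which $v(p_n)$ tends to $\limsup_n v(p_n)$. The crux — and the step I expect to be the main obstacle — is a local boundedness argument: because $B$ is compact-valued and upper hemicontinuous at $p$, there is a neighbourhood $N$ of $p$ such that $\overline{\bigcup_{p'\in N}B(p')}$ is compact, so a further subsequence of $\{x_n\}$ converges, say $x_{n_k}\to x$, and the closed-graph characterization of upper hemicontinuity (valid for compact-valued correspondences) forces $x\in B(p)$. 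Then $v(p)\ge f(p,x)=\lim_k f(p_{n_k},x_{n_k})=\limsup_n v(p_n)$, and combining the two inequalities gives $v(p_n)\to v(p)$.

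The final step establishes upper hemicontinuity of $S$ through its closed graph combined with the same local compactness: given $p_n\to p$ and $x_n\in S(p_n)$ with $x_n\to x$ (the $x_n$ eventually lie in the compact set above, so it suffices to verify closedness of the graph), upper hemicontinuity of $B$ gives $x\in B(p)$, while continuity of $f$ and of $v$ give $f(p,x)=\lim_n f(p_n,x_n)=\lim_n v(p_n)=v(p)$, so $x\in S(p)$. Hence $S$ is compact-valued and upper hemicontinuous. When $S(p_0)=\{s(p_0)\}$ is a singleton, upper hemicontinuity at $p_0$ reduces to the statement that for every neighbourhood $U$ of $s(p_0)$ there is a neighbourhood of $p_0$ on which $S(p)\subseteq U$, i.e.\ $s(p)\in U$ — precisely continuity of $s$ at $p_0$ — which completes the argument.
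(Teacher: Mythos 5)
This lemma is Berge's maximum theorem; the paper does not prove it but imports it directly from \cite{Syd_05}, so there is no internal proof to compare against. Your argument is the standard textbook proof --- nonemptiness and compactness of $S(p)$ via Weierstrass, continuity of the value function $v$ split into lower semicontinuity (from lower hemicontinuity of $B$) and upper semicontinuity (from upper hemicontinuity of $B$), upper hemicontinuity of $S$ from the closed graph, and the singleton case read off from the definition of upper hemicontinuity --- and it is correct in substance, including the reduction of the minimization case to $-f$.

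The one step to tighten is the ``local boundedness'' claim that some neighbourhood $N$ of $p$ makes $\overline{\bigcup_{p'\in N}B(p')}$ compact. Upper hemicontinuity at $p$ only places $B(p')$ inside an $\epsilon$-neighbourhood of the compact set $B(p)$ for $p'$ in an $\epsilon$-dependent neighbourhood, and in a general metric space $\calX$ the closed $\epsilon$-neighbourhood of a compact set need not be compact, so the set you invoke need not be compact as stated. The conclusion you actually need --- that $x_n\in B(p_n)$ with $p_n\to p$ admits a subsequence converging to a point of $B(p)$ --- is nevertheless true and is the standard sequential characterization of upper hemicontinuity for compact-valued correspondences: since $\dist(x_n,B(p))\to 0$, pick $y_n\in B(p)$ with $d(x_n,y_n)\to 0$, extract $y_{n_k}\to y\in B(p)$ by compactness of $B(p)$, and conclude $x_{n_k}\to y$. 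With that substitution (or under the blanket assumption that $\calX$ is a subset of a Euclidean space, which covers the paper's application where $\calX=\calH\times\calR$ is compact), the proof is complete; the same remark applies to your closed-graph argument for the upper hemicontinuity of $S$ itself.
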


\begin{lemma}[Leibniz Integral Rule] \label{lem:Leib_int}
Let $\calX$ be an open set in $\bbR^n$ and let $(\Omega,\calA,\mu)$ be a measure space. If $f:\calX\times\Omega\to\bbR$ satisfies
\begin{enumerate}
\item For all $x\in\calX$, the mapping $\omega\mapsto f(x,\omega)$ is Lebesgue integrable. % on $\Omega$.
\item For all $\omega\in\Omega$, $\nabla_xf(x,\omega)$ exists on $\calX$.
%\item There exists $\Omega'\subseteq\Omega$ such that $\mu(\Omega')=1$ and $\forall\,\omega\in\Omega'$, $\nabla_xf(x,\omega)$ exists on $\calX$.
%\item For almost all $\omega\in\Omega$ and for all $x\in\calX$, there exists a nonnegative Lebesgue integrable function $g:\Omega\to\bbR_+$ such that $|\nabla_xf(x,\omega)|\le g(\omega)$.
\item For all $x\in\calX$,  the mapping $\omega\mapsto \nabla_x f(x,\omega)$ is Lebesgue integrable. % on $\Omega'$.
\end{enumerate}
Then $\int_\Omega f(x,\omega) \, d\mu(\omega)$ is differentiable and  
\begin{equation}
\nabla_x \int_\Omega f(x,\omega) \, d\mu(\omega) = \int_\Omega \nabla_x f(x,\omega) \, d\mu(\omega).
\end{equation}
\end{lemma}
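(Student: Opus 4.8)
The plan is to reduce to a one-variable statement and then invoke the dominated convergence theorem. Write $F(x) \defeq \int_\Omega f(x,\omega)\,d\mu(\omega)$, which is well defined on $\calX$ by hypothesis~1. It suffices to show that for each coordinate $i$ the partial derivative $\partial_{x_i}F$ exists on $\calX$ and equals $\int_\Omega \partial_{x_i}f(x,\omega)\,d\mu(\omega)$, and that this map is continuous in $x$; then $F$ has continuous partials, hence is differentiable on the open set $\calX$, with $\nabla F(x) = \int_\Omega \nabla_x f(x,\omega)\,d\mu(\omega)$. Fix $x_0 \in \calX$ and a direction $e_i$; since $\calX$ is open there is $\delta > 0$ with $\{x_0 + t e_i : |t| < \delta\} \subseteq \calX$.

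Next I would analyze the difference quotients. Let $(h_n)$ be an arbitrary sequence with $0 < |h_n| < \delta$ and $h_n \to 0$, and put
\begin{equation*}
\phi_n(\omega) \defeq \frac{f(x_0 + h_n e_i, \omega) - f(x_0, \omega)}{h_n}.
\end{equation*}
Each $\phi_n$ is $\mu$-integrable by hypothesis~1, and by hypothesis~2 (existence of $\nabla_x f$) one has $\phi_n(\omega) \to \partial_{x_i} f(x_0, \omega)$ pointwise in $\omega$, the limit being integrable by hypothesis~3. Applying the mean value theorem to $t \mapsto f(x_0 + t e_i, \omega)$ (differentiable on the segment by hypothesis~2) gives $\phi_n(\omega) = \partial_{x_i} f(x_0 + \theta_{n,\omega} h_n e_i, \omega)$ for some $\theta_{n,\omega} \in (0,1)$.

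The one genuinely delicate point is the interchange of limit and integral, which is not licensed by hypotheses~1--3 alone and needs a local integrable envelope for the partial derivatives: there should exist $\delta' \in (0,\delta]$ and a $\mu$-integrable $g$ with $|\partial_{x_i} f(x,\omega)| \le g(\omega)$ whenever $\norm{x - x_0} < \delta'$. In every invocation of this lemma in the paper such an envelope is available: there $\Omega = \calV$ is compact, $\mu = \bbP$ is a probability measure, and the integrand $\nabla_\bW \ell(\bv, \bW)$ is continuous on $\calV \times \calC$ (Lemma~\ref{lem:regularity}), hence bounded, so $g$ may be taken constant. Granting the envelope, the mean value representation shows $|\phi_n(\omega)| \le g(\omega)$ for $n$ large, so the dominated convergence theorem yields
\begin{equation*}
\lim_{n\to\infty} \int_\Omega \phi_n \, d\mu = \int_\Omega \partial_{x_i} f(x_0, \omega)\,d\mu(\omega).
\end{equation*}
Since $\int_\Omega \phi_n\,d\mu = (F(x_0 + h_n e_i) - F(x_0))/h_n$ and $(h_n)$ was arbitrary, $\partial_{x_i} F(x_0)$ exists and equals the right-hand side.

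Finally I would run the same dominated-convergence argument on $x \mapsto \partial_{x_i} f(x,\omega)$, using continuity of $x \mapsto \nabla_x f(x,\omega)$ (available wherever the lemma is applied) together with the same envelope $g$, to conclude that $x \mapsto \int_\Omega \partial_{x_i} f(x,\omega)\,d\mu(\omega)$ is continuous on $\calX$. As this holds for all $i$, $F$ is continuously differentiable, which completes the proof. I expect the envelope/domination requirement to be the only real obstacle: the argument is otherwise a routine application of the mean value theorem and dominated convergence, and the compactness and boundedness present in all of the paper's uses of the lemma supply the missing domination.
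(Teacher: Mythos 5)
The paper does not actually prove this lemma: it is stated as a quoted technical result, with a remark pointing to Billingsley's Theorem 16.8 for a version under weaker conditions, and it is used as a black box inside the proof of Lemma~\ref{lem:regularity}. Your argument is the standard proof of differentiation under the integral sign (mean value theorem applied to $t\mapsto f(x_0+te_i,\omega)$ to rewrite the difference quotient, then dominated convergence), so there is no in-paper proof to compare it against. Your central observation is correct and worth stressing: hypotheses 1)--3) as literally stated do not license the interchange of limit and integral, since integrability of $\omega\mapsto\nabla_x f(x,\omega)$ at each \emph{fixed} $x$ gives no control over the difference quotients $\phi_n$; the classical statement additionally assumes a local integrable envelope $g$ with $\abs{\partial_{x_i}f(x,\omega)}\le g(\omega)$ for all $x$ near $x_0$, and without it the conclusion can fail. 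Your repair --- taking $g$ constant because in the paper's only invocation $\Omega=\calV$ is compact, $\mu=\bbP$ is a probability measure, and $\nabla_\bW\ell(\bv,\bW)$ is continuous on the compact set $\calV\times\calC$ and hence bounded --- is exactly the situation in the proof of Lemma~\ref{lem:regularity}, where continuity of $\nabla_\bW\ell$ is established \emph{before} the integral rule is invoked, so the envelope is available at the point of use and the argument is sound where it matters. In short: your proof is the right one, it correctly identifies that the lemma as stated omits the domination hypothesis, and it verifies that this omission is harmless for the paper's purposes.
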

\begin{remark}
This is a simplified version of the Leibniz Integral Rule. See \cite[Theorem 16.8]{Bill_86} for weaker conditions on $f$.
\end{remark}

\begin{definition}[Quasi-martingale; {\cite[Definition 1.4]{Fisk_65}}]\label{def:quasi_mart}
Let $\{X_t\}_{t\in\calT}$ be a stochastic process and %defined on $(\Omega,\calF,\bbP)$ 
let $\{\calF_t\}_{t\in\calT}$ be the filtration to which %the stochastic process 
$\{X_t\}_{t\in\calT}$ is adapted, where $\calT$ is a subset of the real line. We call $\{X_t\}_{t\in\calT}$ a quasi-martingale if there exist two stochastic processes $\{Y_t\}_{t\in\calT}$ and $\{Z_t\}_{t\in\calT}$ such that \\
\hspace*{.25cm} i) both $\{Y_t\}_{t\in\calT}$ and $\{Z_t\}_{t\in\calT}$ are adapted to $\{\calF_t\}_{t\in\calT}$, \\
\hspace*{.25cm} ii) $\{Y_t\}_{t\in\calT}$ is a martingale and $\{Z_t\}_{t\in\calT}$ has bounded variations on $\calT$ a.s.,\\
\hspace*{.25cm} iii) $X_t = Y_t + Z_t$, for all $t\in\calT$ a.s.. 
\end{definition}

\begin{lemma}[The Quasi-martingale Convergence Theorem; {\cite[Theorem 9.4 \& Proposition 9.5]{Met_82}}] \label{lem:quasi_conv}
Let $(u_t)_{t\ge 1}$ be a nonnegative discrete-time stochastic process on a probability space $(\Omega,\calA,\bbP)$, i.e., $u_t\ge 0 \;a.s.$, for all $t\ge 1$. Let $\{\calF_t\}_{t\ge 1}$ be a filtration to which $(u_t)_{t\ge 1}$ is adapted. Define another binary stochastic process $(\delta_t)_{t\ge 1}$ as
\begin{equation}
\delta_t = \begin{cases}
1, & \mbox{if} \;\; \bbE[u_{t+1}-u_t|\calF_t] >0 \\
0, & \mbox{otherwise}
\end{cases}.
\end{equation}
If $\sum_{t=1}^\infty \bbE[\delta_t(u_{t+1}-u_t)]<\infty$, then $u_t\convas u$, where $u$ is integrable on $(\Omega,\calF,\bbP)$ and nonnegative a.s.. Furthermore, $(u_t)_{t\ge 1}$ is a quasi-martingale and
\begin{equation}
\sum_{t=1}^\infty \bbE[\abs{\bbE[u_{t+1}-u_t|\calF_t]}] < \infty.
\end{equation}
\end{lemma}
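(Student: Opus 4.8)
The plan is to prove this by the classical Rao-type argument: decompose $(u_t)_{t\ge 1}$ into a martingale plus a predictable process of finite total variation, and then invoke Doob's martingale convergence theorem. Throughout, I would write $d_t \defeq \bbE[u_{t+1}-u_t\,|\,\calF_t]$ (well-defined since each $u_t\in L^1$, as is implicit in the statement) and split $d_t = d_t^+ - d_t^-$ into its positive and negative parts. The first step is the simple but essential observation that $\delta_t = \mathbf{1}\{d_t>0\}$ is $\calF_t$-measurable, so $\bbE[\delta_t(u_{t+1}-u_t)\,|\,\calF_t] = \delta_t d_t = d_t^+$, and hence the hypothesis $\sum_t\bbE[\delta_t(u_{t+1}-u_t)]<\infty$ is exactly the statement $\sum_t \bbE[d_t^+]<\infty$.

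Second, I would extract a uniform $L^1$ bound on $(u_t)$ and control $\sum_t\bbE[d_t^-]$ at the same time, using \emph{only} this one-sided summability together with $u_t\ge 0$. Taking expectations in $u_{t+1}-u_t$ and telescoping gives $\bbE[u_{t+1}] = \bbE[u_1] + \sum_{s=1}^t(\bbE[d_s^+]-\bbE[d_s^-])$; since $\bbE[u_{t+1}]\ge 0$, this yields both $\sup_t\bbE[u_t] \le \bbE[u_1] + \sum_s\bbE[d_s^+] =: C < \infty$ and $\sum_t\bbE[d_t^-] \le \bbE[u_1] + \sum_s\bbE[d_s^+] < \infty$. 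Adding the two series, $\sum_t\bbE\abs{d_t} = \sum_t\bbE[d_t^+] + \sum_t\bbE[d_t^-] < \infty$, which already establishes the final displayed assertion $\sum_t\bbE[\abs{\bbE[u_{t+1}-u_t\,|\,\calF_t]}]<\infty$.

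Third, I would set up the decomposition. Let $A_t \defeq \sum_{s=1}^{t-1} d_s$ (with $A_1:=0$), which is $\calF_{t-1}$-measurable, and $M_t \defeq u_t - A_t$; then $\bbE[M_{t+1}-M_t\,|\,\calF_t] = d_t - d_t = 0$, so $(M_t)$ is a martingale. Writing $A_t = \sum_{s<t}d_s^+ - \sum_{s<t}d_s^-$ as a difference of two nondecreasing predictable processes, monotone convergence and the summability of $\bbE[d_s^\pm]$ show that $\sum_{s\ge 1}d_s^+$ and $\sum_{s\ge 1}d_s^-$ are finite a.s.; hence $A_t$ converges a.s.\ to a finite limit $A_\infty$ and has total variation $\sum_s\abs{d_s}<\infty$ a.s. Moreover $\bbE\abs{M_t} \le \bbE[u_t] + \bbE\abs{A_t} \le C + \sum_s\bbE[d_s^+] + \sum_s\bbE[d_s^-] < \infty$ uniformly in $t$, so Doob's convergence theorem gives $M_t\convas M_\infty$ with $M_\infty$ integrable. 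Therefore $u_t = M_t + A_t \convas u \defeq M_\infty + A_\infty$; nonnegativity of $u$ is inherited from $u_t\ge 0$, and integrability follows from Fatou's lemma, $\bbE[u] \le \liminf_t\bbE[u_t] \le C$. Finally, $u_t = M_t + A_t$ with $(M_t)$ a martingale and $(A_t)$ of bounded variation a.s.\ is precisely the assertion (Definition~\ref{def:quasi_mart}) that $(u_t)$ is a quasi-martingale.

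The main obstacle I anticipate is the bookkeeping in the second step: extracting a uniform $L^1$ bound on $u_t$ (and the finiteness of $\sum_t\bbE[d_t^-]$) from the purely one-sided hypothesis, via the telescoping identity and $u_{t+1}\ge 0$. Once that estimate is in hand, the remainder is a routine application of the predictable-compensator construction and Doob's theorem; the only other point to handle with care is the $\calF_t$-measurability of $\delta_t$ in the first step, which is what lets the conditional expectation be pulled through to identify $\bbE[\delta_t(u_{t+1}-u_t)]$ with $\bbE[d_t^+]$.
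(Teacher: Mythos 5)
Your proof is correct, but note that the paper does not actually prove this lemma: it is imported verbatim as a black-box citation to M\'etivier \cite[Theorem 9.4 \& Proposition 9.5]{Met_82} in the technical-lemmas section of the supplement, so there is no in-paper argument to compare against. What you have written is the standard Rao-type proof, and every step holds up: the identification $\bbE[\delta_t(u_{t+1}-u_t)]=\bbE[d_t^+]$ is legitimate precisely because $\delta_t$ is $\calF_t$-measurable; the telescoping identity $\bbE[u_{t+1}]=\bbE[u_1]+\sum_{s\le t}(\bbE[d_s^+]-\bbE[d_s^-])$ together with $u_{t+1}\ge 0$ does simultaneously deliver $\sup_t\bbE[u_t]<\infty$ and $\sum_t\bbE[d_t^-]<\infty$, which is the one genuinely non-routine step and the one you correctly flagged; and the predictable compensator $A_t=\sum_{s<t}d_s$ then has a.s.\ finite total variation by monotone convergence, so $M_t=u_t-A_t$ is an $L^1$-bounded martingale and Doob's theorem finishes the almost sure convergence, with nonnegativity and integrability of the limit following from $u_t\ge 0$ and Fatou. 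The decomposition $u_t=M_t+A_t$ also matches the paper's Definition~\ref{def:quasi_mart} of a quasi-martingale, so all three conclusions are covered. The only hypothesis you lean on beyond the literal statement is $u_t\in L^1$ for each $t$, which is indeed implicit (the conditional expectations and the expectations in the hypothesis would otherwise be undefined) and is part of the standard formulation in \cite{Met_82}; it would be worth stating it explicitly as a standing assumption rather than parenthetically.
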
 

\begin{lemma}[Donsker's Theorem; {\cite[Section 19.2]{Vaart_00}}]\label{lem:Donsker}
Let $X_1,\ldots,X_n$ be \iid generated from a distribution $\bbP$. Define the empirical distribution $\bbP_n \defeq \frac{1}{n}\sum_{i=1}^n\delta_{X_i}$. For a measurable function $f$, define $\bbP_n f$ and $\bbP f$ as the expectations of $f$ under the distributions $\bbP_n$ and $\bbP$ respectively. Define an empirical process $G_n(f) \defeq \sqrt{n}(\bbP_n f - \bbP f)$, $f\in\calF$, where $\calF$ is a class of measurable functions. $\calF$ is $\bbP$-Donsker if and only if the sequence of empirical processes $\{G_n\}_{n\ge 1}$ converges in distribution to a zero-mean Gaussian process $G$ tight in $\ell^\infty(\calF)$, where $\ell^\infty(\calF)$ is the space of all real-valued and bounded functionals defined on $\calF$ equipped with the uniform norm on $\calF$, denoted as $\norm{\cdot}_\calF$. 
%\begin{equation}
%\ell^\infty(\calF) = \left\{F:\calF\to\bbR\,\bigg|\,\norm{F}_\calF <\infty\right\}.
%\end{equation}
%where $\norm{\cdot}_\calF$ denotes the uniform norm on $\calF$. 
Moreover, in such case, we have $\bbE\norm{G_n}_\calF\to\bbE\norm{G}_\calF$. 
\end{lemma}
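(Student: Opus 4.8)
The ``if and only if'' in the statement is, for all practical purposes, the definition of a $\bbP$-Donsker class together with the elementary fact that a tight weak limit of the empirical processes $G_n$ is automatically a zero-mean Gaussian process (its finite-dimensional marginals being multivariate-normal limits of normalized sums of \iid mean-zero vectors). So I would treat the equivalence as definitional and simply cite \cite[Section 19.2]{Vaart_00}. The one assertion with genuine content is the last line, $\bbE\norm{G_n}_\calF\to\bbE\norm{G}_\calF$, where all expectations are outer expectations since $\omega\mapsto\norm{G_n}_\calF$ need not be Borel measurable. The plan is to derive it from two ingredients: weak convergence of the scalar functionals $\norm{G_n}_\calF$, and asymptotic uniform integrability of that same sequence.

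For the first ingredient I would apply the continuous mapping theorem in $\ell^\infty(\calF)$: the map $z\mapsto\norm{z}_\calF$ is $1$-Lipschitz, hence continuous, and $G_n\convd G$ in $\ell^\infty(\calF)$ with $G$ tight by hypothesis, so $\norm{G_n}_\calF\convd\norm{G}_\calF$ as real random variables (tightness of $G$ being what legitimizes the portmanteau/continuous-mapping machinery in the nonseparable space $\ell^\infty(\calF)$). For the second ingredient I would show $\sup_n\bbE\norm{G_n}_\calF^{p}<\infty$ for some $p\in(1,2)$, which gives uniform integrability of $\{\norm{G_n}_\calF\}_{n\ge1}$. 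The tool is the Hoffmann--J\o{}rgensen inequality: writing $G_n=n^{-1/2}\sum_{i=1}^n(\delta_{X_i}-\bbP)$ as a normalized sum of independent mean-zero elements of $\ell^\infty(\calF)$ and symmetrizing, Hoffmann--J\o{}rgensen bounds $\bbE\norm{G_n}_\calF^p$ by a constant times $(\bbE\norm{G_n}_\calF)^p$ plus a term governed by the envelope of $\calF$; the Donsker hypothesis forces the envelope into (weak) $L^2(\bbP)$ and forces $\sup_n\bbE\norm{G_n}_\calF<\infty$, so the bound propagates to the exponent $p$. With both ingredients in hand, a Skorokhod representation makes $\norm{G_n}_\calF\to\norm{G}_\calF$ hold almost surely along a coupling, and Vitali's convergence theorem then yields $\bbE\norm{G_n}_\calF\to\bbE\norm{G}_\calF$.

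The main obstacle is the uniform integrability step. It is not a consequence of weak convergence alone; it genuinely needs the Hoffmann--J\o{}rgensen inequality (or an equivalent maximal inequality) together with the structural facts about Donsker classes --- total boundedness of $\calF$ in $L^2(\bbP)$, the weak-$L^2$ envelope condition, and boundedness of $\bbE\norm{G_n}_\calF$ --- plus careful bookkeeping with outer expectations and perfect measurable maps. Once that uniform moment control is established, the passage from convergence in distribution to convergence of means is routine.
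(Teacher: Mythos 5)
This lemma is one of the paper's quoted technical tools: it is stated with the citation to \cite[Section 19.2]{Vaart_00} and the paper supplies no proof of it at all, so there is no in-paper argument to compare yours against. Your reading of the statement is the right one --- the ``if and only if'' is the definition of a $\bbP$-Donsker class (plus the standard observation that the finite-dimensional marginals of any tight limit of $G_n$ are Gaussian by the multivariate CLT), and the only assertion with content is the convergence of (outer) expectations. Your sketch of that part is the standard literature proof and is sound: continuous mapping for the $1$-Lipschitz functional $z\mapsto\norm{z}_\calF$ gives $\norm{G_n}_\calF\convd\norm{G}_\calF$, and asymptotic uniform integrability upgrades this to convergence of means via an almost-sure (Dudley--Wichura) representation and Vitali. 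The one step worth spelling out more carefully is the moment bound: Hoffmann--J{\o}rgensen controls $\bbE\norm{G_n}_\calF^{p}$ by $(\bbE\norm{G_n}_\calF)^{p}$ plus the $p$-th moment of the maximal summand $\max_{i\le n}F(X_i)/\sqrt{n}$, and it is the weak-$L^2$ envelope property of Donsker classes ($\bbP(F>x)=o(x^{-2})$) that makes this maximal term uniformly bounded in $L^{p}$ for $p\in(1,2)$; without that envelope control the argument would not close. Since the paper uses the lemma only as a black box (to bound $\bbE\bigl[\norm{\sqrt{t}(f-f_t)}_\calC\bigr]$ by a constant in the proof of Theorem~\ref{thm:as_conv}), your proposal neither conflicts with nor duplicates anything in the paper; it simply fills in a proof the authors delegated to the reference.
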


\begin{lemma}[Glivenko-Cantelli theorem; {\cite[Section 19.2]{Vaart_00}}]\label{lem:GlivenkoCantelli}
Let $\bbP_n$ and $\bbP$ be the distributions defined as in Lemma~\ref{lem:Donsker}. A class of measurable functions $\calF$ is $\bbP$-Glivenko-Cantelli if and only if $\;\sup_{f\in\calF}\abs{\bbP_n f - \bbP f}\convas 0$. %where $\norm{\cdot}_\calF$ is the uniform norm on $\calF$.
\end{lemma}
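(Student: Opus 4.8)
My plan is to first observe that, as stated, this lemma is little more than the \emph{definition} of a $\bbP$-Glivenko--Cantelli class — a class $\calF$ of measurable functions is called $\bbP$-Glivenko--Cantelli precisely when $\sup_{f\in\calF}\abs{\bbP_n f - \bbP f}\convas 0$, with the supremum read through outer expectations when it is not measurable. So at the level at which we actually use it — passing from ``$\{\ell(\cdot,\bW):\bW\in\calC\}$ is $\bbP$-Glivenko--Cantelli'' to ``$\norm{f_t-f}_\calC\convas 0$'' — there is nothing to prove beyond unwinding the definition, and a pointer to \cite[Section~19.2]{Vaart_00} suffices. All the real content is in \emph{verifying} that our particular class is $\bbP$-Glivenko--Cantelli, which I would delegate to Lemma~\ref{lem:suff_Donsker} (van der Vaart's Example~19.7): a uniformly bounded family that is Lipschitz in the parameter over a compact parameter set has finite $L_1(\bbP)$ bracketing numbers, which is the classical sufficient condition.

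If a self-contained argument for that sufficient condition is wanted, I would run the standard bracketing proof. Fix $\eps>0$ and choose finitely many brackets $[l_i,u_i]$, $i=1,\dots,N$, with $\bbP(u_i-l_i)<\eps$, whose union covers $\calF$. For $f$ in bracket $i$, $\bbP_n f-\bbP f\le (\bbP_n-\bbP)u_i+\bbP(u_i-f)\le (\bbP_n-\bbP)u_i+\eps$ and, symmetrically, $\bbP_n f-\bbP f\ge (\bbP_n-\bbP)l_i-\eps$; hence $\sup_{f\in\calF}\abs{\bbP_n f-\bbP f}\le \max_{i\le N}\abs{(\bbP_n-\bbP)u_i}+\max_{i\le N}\abs{(\bbP_n-\bbP)l_i}+\eps$. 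Because $N$ is finite, the strong law of large numbers applied to each integrable $u_i$ and $l_i$ gives $(\bbP_n-\bbP)u_i\convas 0$ and $(\bbP_n-\bbP)l_i\convas 0$, so $\limsup_n\sup_{f\in\calF}\abs{\bbP_n f-\bbP f}\le\eps$ almost surely; letting $\eps\downarrow 0$ along a countable sequence finishes it. For the textbook empirical-CDF instance, the bracketing step can be replaced by the elementary ``finite grid plus monotonicity of the CDF'' construction.

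The only genuine subtlety — the ``hard part'' such as there is one — is measurability: $\sup_{f\in\calF}\abs{\bbP_n f-\bbP f}$ need not be a random variable for an arbitrary class, so in full generality the almost-sure statement must be phrased via outer probability and one must check the countable reductions are legitimate. In our setting this evaporates: the class $\{\ell(\cdot,\bW):\bW\in\calC\}$ is indexed by the compact metric space $\calC$ and $\bW\mapsto\ell(\bv,\bW)$ is continuous (Lemma~\ref{lem:regularity}), so the supremum over $\calC$ coincides with the supremum over a countable dense subset and is measurable, and everything above holds with ordinary probability. I would therefore simply cite \cite[Theorem~19.4 and Section~19.2]{Vaart_00} and remark that separability makes the measurability automatic here.
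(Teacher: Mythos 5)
Your proposal is correct and matches the paper's treatment: the paper offers no proof of this lemma at all, precisely because, as you observe, the statement is essentially the definition of a $\bbP$-Glivenko--Cantelli class and is simply cited from \cite[Section~19.2]{Vaart_00}. Your additional bracketing argument and measurability remarks are sound, but note that they really constitute a proof of the sufficient condition in Lemma~\ref{lem:suff_Donsker} (the paper likewise delegates that verification to \cite[Example~19.7]{Vaart_00}) rather than of the present statement.
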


\begin{lemma}[A sufficient condition for $\bbP$-Glivenko-Cantelli and $\bbP$-Donsker classes; {\cite[Example 19.7]{Vaart_00}}]\label{lem:suff_Donsker}
Define a probability space $(\calX,\calA,\bbP)$. Let $\calF = \{f_\theta:\calX\to\bbR\,|\,\theta\in\Theta\}$ be a class of measurable functions, where %$\calX\subseteq\bbR^n$ and 
$\Theta$ is a bounded subset in $\bbR^d$. If there exists a universal constant $K>0$ such that %measurable function $m:\calX\to\bbR$ such that 
\begin{equation}
\abs{f_{\theta_1}(x) - f_{\theta_2}(x)} \le K\norm{\theta_1 - \theta_2}, \;\forall\,\theta_1,\theta_2\in\Theta, \;\forall\,x\in\calX,
\end{equation}
where $\norm{\cdot}$ is a general vector norm in $\bbR^d$, then $\calF$ is both $\bbP$-Glivenko-Cantelli and $\bbP$-Donsker. 
\end{lemma}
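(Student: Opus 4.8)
The plan is to control the bracketing numbers of $\calF$ in $L_r(\bbP)$ and then invoke the bracketing versions of the Glivenko--Cantelli and Donsker theorems from empirical process theory. Since $\Theta$ is bounded in $\bbR^d$, it has finite diameter $D$, and for each $\delta>0$ it admits a $\delta$-cover $\{\theta_1,\dots,\theta_{m(\delta)}\}$ with $m(\delta)\le(cD/\delta)^d$ for a constant $c=c(d)$ (a standard volumetric estimate). For each center $\theta_i$ I would form the bracket $[\,\ell_i,u_i\,]$ with $\ell_i\defeq f_{\theta_i}-K\delta$ and $u_i\defeq f_{\theta_i}+K\delta$: the uniform Lipschitz hypothesis guarantees that for every $x\in\calX$ and every $\theta$ in the $i$-th ball, $\ell_i(x)\le f_\theta(x)\le u_i(x)$, so these brackets cover $\calF$, and each has $L_r(\bbP)$-size $\norm{u_i-\ell_i}_{\bbP,r}=2K\delta$ for all $r\ge1$ (the difference is the constant function $2K\delta$). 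Choosing $\delta=\epsilon/(2K)$ yields
\begin{equation}
N_{[\,]}\!\left(\epsilon,\calF,L_r(\bbP)\right)\le\left(\frac{2cDK}{\epsilon}\right)^{d}<\infty,\qquad\forall\,\epsilon>0,\ r\ge1. \label{eq:brk}
\end{equation}

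From \eqref{eq:brk} both conclusions follow quickly. Taking $r=1$, the bracketing number is finite for every $\epsilon>0$, which is exactly the hypothesis of the bracketing Glivenko--Cantelli theorem (see \cite[Theorem~19.4]{Vaart_00}); hence $\sup_{f\in\calF}\abs{\bbP_n f-\bbP f}\convas0$ and $\calF$ is $\bbP$-Glivenko--Cantelli. Taking $r=2$ in \eqref{eq:brk} gives $\log N_{[\,]}(\epsilon,\calF,L_2(\bbP))\le d\log(2cDK/\epsilon)$, so the bracketing entropy integral satisfies $\int_0^1\sqrt{\log N_{[\,]}(\epsilon,\calF,L_2(\bbP))}\,d\epsilon\le\sqrt d\int_0^1\sqrt{\log(2cDK/\epsilon)}\,d\epsilon<\infty$ because $\epsilon\mapsto\sqrt{\log(1/\epsilon)}$ is integrable near $0$. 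Fixing any $\theta_0\in\Theta$, the Lipschitz bound also provides the envelope $\abs{f_\theta(x)}\le\abs{f_{\theta_0}(x)}+KD$, which is in $L_2(\bbP)$ in all the uses here (there $\calX=\calV$ is compact and the $f_\theta$ are continuous, hence bounded). The finite entropy integral together with the square-integrable envelope is the hypothesis of the bracketing Donsker theorem (see \cite[Theorem~19.5]{Vaart_00}), so $\calF$ is $\bbP$-Donsker as well.

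The step I would take the most care with --- and really the only one worth flagging --- is the integrability/envelope bookkeeping: the statement as phrased implicitly assumes $\calF\subseteq L_2(\bbP)$ (needed for $\bbP f$ and the limiting Gaussian process to be well defined), so one should either note this tacit requirement or, as in our applications, observe that it holds automatically because the $f_\theta$ are uniformly bounded. Everything else is a routine packaging of standard bracketing-number estimates, with no genuinely hard step; this is precisely why the original source states it without proof.
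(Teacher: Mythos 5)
Your bracketing argument is correct and is essentially the proof of the cited result: the paper states this lemma without proof, deferring to \cite[Example 19.7]{Vaart_00}, whose argument is exactly your construction of $2K\delta$-brackets $f_{\theta_i}\pm K\delta$ from a $\delta$-net of the bounded set $\Theta$, followed by the bracketing Glivenko--Cantelli and Donsker theorems. Your closing remark about the tacit integrability/envelope requirement is the right thing to flag, and it is indeed satisfied in the paper's application, where $\calX=\calV$ is compact and the functions $\ell(\cdot,\bW)$ are uniformly bounded.
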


\begin{lemma}[{\cite[Lemma 8]{Mairal_10}}]\label{lem:conv_nonneg}
Let $(a_n)$, $(b_n)$ be two nonnegative sequences. Suppose $\sum_{n=1}^\infty a_n = \infty$ and $\sum_{n=1}^\infty a_nb_n <\infty$, and $\exists\,N\in\bbN$ and $K>0$ such that for all $n\ge N$, $|b_{n+1}-b_n|\le Ka_n$. Then $(b_n)$ converges and $\lim_{n\to\infty} b_n = 0$.
\end{lemma}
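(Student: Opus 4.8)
The plan is to prove the statement by contradiction, deducing $b_n\to 0$ from the three hypotheses. First I would show that $\liminf_{n\to\infty}b_n=0$: if instead $\liminf_n b_n=2c>0$, then $b_n\ge c$ for all $n$ past some index $M$, so $\sum_{n\ge M}a_nb_n\ge c\sum_{n\ge M}a_n=+\infty$, contradicting $\sum_n a_n b_n<\infty$. Since the $b_n$ are nonnegative, it then remains only to rule out $\limsup_n b_n>0$; once we know $\liminf_n b_n=\limsup_n b_n=0$, we get $b_n\to 0$.

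So suppose for contradiction $\limsup_n b_n=\delta>0$, and fix the two levels $\delta/3<\delta/2$. From the values of $\liminf$ and $\limsup$ there are infinitely many indices $n\ge N$ with $b_n>\delta/2$ and infinitely many with $b_n<\delta/3$. I would use this to build, by induction on $\ell$, pairwise disjoint index blocks $[p_\ell,q_\ell-1]\subseteq\{n:n\ge N\}$: setting $q_0=N$, choose $p_\ell\ge q_{\ell-1}$ with $b_{p_\ell}>\delta/2$, and let $q_\ell$ be the smallest index $>p_\ell$ with $b_{q_\ell}\le\delta/3$ (both exist by the infinitely-often statements, and $p_{\ell+1}\ge q_\ell$ makes the blocks disjoint). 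By minimality of $q_\ell$, $b_n>\delta/3$ for \emph{every} $n$ in $\{p_\ell,\dots,q_\ell-1\}$. On this block the hypothesis $|b_{n+1}-b_n|\le Ka_n$ and the telescoping triangle inequality give $\delta/6<b_{p_\ell}-b_{q_\ell}\le K\sum_{n=p_\ell}^{q_\ell-1}a_n$, hence $\sum_{n=p_\ell}^{q_\ell-1}a_n\ge\delta/(6K)$; combined with $b_n>\delta/3$ on the block this yields $\sum_{n=p_\ell}^{q_\ell-1}a_nb_n\ge\delta^2/(18K)$. Summing over $\ell=1,\dots,L$ and using disjointness of the blocks gives $\sum_{n\ge N}a_nb_n\ge L\,\delta^2/(18K)$ for all $L$, i.e.\ $\sum_n a_nb_n=+\infty$, the desired contradiction. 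Therefore $\limsup_n b_n=0$ and $b_n\to 0$.

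The step I expect to be the real obstacle is choosing the \emph{right} family of excursions of $(b_n)$ to account for. The naive choice---blocks on which $b$ rises from below $\delta/3$ to above $\delta/2$---does not work: $b$ could make that ascent in a single step at an index $p$ where $b_p$ itself is tiny, so that step contributes essentially nothing to $\sum_n a_nb_n$ even though it consumes a fixed amount $\gtrsim\delta/(6K)$ of the ``$a$-budget'', and iterating such configurations never produces a contradiction. The remedy, used above, is to track \emph{down}-crossings instead: on a block that begins where $b$ is already large ($>\delta/2$) and ends at its first return below $\delta/3$, every index of the block---including the first---has $b_n>\delta/3$, so the estimate $\sum a_nb_n\ge(\delta/3)\sum a_n$ carries no exceptional term. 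The remaining ingredients (the inductive construction of infinitely many disjoint blocks and the telescoping bound on $\sum a_n$ over a block) are routine.
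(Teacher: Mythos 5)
Your proof is correct. Note, however, that the paper does not actually prove this statement: Lemma~\ref{lem:conv_nonneg} is imported verbatim from Mairal et al.\ (\cite[Lemma 8]{Mairal_10}) and stated in the technical-lemmas section without proof, so there is no in-paper argument to compare against. Your argument is the standard one for this classical lemma: $\liminf_n b_n=0$ follows from $\sum_n a_n=\infty$ and $\sum_n a_nb_n<\infty$, and $\limsup_n b_n>0$ is excluded by extracting infinitely many disjoint down-crossing blocks on which $b_n$ stays above a fixed level, each of which must consume at least $\delta/(6K)$ of the $a$-mass by the telescoped increment bound and hence contributes a fixed positive amount to $\sum_n a_nb_n$. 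Your remark about why one must track excursions that \emph{start} high (down-crossings) rather than ascents from a low level is exactly the right subtlety, and the construction is airtight: every index in a block is at least $N$, the blocks are disjoint since $p_{\ell+1}\ge q_\ell$, and the minimality of $q_\ell$ guarantees $b_n>\delta/3$ throughout the block. No gaps.
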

%\begin{proof}
%Suppose $\lim_{n\to\infty} b_n$ does not exist, then $\limsup b_n > \liminf b_n \ge 0$. Thus $\exists\,\epsilon>0$ such that $\limsup b_n \ge \epsilon$. Consider two sequences of indices $(m_j)$ and $(k_j)$ such that i) $m_j < k_j < m_{j+1}$ and ii) $b_n > \epsilon/3$ if $m_j \le n < k_j$ and $b_n \le \epsilon/3$ if $k_j \le n < m_{j+1}$, $\forall\,j\in\bbN$. Define $\calJ\defeq\{j\in\bbN\,|\,m_j \ge N \,\mbox{and}\, \sum_{n=m_j}^\infty a_nb_n \le \epsilon^2/(9K)\}$, then for all $j\in\calJ$ and $m_j \le n < k_j$,
%\begin{align*}
%|b_{k_j} - b_n| &\le \sum_{i=n}^{k_j-1} |b_{i+1} - b_i|
%\le K \sum_{i=n}^{k_j-1} a_i b_{i}/b_{i}
%< \frac{3K}{\epsilon} \sum_{i=n}^{k_j-1} a_ib_i 
%\le \frac{3K}{\epsilon} \cdot \frac{\epsilon^2}{9K}
%= \frac{\epsilon}{3}. 
%\end{align*}
%Thus, $b_n \le b_{k_j} + \epsilon/3 \le 2\epsilon/3$. We conclude that for a fixed $j\in\calJ$ and $n\ge m_j$, $b_n\le 2\epsilon/3$ and consequently, $\limsup b_n \le 2\epsilon/3$. This contradicts the assumption that  $\limsup b_n \ge \epsilon$. Thus, $\lim_{n\to\infty} b_n$ exists. By contradiction, it is easy to show $\lim_{n\to\infty} b_n=0$. 
%\end{proof}

\section{Additional experiment results}\label{sec:exp_res}
This section consists of two parts. In the first part, we show the convergence speeds of all the online and batch algorithms on the (contaminated) CBCL face dataset for different values of the mini-batch size $\tau$, the latent dimension $K$, the penalty parameter $\rho$ in the ADMM-based algorithms, the step-size parameter $\kappa$ in the PGD-based algorithms and the (salt and pepper) noise density parameters $\nu$ and $\tnu$ in Figure~\ref{fig:canon2} to \ref{fig:tnu2}. As mentioned in Section~\ref{sec:conv_spd}, all the convergence results on the CBCL face dataset agree with those on the synthetic dataset. In the second part, we show the quality of the denoised images of all the algorithms on the CBCL face dataset for $K=25$ and $K=100$ in Table~\ref{tab:PSNRs} (a) and (b) respectively. The corresponding running times of all the algorithms for $K=25$ and $K=100$ are shown in Table~\ref{tab:times} (a) and (b) respectively. As stated in Section~\ref{sec:img_denois}, the results when $K=25$ and $K=100$ are similar to those when $K=49$. 

\begin{figure}[p!]\centering
\subfloat[]{\includegraphics[width=.4\columnwidth,height=.32\columnwidth]{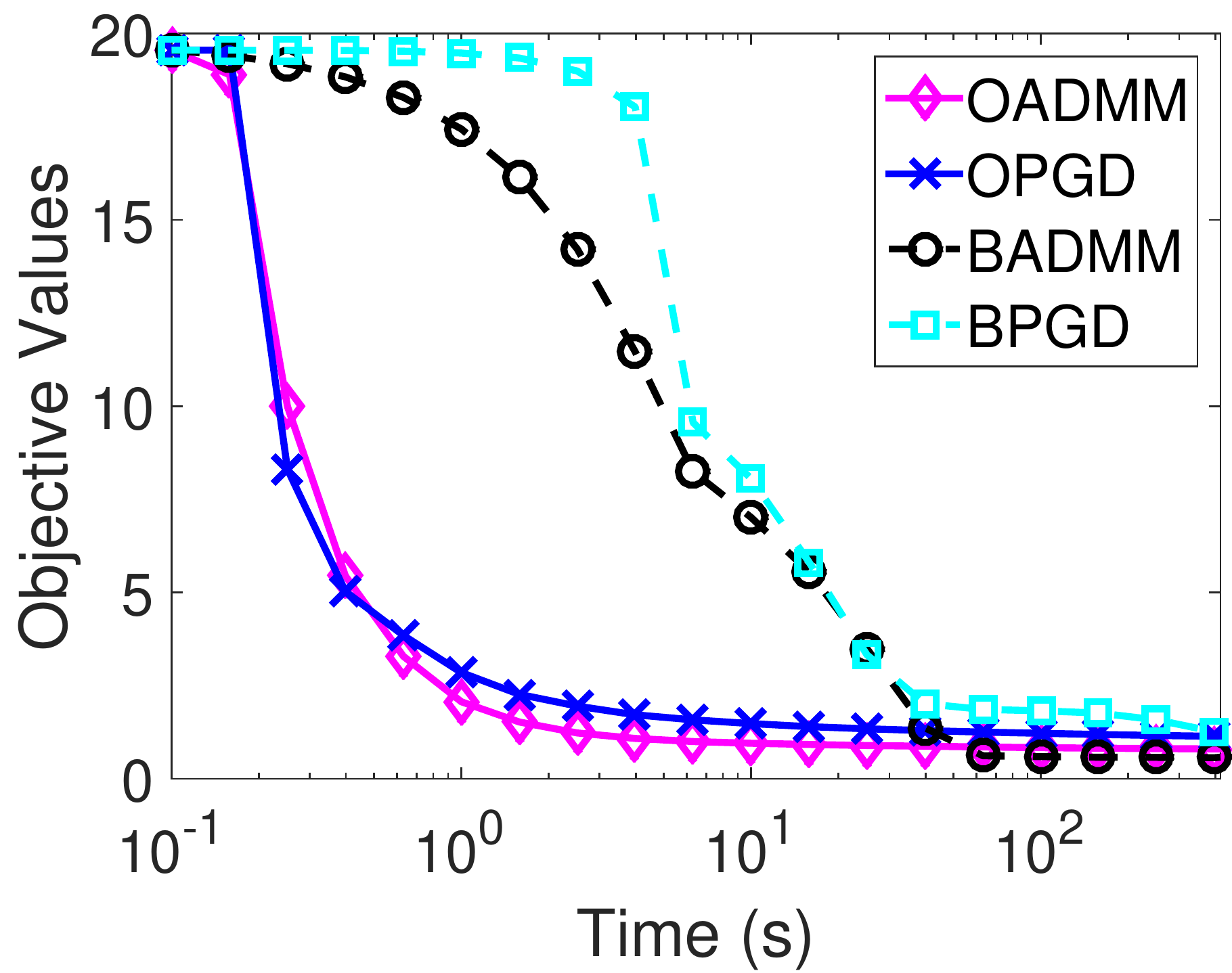}}\hspace{1cm}
\subfloat[]{\includegraphics[width=.4\columnwidth,height=.31\columnwidth]{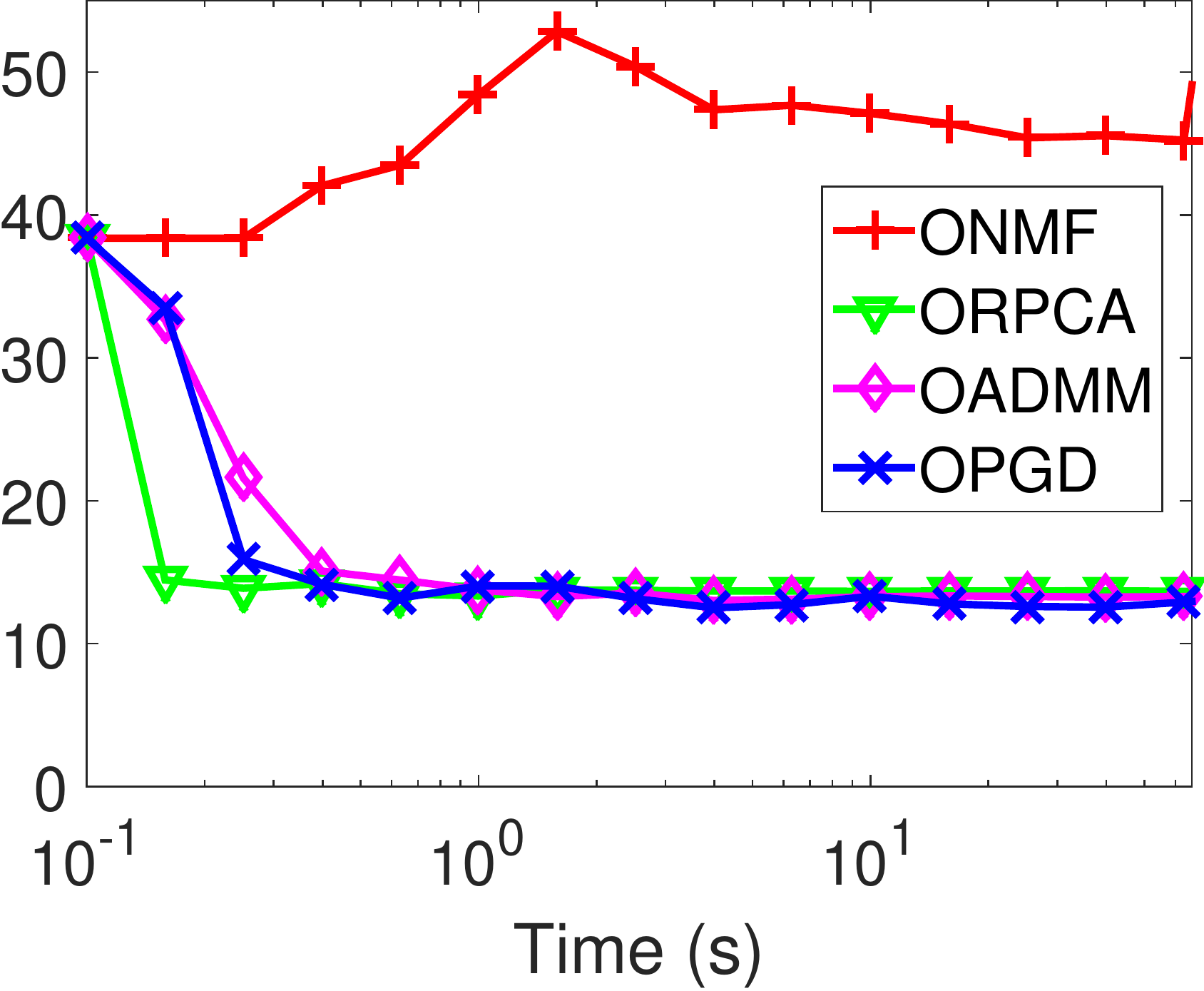}}
\caption{The objective values (as a function of time) of (a) our online algorithms and their batch counterparts (b) our online algorithms and other online algorithms on the CBCL face dataset. The parameters are set according to the canonical setting.}\label{fig:canon2}
\end{figure}

\begin{figure}[p!]\centering
\subfloat[OPGD]{\includegraphics[width=.4\columnwidth,height=.32\columnwidth]{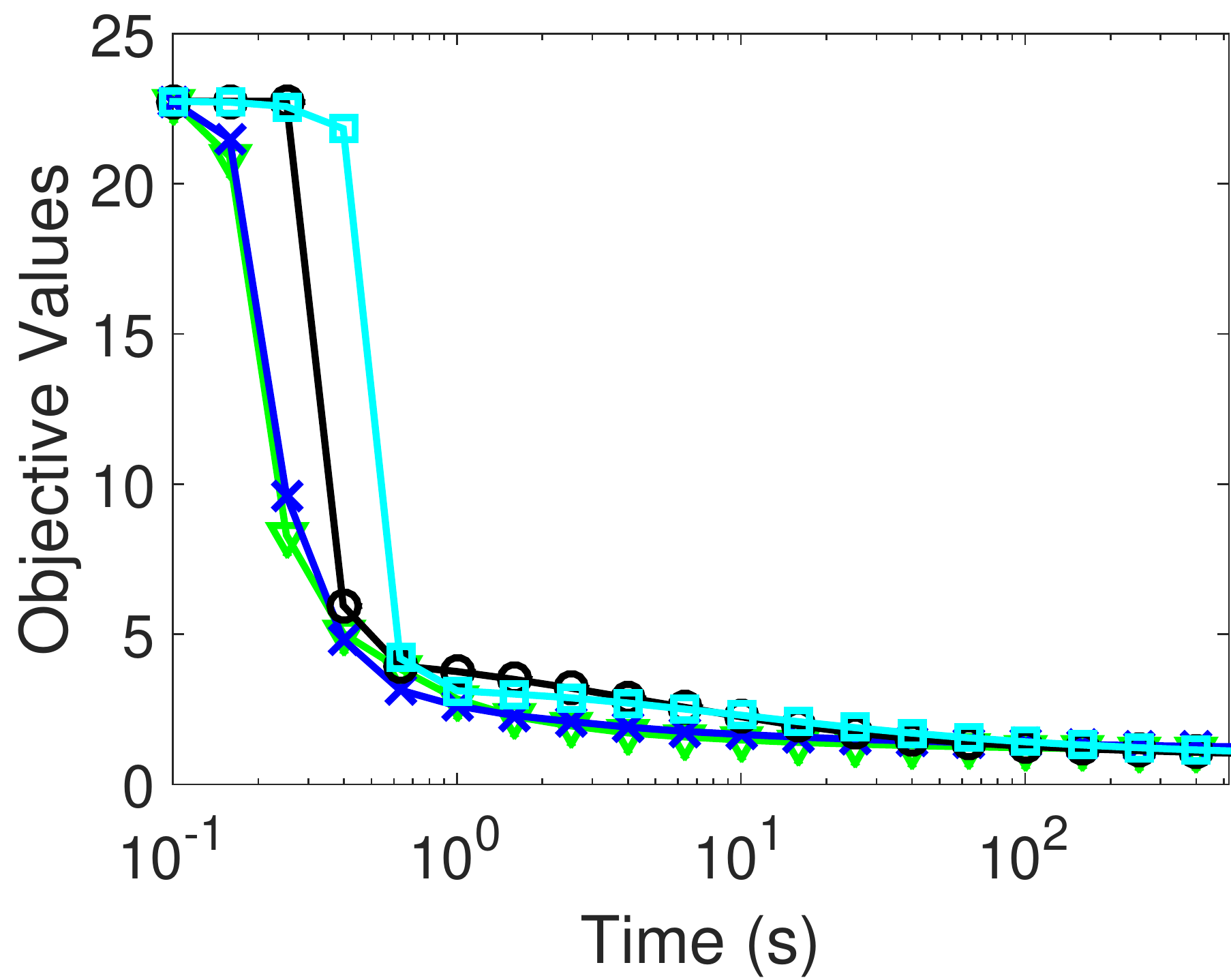}}\hspace{1cm}
\subfloat[OADMM]{\includegraphics[width=.4\columnwidth,height=.32\columnwidth]{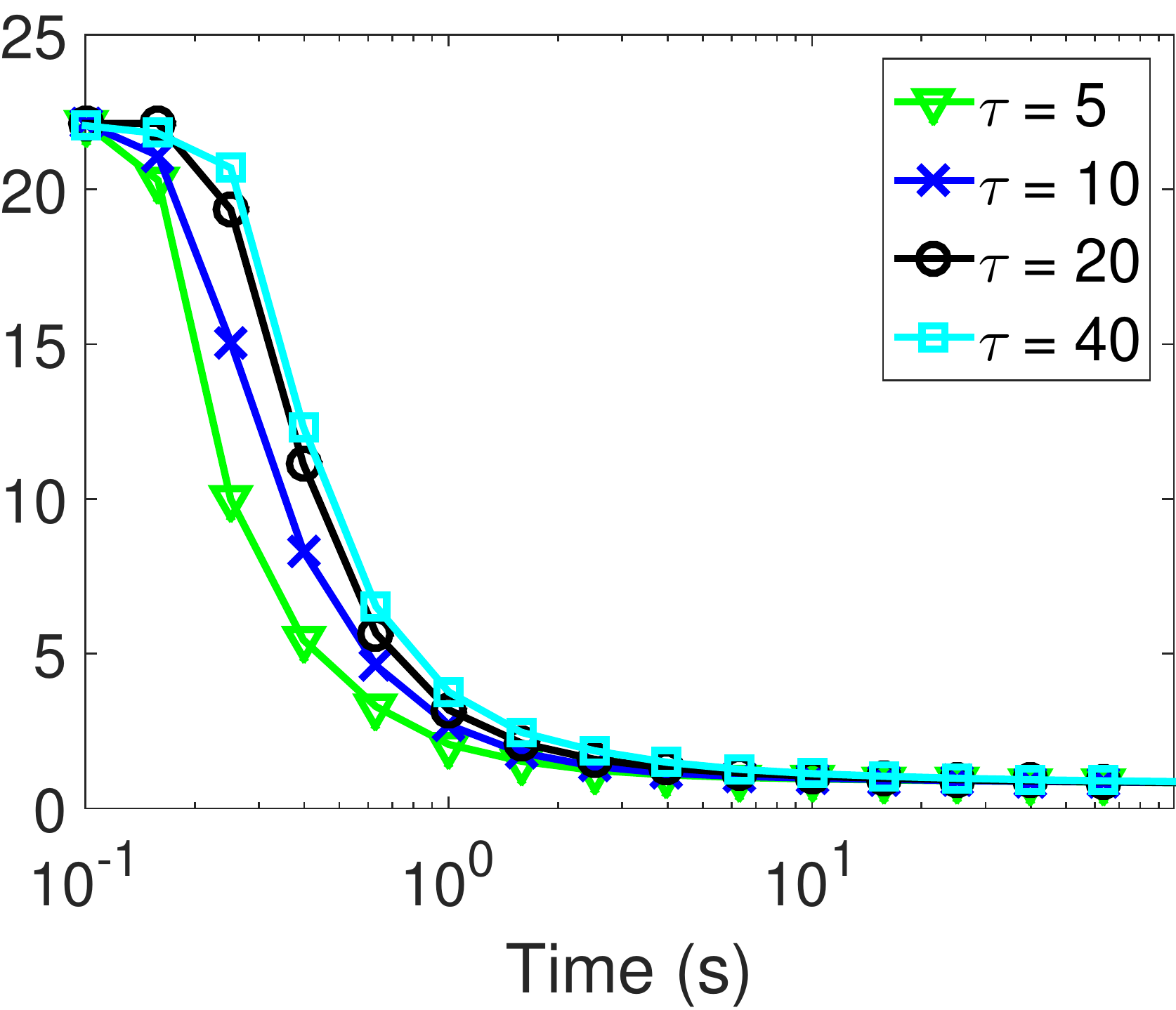}}
\caption{The objective values (as a function of time) of (a) OPGD and (b) OADMM for different values of $\tau$ on the CBCL face dataset. All the other parameters are set according to the canonical setting.}\label{fig:tau2}
\end{figure}

\begin{figure}[p!]\centering
\subfloat[ $K=25$ ]{\includegraphics[width=.4\columnwidth,height=.32\columnwidth]{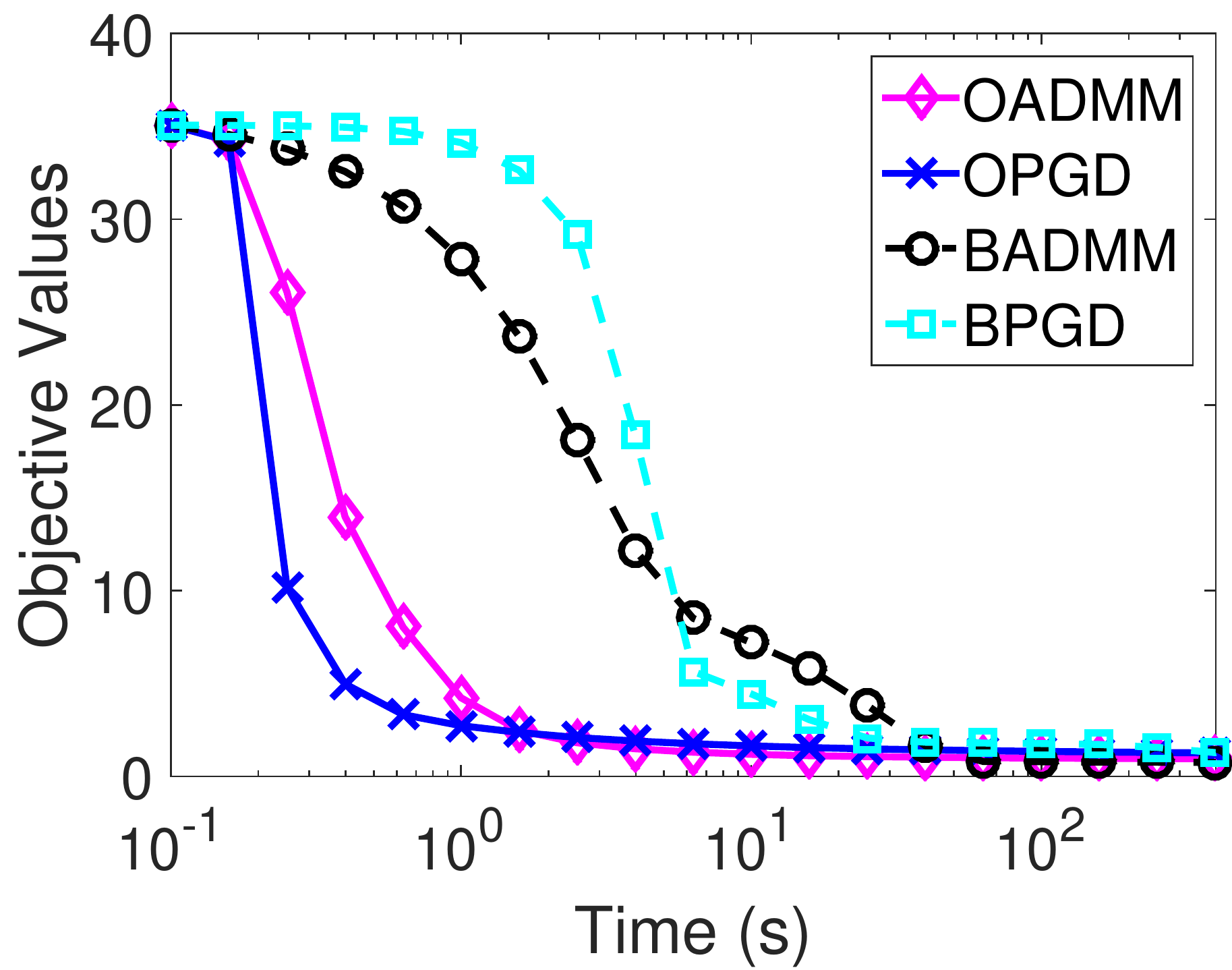}}\hspace{1cm}
\subfloat[$K=25$ ]{\includegraphics[width=.4\columnwidth,height=.31\columnwidth]{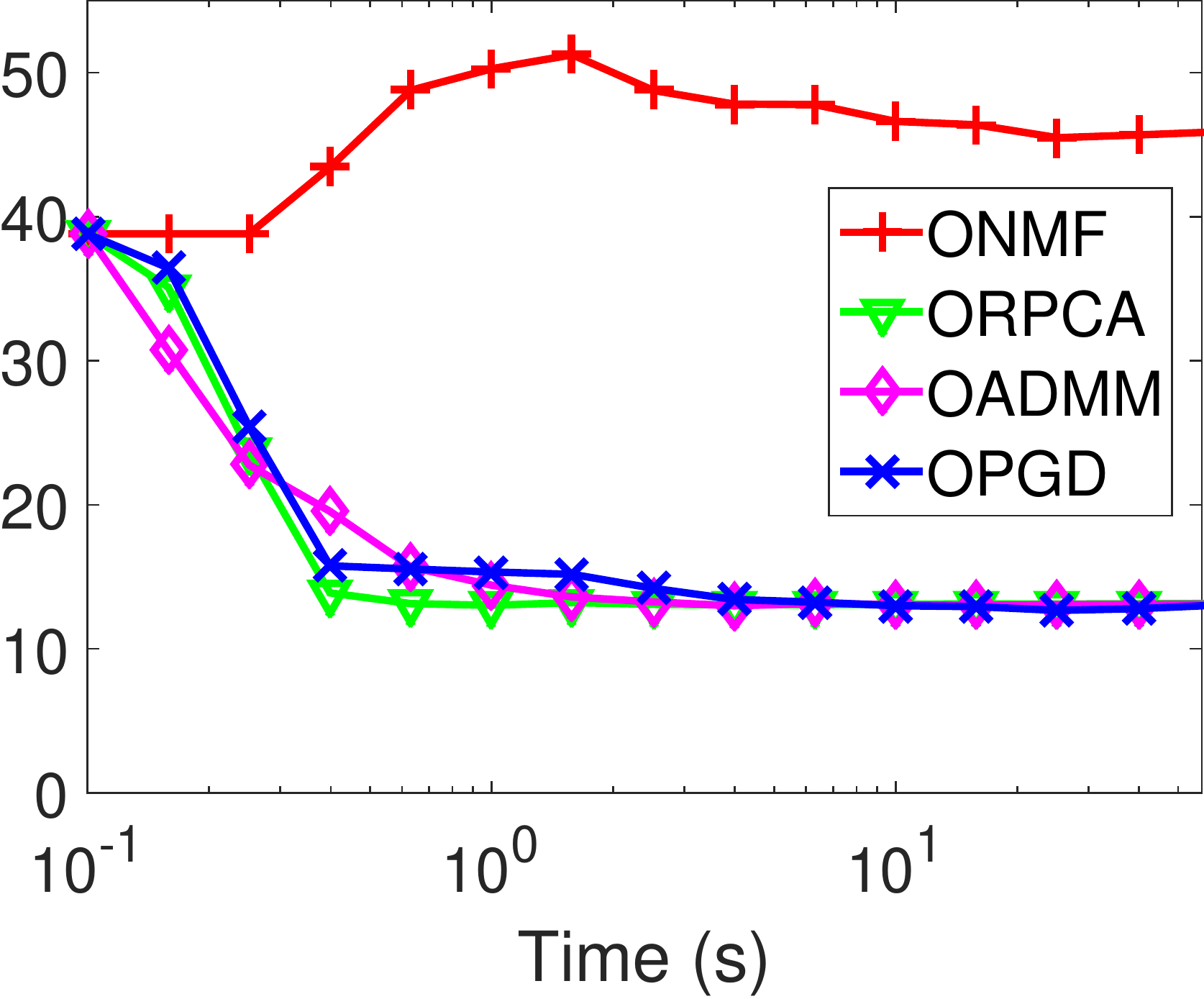}}\\
\subfloat[$K=100$ ]{\includegraphics[width=.4\columnwidth,height=.32\columnwidth]{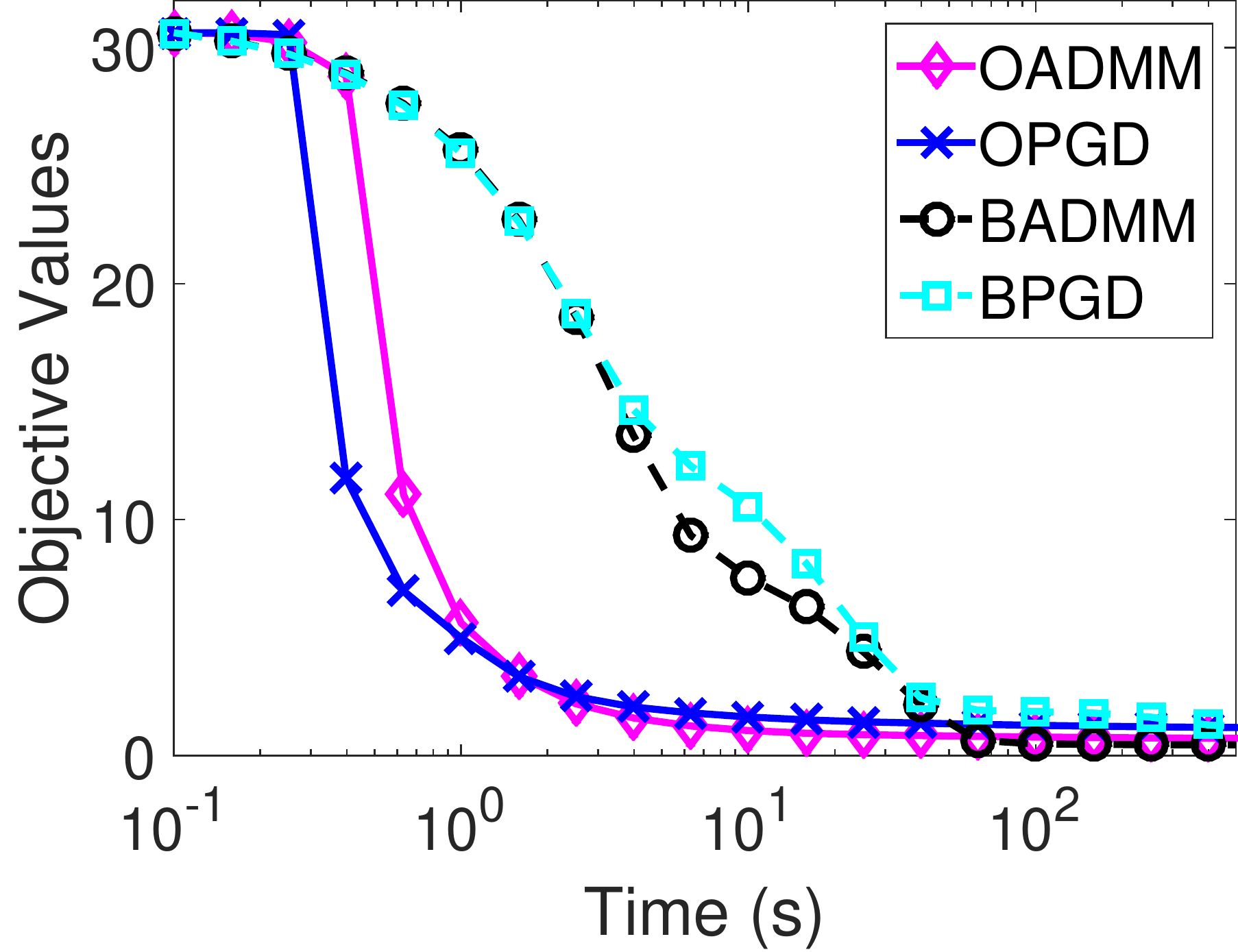}}\hspace{1cm}
\subfloat[$K=100$ ]{\includegraphics[width=.4\columnwidth,height=.32\columnwidth]{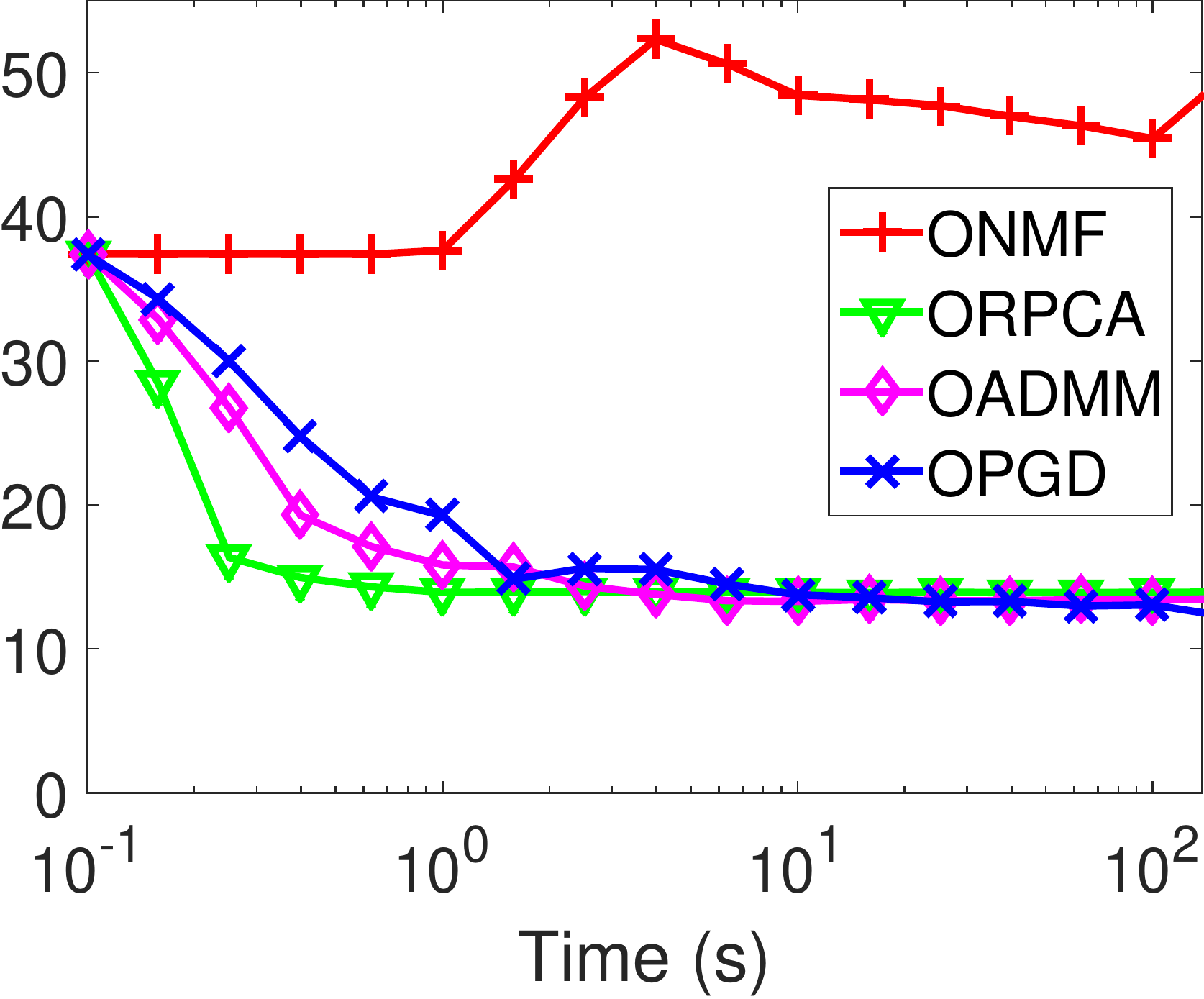}}
\caption{The objective values (as a function of time) of all the algorithms for different values of $K$ on the CBCL face dataset. In (a) and (b), $K=25$. In (c) and (d), $K=100$. All the other parameters are set according to the canonical setting.}\label{fig:K2}
\end{figure}

\begin{figure}[p!]\centering
\subfloat[BADMM]{\includegraphics[width=.4\columnwidth,height=.31\columnwidth]{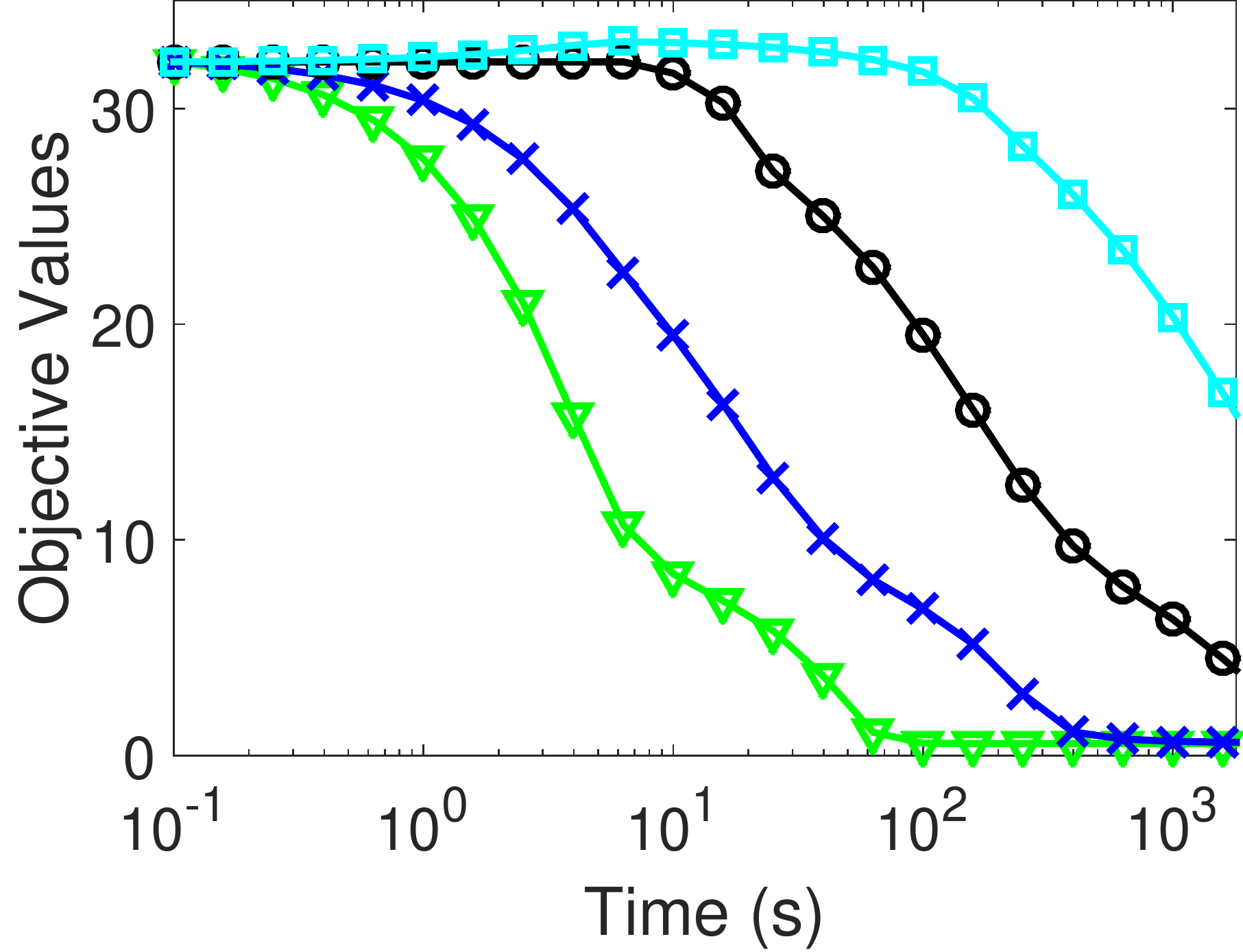}}\hspace{1cm}
\subfloat[OADMM]{\includegraphics[width=.4\columnwidth,height=.32\columnwidth]{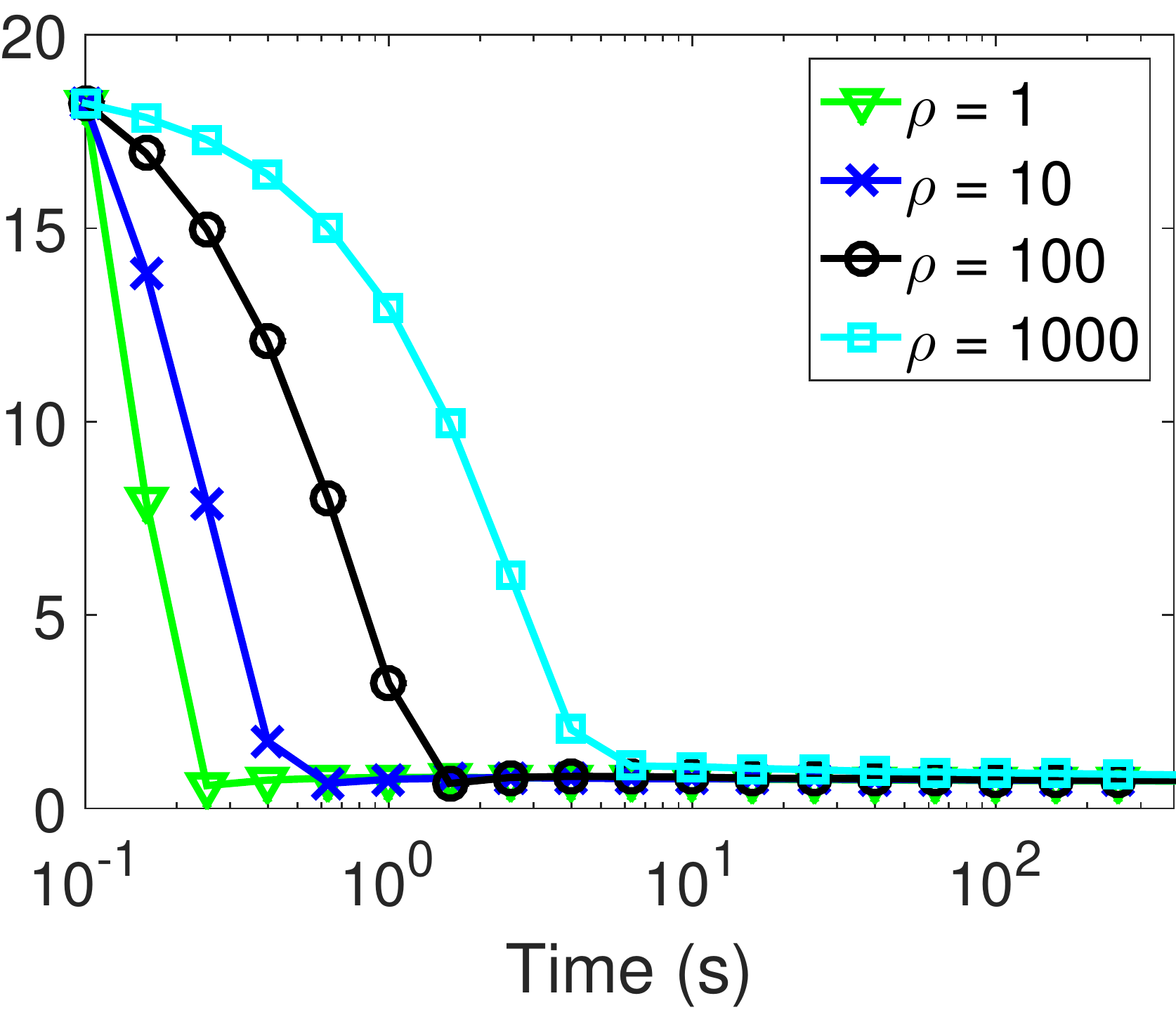}}
\caption{The objective values (as a function of time) of (a) BADMM and (b) OADMM for different values of $\rho$ on the CBCL face dataset. All the other parameters are set according to the canonical setting.}\label{fig:rho2}
\end{figure}
 
\begin{figure}[p!]\centering
\subfloat[BPGD]{\includegraphics[width=.4\columnwidth,height=.32\columnwidth]{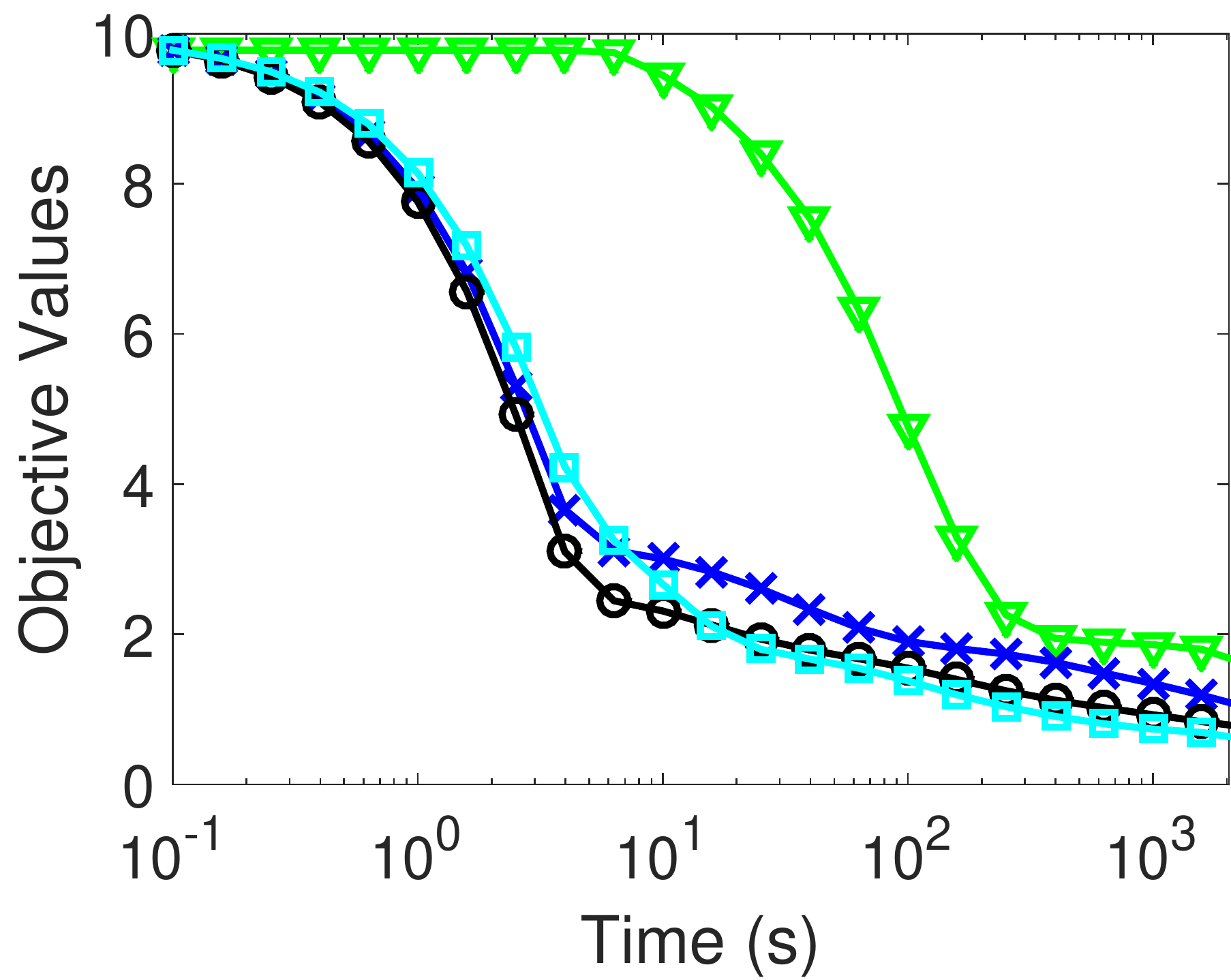}}\hspace{1cm}
\subfloat[OPGD]{\includegraphics[width=.4\columnwidth,height=.32\columnwidth]{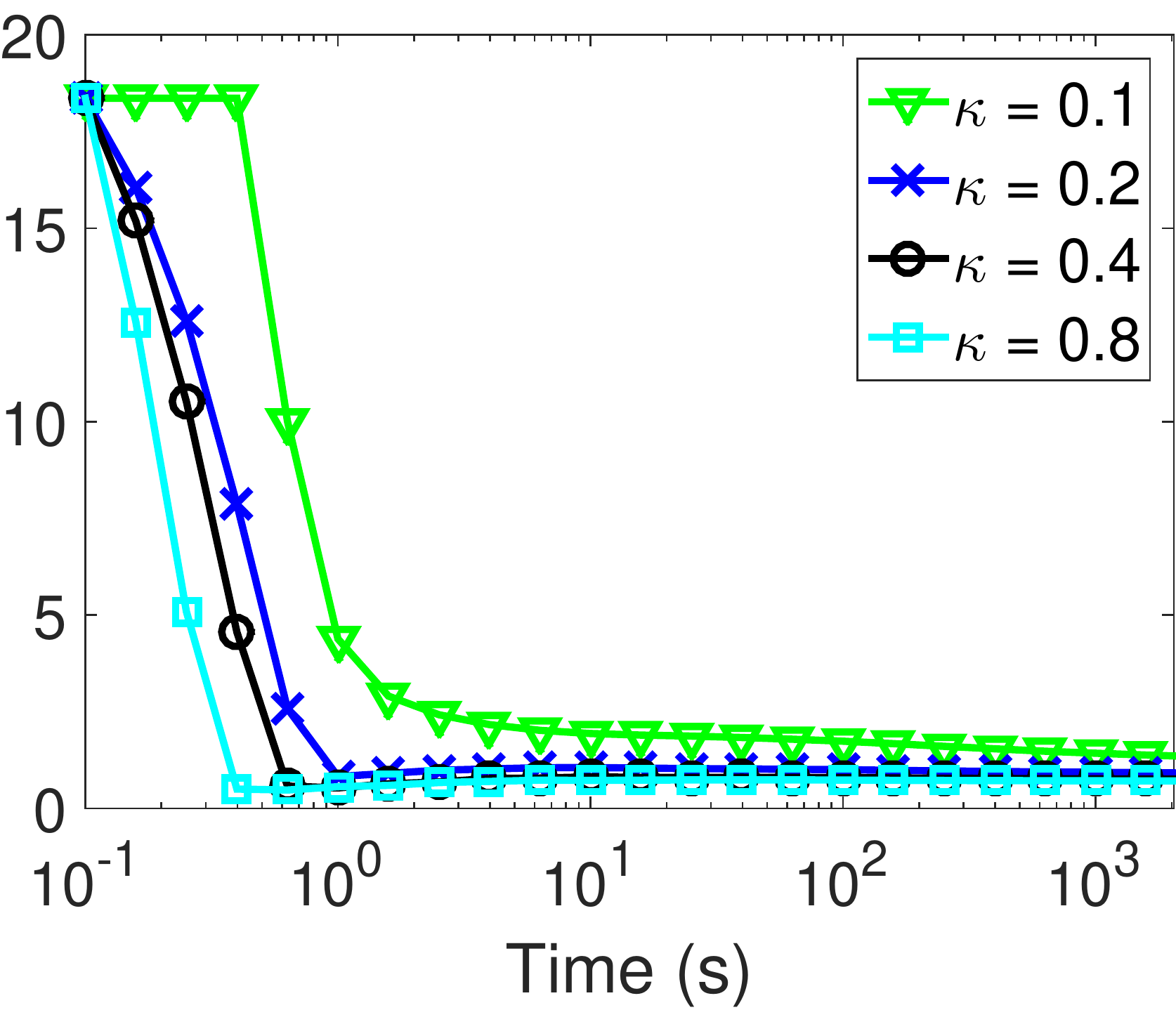}}
\caption{The objective values (as a function of time) of (a) BPGD and (b) OPGD for different values of $\kappa$ on the CBCL face dataset. All the other parameters are set according to the canonical setting.}\label{fig:kappa2}
\end{figure}

\begin{figure}[p!]\centering
\subfloat[$\nu = 0.8$, $\tnu = 0.2$]{\includegraphics[width=.4\columnwidth,height=.32\columnwidth]{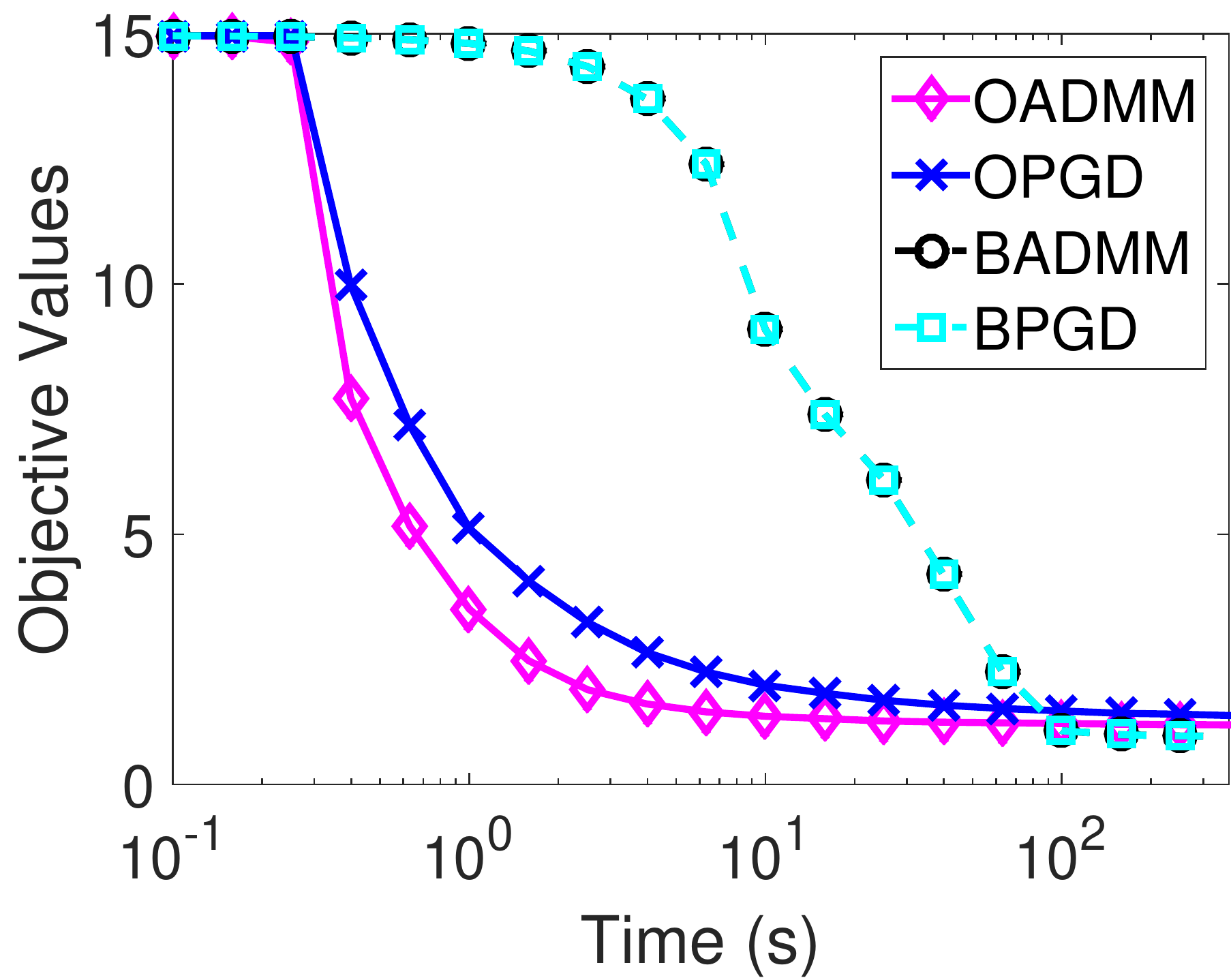}}\hspace{1cm}
\subfloat[$\nu = 0.9$, $\tnu = 0.3$]{\includegraphics[width=.4\columnwidth,height=.32\columnwidth]{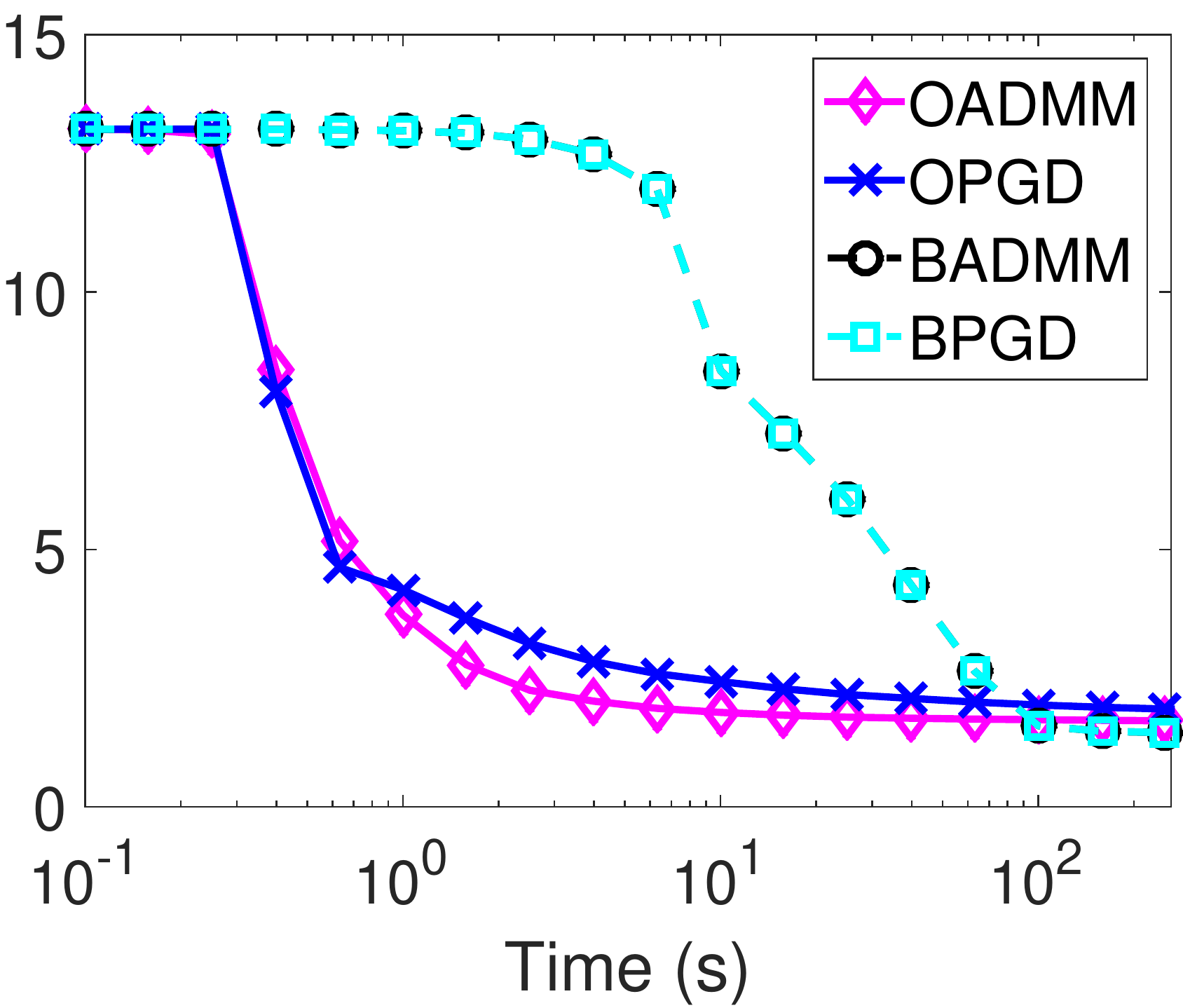}}
\caption{The objective values (as a function of time) of our online and batch algorithms on the CBCL face dataset with a larger proportion of outliers.}\label{fig:tnu2}
\end{figure}

\begin{table}[p!]\centering
\caption{PSNRs (in dB) of all the algorithms on the CBCL face dataset with different noise density.}\label{tab:PSNRs}
\begin{subfloat}[$K=25$]{
\begin{tabular}{|c|c|c|c|c|c|c|c|c|c|}
\hline
\quad&Setting 1 & Setting 2 & Setting 3\\\hline
OADMM &  11.39 $\pm$ 0.16 &  11.37 $\pm$ 0.12 &  11.35 $\pm$ 0.18\\ \hline
OPGD &  11.49 $\pm$ 0.05 &  11.43 $\pm$ 0.09 &  11.38 $\pm$ 0.06\\ \hline
BADMM &  11.51 $\pm$ 0.19 &  11.46 $\pm$ 0.07 &  11.41 $\pm$ 0.15\\ \hline
BPGD &  11.54 $\pm$ 0.07 &  11.46 $\pm$ 0.17 &  11.42 $\pm$ 0.15\\ \hline
ONMF &   5.99 $\pm$ 0.04 &   5.97 $\pm$ 0.12 &   5.97 $\pm$ 0.08\\ \hline
ORPCA &  11.26 $\pm$ 0.05 &  11.24 $\pm$ 0.11 &  11.22 $\pm$ 0.11\\ \hline
\end{tabular}}
\end{subfloat}\hspace{1cm}
\begin{subfloat}[$K=100$]{
\begin{tabular}{|c|c|c|c|c|c|c|c|c|c|}
\hline
\quad&Setting 1 & Setting 2 & Setting 3\\\hline
OADMM &  11.39 $\pm$ 0.02 &  11.35 $\pm$ 0.11 &  11.35 $\pm$ 0.06\\ \hline
OPGD &  11.51 $\pm$ 0.01 &  11.45 $\pm$ 0.09 &  11.44 $\pm$ 0.11\\ \hline
BADMM &  11.51 $\pm$ 0.11 &  11.47 $\pm$ 0.00 &  11.45 $\pm$ 0.03\\ \hline
BPGD &  11.52 $\pm$ 0.16 &  11.46 $\pm$ 0.07 &  11.45 $\pm$ 0.12\\ \hline
ONMF &   5.99 $\pm$ 0.19 &   5.97 $\pm$ 0.03 &   5.95 $\pm$ 0.05\\ \hline
ORPCA &  11.26 $\pm$ 0.03 &  11.24 $\pm$ 0.16 &  11.20 $\pm$ 0.13\\ \hline
\end{tabular}
}
\end{subfloat}
\end{table}

\begin{table}[p!]\centering
\caption{Running times (in seconds) of all the algorithms on the CBCL face dataset with different noise density.}\label{tab:times}
\begin{subfloat}[$K=25$]{
\begin{tabular}{|c|c|c|c|c|c|c|c|c|c|}
\hline
\quad&Setting 1 & Setting 2 & Setting 3\\\hline
OADMM & 420.58 $\pm$ 2.59 & 427.66 $\pm$ 4.80 & 430.02 $\pm$ 2.93\\ \hline
OPGD & 431.66 $\pm$ 2.49 & 455.15 $\pm$ 1.70 & 463.67 $\pm$ 1.12\\ \hline
BADMM & 1009.45 $\pm$ 11.27 & 1184.29 $\pm$ 10.49 & 1240.91 $\pm$ 8.21\\ \hline
BPGD & 1125.58 $\pm$ 12.83 & 1185.64 $\pm$ 13.36 & 1279.07 $\pm$ 9.08\\ \hline
ONMF & 2384.70 $\pm$ 9.59 & 2588.29 $\pm$ 14.39 & 2698.57 $\pm$ 10.24\\ \hline
ORPCA & 365.98 $\pm$ 5.29 & 382.49 $\pm$ 4.20 & 393.10 $\pm$ 4.27\\ \hline
\end{tabular}}
\end{subfloat}\hspace{1cm}
\begin{subfloat}[$K=100$]{
\begin{tabular}{|c|c|c|c|c|c|c|c|c|c|}
\hline
\quad&Setting 1 & Setting 2 & Setting 3\\\hline
OADMM & 422.97 $\pm$ 2.38 & 424.67 $\pm$ 2.65 & 434.17 $\pm$ 4.67\\ \hline
OPGD & 430.19 $\pm$ 2.27 & 448.72 $\pm$ 3.90 & 454.30 $\pm$ 4.65\\ \hline
BADMM & 1009.04 $\pm$ 8.53 & 1187.58 $\pm$ 5.06 & 1250.53 $\pm$ 4.67\\ \hline
BPGD & 1131.89 $\pm$ 7.04 & 1192.46 $\pm$ 7.43 & 1280.55 $\pm$ 7.93\\ \hline
ONMF & 2379.86 $\pm$ 15.18 & 2591.09 $\pm$ 11.91 & 2693.08 $\pm$ 12.48\\ \hline
ORPCA & 363.37 $\pm$ 3.01 & 390.58 $\pm$ 5.31 & 401.21 $\pm$ 3.27\\ \hline
\end{tabular}
}
\end{subfloat}
\end{table}

\end{document}